\newtheorem{lemma}{Lemma}
\newtheorem{proposition}{Proposition}
\newtheorem{corollary}{Corollary}
\newtheorem{example}{Example}
\newtheorem{remark}{Remark}
\newtheorem{assumption}{Assumption}
\newcommand{\bp}{{\bf P}}
\newcommand{\bpi}{{\bf P}^{-1}}
\newcommand{\bc}{{\bf C}}
\newcommand{\bdc}{{\bf DC}}
\newcommand{\bdr}{{\bf DR}}
\newcommand{\bec}{{\bf EC}}
\newcommand{\beq}{{\bf EQ}}
\newcommand{\bo}{{\bf O}}
\newcommand{\bpo}{{\bf PO}}
\newcommand{\bpp}{{\bf PP}}
\newcommand{\bppi}{{\bf PP}^{-1}}
\newcommand{\btpp}{{\bf TPP}}
\newcommand{\btppi}{{\bf TPP}^{-1}}
\newcommand{\bntpp}{{\bf NTPP}}
\newcommand{\bntppi}{{\bf NTPP}^{-1}}
\newcommand{\tpp}[2]{#1\, \btpp\, #2}
\newcommand{\tppi}[2]{#1\, \btppi\, #2}
\newcommand{\p}[2]{#1\, \bp\, #2}
\newcommand{\ov}[2]{#1\, \bo\, #2}
\newcommand{\pp}[2]{#1\, \bpp\, #2}
\newcommand{\ppi}[2]{#1\, \bppi\, #2}
\newcommand{\ntpp}[2]{#1\, \bntpp\, #2}
\newcommand{\ntppi}[2]{#1\, \bntppi\, #2}
\newcommand{\po}[2]{#1\, \bpo\, #2}
\newcommand{\ec}[2]{#1\, \bec\, #2}
\newcommand{\dc}[2]{#1\, \bdc\, #2}
\newcommand{\dr}[2]{#1\, \bdr\, #2}
\newcommand{\eq}[2]{#1\, \beq\, #2}
\newcommand{\ch}[0]{\textit{CH}}
\newcommand{\cvx}{\ch}
\newcommand{\new}[1]{#1}
\def\old@comma{,}
    \old@comma\discretionary{}{}{}%
\begin{document}

\begin{frontmatter}

\title{Realizing RCC8 networks using convex regions}
\author[cardiff]{Steven Schockaert}
\ead{s.schockaert@cs.cardiff.ac.be}
\author[uts]{Sanjiang Li}
\ead{sanjiang.li@uts.edu.au}

\address[cardiff]{Cardiff University, School of Computer Science \& Informatics, 5 The Parade, Cardiff CF24 3AA, UK}
\address[uts]{University of Technology Sydney, AMSS-UTS Joint Research Lab, Centre for Quantum Computation \& Intelligent Systems, Broadway NSW 2007, Australia}

\begin{abstract}
RCC8 is a popular fragment of the region connection calculus, in which qualitative spatial relations between regions, such as adjacency, overlap and parthood, can be expressed. While RCC8 is essentially dimensionless, most current applications are confined to reasoning about two-dimensional or three-dimensional physical space. In this paper, however, we are mainly interested in conceptual spaces, which typically are high-dimensional Euclidean spaces in which the meaning of natural language concepts can be represented using convex regions. The aim of this paper is to analyze how the restriction to convex regions constrains the realizability of networks of RCC8 relations. First, we identify all ways in which the set of RCC8 base relations can be restricted to guarantee that  consistent networks can be convexly realized in respectively 1D, 2D, 3D, and 4D. Most surprisingly, we find that if the relation `partially overlaps' is disallowed, all consistent atomic RCC8 networks can be convexly realized in 4D. If instead refinements of the relation `part of' are disallowed, all consistent atomic RCC8 relations can be convexly realized in 3D. We furthermore show, among others, that any consistent RCC8 network with $2n+1$ variables can be realized using convex regions in the $n$-dimensional Euclidean space.
\end{abstract}

\begin{keyword}
Qualitative spatial reasoning \sep Region connection calculus \sep Convexity \sep Conceptual spaces
\end{keyword}
\end{frontmatter}

\section{Introduction}
RCC8 is a well-known constraint language for modelling and reasoning about the qualitative spatial relationships that hold between regions \cite{Randell:1992}.  It is based on eight relations which are jointly exhaustive and pairwise disjoint: equals ($\beq$), partially overlaps ($\bpo$), externally connected ($\bec$), disconnected ($\bdc$), tangential proper part ($\btpp$) and its inverse ($\btppi$), and non-tangential proper part ($\bntpp$) and its inverse ($\bntppi$). The intended meaning of these and a number of related relations is summarized in Table \ref{tabDefRCC8}. RCC5 is a variant of RCC8 which is instead based on the following five coarser basic relations: equals ($\beq$), partially overlaps ($\bpo$), disjoint ($\bdr$), proper part ($\bpp$) and its inverse ($\bppi$). In general, constraint networks over RCC8 (or RCC5) relations could be realized using regions from arbitrary topological spaces that satisfy the axioms of the RCC \cite{Randell:1992}. 

In practice, most applications use RCC8 to reason about  two-dimensional or three-dimensional Euclidean space. For example, \cite{sridhar2011video} uses RCC8 to encode how different objects in a video are spatially related to each other, and how these relations change over time. This high-level representation is then used to recognize particular types of events. On the semantic web, RCC8 is used to encode geographic information in OWL ontologies \cite{grutter2008improving,conf/semweb/StockerS08} and linked data \cite{battle2012enabling,koubarakis2011challenges}, and to reason about the integrity of spatial databases \cite{smart2007framework}. It can be shown that focusing on Euclidean spaces of fixed dimension does not affect the notion of consistency which is used in RCC8 \cite{Renz:2002} (see Section \ref{secPreliminaries}).

\renewcommand{\tabcolsep}{4pt}
\begin{table}[t]
\centering
\caption{Qualitative spatial relations between regions $a$ and $b$ with interiors $i(a)$ and $i(b)$. \label{tabDefRCC8}}
\begin{tabular}{lll}
\hline
Name & Symbol & Meaning\\
\hline
Part of & \bp & $a \subseteq b$\\
Proper part of & \bpp & $a \subset b$\\
Equals & \beq & $a = b$\\
Overlaps & \bo & $i(a) \cap i(b) \neq \emptyset$\\
Disjoint from & \bdr & $i(a) \cap i(b) = \emptyset$\\
Disconnected & \bdc & $a\cap b = \emptyset$ \\
Externally Connected & \bec & $a\cap b \neq \emptyset$, $i(a) \cap i(b) = \emptyset$  \\
Partially Overlaps & \bpo & $i(a)\cap i(b) \neq \emptyset$, $a \not\subseteq b$, $b\not \subseteq a$\\
Tangential Proper Part & \btpp & $a\subset b$, $a\not\subset i(b)$\\
 & $\btppi$ & $b\subset a$, $b\not\subset i(a)$\\
Non-Tangential Proper Part & \bntpp & $a\subset i(b)$\\
 & $\bntppi$ & $b\subset i(a)$\\
\hline
\end{tabular}
\end{table}

RCC8 could potentially also play a crucial role in reasoning about higher-dimensional Euclidean spaces. In such applications, however, it is often natural to consider convex regions only. \new{For example, in the theory of conceptual spaces proposed in \cite{gardenfors2001reasoning},} the meaning of natural properties and concepts is represented using convex regions in a suitable metric space, which is most often taken to be Euclidean \cite{gardenfors2001reasoning}. The concepts \emph{car} and \emph{vehicle} would both correspond to convex regions and the constraint $\pp{\textit{car}}{\textit{vehicle}}$ then means that every car is a vehicle, while $\po{\textit{vehicle}}{\textit{wooden-object}}$ means that some but not all vehicles are wooden objects (e.g.\ a dugout canoe) and not all wooden objects are vehicles. \new{While the idea that natural concepts tend to be convex is a conjecture, among other examples, \cite{gardenfors2001reasoning} points to evidence from cognitive studies on color spaces, showing that natural language terms for colors tend to correspond to convex regions in a suitable conceptual space . Requiring convexity is also closely related to prototype theory \cite{rosch1973natural}. Indeed, a common approach is to represent prototypes as points in a feature space and to define the extension of concepts as the cells of the Voronoi diagram induced from these points, which are convex. In machine learning, convex hull based classifiers have been proposed \cite{nalbantov2006nearest}, which are explicitly based on the assumption that categories tend to correspond to convex regions in a feature space.  In particular, in a convex hull based classifier, every category $C_i$ is represented geometrically as the convex hull of the points (i.e.\ entitites) that are known to belong to $C_i$. A test item $x$ is then assigned to the category whose convex hull is closest. While such classifiers are often effective, classification decisions require solving a quadratic program and is thus computationally expensive. As an alternative, in \cite{derrac2014enriching} a classifier is introduced which encodes the intuition that when $x$ is located between two items from class $C_i$ in the feature space, then $x$ is also likely to belong to class $C_i$. Such a betweenness based classifier also implicitly uses the assumption that categories correspond to convex regions, while being more efficient than a convex hull based classifier.}

In computational linguistics, methods are studied to learn representations for the meaning of natural language terms as convex regions from large corpora \cite{Erk:2009:RWR:1596374.1596387}. In the context of conceptual spaces, $\bec$ and $\btpp$ relate to some notion of conceptual neighborhood (cf.\ \cite{Freksa:1991}). For example, $\ec{\textit{orange}}{\textit{red}}$ means that there is a continuous transition from colors which are considered red to colors which are considered orange, without passing through any other colors. This also entails that while orange and red are considered as separate colors, there may exist borderline colors for which it is hard to judge whether they are red or orange. This notion of conceptual neighborhood has proven useful for defining commonsense reasoning approaches to merge conflicting propositional knowledge bases \cite{Schockaert20111815}. Alternatively, the exact boundaries of what it means for an object to be orange may be considered to be vague, but we may still use an RCC8 based representation in such a case \cite{gardenfors2001reasoning}, e.g.\ by relying on the Egg-Yolk calculus \cite{Cohn:1996} or on a fuzzy region connection calculus \cite{Schockaert2009258}. In any case, qualitative spatial reasoning based on (a variant of) RCC8 is well-suited for formalising particular forms of commonsense reasoning about natural language concepts \cite{gardenfors2001reasoning}. This application is likely to inspire further extensions and variants of RCC8 and RCC5; e.g.\ in \cite{schockaert2013combining}, RCC5 is combined with a notion of betweenness to formalize interpolation, a particular form of commonsense reasoning.

As another example of qualitative reasoning about convex regions in higher-dimensional Euclidean spaces, we consider species distribution models \cite{elith2009species}, which are used in ecology to specify the environmental parameters within which the occurrence of a given species can be sustained. Typically, the distribution model of a given organism is encoded as a convex region in a Euclidean space in which the dimensions correspond to  environmental parameters (e.g.\ related to climate and land cover). Several ecological constraints can naturally be expressed using RCC8 relations between these models. For example, knowing that the mountain lion is a predator of the bighorn sheep\footnote{\url{http://umyosemite22.wikispaces.com/Biotic+Factors}, accessed 29 November 2013.}, we know that $\po{\textit{mountain-lion}}{\textit{bighorn-sheep}}$ needs to hold for the corresponding distribution models. Similarly, since a harlequin ladybird is a kind of ladybird, we should require $\pp{\textit{harlequin-ladybird}}{\textit{ladybird}}$. RCC8 could thus allow us to reason about ecological information extracted from text (e.g.\ the encyclopedia of life\footnote{\url{http://eol.org}, accessed 29 November 2013}), which could be useful to derive constraints that can be taken into account when learning distribution models from sparse data.

As a final example, we consider imprecise probability theory \cite{walley1991statistical}, which is a theory of uncertainty in which the beliefs of an agent are encoded as a convex region called a credal set. Specifically, these regions are subsets of the standard simplex and each point of this simplex corresponds to a probability distribution in the domain of discourse. For example if $A$ and $B$ are the regions encoding the beliefs of agents $a$ and $b$, then $\pp{a}{b}$ means that $a$ is better informed than $b$, i.e.\ the beliefs of $a$ and $b$ are compatible and $a$ could not learn anything by sharing its information with $b$. Similarly, $\dc{a}{b}$ means that the beliefs of $a$ and $b$ are incompatible. RCC8 could thus be used as the basis for a qualitative counterpart to the theory of imprecise probabilities.

A natural question is then: how does the requirement that all regions be convex affect the realizability of RCC8 constraint networks? It is easy to see that deciding whether an RCC8 network has a convex solution in $\mathbb{R}$ can be reduced to consistency checking in the interval algebra \cite{Allen83}, which is NP-complete \cite{Vilain:1989:CPA:93913.93996}. It is moreover well known that many consistent RCC8 constraint networks cannot be realized by convex regions in $\mathbb{R}^2$. 
\begin{example}\label{exNot2Dconvex}
Consider the following set of constraints:
\begin{align*}
&\ec{a}{b}  &&\tpp{x}{a} && \tpp{y}{a} && \tpp{u}{b} && \tpp{v}{b}\\
&\dc{x}{y}  && \dc{u}{v} && \ec{x}{u} && \ec{x}{v} && \ec{y}{u} && \ec{y}{v}
\end{align*}
Figure \ref{figEx1} shows a configuration which satisfies all of these constraints except for $\ec{x}{v}$. Since $a\cap u$, $a\cap v$, $x\cap b$ and $y\cap b$ are necessarily all intervals of the same line, it is easy to show that additionally satisfying $\ec{x}{v}$ is not possible (see Section \ref{secLowerboundNumberRegions}).
\end{example}
In \cite{Davis:1999} it was shown that deciding whether an RCC8 constraint network can be realized as convex regions in $\mathbb{R}^2$ is decidable, but computationally hard. Specifically, this problem was shown to be as hard as deciding consistency in the existential theory of the real numbers \cite{Basu:1996:CAC:235809.235813}. \new{From this result, we can derive a similar result for regions in $\mathbb{R}^k$, for any fixed $k\geq 2$. We provide a proof in the appendix. Moreover, \cite{Davis:1999} shows that any RCC8 network which has a convex solution in $\mathbb{R}^k$ can be realized using convex polytopes.}

In this paper, we show that despite these intractability results, RCC8 can still be used to efficiently reason about convex regions. The key insight is that every consistent RCC8 constraint network can be realized using convex regions, provided that the number of dimensions is sufficiently high. Note that as the required number of dimensions depends on the number of variables in the constraint network, this result does not contradict the result from \cite{Davis:1999}, which applies only if the number of dimensions $k$ is fixed in advance. In particular, we make the following contributions:
\begin{enumerate}
\item For all subsets of the RCC8 relations, we derive the minimal number of dimensions $k$ which are needed to guarantee that all consistent constraint networks which only use relations from this subset can be realized by convex regions in $\mathbb{R}^k$. For the full RCC8, it is clear that a finite bound $k$ does not exist. However, among others we show that if $\bpo$ is not allowed, all consistent networks can be realized in $\mathbb{R}^4$. If only the relations in $\{\bpo,\bec,\bdc,\beq\}$ or only the relations in $\{\bec,\bdc,\bntpp,\bntppi,\beq\}$ are allowed, all consistent networks can be realized in $\mathbb{R}^3$. 
\item We analyze how the required number of dimensions relates to the number of variables. Among others we show that all consistent RCC8 constraint networks with $2n+1$ regions can be realized using convex regions in $\mathbb{R}^n$.
\end{enumerate}
In this way, we establish important sufficient conditions to use standard RCC8 reasoners for sound and complete reasoning about convex regions. In computational linguistics, for instance, it is common to consider high-dimensional Euclidean spaces of 100 to 500 dimensions to represent the meaning of natural language terms \cite{Erk:2009:RWR:1596374.1596387,turney2010frequency}. If we use RCC8 to reason about convex regions in these spaces, we can use standard RCC8 reasoners \cite{westphal2009qualitative} if the number of considered regions is at most $2n+1 =1001$ (in the case of 500 dimensions). In fact, $2n+1$ is the worst-case bound we obtain, and for most problem instances with more than 1001 regions, the methodology from Section \ref{secRegionsDimensions} will still allow us to derive that convex solutions in $\mathbb{R}^n$ must exist. 

In other words, the aim of this paper is to identify sufficient conditions on the required number of dimensions to guarantee that a convex solution exists. 
The paper is structured as follows. In the next section, we recall some basic results about RCC8 and RCC5. Then in Section \ref{secOverview}, we give an overview of the results that are established in this paper. The remainder of the paper  focuses on the proofs of these results. In particular, in Section \ref{secRestricting}, we derive the minimal number of dimensions that are needed to guarantee convex solutions when only a subset of the RCC8 relations is allowed. Based on these results, we then show in Section \ref{secRegionsDimensions} how the number of regions and the number of occurrences of particular RCC8 relations can be used to derive finite bounds on the number of regions even for RCC8 constraint networks in which all RCC8 relations are used.

Finally note that Section \ref{secRegionsDimensions} of this paper presents a substantially revised and extended version of the results from \cite{ecai2012paper}. The results presented in Section \ref{secRestricting} are completely new.


\begin{figure}
\centering
\includegraphics[width=200pt]{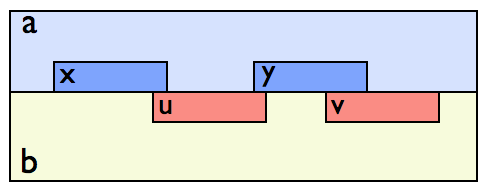}
\caption{It is not possible to additionally satisfy $\ec{x}{v}$.\label{figEx1}}
\end{figure}


%
%
%
%


\section{Preliminaries}\label{secPreliminaries}

\new{In this section, we provide some background about the region connection calculus, convex polytopes and the moment curve. We will assume that the reader is familiar with basic notions from topology such as open and closed sets and the interior and closure operators, and with basic notions from geometry such as the convex hull.}

\new{Throughout the paper, we will use Greek uppercase letters such as $\Theta$ and $\Psi$ for sets of RCC8 or RCC5 formulas, and uppercase letters such as $S$ for other sets. We will use lowercase Greek letters such as $\lambda$ and $\theta$ for real numbers, and lowercase letters such as $p$ and $q$ for points. We will use bold lowercase letters such as $\mathbf{h}$ for vectors.}

\subsection{\new{The region connection calculus}}

\begin{sidewaystable}
\centering
\caption{RCC-8 composition table \cite{Cui:1993}.  \label{tableRCC8composition}}
\begin{tabular}{@{}|c||c|c|c|c|c|c|c|@{}}
\hline
				    & $\bdc$ & $\bec$ & $\bpo$ & $\btpp$ & $\bntpp$ & $\btpp^{-1}$ & $\bntpp^{-1}$ \\
\hline
\hline
$\bdc$        & $\mathcal{R}_8$ & $\bdc,\bec,$   & $\bdc,\bec,$  & $\bdc,\bec,$  & $\bdc,\bec,$  &  $\bdc$    & $\bdc$     \\
            &     & $\bpo,\btpp,$ &  $\bpo,\btpp,$ & $\bpo,\btpp,$ & $\bpo,\btpp,$ &          & \\
            &     & $\bntpp $   &  $\bntpp$    & $\bntpp$    & $\bntpp$    &          &      \\
\hline
$\bec$        & $\bdc,\bec,$       & $\bdc,\bec,$      & $\bdc,\bec, $ & $\bec,\bpo,$ & $\bpo,\btpp,$ & $\bdc,\bec$ & $\bdc$ \\
            & $\bpo,\btpp^{-1},$ & $\bpo,\btpp,$     & $\bpo,\btpp,$ & $\btpp,$   & $\bntpp$    &         &       \\
            & $\bntpp^{-1}$    & $\btpp^{-1},\beq$ & $\bntpp$    & $\bntpp$   &           &         &       \\
\hline
$\bpo$        & $\bdc,\bec,$       & $\bdc,\bec,$        & $\mathcal{R}_8$ & $\bpo,\btpp,$ & $\bpo,\btpp,$ & $\bdc,\bec,$       & $\bdc,\bec,$   \\
            & $\bpo,\btpp^{-1},$ & $\bpo,\btpp^{-1},$  &     & $\bntpp$    & $\bntpp$    & $\bpo,\btpp^{-1},$ & $\bpo,\btpp^{-1},$  \\
            & $\bntpp^{-1}$    & $\bntpp^{-1}$     &     &           &           & $\bntpp^{-1}$    & $\bntpp^{-1}$  \\
\hline
$\btpp$       & $\bdc$ & $\bdc,\bec$ & $\bdc,\bec,$  & $\btpp,$   & $\bntpp$ & $\bdc,\bec,$      & $\bdc,\bec,$  \\
            &      &         & $\bpo,\btpp,$ & $\bntpp$   &        & $\bpo,\btpp,$     & $\bpo,\btpp^{-1},$ \\
            &      &         & $\bntpp$    &          &        & $\btpp^{-1},\beq$ & $\bntpp^{-1}$ \\
\hline
$\bntpp$      & $\bdc$ & $\bdc$ & $\bdc,\bec,$  & $\bntpp$ & $\bntpp$ & $\bdc,\bec,$  &  $\mathcal{R}_8$ \\
            &      &      & $\bpo,\btpp,$ &        &        & $\bpo,\btpp,$ &      \\
            &      &      & $\bntpp$    &        &        & $\bntpp$    &      \\
\hline
$\btpp^{-1}$  & $\bdc,\bec,$        & $\bec,\bpo,$      & $\bpo,$          & $\bpo,\beq,$     & $\bpo,\btpp,$ & $\btpp^{-1},$ & $\bntpp^{-1}$ \\
            & $\bpo,\btpp^{-1},$  & $\btpp^{-1},$   & $\btpp^{-1},$    & $\btpp,$       & $\bntpp$    & $\bntpp^{-1}$ &   \\
            & $\bntpp^{-1}$     & $\bntpp^{-1}$   & $\bntpp^{-1}$    & $\btpp^{-1}$   &           &             &   \\
\hline
$\bntpp^{-1}$  & $\bdc,\bec,$       & $\bpo,\btpp^{-1},$ & $\bpo,$        & $\bpo,$        & $\bpo,\btpp^{-1},$ & $\bntpp^{-1}$ & $\bntpp^{-1}$\\ 
                                         & $\bpo,\btpp^{-1},$ & $\bntpp^{-1}$    & $\btpp^{-1},$  & $\btpp^{-1},$  & $\btpp,\bntpp,$    &             &  \\ 
                                         & $\bntpp^{-1}$    &                & $\bntpp^{-1}$  & $\bntpp^{-1}$  & $\bntpp^{-1},\beq$ &             &  \\           
\hline
\end{tabular}

\end{sidewaystable}%
\noindent We write:
\begin{align*}
\mathcal{R}_8 &= \{\beq, \bdc, \bec, \bpo,  \btpp, \bntpp, \btppi, \bntppi\}\\
\mathcal{R}_5 &= \{\beq, \bdr,  \bpo,  \bpp, \bppi\}
\end{align*}
A non-empty subset $R \subseteq \mathcal{R}_8$ is called an RCC8 relation.  Without cause for confusion, we can identify a singleton $\{\mathbf{r}\}$ with the corresponding relation $\mathbf{r}$ from $\mathcal{R}_8$ or $\mathcal{R}_5$. These singletons are called the RCC8 and RCC5 base relations respectively. For convenience, we also use the following abbreviations:
\begin{align*}
\bpp &= \{\btpp,\bntpp\} \\
\bppi &= \{\btppi,\bntppi\} \\
\bp &=\{\bpp,\beq\} \\
\bpi &=\{\bppi,\beq\} \\
\bo &= \{\btpp,\bntpp,\btppi,\bntppi,\bpo,\beq\}\\
\bc &= \{\btpp,\bntpp,\btppi,\bntppi,\bpo,\bec,\beq\}
\end{align*}
Let $V = \{v_1,...,v_m\}$ be a fixed set of variables.  An RCC8 constraint is an expression of the form $v_i \,R\, v_j$ where $R$ is an RCC8 relation.  Intuitively, such a constraint imposes that the spatial relation between $v_i$ and $v_j$ is among those in $R$. If $R= \{\mathbf{r}\}$ is base relation, we usually write $v_i \,\mathbf{r}\, v_j$ instead of $v_i \{\mathbf{r}\} v_j$.  A set of RCC8 constraints is called an RCC8 network. An RCC8 network $\Theta$ is called atomic if for each $i\neq j$ it contains a constraint of the form $v_i \,R\, v_j$ or $v_j \,R\, v_i$ where $R$ is a singleton, i.e.\ an atomic network explicitly specifies for each pair of variables what is the corresponding RCC8 base relation. Let $\mathcal{R}$ be a non-empty subset of $\mathcal{R}_8$.  If every constraint $v_i \,R\, v_j$ is such that $R\subseteq \mathcal{R}$, then we call $\Theta$ a network over $\mathcal{R}$. For $V' \subseteq V$, we write $\Theta{\downarrow}V'$ for the restriction of $\Theta$ to the variables in $V'$. i.e.
$$
\Theta{\downarrow}V' = \{v_i \,R\, v_j \,|\, (v_i \,R\, v_j)\in \Theta, v_i\in V', v_j \in V' \}
$$

\new{Recall that a set $X$ is called regular closed if $cl(i(X)=X$, where $cl$ and $i$ are the closure and interior operators.}
We say that an RCC8 network $\Theta$ is consistent iff there exists a mapping  $\mathcal{S}$ from $V$ to the set of nonempty regular closed subsets of $\mathbb{R}^n$, for a given $n\geq 1$, such that:
\begin{itemize}
\item For each $v\in V$, $\mathcal{S}(v)$ is regular closed.
\item For each constraint $v_i \,R\, v_j$ in $\Theta$, it holds that the unique RCC8 base relation which holds between $\mathcal{S}(v_i)$ and $\mathcal{S}(v_j)$ in the sense of Table \ref{tabDefRCC8} is among those in $R$.
\end{itemize}
In such a case, $\mathcal{S}$ is called an $n$-dimensional solution of $\Theta$. If $\Theta$ has an $n$-dimensional solution, we also say that $\Theta$ is realizable in $\mathbb{R}^n$.  It can be shown that the number of dimensions does not affect consistency: if $\Theta$ is realizable in $\mathbb{R}^n$, it  is realizable in $\mathbb{R}^k$ for any $k\geq 1$ \cite{Renz:2002}. Moreover, there are several alternative ways of defining consistency, e.g.\ in terms of topological spaces or algebras \cite{Stell2000111}, which are equivalent to the aforementioned notion of consistency \cite{Li:2006}.

For atomic networks, consistency can be decided in $O(n^3)$ by checking path consistency. Specifically, it suffices to check for each $i,j,k$ whether $\rho_{ik}$ is among the relations in $\rho_{ij} \circ \rho_{jk}$, where the composition $\rho_{ij} \circ \rho_{jk}$ of $\rho_{ij}$ and $\rho_{jk}$ is defined by the RCC8 composition table, shown in Table \ref{tableRCC8composition}. For example, $\Theta = \{\tpp{v_1}{v_2},\po{v_2}{v_3},\eq{v_1}{v_3}\}$ is not consistent, because $\beq$ is not among $\btpp \circ \bpo = \{\bdc,\bec,\bpo,\btpp,\bntpp\}$. For arbitrary RCC8 networks, consistency can be decided by  using a combination of backtracking and path consistency checking \cite{Renz:1999a}.

We say that $\Theta$ entails a constraint $v_i \, R \, v_j$, written $\Theta \models v_i \, R \, v_j$, if every solution of $\Theta$ is also a solution of $\{v_i \, R \, v_j\}$. Similarly, we say that $\Theta$ entails a network $\Psi$, written $\Theta \models \Psi$ if $\Theta$ entails every constraint in $\Psi$.  If $\Theta\models \Psi$ and $\Psi \models \Theta$, we say that $\Theta$ and $\Psi$ are equivalent, written $\Theta \equiv \Psi$. For atomic networks, it holds that $\Theta \models v_i \, R \, v_j$ iff $\Theta$ contains a constraint $v_i \,\{r\}\, v_j$ such that $r\in R$ or a constraint $v_j \,\{r\}\, v_i$ such that $r^{-1} \in R$, where we define $\bdc^{-1} = \bdc$, $\bec^{-1} = \bec$, $\bpo^{-1} = \bpo$, $\beq^{-1} = \beq$, $(\btpp^{-1})^{-1} = \btpp$ and $(\bntpp^{-1})^{-1} = \bntpp$.

The notion of realizability can be refined by imposing additional requirements on the kind of regions which are considered. For example, if all regions are required to be internally connected (i.e.\ no region is the union of two disjoint non-empty regions), some consistent RCC8 networks cannot be realized in $\mathbb{R}$ or in $\mathbb{R}^2$, although all consistent RCC8 networks can be realized using internally connected regions in $\mathbb{R}^3$ \cite{Renz:2002}.  In this paper, we analyze the realizability of consistent RCC8 networks when all regions are required to be convex. A solution of a network $\Theta$ which maps every region to a convex set in $\mathbb{R}^n$ is called a convex solution. If such a convex solution exists, we say that $\Theta$ is convexly realizable in $\mathbb{R}^n$.

A non-empty subset $R\subseteq \mathcal{R}_5$ is called an RCC5 relation. Entirely analogously as for RCC8 relation, we can define the notion of RCC5 constraint, RCC5 network, consistency, entailment and equivalence.

\subsection{\new{Convex polytopes and the moment curve}\label{secGeometry}}

We first recall a number of notions and results about convex polytopes.  A convex polytope $V$ in $\mathbb{R}^n$ is the intersection of a finite set of half-spaces, i.e.\ 
$$
V = \bigcap_{i=1}^n \new{\{\mathbf{x} \,|\, \mathbf{h_i} \cdot \mathbf{x} \leq c_i, x\in \mathbb{R}^n\}}
$$
\new{where $\mathbf{h_i} \in \mathbb{R}^n$ is a vector and $c_i \in \mathbb{R}$ is a constant.}
The corresponding hyperplanes $\new{H_i = \{\mathbf{x} \,|\, \mathbf{h_i} \cdot \mathbf{x} = c_i, \mathbf{x}\in \mathbb{R}^n\}}$ are called the bounding hyperplanes of $V$. We call $F$ an $n-1$ dimensional face of $V$ if $F=H_i \cap V$ and $\dim(F) = n-1$, where $H_i$ is one of the bounding hyperplanes of $V$. A family of convex polytopes $\{V_1,...,V_m\}$ in $\mathbb{R}^n$ is called \emph{neighborly} if $\dim(V_i \cap V_j) = n-1$ for all $1\leq i < j \leq m$. It is well known that neighborly families in $\mathbb{R}^2$ can have at most 3 elements.  A key result which we will use \new{in this paper (cf.\ Propositions \ref{propECDCPO3d} and \ref{propRCDCNTPP3d})} is that there exist neighborly families of arbitrary size in $\mathbb{R}^3$  \cite{dewdney1977convex}.

\new{The Voronoi diagram induced by a set of points $p_1,...,p_m$ in $\mathbb{R}^n$ is the set of convex polytopes $V_1,...,V_m$ defined as follows:
$$
V_i = \{q \,|\, d(q,p_i)\leq \min_{j\neq i} d(q,p_j) \}
$$
}
\noindent The $n$-dimensional \emph{moment curve} is the set of points $\{ (t,t^2,...,t^n) \,|\, t\in \mathbb{R} \}$.  Let us write \new{$M(t)$} for the point $(t,t^2,...,t^n)$. \new{Note that the two-dimensional moment curve is a parabola. In higher dimensional spaces, the moment curve has a number of interesting properties. }For any $n\geq 3$, it can be shown that the Voronoi diagram of the points $\{(t,t^2,...,t^n) \,|\, t\in \{1,...,m\}\}$ is a neighborly family \cite{dewdney1977convex}, and more generally that the Voronoi diagram of any finite set of points \new{on} the positive half of the moment curve is a neighborly family \cite{s-eubnf-91}. Using a similar construction based on the helix $\{(t,\cos t, \sin t) \,|\, t\in \mathbb{R}\}$, \cite{erickson2003arbitrarily} shows that an arbitrarily large neighborly family of congruent convex polyhedra in $\mathbb{R}^3$ can be constructed. 

In particular, it can be shown that any $n$ points on the $n$-dimensional moment curve are linearly independent (i.e.\ in general linear position). Moreover, a hyperplane $H$ intersects the moment curve in at most $n$ points (\cite{edelsbrunner1987algorithms}, Lemma 6.4) and if $H$ intersects the moment curve in exactly $n$ points, the moment curve crosses $H$ in each of these points (\cite{matouvsek2003using}, Lemma 1.6.4).

\section{Overview of the results}\label{secOverview}

In some applications, particular RCC8 relations do not occur.  For example, when modelling natural language concepts, it is often possible to organize  concepts in a single subsumption hierarchy. It is common to interpret such hierarchies in the following way: if concept $A$ is a descendant of concept $B$, then $A\{\btpp,\bntpp\}B$ and if  $A$ is (the descendant of) a sibling of $B$, then $A\{\bec,\bdc\}B$. In particular, in such applications, there are no concepts which partially overlap. As we will show, any consistent atomic RCC8 network without $\bpo$ has a convex realization in $\mathbb{R}^4$. When different (sets of) base relations are excluded, similar upper bounds on the required number of dimensions can be found. Figure \ref{overviewRCC8} presents an overview of the results which will be established in Section \ref{secRestricting}. For completeness, we also consider subsets of RCC5 base relations, for which the results are summarized in Figure \ref{overviewRCC5}. \new{For some fragments, we will show that there exists no finite upper bound on the number of dimensions.}

Note that throughout this paper we will not consider RCC8 and RCC5 networks in which the relation $\beq$ occurs, as we can easily avoid using this relation by appropriately renaming variables. Furthermore, note that if $\bntpp$ is allowed then we also need to allow $\bntppi$, and similarly if $\btpp$ is allowed then $\btppi$ needs to be allowed. This means that the total number of non-empty subsets of RCC8 relations that we need to consider is $2^5 - 1=31$. Figure \ref{overviewRCC8} shows the result \new{for each of these subsets. Similarly, for RCC5}, the total number of subsets to consider is $2^3 -1 =7$, all of which are shown in Figure \ref{overviewRCC5}.

\begin{figure}
\centering
\includegraphics[width=400pt]{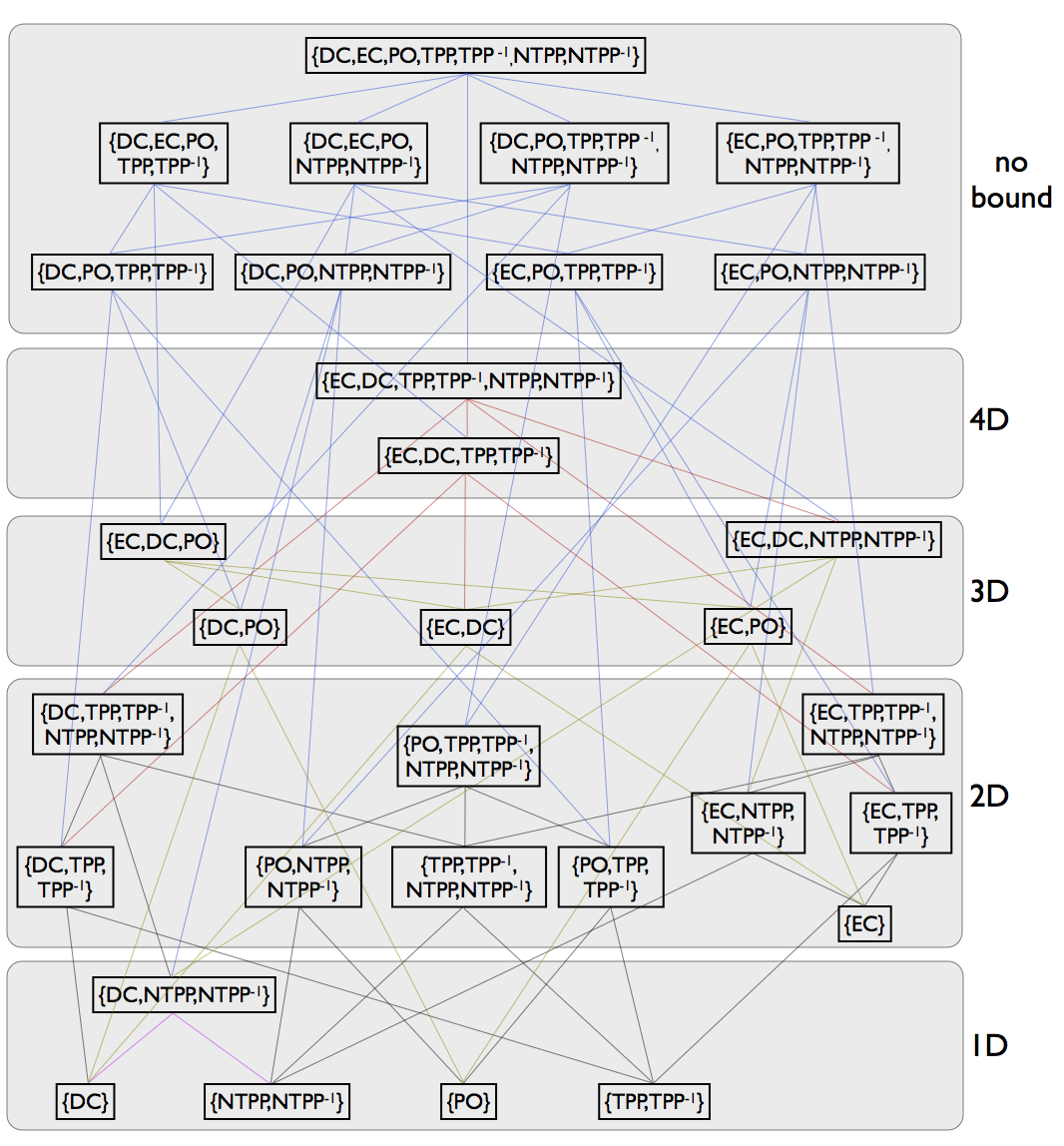}
\caption{Maximal number of dimensions needed to convexly realize atomic networks over a restricted set of RCC8 base relations.\label{overviewRCC8}} 
\end{figure}

\begin{figure}
\centering
\includegraphics[width=370pt]{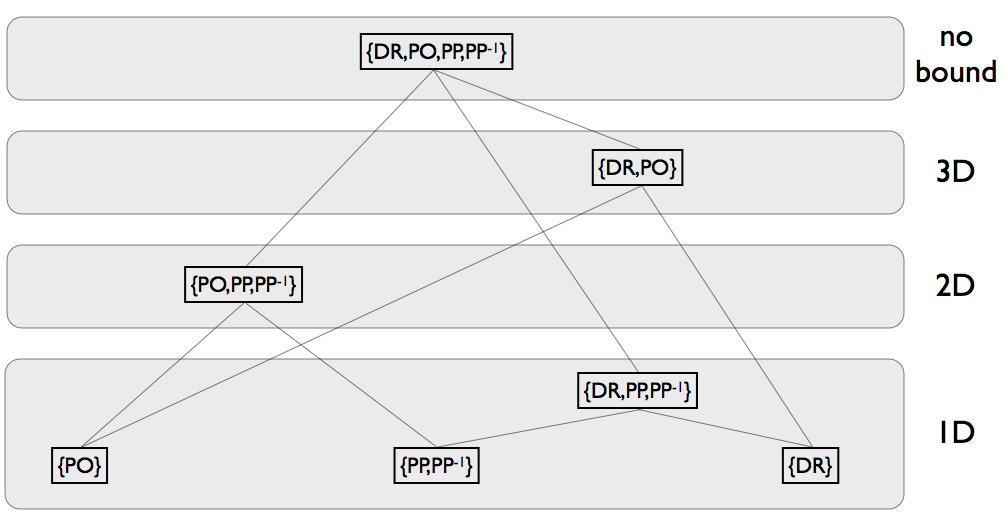}
\caption{Maximal number of dimensions needed to convexly realize atomic networks over a restricted set of RCC5 base relations.\label{overviewRCC5}} 
\end{figure}

\begin{remark}\label{remarkAtomic}
The results from Figures \ref{overviewRCC8} and \ref{overviewRCC5}  only apply to atomic networks. For example, as shown in \cite{Davis:1999}, it is possible to encode an RCC8 network using only $\bec$ and $\btpp$ which is not realizable in $\mathbb{R}^2$. Indeed, let $\Theta$ be an arbitrary network over $\{\bec, \bpo, \btpp, \btppi\}$ which is not realizable in $\mathbb{R}^2$ (we will show in Section \ref{secFragmentsArbitrary} that such networks exist), then we can construct a set of constraints $\Psi$ as follows. 
\begin{itemize}
\item For each pair of variables $a,b$ such that $\Theta \models \tpp{a}{b}$, we add to $\Psi$  the constraint $\tpp{a}{b}$. 
\item For each pair of variables $a,b$ such that $\Theta \models \ec{a}{b}$, we add to $\Psi$ the constraint $\ec{a}{b}$. 
\item For each pair of variables $a,b$ such that $\Theta \models \po{a}{b}$, we add to $\Psi$ the constraints $\tpp{x}{a}$, $\ec{x}{b}$, $\tpp{y}{b}$, $\ec{y}{a}$, $\tpp{z}{a}$ and $\tpp{z}{b}$, where $x$, $y$ and $z$ are fresh variables. 
\end{itemize}
It is clear that $\Psi$ only contains the relations $\btpp$ and $\bec$. Since $\Theta$ is not convexly realizable in $\mathbb{R}^2$ and $\Psi\models \Theta$ it must moreover be the case that $\Psi$ is not convexly realizable in $\mathbb{R}^2$ either. 
\end{remark}

The correctness of the results which are summarized in Figures \ref{overviewRCC8} and \ref{overviewRCC5} is shown in Section \ref{secRestricting}. These results are then used in Section \ref{secRegionsDimensions} to obtain \new{an} upper bound for general RCC8 networks. The idea is to identify a subset $V'\subseteq V$ such that the restriction $\Theta{\downarrow}V'$ belongs to one of the fragments from Figure \ref{overviewRCC8} which is guaranteed to have a convex solution in $\mathbb{R}$, $\mathbb{R}^2$, $\mathbb{R}^3$ or $\mathbb{R}^4$. This partial solution of $\Theta$ is then incrementally extended to a full solution, based on the following key property, which is shown in Section \ref{secRegionsDimensions}:
\begin{quote}
Let $\Theta$ be a consistent atomic RCC8 network. Suppose that $\Theta{\downarrow}V'$ has a $k$-dimensional convex solution in which every \bo-clique\footnote{We say that a subnetwork $\Theta{\downarrow}Z$ ($Z\subseteq V$) is an \emph{\bo-clique} if $\Theta\models {z_1}\bo{z_2}$ for every $z_1,z_2\in Z$.} in $\Theta{\downarrow}V'$ has a common part, and suppose that $\Theta$ entails the following constraints:
\begin{align*}
&\ntpp{a_1}{a_2} && \ntpp{a_2}{a_3} && ... &&\ntpp{a_{l-1}}{a_r}\\
&\ntpp{b_1}{b_2} && \ntpp{b_2}{b_3} && ... &&\ntpp{b_{l-1}}{b_s}\\
&\dc{a_r}{b_s}
\end{align*}
It holds that $\Theta{\downarrow}(V' \cup \{a_1,...,a_r,b_1,...,b_s\})$ has a $(k+1)$-dimensional convex solution in which every \bo-clique has a common part.
\end{quote}
Among others, this property allows us to show that if $V' \subseteq V$ is such that $\Theta{\downarrow}V'$ does not contain any occurrences of $\bpo$, $\Theta$ has a convex solution in $\mathbb{R}^{|V\setminus V'|+4}$. Moreover, we can also establish that every consistent RCC8 network with $2n+1$ variables, for $n\geq 2$, can be convexly realized by using at most $n$ dimensions.

\section{Restricting the set of RCC8 and RCC5 base relations}\label{secRestricting}

The aim of this section is to prove the results which are summarized in Figures \ref{overviewRCC8} and \ref{overviewRCC5}. Specifically, in Section \ref{secFragments1D} we focus on the fragments which can be convexly realized in $\mathbb{R}$;  in Section \ref{secFragments2D} we focus on the fragments which can be convexly realized in $\mathbb{R}^2$ and we show that these fragments cannot be convexly realized in $\mathbb{R}$ in general. Similarly, in Section \ref{secFragments3D} we focus on the fragments which can be convexly realized in $\mathbb{R}^3$ but not in $\mathbb{R}^2$. The most substantial result is presented in Section \new{\ref{secFragments4D}} where we present a construction based on the four-dimensional moment curve to show that consistent atomic networks over $\{\bec,\bdc,\btpp,\btppi,\bntpp,\bntppi\}$ can be convexly realized in $\mathbb{R}^4$. We also provide a counterexample to show that networks over $\{\bec,\bdc,\btpp,\btppi\}$ cannot be convexly realized in $\mathbb{R}^3$ in general. Finally, in Section \ref{secFragmentsArbitrary} we use Radon's theorem to identify fragments which do not in general allow convex realizations in $\mathbb{R}^n$ for any fixed $n$.

\subsection{Fragments with convex solutions in $\mathbb{R}$ \label{secFragments1D}}

In this section, we identify a number of fragments of RCC8 and of RCC5 for which consistent networks always have a convex solution in $\mathbb{R}$, i.e.\ for which consistent networks can always be realized as intervals. The proofs are all constructive and more or less straightforward.

\begin{proposition}\label{propRealizablePO1d}
Every atomic RCC8 network over $\{\bpo\}$ has a convex solution in $\mathbb{R}$.
\end{proposition}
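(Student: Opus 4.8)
The plan is to give a direct constructive realization of any atomic RCC8 network $\Theta$ over $\{\bpo\}$ using intervals on the real line. Let $V = \{v_1, \ldots, v_m\}$. Since $\Theta$ is atomic and uses only $\bpo$, the intended relation between every pair is ``partially overlaps'', so I must produce nonempty closed bounded intervals $I_1, \ldots, I_m$ such that for all $i \neq j$ the interiors of $I_i$ and $I_j$ intersect, but neither interval is contained in the other. The cleanest construction is to make all intervals have a common interior point while arranging their endpoints to be all distinct, so that no containment can hold. For instance, set $I_k = [-k,\, k]$ would make them nested, which is wrong; instead I would take $I_k = [-k,\, k + \tfrac{1}{2} + \varepsilon_k]$ or, more simply, shift both endpoints: let $I_k = [\, -k,\; k\,]$ fails, so pick $I_k = [\,k,\; k + m + 1\,]$? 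That nests them too if arranged monotonically. The right idea: choose intervals that all contain the point $0$ in their interior, with left endpoints strictly decreasing in $k$ and right endpoints strictly \emph{also} decreasing in $k$ — e.g.\ $I_k = [-(m-k) - 1,\; m - k + 1]$ — no. The correct simple choice is $I_k = \big[-k,\; m+1-k\big]$ for $k = 1, \ldots, m$: each contains $0$ in its interior (since $-k < 0 < m+1-k$ for $1 \le k \le m$), the left endpoints $-k$ are strictly decreasing in $k$, and the right endpoints $m+1-k$ are also strictly decreasing in $k$, so for $i < j$ we have left endpoint of $I_i$ greater than that of $I_j$ and right endpoint of $I_i$ greater than that of $I_j$; hence neither contains the other, yet they share the open segment around $0$. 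This realizes $\po{v_i}{v_j}$ for every pair.

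The key steps, in order, are: (1) recall that $\bpo$ requires $i(a) \cap i(b) \neq \emptyset$, $a \not\subseteq b$, $b \not\subseteq a$; (2) define $I_k = [-k,\, m+1-k]$, or any family of intervals sharing a common interior point with all left endpoints distinct and all right endpoints distinct and ordered the same way; (3) verify each $I_k$ is a nonempty regular closed (indeed convex) subset of $\mathbb{R}$; (4) check that $0$ lies in $i(I_i) \cap i(I_j)$ for every $i \neq j$, establishing the overlap condition; (5) check the staircase ordering of endpoints rules out $I_i \subseteq I_j$ and $I_j \subseteq I_i$, so the base relation between $I_i$ and $I_j$ is exactly $\bpo$; (6) conclude that the map $v_k \mapsto I_k$ is a convex (1-dimensional) solution of $\Theta$.

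I do not expect any serious obstacle here: the statement is one of the ``more or less straightforward'' constructive cases the paper itself flags. The only point requiring a moment's care is exhibiting a single explicit family of intervals that simultaneously achieves pairwise interior-overlap \emph{and} pairwise non-containment for all $\binom{m}{2}$ pairs at once — the ``all share a fixed interior point, all endpoints strictly staggered in the same direction'' trick handles this uniformly, so once that family is written down the verification is routine. A secondary bookkeeping point is noting that atomicity plus the restriction to $\{\bpo\}$ means the only constraints to satisfy are exactly the $\po{v_i}{v_j}$ constraints, so no consistency hypothesis is even needed — every atomic network over $\{\bpo\}$ is automatically consistent and the construction above always works.
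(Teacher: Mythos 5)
Your construction is correct and is essentially the same as the paper's: both produce a family of intervals whose left endpoints and right endpoints are each strictly ordered in the same direction, so that every pair shares an interior point while neither contains the other (the paper uses $[l_i,u_i]$ with $l_1<\dots<l_m<u_1<\dots<u_m$; your $[-k,\,m+1-k]$ is the same staircase idea). The final verification steps match the paper's, so nothing further is needed, though you could trim the exploratory false starts before the final choice of intervals.
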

\begin{proof}
We have $\Theta \equiv \{\po{v_i}{v_j} \,| \, 1 \leq i \neq j \leq m\}$. Let $\mathcal{S}$ be defined as:
$$
\mathcal{S}(v_i) = [l_i,u_i]
$$
where $l_1< l_2 < ...  < l_m < u_1 < u_2 < ... < u_m$. It is clear that $\mathcal{S}(v_i)$ is convex. Moreover, for $i<j$ we have that $l_i \in \mathcal{S}(v_i)\setminus \mathcal{S}(v_j)$, $\frac{l_j+u_i}{2} \in i(\mathcal{S}(v_i))\cap i(\mathcal{S}(v_j))$ and $u_j \in \mathcal{S}(v_j)\setminus \mathcal{S}(v_i)$, i.e.\ we have $\po{\mathcal{S}(v_i)}{\mathcal{S}(v_j)}$. In other words, $\mathcal{S}$ is a convex solution in $\mathbb{R}$.
\end{proof}

\begin{proposition}
Every consistent atomic RCC8 network over $\{\btpp,\btppi\}$ has a convex solution in $\mathbb{R}$.
\end{proposition}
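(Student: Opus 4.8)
The plan is to reduce the statement to realizing a strict total order by a nested family of closed intervals that all share one endpoint.

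\emph{Step 1: extracting a total order.} Since $\Theta$ is atomic over $\{\btpp,\btppi\}$, for each pair $v_i,v_j$ with $i\neq j$ it fixes exactly one of $\tpp{v_i}{v_j}$ or $\tppi{v_i}{v_j}$; write $v_i \prec v_j$ whenever $\Theta$ entails $\tpp{v_i}{v_j}$. This makes $\prec$ a tournament on $V$, and I would show it is transitive. A consistent atomic network is path consistent, so whenever $v_i \prec v_j$ and $v_j \prec v_k$ the base relation between $v_i$ and $v_k$ must lie in $\btpp \circ \btpp = \{\btpp,\bntpp\}$ by Table~\ref{tableRCC8composition}; because $\Theta$ uses only $\btpp$ and $\btppi$, this forces $\tpp{v_i}{v_k}$, i.e.\ $v_i \prec v_k$. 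A transitive tournament is a strict total order, so after renaming the variables we may assume $v_1 \prec v_2 \prec \cdots \prec v_m$, hence $\Theta \equiv \{\tpp{v_i}{v_j} \,:\, 1\le i<j\le m\}$.

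\emph{Step 2: building the solution.} Define $\mathcal{S}(v_i) = [0,i]$ for $1\le i\le m$. Each $\mathcal{S}(v_i)$ is a nonempty regular closed interval of $\mathbb{R}$, hence convex. For $i<j$ we have $[0,i]\subset[0,j]$, while $0 \in [0,i]\setminus(0,j)$, so $[0,i]\not\subset i([0,j])$; by the definitions in Table~\ref{tabDefRCC8} this is exactly $\tpp{\mathcal{S}(v_i)}{\mathcal{S}(v_j)}$, which is the constraint $\Theta$ imposes on $v_i$ and $v_j$. Therefore $\mathcal{S}$ is a convex solution of $\Theta$ in $\mathbb{R}$.

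The only step requiring a genuine argument is Step~1 (transitivity of $\prec$), and even that is not really an obstacle: it reduces to a single entry of the composition table together with the standard fact, recalled in the preliminaries, that consistency of atomic networks coincides with path consistency. Step~2 reuses the "common endpoint" device from the proof of Proposition~\ref{propRealizablePO1d}, here specialized to a chain rather than a mutually overlapping family, and its verification is routine.
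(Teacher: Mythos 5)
Your proof is correct and follows essentially the same route as the paper: order the variables into a $\btpp$-chain and realize them as nested closed intervals sharing a common left endpoint. The only difference is that you spell out the transitivity argument (via the composition table entry $\btpp\circ\btpp=\{\btpp,\bntpp\}$ and path consistency) and the $\btpp$ verification, both of which the paper leaves implicit.
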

\begin{proof}
In a consistent network over $\{\btpp,\btppi\}$, we can always order the variables such that $\tpp{v_i}{v_j}$ iff $i<j$. A convex solution of $\Theta$ is then given by the mapping $\mathcal{S}$, defined as:
$$
\mathcal{S}(v_i) = [l,u_i]
$$
where $l< u_1 < u_2 < ... < u_n$. Clearly, $\mathcal{S}$ is a convex solution of $\Theta$.
\end{proof}

\begin{proposition}\label{propRealizableDCNTPP1D}
Every consistent atomic RCC8 network over $\{\bdc,\allowbreak\bntpp,\allowbreak\bntppi\}$ has a convex solution in $\mathbb{R}$.
\end{proposition}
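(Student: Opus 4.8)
The plan is to argue by induction on the number $m$ of variables, peeling off one ``outermost'' region at each step. Intuitively, a consistent atomic network over $\{\bdc,\bntpp,\bntppi\}$ describes a forest of regions that are pairwise either strictly nested or wholly disjoint, and such a configuration is easily realised by nested and disjoint closed intervals on the line. To make this precise I would first extract a structural fact from the composition table (Table~\ref{tableRCC8composition}): $\bntpp\circ\bntpp=\{\bntpp\}$, $\bntpp\circ\bdc=\{\bdc\}$, and a constraint $v_i\,\bntpp\,v_j$ rules out $v_j\,\bntpp\,v_i$ (the latter would force $\rho_{ij}=\bntppi$). Consequently, for a consistent --- equivalently path-consistent --- atomic network $\Theta$ over $\{\bdc,\bntpp,\bntppi\}$, the relation $\{(v_i,v_j):\Theta\models v_i\,\bntpp\,v_j\}$ is asymmetric and transitive, hence a strict partial order on the finite set $V$, and so it has a maximal element $v_m$ (a region contained, according to $\Theta$, in no other).

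Next I would partition $V\setminus\{v_m\}$ as $C\cup D$, where $C=\{v:\Theta\models v\,\bntpp\,v_m\}$ and $D=\{v:\Theta\models v\,\bdc\,v_m\}$; maximality of $v_m$ makes this a genuine partition. The key observation is that every pair $v\in C$, $w\in D$ is $\bdc$-related in $\Theta$, since path consistency gives $\rho_{vw}\in\rho_{vv_m}\circ\rho_{v_m w}=\bntpp\circ\bdc=\{\bdc\}$. Thus all of $C\cup\{v_m\}$ is $\bdc$-related to all of $D$, while $\Theta{\downarrow}C$ and $\Theta{\downarrow}D$ are again consistent atomic networks over $\{\bdc,\bntpp,\bntppi\}$, now on fewer than $m$ variables.

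For the induction I would strengthen the statement slightly, asking for a convex solution by bounded closed intervals (a property the construction preserves). The case $m\le 1$ is trivial. For the step, the induction hypothesis gives convex solutions of $\Theta{\downarrow}C$ and $\Theta{\downarrow}D$; since affine maps of $\mathbb{R}$ preserve convexity and every RCC8 base relation, I may assume all $C$-intervals lie in $[0,1]$ and all $D$-intervals in $[3,4]$. Set $\mathcal{S}(v_m)=[-1,2]$ and keep the two sub-solutions. Then constraints inside $C$ and inside $D$ hold by the induction hypothesis; for $v\in C$ we have $\mathcal{S}(v)\subseteq[0,1]\subset(-1,2)=i(\mathcal{S}(v_m))$, so $\mathcal{S}(v)\,\bntpp\,\mathcal{S}(v_m)$; for $w\in D$, $[-1,2]$ is disjoint from $\mathcal{S}(w)\subseteq[3,4]$, so $\mathcal{S}(v_m)\,\bdc\,\mathcal{S}(w)$; and each $C$-$D$ pair is realised by disjoint intervals, matching the $\bdc$ constraint established above. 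All intervals are non-degenerate and hence regular closed, so $\mathcal{S}$ is the required convex solution in $\mathbb{R}$.

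The argument is elementary; the only points that need attention are the existence of a $\bntpp$-maximal element --- the single place where path consistency is genuinely used --- and keeping track of open versus closed endpoints, so that the outermost interval contains the $C$-block in its interior (giving $\bntpp$ rather than $\btpp$) while staying strictly separated from the $D$-block (giving $\bdc$ rather than $\bec$). I do not expect any genuine difficulty.
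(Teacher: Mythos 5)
Your proof is correct and follows essentially the same approach as the paper: both exploit the fact that the containment relation in a consistent atomic network over $\{\bdc,\bntpp,\bntppi\}$ forms a forest, and realize it by disjoint intervals for incomparable regions and strictly nested intervals for containment. The only difference is organizational --- you peel off one outermost region and recurse on the two $\bdc$-separated blocks, while the paper gives an explicit level-by-level formula for the nested subintervals --- but the underlying construction is the same.
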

\begin{proof}
For an atomic network over $\{\bdc,\bntpp,\bntppi\}$ we can partition the set of variables $V$ as follows:
\begin{itemize}
\item $A^1=\{a^1_1,...,a^1_{n_1}\}$ corresponds to the set of regions which are not contained in any of the other regions, i.e.\ for any $a\in A^1$ and any $v\in V\setminus \{a\}$, it holds that $\Theta \not\models \ntpp{a}{v}$.
\item $A^i=\{a^i_1,...,a^i_{n_i}\}$ ($i\geq 2$) corresponds to the set of regions which are contained in some region of $A^{i-1}$ and which are not contained in any region from $V\setminus (A^1\cup ... \cup A^{i-1})$, i.e.\ for any $a^i\in A^i$ there is an $a^{i-1}\in A^{i-1}$ such that $\Theta \models \ntpp{a^i}{a^{i-1}}$, and for any $a^i\in A^i$ and any $v \in V \setminus (A^1 \cup ... \cup A^{i-1}\cup \{a^i\})$, it holds that $\Theta \not\models \ntpp{a^i}{v}$.
\end{itemize}
\new{Note that if $a,b\in A^i$, $a\neq b$, it holds that $\Theta\models \dc{a}{b}$. As a result, for any variable $a\in A^i$ ($i>1$) there is a unique $b\in A^{i-1}$ such that $\Theta\models \ntpp{a}{b}$.}
We define a convex solution $\mathcal{S}$ as follows.  The variables $a^1_1,...,a^1_{n_1}$ from $A^1$ are realized as:
$$
\mathcal{S}(a^1_{j}) = [2{j},2{j}+1]
$$
Assume that $\mathcal{S}$ has been defined for all variables in $A^1 \cup ... \cup A^{i-1}$. Now consider the variables in $A^i$.  Let $a^i_{j}\in A^i$ and let $a^{i-1}_{k} \in A^{i-1}$ be the unique variable for which 
$$
\Theta \models \ntpp{a^i_{j}}{a^{i-1}_{k}}
$$
Let $\mathcal{S}(a^{i-1}_{k}) = [l,u]$.  We realize $a^i_{j}$ as follows:
$$
\mathcal{S}(a^i_{j}) = [l + \frac{2{j}}{2\cdot |A^i| +2} \cdot (u - l),l + \frac{2{j}+1}{2\cdot |A^i| +2} \cdot (u - l)]
$$
It is straightforward to verify that $\mathcal{S}$ is indeed a convex solution of $\Theta$.

\end{proof}

\begin{corollary}
Every consistent atomic RCC8 network over $\{\bdc\}$ or over $\{\bntpp,\allowbreak\bntppi\}$ has a convex solution in $\mathbb{R}$.
\end{corollary}
\begin{corollary}
\new{Every consistent atomic RCC5 network over the following sets of base relations has a convex solution in $\mathbb{R}$}:
\begin{itemize}
\item $\{\bpo\}$
\item $\{\bpp,\bppi\}$
\item $\{\bdr\}$
\item $\{\bdr,\bpp,\bppi\}$
\end{itemize}
\end{corollary}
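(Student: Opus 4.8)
The plan is to reduce each RCC5 fragment directly to the corresponding RCC8 results established in Propositions \ref{propRealizablePO1d}, the proposition for $\{\btpp,\btppi\}$, and Proposition \ref{propRealizableDCNTPP1D}, rather than re-running the interval constructions. The key observation is that each RCC5 base relation can be viewed as a (small) union of RCC8 base relations: $\bpo$ is common to both calculi, $\bpp = \{\btpp,\bntpp\}$ and $\bppi = \{\btppi,\bntppi\}$, and $\bdr = \{\bdc,\bec\}$. So given a consistent atomic RCC5 network $\Theta$ over one of the listed fragments, I would exhibit, for each of its constraints, a consistent atomic \emph{RCC8} refinement over a fragment already known to be convexly realizable in $\mathbb{R}$, and then invoke the relevant proposition.

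Concretely, for $\{\bpo\}$ the RCC5 and RCC8 networks literally coincide, so Proposition \ref{propRealizablePO1d} applies verbatim. For $\{\bpp,\bppi\}$ I would refine every $\pp{v_i}{v_j}$ constraint to $\tpp{v_i}{v_j}$ (and $\ppi{}{}$ to $\btppi$); since a consistent atomic network over $\{\bpp,\bppi\}$ just induces a strict total preorder on the variables with no two variables equal (as $\beq$ never occurs), this refinement is itself a consistent atomic network over $\{\btpp,\btppi\}$, and the second proposition gives an interval solution — note that $\btpp \subseteq \bpp$, so this interval solution is automatically a solution of the original RCC5 network. For $\{\bdr\}$ I would refine each $\dr{v_i}{v_j}$ to $\dc{v_i}{v_j}$; a network of pairwise-$\bdc$ constraints is trivially consistent and is covered by the corollary for $\{\bdc\}$, and again $\bdc \subseteq \bdr$. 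For the full fragment $\{\bdr,\bpp,\bppi\}$ I would combine these: refine $\bpp$ to $\btpp$, $\bppi$ to $\btppi$, and $\bdr$ to $\bdc$, obtaining an atomic RCC8 network over $\{\bdc,\btpp,\btppi\}$; one then checks this refinement remains consistent (path-consistent) using the RCC8 composition table, and applies Proposition \ref{propRealizableDCNTPP1D} (whose fragment $\{\bdc,\bntpp,\bntppi\}$ — or rather the more general reasoning behind it, restricted to the $\bntpp$-free, i.e. tangential, analogue) yields an interval solution; since each refined base relation is contained in the original RCC5 relation, the interval solution solves $\Theta$.

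The main obstacle is the $\{\bdr,\bpp,\bppi\}$ case: I need to be sure the chosen RCC8 refinement ($\bdc$ for $\bdr$, $\btpp/\btppi$ for $\bpp/\bppi$) is still \emph{consistent}, i.e. that no forbidden composition arises. The delicate interaction is between parthood and disjointness — if $\pp{a}{b}$ and $\dr{b}{c}$ then RCC5 forces $\dr{a}{c}$, and I must check that refining all three to $\btpp{a}{b}$, $\bdc{b}{c}$, $\bdc{a}{c}$ is consistent with the RCC8 composition table ($\btpp \circ \bdc = \bdc$, so this is fine), and similarly for nested parthood chains ($\btpp \circ \btpp$ contains $\btpp$). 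Rather than refining to $\btpp$ everywhere, an alternative that sidesteps the path-consistency check is to build the interval solution directly, mimicking the proof of Proposition \ref{propRealizableDCNTPP1D}: organize the variables into parthood-layers $A^1, A^2, \dots$ exactly as there, placing the roots $A^1$ as disjoint intervals and recursively nesting the children of each variable as disjoint subintervals of the parent, with the additional bookkeeping that two variables in the same layer which are related by $\bpo$ in $\Theta$ (rather than $\bdr$) must be given properly overlapping subintervals within their common parent — this is possible because within a fixed parent the sibling constraints form a consistent atomic network over $\{\bdr,\bpo\}$ which is a subproblem of the same type and can be solved by interleaving interval endpoints as in Proposition \ref{propRealizablePO1d}. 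I expect the cleaner writeup to use the refinement-to-RCC8 argument for the first three items and a short direct construction (or a careful composition-table check) for the fourth.
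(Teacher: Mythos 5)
Your overall strategy---refine each RCC5 base relation to an RCC8 base relation contained in it, check that the refined atomic network is still consistent, and then invoke the one-dimensional realizability results already proved---is exactly what the paper intends, and the first three items go through as you describe. The problem is your choice of refinement in the fourth case. Refining $\bpp$ to $\btpp$ and $\bdr$ to $\bdc$ lands you in the fragment $\{\bdc,\btpp,\btppi\}$, and the path-consistency check you perform only establishes that the refined network is \emph{consistent}, not that it has a convex solution in $\mathbb{R}$. In fact it does not in general: the paper's own counterexample in Section \ref{secLowerboundsR2}, namely $\{\tpp{a}{d},\tpp{b}{d},\tpp{c}{d},\dc{a}{b},\dc{a}{c},\dc{b}{c}\}$, is precisely the $\btpp/\bdc$-refinement of the consistent atomic RCC5 network $\{\pp{a}{d},\pp{b}{d},\pp{c}{d},\dr{a}{b},\dr{a}{c},\dr{b}{c}\}$, and it has no interval realization (three intervals tangentially contained in a fourth must each share one of its two endpoints, forcing two of them to meet). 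So the primary argument for $\{\bdr,\bpp,\bppi\}$ fails as written. The fix is simply to refine $\bpp$ to $\bntpp$, $\bppi$ to $\bntppi$ and $\bdr$ to $\bdc$ instead: the resulting atomic network over $\{\bdc,\bntpp,\bntppi\}$ is consistent (e.g.\ $\bntpp\circ\bntpp=\bntpp$ and $\bntpp\circ\bdc=\bdc$, so path consistency holds), Proposition \ref{propRealizableDCNTPP1D} applies directly, and $\bntpp\subseteq\bpp$, $\bdc\subseteq\bdr$ make the resulting interval solution a solution of the original network.

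Your fallback of redoing the layered interval construction of Proposition \ref{propRealizableDCNTPP1D} by hand would also work, but as written it is muddled: the fragment $\{\bdr,\bpp,\bppi\}$ contains no $\bpo$, so the extra bookkeeping for partially overlapping siblings is vacuous here---siblings under a common parent are pairwise $\bdr$ and can simply be placed as disjoint closed subintervals of the parent's interior. The cleanest write-up applies the $\bntpp$-refinement uniformly to all four items.
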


\subsection{Fragments with convex solutions in $\mathbb{R}^2$ \label{secFragments2D}}

In this section, we look at restrictions on the set of RCC8 or RCC5 base relations which guarantee that networks can be convexly realized in $\mathbb{R}^2$.  To show that this bound on the number of dimensions cannot be strengthened in general, we first show in Section \ref{secLowerboundsR2} that the considered fragments cannot always be convexly realized in $\mathbb{R}$.

\subsubsection{Lower bounds}\label{secLowerboundsR2}
The following propositions are shown by identifying examples of consistent atomic RCC8 networks which cannot be realized as intervals of the real line.

\begin{proposition}
There exists a consistent atomic RCC8 network $\Theta$ over $\{\bdc,\btpp,\btppi\}$ which has no convex solution in $\mathbb{R}$.
\end{proposition}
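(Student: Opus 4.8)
The plan is to exhibit a small explicit network. I would take four variables $V=\{a,x,y,z\}$ and let $\Theta$ consist of the constraints
\[
\tpp{x}{a},\quad \tpp{y}{a},\quad \tpp{z}{a},\quad \dc{x}{y},\quad \dc{y}{z},\quad \dc{x}{z}.
\]
This network is atomic, and every constraint uses a relation contained in $\{\btpp,\bdc\}\subseteq\{\bdc,\btpp,\btppi\}$, so $\Theta$ is a network over $\{\bdc,\btpp,\btppi\}$ as required. Intuitively, $\Theta$ says that the region $a$ has three pairwise disconnected tangential proper parts.

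To show that $\Theta$ is consistent, I would either verify path consistency or, probably more transparently, display a two-dimensional solution. For path consistency, the only nontrivial compositions to inspect are those giving the relation between two of $x,y,z$ through $a$, and from the composition table (Table~\ref{tableRCC8composition}) one has that $\btpp\circ\btppi$ contains $\bdc$, so $\Theta$ is path consistent and hence, being atomic, consistent. For the explicit model, take $\mathcal{S}(a)$ to be a closed disc and $\mathcal{S}(x),\mathcal{S}(y),\mathcal{S}(z)$ three small, pairwise disjoint closed discs placed near three distinct points of the boundary circle of $\mathcal{S}(a)$; each small disc is then a tangential proper part of $\mathcal{S}(a)$ and the three are pairwise disconnected.

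For the core claim, that $\Theta$ has no convex solution in $\mathbb{R}$, I would argue by contradiction. A nonempty regular closed convex subset of $\mathbb{R}$ is a nondegenerate closed interval, which may be bounded, a closed half-line, or all of $\mathbb{R}$, so I would split on the shape of $\mathcal{S}(a)$. If $\mathcal{S}(a)=\mathbb{R}$, then $i(\mathcal{S}(a))=\mathbb{R}$ and any proper part of $\mathcal{S}(a)$ is non-tangential, contradicting $\tpp{x}{a}$. If $\mathcal{S}(a)$ is a closed half-line, then by the definition of $\btpp$ in Table~\ref{tabDefRCC8} each of its tangential proper parts must contain the single endpoint of $\mathcal{S}(a)$, so $\mathcal{S}(x)$ and $\mathcal{S}(y)$ share that point, contradicting $\dc{x}{y}$. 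If $\mathcal{S}(a)=[l,u]$ is bounded, then each of $\mathcal{S}(x),\mathcal{S}(y),\mathcal{S}(z)$ must contain $l$ or $u$, and, being proper parts of $[l,u]$, cannot contain both; by the pigeonhole principle two of them contain the same endpoint and hence intersect, again contradicting a $\bdc$ constraint. In every case we reach a contradiction, so no convex solution in $\mathbb{R}$ exists.

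I do not expect a serious obstacle here. The only points requiring care are the bookkeeping that convex regular closed subsets of $\mathbb{R}$ are precisely the nondegenerate closed intervals — so that the half-line and full-line cases are not overlooked — and the precise reading of the definition of $\btpp$, namely that tangentiality of a subinterval forces it to meet an endpoint of $\mathcal{S}(a)$.
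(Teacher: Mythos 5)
Your proposal is correct and is essentially the paper's own proof: the paper uses the identical network (three pairwise-$\bdc$ tangential proper parts of one region) and the same pigeonhole argument on the two endpoints of the containing interval. Your additional case analysis for half-lines and for $\mathbb{R}$ itself is a minor tightening of a step the paper leaves implicit, not a different approach.
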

\begin{proof}
It is easy to see that the following atomic network is consistent, but it cannot be realized by intervals:
$$
\Theta = \{\tpp{a}{d},\tpp{b}{d},\tpp{c}{d},\dc{a}{b},\dc{a}{c},\dc{b}{c} \}
$$
Indeed, if $\mathcal{S}(d)$ were an interval, then $\tpp{a}{d}$, $\tpp{b}{d}$ and $\tpp{c}{d}$ mean that $\mathcal{S}(a)$, $\mathcal{S}(b)$ and $\mathcal{S}(c)$ all share one of the two end points of the interval $\mathcal{S}(d)$, which means that $\mathcal{S}(a)$, $\mathcal{S}(b)$ and $\mathcal{S}(c)$ cannot all be disconnected from each other.
\end{proof}

\begin{figure}
\centering
\includegraphics[width=250pt]{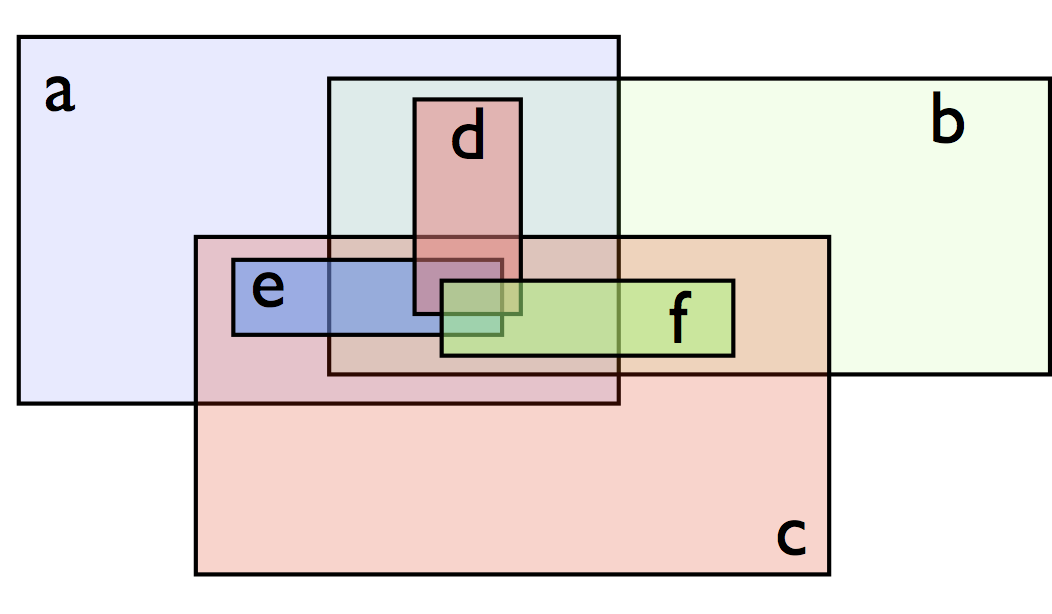}
\caption{\new{Convex realization of the network $\Theta$ from Proposition \ref{prop5}. This network has no convex realization in $\mathbb{R}$.\label{figProp5}}}
\end{figure}

\begin{proposition}\label{prop5}
There exists a consistent atomic \new{RCC5 network over $\{\bpo,\bpp,\bppi\}$} which has no convex solution in $\mathbb{R}$.
\end{proposition}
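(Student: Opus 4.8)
The plan is to exhibit a small consistent atomic RCC5 network over $\{\bpo,\bpp,\bppi\}$ together with a short argument that it cannot be realized by intervals of the real line. A natural candidate, mirroring the figure that accompanies the statement, is a network on four variables $a,b,c,d$ in which $d$ is a proper part of each of $a$, $b$ and $c$, while $a$, $b$ and $c$ pairwise partially overlap; that is,
\begin{align*}
&\ppi{a}{d} && \ppi{b}{d} && \ppi{c}{d}\\
&\po{a}{b} && \po{a}{c} && \po{b}{c}
\end{align*}
First I would check consistency by path consistency: since $\bpp$ composed with $\bppi$ contains $\bpo$ (and $\beq$, $\bpp$, $\bppi$), the constraint $\po{a}{b}$ is consistent with $\ppi{a}{d}$ and $\ppi{b}{d}$, and symmetrically for the other pairs, so the network passes the composition-table check; alternatively one simply draws the configuration in Figure~\ref{figProp5} (three overlapping discs in $\mathbb{R}^2$ with a common central region $d$) and reads off that all six constraints hold.

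Second I would argue non-realizability in $\mathbb{R}$. Suppose for contradiction that $\mathcal{S}$ is a convex solution in $\mathbb{R}$, so $\mathcal{S}(a),\mathcal{S}(b),\mathcal{S}(c),\mathcal{S}(d)$ are intervals. The constraints $\ppi{a}{d}$, $\ppi{b}{d}$, $\ppi{c}{d}$ force $\mathcal{S}(d) \subseteq \mathcal{S}(a) \cap \mathcal{S}(b) \cap \mathcal{S}(c)$, so the three intervals $\mathcal{S}(a),\mathcal{S}(b),\mathcal{S}(c)$ have a common point; since they are intervals of $\mathbb{R}$, pairwise intersection together with a global common point gives a nested ordering of their endpoints. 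More precisely, writing $\mathcal{S}(a)=[l_a,u_a]$ etc., a common point $p$ lies in all three, so on the "left side" the three left endpoints $l_a,l_b,l_c$ are linearly ordered (say $l_a \leq l_b \leq l_c \leq p$) and similarly the three right endpoints are linearly ordered. I would then show that whichever interval has the largest left endpoint and the smallest right endpoint is contained in the other two, contradicting $\bpo$: concretely, among $a,b,c$ pick the one, say $b$, whose left endpoint is $\geq$ the other two left endpoints; among the remaining two pick the one, say $c$, whose right endpoint is $\leq$ the other's; then either $\mathcal{S}(c)\subseteq \mathcal{S}(b)$ or $\mathcal{S}(b)\subseteq \mathcal{S}(c)$ depending on the comparison of $l_b$ with $l_c$ — in both cases one of $\po{b}{c}$'s non-containment requirements fails. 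A cleaner way to phrase it: three intervals that pairwise overlap and share a common point cannot be pairwise incomparable under inclusion, because the interval with the maximum left endpoint and the interval with the minimum right endpoint, if distinct, would still be comparable once we use the common point; and if they coincide, that interval is contained in the other two. Either way we reach a contradiction with the three $\bpo$ constraints, so no convex solution in $\mathbb{R}$ exists.

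The main obstacle is purely expository rather than mathematical: making the interval-inclusion case analysis in the second step airtight and short, since "pairwise overlapping intervals with a common point" does admit three pairwise $\bpo$-related intervals in general (e.g. $[0,2],[1,3],[1.5,2.5]$ pairwise overlap) — so the argument must genuinely use that \emph{all three} contain the common sub-interval $\mathcal{S}(d)$, or equivalently a common point, to force comparability. The right lemma to isolate is: if $I_1,I_2,I_3$ are intervals of $\mathbb{R}$ with $I_1\cap I_2\cap I_3\neq\emptyset$, then some $I_i$ is contained in some $I_j$ ($i\neq j$); this follows by ordering left endpoints and right endpoints and observing that the interval achieving the max left endpoint is a subset of any interval whose right endpoint is at least as large and whose left endpoint is at most as large — and the presence of the common point guarantees such a pair exists. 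I would state and prove this one-line lemma first and then apply it directly to $\mathcal{S}(a),\mathcal{S}(b),\mathcal{S}(c)$.
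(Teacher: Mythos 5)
There is a genuine gap here, and it is fatal to the proposed route: the ``one-line lemma'' you isolate is false. Three intervals of $\mathbb{R}$ can share a common point and still be pairwise incomparable under inclusion --- take $I_1=[0,3]$, $I_2=[1,4]$, $I_3=[2,5]$, which all contain the point $2.5$ and pairwise partially overlap with none contained in another. (Your own illustrative triple $[0,2],[1,3],[1.5,2.5]$ actually has $[1.5,2.5]\subseteq[1,3]$, so it does not witness what you intended either.) Consequently your four-variable candidate network is convexly realizable in $\mathbb{R}$: set $\mathcal{S}(a)=[0,3]$, $\mathcal{S}(b)=[1,4]$, $\mathcal{S}(c)=[2,5]$ and $\mathcal{S}(d)=[2.4,2.6]$; then $d$ is a proper part of each of $a,b,c$ and $a,b,c$ pairwise partially overlap. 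The step ``pick the interval with the largest left endpoint, then the one with the smallest right endpoint, and conclude they are comparable'' fails on exactly this configuration, because no single interval need achieve both extremes, and the two extremal intervals need not be comparable.

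The paper avoids this by using six variables: two $\bpo$-triangles $\{a,b,c\}$ and $\{d,e,f\}$ with the cyclic containment pattern $\pp{d}{a}$, $\pp{d}{b}$, $\po{d}{c}$; $\pp{e}{a}$, $\pp{e}{c}$, $\po{e}{b}$; $\pp{f}{b}$, $\pp{f}{c}$, $\po{f}{a}$. In any interval realization one may assume (by the $S_3$-symmetry permuting $a,b,c$ and correspondingly $d,e,f$) that $d$ has the least left endpoint and $e$ the greatest right endpoint among $d,e,f$; since $d$ and $e$ are both contained in $a$, the interval $f$ is sandwiched between $a$'s endpoints and hence contained in $a$, contradicting $\po{f}{a}$. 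The essential extra ingredient, which your gadget lacks, is that the region playing the role of $f$ is required to \emph{partially overlap} the container $a$ while being trapped between two other regions that are parts of $a$. If you want to salvage your approach you would need to add such witnesses; the single common part $d$ does not force any comparability among $a,b,c$.
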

\begin{proof}
Consider the following atomic network:
\begin{align*}
\Theta = \{&\po{a}{b},\po{a}{c},\po{b}{c}, \po{d}{e},\po{d}{f},\po{e}{f},\\
&\new{\pp{d}{a},\pp{d}{b}},\po{d}{c},
\new{\pp{e}{a},\pp{e}{c}},\po{e}{b},\\
&\new{\pp{f}{b},\pp{f}{c}},\po{f}{a} \}
\end{align*}
\new{Figure \ref{figProp5} shows a convex solution of this network in $\mathbb{R}^2$. We now show that a convex solution in $\mathbb{R}$ does not exist.}
Suppose $\mathcal{S}$ were a convex solution of $\Theta$ in $\mathbb{R}$.
Let $\mathcal{S}(a)=[a^-,a^+]$ and similar for $b,c,d,e,f$.  Due to the symmetry of the problem, we can assume that $d^- \new{\leq} \min(e^-,f^-)$ and $e^+\new{\geq}\max(d^+,f^+)$. Because $d$ and $e$ are both contained in $a$, this would mean that $f$ were also contained in $a$, a contradiction since $\po{f}{a}$ is required.

\end{proof}

\new{The previous results remains valid if we replace $\bpp$ by either $\bntpp$ or $\btpp$. Hence we obtain the following corollaries.
\begin{corollary}\label{propRealizationPOTPP}
There exists a consistent atomic RCC8 network over $\{\bpo,\btpp,\btppi\}$ which has no convex solution in $\mathbb{R}$.
\end{corollary}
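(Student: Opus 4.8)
The plan is to reuse the network from Proposition \ref{prop5}, turning every proper‑part constraint into its tangential variant. Concretely, let $\Theta'$ be obtained from the network $\Theta$ of Proposition \ref{prop5} by replacing each of $\pp{d}{a}$, $\pp{d}{b}$, $\pp{e}{a}$, $\pp{e}{c}$, $\pp{f}{b}$, $\pp{f}{c}$ with $\tpp{d}{a}$, $\tpp{d}{b}$, $\tpp{e}{a}$, $\tpp{e}{c}$, $\tpp{f}{b}$, $\tpp{f}{c}$ respectively (so the inverse direction now uses $\btppi$ rather than $\bppi$). Then $\Theta'$ is an atomic network over $\{\bpo,\btpp,\btppi\}$ (it still specifies a base relation for each of the $\binom{6}{2}=15$ pairs), and it suffices to establish two things: (i) $\Theta'$ is consistent, and (ii) $\Theta'$ has no convex solution in $\mathbb{R}$.

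For (ii) I would observe that the impossibility argument used for Proposition \ref{prop5} never exploits anything about the parthood constraints beyond the set‑theoretic inclusions $d\subseteq a$, $d\subseteq b$, $e\subseteq a$, $e\subseteq c$, $f\subseteq b$, $f\subseteq c$ that they entail, together with the fact that $\po{f}{a}$ entails $f\not\subseteq a$. Since $\btpp$ still entails $\subset$ and hence $\subseteq$, the reasoning transfers word for word: given a convex solution $\mathcal{S}$ in $\mathbb{R}$, write every region as an interval; use the pairwise $\bpo$ relations among $d,e,f$ — which force that the one with smallest left endpoint does not have the largest right endpoint — to reduce, via relabelling and the symmetry $(e\,f)(a\,b)$ of $\Theta'$, to the case $d^-\leq\min(e^-,f^-)$ and $e^+\geq\max(d^+,f^+)$; deduce $[d^-,e^+]\subseteq\mathcal{S}(a)$ from $d\subseteq a$ and $e\subseteq a$; and conclude $f\subseteq[d^-,e^+]\subseteq\mathcal{S}(a)$, contradicting $\po{f}{a}$. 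So no new argument is needed for (ii); I would simply note that the proof of Proposition \ref{prop5} applies verbatim.

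For (i) there are two routes. The combinatorial one is to invoke the fact (Section \ref{secPreliminaries}) that consistency of an atomic network is equivalent to path consistency, and to check via the RCC8 composition table (Table \ref{tableRCC8composition}) that every triple of the six variables of $\Theta'$ is locally consistent; this is a finite, routine verification — the only triples to worry about are those of the form $\{x,y,z\}$ with two $\btpp$-edges and a $\bpo$-edge, e.g.\ $\{d,a,b\}$, and one checks $\btpp$ is among $\btpp\circ\btppi$ (resp.\ the relevant composition). The more illustrative route is to exhibit an explicit planar realization: starting from Figure \ref{figProp5}, push each inner region $d,e,f$ outward until it touches the boundaries of exactly the two regions it is a proper part of (while staying inside their interiors elsewhere), keeping all the overlaps among $a,b,c$ and among $d,e,f$ intact; this converts each $\bpp$ into a $\btpp$ without disturbing the other relations, giving a $2$‑dimensional (hence valid) realization.

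The only genuine work is step (i), and it is light; step (ii) is inherited for free from Proposition \ref{prop5}. I expect the main — and only minor — obstacle to be bookkeeping: either making the adjusted planar picture precise, or running through the composition‑table entries for all relevant triples. Neither presents any real difficulty, so the corollary follows quickly once the replacement construction is in place.
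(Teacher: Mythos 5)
Your proposal is exactly the paper's approach: the corollary is obtained by replacing $\bpp$ with $\btpp$ in the network of Proposition \ref{prop5} and observing that the non-realizability argument only uses the inclusion $\subseteq$ entailed by the parthood constraints, which $\btpp$ still provides. The extra care you take over consistency of the modified network (via path consistency or an explicit planar realization) is correct but not something the paper spells out.
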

\begin{corollary}\
There exists a consistent atomic RCC8 network over $\{\bpo,\bntpp,\bntppi\}$ which has no convex solution in $\mathbb{R}$.
\end{corollary}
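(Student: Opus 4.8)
The plan is to take the network from Proposition~\ref{prop5} and replace every $\bpp$ by $\bntpp$ (equivalently, every $\bppi$ by $\bntppi$), leaving the $\bpo$-constraints untouched; that is, to consider the atomic network over $\{\bpo,\bntpp,\bntppi\}$
\begin{align*}
\Theta = \{&\po{a}{b},\po{a}{c},\po{b}{c},\po{d}{e},\po{d}{f},\po{e}{f},\\
&\ntpp{d}{a},\ntpp{d}{b},\po{d}{c},\ntpp{e}{a},\ntpp{e}{c},\po{e}{b},\\
&\ntpp{f}{b},\ntpp{f}{c},\po{f}{a}\}
\end{align*}
and to show (i) that $\Theta$ is consistent and (ii) that $\Theta$ has no convex solution in $\mathbb{R}$.

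For (ii), the key observation is that $\bntpp$ entails $\bpp$ (as base relations, $\bntpp\in\bpp$), whereas the $\bpo$-constraints of $\Theta$ are exactly those of the network in Proposition~\ref{prop5}. Hence any convex solution $\mathcal{S}$ of $\Theta$ in $\mathbb{R}$ is also a convex solution in $\mathbb{R}$ of the RCC5 network of Proposition~\ref{prop5}, and since the latter has no such solution, neither does $\Theta$. Unfolding the argument directly: writing $\mathcal{S}(a)=[a^-,a^+]$ and similarly for $b,\dots,f$, the constraints $\po{d}{e}$, $\po{d}{f}$, $\po{e}{f}$ force $\mathcal{S}(d),\mathcal{S}(e),\mathcal{S}(f)$ to be three pairwise properly crossing intervals; ordering them by left endpoint, the leftmost has the smallest right endpoint and the rightmost the largest, so by the $S_3$-symmetry of $\Theta$ that permutes $\{a,b,c\}$ together with $\{d,e,f\}$ we may assume $d^-\leq\min(e^-,f^-)$ and $e^+\geq\max(d^+,f^+)$. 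Then $\ntpp{d}{a}$ and $\ntpp{e}{a}$ give $\mathcal{S}(f)\subseteq[d^-,e^+]=\mathcal{S}(d)\cup\mathcal{S}(e)\subseteq\mathcal{S}(a)$, contradicting $\po{f}{a}$.

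For (i), I would exhibit a two-dimensional solution: arrange three pairwise partially overlapping convex regions $a,b,c$ as a three-set Venn diagram with non-empty common part, place a small convex region $d$ inside $i(a)\cap i(b)$ straddling the boundary of $c$ (so $\ntpp{d}{a}$, $\ntpp{d}{b}$, $\po{d}{c}$ hold), and do the same symmetrically for $e$ inside $i(a)\cap i(c)$ and $f$ inside $i(b)\cap i(c)$, letting $d,e,f$ each poke into $a\cap b\cap c$ so that they pairwise partially overlap. This is the configuration of Figure~\ref{figProp5} with $d,e,f$ shrunk into the relevant interiors; alternatively, since $\Theta$ is atomic, consistency can be verified mechanically by path consistency using Table~\ref{tableRCC8composition}.

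There is no genuinely hard step here---the statement is a direct transfer of Proposition~\ref{prop5}. The only point requiring (routine) care is the consistency check in (i): confirming that $d$, $e$ and $f$ can simultaneously be placed strictly interior to the appropriate pairs among $\{a,b,c\}$ while pairwise partially overlapping and each partially overlapping the remaining region, which the two-dimensional picture makes evident.
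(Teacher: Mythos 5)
Your proposal is correct and follows essentially the same route as the paper: the paper obtains this corollary by observing that the proof of Proposition~\ref{prop5} remains valid when $\bpp$ is replaced by $\bntpp$, which is exactly your substitution, and your interval argument (ordering $d,e,f$ by endpoints via the symmetry and deriving $\mathcal{S}(f)\subseteq\mathcal{S}(a)$) is the paper's own argument spelled out in slightly more detail. The extra care you take with the consistency check (a shrunken version of Figure~\ref{figProp5}, or path consistency) is fine but not something the paper dwells on.
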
}

\begin{proposition}
There exists a consistent atomic RCC8 network over $\{\btpp,\btppi,\bntpp,\bntppi\}$ which has no convex solution in $\mathbb{R}$.
\end{proposition}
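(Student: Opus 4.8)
The goal is to exhibit a consistent atomic RCC8 network over $\{\btpp,\btppi,\bntpp,\bntppi\}$ that admits no convex solution in $\mathbb{R}$. My plan is to reuse the ``three children in one parent'' obstruction from the preceding propositions, but now with parthood as the only available relation, so the obstruction must come from nesting rather than from disconnectedness. The idea is that if a region $d$ is realized as an interval in $\mathbb{R}$ and several regions are required to be non-tangential proper parts of $d$ while pairwise being tangential proper parts or proper-part-related to each other, the linear order forces unwanted containments. Concretely, I would take a top region $d$, two regions $a,b$ with $\bntpp\, a\, d$ and $\bntpp\, b\, d$, and then impose $\btpp\, a\, b$ and $\btpp\, b\, a$ simultaneously — but that is inconsistent, so instead I impose a cyclic pattern on three regions inside $d$.

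The cleaner construction I would try: let $\Theta$ contain $\bntpp\, a\, d$, $\bntpp\, b\, d$, $\bntpp\, c\, d$, together with $\btpp\, a\, b$, $\btpp\, b\, c$, and $\btpp\, c\, a$ — again cyclic and hence inconsistent, so this does not work either. The right move is the same trick as Proposition~\ref{prop5}: use two ``layers''. Take regions $a,b,c$ that pairwise satisfy a proper-part-free overlap pattern (which we cannot express here), so instead I would mimic it via $d\subset a,b$; $e\subset a,c$; $f\subset b,c$ with $a,b,c$ the outer regions. Since $\bpo$ is forbidden, I must make $a,b,c$ related by $\btpp$/$\bntpp$ among themselves in a way that is still consistent but blocks the real-line realization. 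A consistent choice is a chain $a\subset b\subset c$; then requiring $d$ to be a tangential proper part of both $a$ and $b$, $e$ a tangential proper part of both $a$ and $c$ but not of $b$ (so $e$ overlaps but is not contained in $b$ — impossible under a chain). This shows the subtlety. I would therefore fall back to the explicit network used implicitly in Example~\ref{exNot2Dconvex} and Proposition~\ref{propRealizableDCNTPP1D}'s failure mode, replacing every $\bdc$ by a tangential-proper-part-into-a-common-ambient trick: introduce a big region $w$ with $\btpp\, a\, w$, $\btpp\, b\, w$, $\btpp\, c\, w$, and nest $a,b,c$ so that $a,b,c$ each grab a distinct endpoint structure of $w$ — but $w$, being an interval, has only two endpoints, so three tangential proper parts of $w$ cannot be mutually $\btpp$-related without one containing another; checking the RCC8 composition table confirms this forces, say, $\btpp\, a\, b$ or $\bntpp\, a\, b$, which we then contradict by a further constraint.

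Thus the network I propose is: variables $w,a,b,c,d,e,f$ with
$\btpp\, a\, w$, $\btpp\, b\, w$, $\btpp\, c\, w$,
$\bntpp\, d\, a$, $\bntpp\, d\, b$, $\btpp\, d\, c$,
$\bntpp\, e\, a$, $\bntpp\, e\, c$, $\btpp\, e\, b$,
$\bntpp\, f\, b$, $\bntpp\, f\, c$, $\btpp\, f\, a$,
and $\btpp\, a\, b$ or the appropriate base relation between each pair $a,b$, $b,c$, $a,c$ forced by path-consistency with $w$. I would first run the composition table to pin down the unique consistent base relation between each outer pair and to verify $\Theta$ is path-consistent, hence consistent as an atomic network. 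Then, assuming a convex solution $\mathcal{S}$ in $\mathbb{R}$ exists, I write $\mathcal{S}(w)=[w^-,w^+]$; each of $a,b,c$ is a subinterval touching exactly one endpoint of $[w^-,w^+]$, so two of them, say $a$ and $b$, touch the same endpoint, forcing one of $\mathcal{S}(a)\subseteq \mathcal{S}(b)$ or $\mathcal{S}(b)\subseteq\mathcal{S}(a)$ — then the ``$d,e,f$ layer'' argument of Proposition~\ref{prop5} (one of $d,e,f$ is squeezed into a region it should only overlap) yields the contradiction. The main obstacle, and the part I expect to spend the most care on, is choosing the inter-relations among $a,b,c$ (and among $d,e,f$) so that the network is genuinely consistent — verifiable via the composition table of Table~\ref{tableRCC8composition} — while still robust enough that every linear arrangement collapses; this is exactly where the two-layer idea of Proposition~\ref{prop5} has to be adapted to survive the absence of $\bpo$ and $\bdc$.
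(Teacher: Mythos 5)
There is a genuine gap: you never actually exhibit a consistent atomic network, and the candidate you end with cannot be made consistent. Your final construction leaves the base relations among $a,b,c$ (and among $d,e,f$) to be ``pinned down by path-consistency,'' but in the fragment $\{\btpp,\btppi,\bntpp,\bntppi\}$ any two regions must be comparable under containment, so $a,b,c$ are forced into a chain, say $a\subset b\subset c$. Your $d,e,f$ layer then collapses exactly as you yourself observed mid-proposal: e.g.\ $\ntpp{e}{a}$ together with $\tpp{a}{b}$ entails $\ntpp{e}{b}$ (since $\bntpp\circ\btpp=\bntpp$ in Table \ref{tableRCC8composition}), contradicting your constraint $\tpp{e}{b}$. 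So the network is inconsistent, not merely unrealizable, and the proposition is not proved. More broadly, the attempt to port the two-layer overlap obstruction of Proposition \ref{prop5} is the wrong move here, because that obstruction fundamentally needs $\bpo$ to keep the outer regions incomparable.

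The missing idea is much simpler and uses only nesting. The paper takes a four-element chain $a_1\subset a_2\subset a_3\subset a_4$ with every relation $\btpp$ except $\ntpp{a_1}{a_4}$, i.e.
$\{\tpp{a_1}{a_2},\tpp{a_1}{a_3},\ntpp{a_1}{a_4},\tpp{a_2}{a_3},\tpp{a_2}{a_4},\tpp{a_3}{a_4}\}$,
which is trivially consistent. On the real line each $\btpp$ between nested intervals forces a shared endpoint, and a short case analysis shows that the shared endpoints must propagate down the chain (if $a_1$ and $a_2$ share their left endpoint, then so must $a_3$, and then $a_4$), so $a_1$ and $a_4$ end up sharing an endpoint, contradicting $\bntpp$. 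Your instinct that ``an interval has only two endpoints'' is the right kind of observation, but you aimed it at three siblings inside one parent (which the fragment cannot keep disjoint) rather than at a single chain where the endpoint-sharing is transitively forced.
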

\begin{proof}
Consider the following atomic network:
\begin{align}\label{eqThetaTPPNTPP}
\Theta = \{&\tpp{a_1}{a_2},\tpp{a_1}{a_3},\ntpp{a_1}{a_4},\\
&\tpp{a_2}{a_3},\tpp{a_2}{a_4},\tpp{a_3}{a_4}\}\notag
\end{align}
Suppose $\mathcal{S}$ were a convex solution of $\Theta$ in $\mathbb{R}$.
Let $\mathcal{S}(a_i)=[u_i,v_i]$.  From $\tpp{a_1}{a_2}$ we derive that either $u_1= u_2$ or $v_1=v_2$. Assume for instance $u_1=u_2$ (the other case is entirely analogous). Then we also have $v_1 < v_2$.  From $\tpp{a_2}{a_3}$ we moreover find $v_1 < v_2\leq v_3$. Since $\tpp{a_1}{a_3}$, this means that $u_1=u_2=u_3$ and thus $v_2 < v_3$. Similarly, from $\tpp{a_2}{a_4}$ we find $u_2 = u_4$. However, since $u_1=u_2$ it follows that $u_1 = u_4$ and thus $\mathcal{S}$ violates the constraint $\ntpp{a_1}{a_4}$.
\end{proof}

\begin{proposition}
There exists a consistent atomic RCC8 network over $\{\bec\}$ which has no convex solution in $\mathbb{R}$.
\end{proposition}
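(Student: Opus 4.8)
The plan is to exhibit a small consistent atomic network over $\{\bec\}$ and argue by a counting/ordering argument that it cannot be realized using intervals of $\mathbb{R}$. The natural candidate is the $\bec$-analogue of the $\bdc$-example used in Proposition~\ref{propRealizableDCNTPP1D}'s lower-bound companion: take four variables and require every pair to be externally connected, i.e.
$$
\Theta = \{\ec{a}{b},\ec{a}{c},\ec{a}{d},\ec{b}{c},\ec{b}{d},\ec{c}{d}\}.
$$
First I would check consistency: four pairwise externally connected regions are easy to realize in $\mathbb{R}^2$ (e.g.\ four triangles meeting around a common point, or more simply a configuration of squares), and since the network is atomic one can alternatively verify path consistency against Table~\ref{tableRCC8composition} --- $\bec$ occurs in $\bec\circ\bec$, so no triangle is ruled out. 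Hence $\Theta$ is consistent.

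Next I would show no convex solution in $\mathbb{R}$ exists. Suppose $\mathcal{S}$ were such a solution, with $\mathcal{S}(a)=[a^-,a^+]$ and similarly for $b,c,d$; each interval is nondegenerate since regions are nonempty regular closed sets. For two intervals $\ec{}{}$ holds iff they intersect in exactly one point, which must be a shared endpoint: so for each pair among $a,b,c,d$ either the right endpoint of one equals the left endpoint of the other. Now I would look at the variable, say $a$, whose left endpoint $a^-$ is minimal among the four left endpoints (breaking ties arbitrarily). Consider the three constraints $\ec{a}{b}$, $\ec{a}{c}$, $\ec{a}{d}$: each forces $a$ and the other interval to abut at a single point, which (given $a^-$ minimal, so none of the others can abut $a$ from the left without violating minimality or nondegeneracy) must be the point $a^+$ --- that is, $b^- = c^- = d^- = a^+$. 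But then $b$, $c$, $d$ all start at the same point $a^+$ and are nondegenerate intervals extending to the right, so any two of them share the whole sub-interval $[a^+, a^+ + \varepsilon]$, whence their interiors intersect and $\ec{b}{c}$ (for instance) fails. This contradiction establishes the claim.

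The one place requiring a little care --- and the main obstacle --- is the tie-breaking and the "abut from the left" step: one must rule out the possibility that $a$ meets one of $b,c,d$ at its \emph{left} endpoint $a^-$ rather than at $a^+$. If $a^-$ is the strict minimum of all four left endpoints this is immediate, since an interval abutting $a$ at $a^-$ would have to lie entirely to the left of $a^-$ (hence have an even smaller left endpoint) unless it is degenerate, which is excluded. If several intervals share the minimal left endpoint value, those intervals already have intersecting interiors, immediately contradicting the $\bec$ constraint between them, so in fact the minimum is attained uniquely. Thus the argument goes through, and $\Theta$ witnesses the proposition.
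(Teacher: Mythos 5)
Your proof is correct and takes essentially the same approach as the paper, which simply exhibits the three-variable network $\{\ec{a}{b},\ec{a}{c},\ec{b}{c}\}$ and notes it has no interval realization. Your four-variable clique works for the same reason (the leftmost interval would force the others to abut it at its right endpoint and hence overlap one another), though three pairwise externally connected regions already suffice.
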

\begin{proof}
It is easy to verify that $\{\ec{a}{b},\ec{a}{c},\ec{b}{c}\}$ has no convex solution in $\mathbb{R}$.
\end{proof}

\begin{corollary}
There exist consistent atomic RCC8 networks over the following set of base relations which have no convex solution in $\mathbb{R}$:
\begin{itemize}
\item $\{\bdc,\btpp,\btppi,\bntpp,\bntppi\}$
\item $\{\bpo,\btpp,\btppi,\bntpp,\bntppi\}$
\item $\{\bec,\bntpp,\bntppi\}$
\item $\{\bec,\btpp,\btppi\}$
\item $\{\bec,\btpp,\btppi,\bntpp,\bntppi\}$
\end{itemize}
\end{corollary}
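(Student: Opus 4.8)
The plan is to derive each of the five items directly from a counterexample already constructed earlier in this subsection, using the elementary observation that any atomic network over a set of base relations $\mathcal{R}'$ is also a network over any larger set $\mathcal{R}\supseteq\mathcal{R}'$, while consistency and convex realizability in $\mathbb{R}$ are intrinsic properties of the network itself and do not depend on the ambient relation set. Hence no new geometric construction is needed; the work reduces to checking the relevant set inclusions and invoking the earlier results.

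Concretely, I would match each listed set to a previously handled fragment. Since $\{\bdc,\btpp,\btppi,\bntpp,\bntppi\}\supseteq\{\bdc,\btpp,\btppi\}$, the network $\{\tpp{a}{d},\tpp{b}{d},\tpp{c}{d},\dc{a}{b},\dc{a}{c},\dc{b}{c}\}$ from the corresponding proposition witnesses the first item. Since $\{\bpo,\btpp,\btppi,\bntpp,\bntppi\}\supseteq\{\bpo,\btpp,\btppi\}$, Corollary \ref{propRealizationPOTPP} settles the second item (alternatively, this set also contains $\{\btpp,\btppi,\bntpp,\bntppi\}$, for which a counterexample was likewise given). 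Finally, each of $\{\bec,\bntpp,\bntppi\}$, $\{\bec,\btpp,\btppi\}$, and $\{\bec,\btpp,\btppi,\bntpp,\bntppi\}$ contains $\{\bec\}$, so the network $\{\ec{a}{b},\ec{a}{c},\ec{b}{c}\}$ handles the remaining three items simultaneously.

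The only point requiring a moment's care — and it is the closest thing to an obstacle here — is to confirm that each inherited network is still \emph{consistent} when regarded as a network over the larger relation set; but since the RCC8 composition table and the notion of consistency are the same regardless of which base relations we nominally allow, a network that is consistent over the smaller fragment remains consistent, and nothing further needs to be verified.
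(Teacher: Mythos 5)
Your proposal is correct and is exactly the argument the paper intends: the corollary is left without an explicit proof precisely because each listed set contains one of the fragments $\{\bdc,\btpp,\btppi\}$, $\{\bpo,\btpp,\btppi\}$, or $\{\bec\}$ for which a non-realizable consistent atomic network was just exhibited, and such a network over a subset of base relations is automatically a network over any superset. Your remark that consistency is intrinsic to the network and independent of the nominal relation set is also the right (and only) point needing verification.
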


\subsubsection{Upper bounds}\label{subsecUpper2D}

\begin{figure}
\centering
\includegraphics[width=240pt]{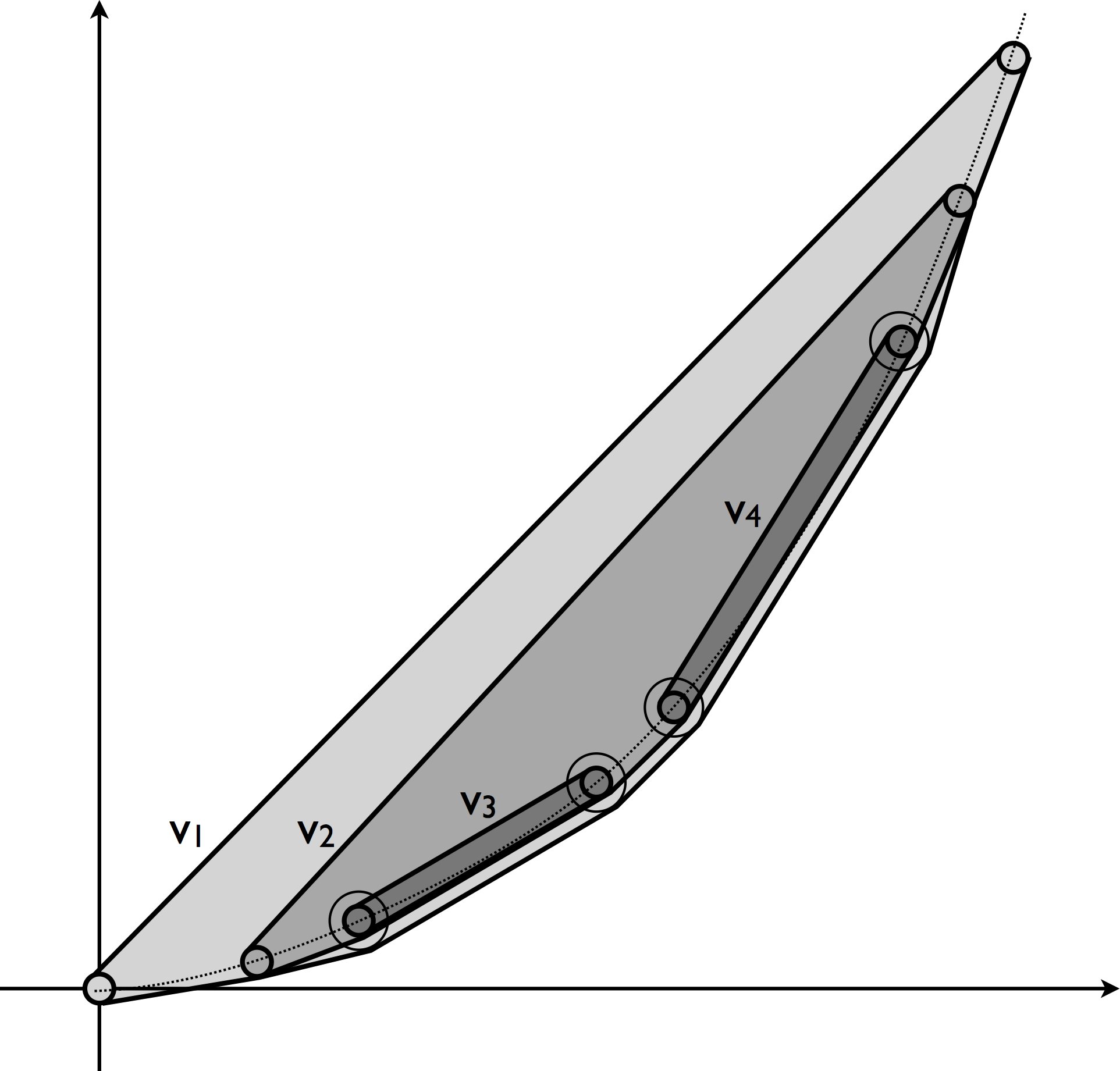}
\caption{\new{Convex solution of the network $\Theta$ from Example \ref{Prop9ex}.\label{figProp9ex}}}
\end{figure}

\new{We now prove for a number of fragments of RCC8 that consistent atomic networks can always be convexly realized in $\mathbb{R}^2$. The following example illustrates the basic construction which is used in the proofs.
\begin{example}\label{Prop9ex}
Consider the RCC8 network $\Theta$ defined by
\begin{align*}
\Theta &= \{\tpp{v_2}{v_1},\tpp{v_3}{v_2},\tpp{v_4}{v_2},\ntpp{v_3}{v_1},\ntpp{v_4}{v_1}, \dc{v_3}{v_4}\}
\end{align*}
It is easy to verify that this network does not have a convex solution in $\mathbb{R}$. However, as Figure \ref{figProp9ex} shows, it is possible to construct a convex solution of $\Theta$ in $\mathbb{R}^2$.
\end{example}}

\noindent \new{In a similar way, we can realize any consistent atomic RCC8 network $\Theta$ over $\{\bdc,\btpp, \btppi,\bntpp,\bntppi\}$. To show this, we first consider of a weakened version $\Psi$ of $\Theta$, in which $\btpp$ and $\btppi$ are weakened to $\bpp$ and $\bppi$. From Proposition \ref{propRealizableDCNTPP1D} we know that $\Psi$ can be realized using intervals of the real line. Let $[l_i,u_i]$ be the realization of $v_i$ and assume that $l_i,u_i \geq 0$. We can then construct a convex solution $\mathcal{T}$ of $\Theta$ as follows:
$$
\mathcal{T}(v_i) = \ch(\bigcup\{ S_{\theta_{ij}}(l_j,l_j^2) \cup S_{\theta_{ij}}(u_j,u_j^2)  \,|\, \Theta \models \p{v_j}{v_i}\}
$$
where $\ch$ denotes the convex hull and $S_{\theta_{ij}}(l_j,l_j^2)$ is the disc with radius $\theta_{ij}$ and center $(l_j,l_j^2)$. By carefully choosing the values of $\theta_{ij}$ we can ensure that $\mathcal{T}$ is indeed a solution of $\Theta$, as we show in the proof of the following proposition. Note that the points $(l_i,l_i^2)$ and $(u_i,u_i^2)$ are on the positive half of the two-dimensional moment curve. Sections \ref{secFragments3D} and \ref{secFragments4D} will use constructions that are based on choosing points on the three-dimensional and four-dimensional moment curve, respectively.} 

\new{We say that a region $v\in V'$ is at level $t$ if there is a chain of $t-1$ regions $u_1,...,u_{t-1}$ in $V'$ such that $\Theta\models \{\ntpp{u_1}{u_2},...,\ntpp{u_{t-2}}{u_{t-1}},\ntpp{u_{t-1}}{v}\}$ and there is no chain of $t$ regions for which this is the case.   Let $\Delta(v)$ be the level of region $v\in V'$. Note that $\Delta(v)=1$ iff there is no region $u\in V'$ such that $\Theta\models \ntpp{u}{v}$.  } 
\begin{proposition}\label{propRealizationDCTPPNTPP2d}
Every consistent atomic RCC8 network  over $\{\bdc,\btpp, \btppi,\bntpp,\bntppi\}$ has a convex solution in $\mathbb{R}^2$.
\end{proposition}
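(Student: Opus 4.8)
The plan is to make precise the construction sketched just before the statement. Start from a consistent atomic network $\Theta$ over $\{\bdc,\btpp,\btppi,\bntpp,\bntppi\}$ and form its weakening $\Psi$ by replacing every $\btpp$ constraint by $\bpp$ and every $\btppi$ by $\bppi$; then $\Psi$ is a consistent atomic network over $\{\bdc,\bntpp,\bntppi\}$ (we must check that weakening preserves consistency, which follows since any solution of $\Theta$ is a solution of $\Psi$). By Proposition \ref{propRealizableDCNTPP1D}, $\Psi$ has a convex solution in $\mathbb{R}$ consisting of intervals $[l_i,u_i]$, and by translating we may assume $l_i,u_i\geq 0$ so that the endpoint-lifts $(l_i,l_i^2)$ and $(u_i,u_i^2)$ lie on the positive half of the two-dimensional moment curve $M$. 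I would then define $\mathcal{T}(v_i)=\ch\big(\bigcup\{S_{\theta_{ij}}(l_j,l_j^2)\cup S_{\theta_{ij}}(u_j,u_j^2)\,|\,\Theta\models\p{v_j}{v_i}\}\big)$, the convex hull of small discs centred at the lifted endpoints of each region that is a part of $v_i$ (including $v_i$ itself).

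The heart of the proof is choosing the radii $\theta_{ij}$ so that all eight types of constraint are respected. I would fix the radii by induction on the level $\Delta(v_i)$: regions at level $1$ get radii that depend only on a global separation parameter, and a region at level $t$ gets radii strictly smaller than those of the region at level $t-1$ that it is a non-tangential proper part of — small enough that its generating discs, hence its convex hull, sit strictly inside the interior of the enclosing region's hull. Concretely, one picks $\theta_{ij}$ much smaller than the minimal gap between distinct endpoints occurring at lower levels and than the "vertical thickness" available between the chord of $M$ spanned by $[l_j,u_j]$ and the arc of $M$ itself; the convexity of the parabola guarantees there is room between the chord joining two lifted points and the curve, which is exactly what makes a non-tangential containment possible while keeping the outer region convex.

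Then I would verify the constraints case by case. For $\ntpp{v_i}{v_j}$: every generator disc of $\mathcal{T}(v_i)$ is among the generators of $\mathcal{T}(v_j)$ by definition of "part of", and the radius choice makes $\mathcal{T}(v_i)$ lie in the interior of $\mathcal{T}(v_j)$. For $\btpp{v_i}{v_j}$ (present in $\Theta$ but weakened in $\Psi$): again $\mathcal{T}(v_i)\subseteq\mathcal{T}(v_j)$, but now I would arrange that $v_i$'s disc around the shared endpoint of the interval $[l_i,u_i]=[l_j,u_j]$ (they share an endpoint in the interval realization since $\tpp{}{}$ in $\Psi$ came from $\btpp$ — here one needs that path-consistency of $\Theta$ forces a shared endpoint in the chosen 1D realization, or one re-chooses the 1D solution to ensure this) touches the boundary of $\mathcal{T}(v_j)$ there, giving tangential contact. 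For $\dc{v_i}{v_j}$: if neither region is part of the other and they are disconnected, the interval realization gives disjoint intervals or a parent/sibling structure from which, using that the discs have radii below half the minimal endpoint gap, the hulls are disjoint; one must also rule out accidental intersection coming from hulls of nested families, which is where the monotone-decreasing radii across levels are used. The main obstacle I anticipate is precisely this last bookkeeping: showing that convex hulls of unions of discs centred at moment-curve points, taken over different "part of"-downsets, do not overlap unless the network forces them to — i.e., translating the linear-independence / single-crossing properties of the moment curve (quoted at the end of Section \ref{secGeometry}) into clean separation inequalities, and making the induction on $\Delta$ go through uniformly. I would handle it by first proving a lemma that for points on the positive half of $M$, the convex hull of the lifted interval $[l,u]$ meets the lifted interval $[l',u']$ only when the intervals overlap, and then layering the radius estimates on top.
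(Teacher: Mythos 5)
Your overall construction is the paper's: weaken $\Theta$ to a network over $\{\bdc,\bntpp,\bntppi\}$, realize it by intervals via Proposition \ref{propRealizableDCNTPP1D}, lift the endpoints to the parabola, and take convex hulls of small discs around the lifted endpoints of all parts, with radii tuned by the nesting level. The separation facts you invoke for $\bdc$ (disjoint intervals give hulls in disjoint vertical slabs) and the larger-radii-for-containers device for $\bntpp$ are exactly what the paper uses: a container $v_j$ places discs of radius $\Delta(v_j)\cdot\theta$ around the endpoints of its $\bntpp$-parts, versus radius $\theta$ for a part's own discs, so deeper parts sit strictly inside.

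Your treatment of $\btpp$, however, has a genuine gap. You derive tangency from a shared endpoint of the intervals $[l_i,u_i]$ and $[l_j,u_j]$, and you concede that one must either show the 1D realization has such shared endpoints or re-choose it so that it does. Neither option is available: the solution Proposition \ref{propRealizableDCNTPP1D} gives realizes every containment non-tangentially (so $l_i\neq l_j$ and $u_i\neq u_j$ throughout), and a 1D solution witnessing every $\btpp$ by a shared endpoint does not exist in general --- the network $\{\tpp{a}{d},\tpp{b}{d},\tpp{c}{d},\dc{a}{b},\dc{a}{c},\dc{b}{c}\}$, the paper's own lower-bound example for this fragment, has no interval realization at all. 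Moreover, your radius rule as stated (radii strictly decrease with each level of nesting) would, if applied across a $\btpp$ edge, push $\mathcal{T}(v_i)$ into the interior of $\mathcal{T}(v_j)$ and produce $\bntpp$ rather than $\btpp$. The fix, which is the paper's actual mechanism, is to make the radius depend on the type of edge rather than only on the level: when $\tpp{v_i}{v_j}$, the container $\mathcal{T}(v_j)$ places discs of the \emph{same} radius $\theta$ around $p_i$ and $q_i$ as $\mathcal{T}(v_i)$ does, while for $\ntpp{v_i}{v_j}$ it uses a strictly larger radius. Since each lifted endpoint is an extreme point of the convex hull of all the lifted points (strict convexity of the parabola), for $\theta$ small enough the disc $S_{\theta}(p_i)$ contributes an arc to the boundary of every hull that contains it with radius exactly $\theta$; hence $\mathcal{T}(v_i)$ and $\mathcal{T}(v_j)$ share boundary points, giving the required tangential contact with no shared interval endpoints needed.
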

\begin{proof}
Let $\Theta$ be a consistent atomic RCC8 network over $\{\bdc,\btpp, \btppi,\bntpp,\bntppi\}$. We show that $\Theta$ can be convexly realized in $\mathbb{R}^2$. Let $\Psi$ be the atomic RCC8 network which is obtained from $\Theta$ by replacing every constraint of the form $\tpp{v_i}{v_j}$ by $\ntpp{v_i}{v_j}$ and  every constraint of the form $\tppi{v_i}{v_j}$ by $\ntppi{v_i}{v_j}$. Clearly, $\Psi$ is consistent, and hence by Proposition \ref{propRealizableDCNTPP1D}, we know that $\Psi$ has a convex solution $\mathcal{S}$ in $\mathbb{R}$. Let $\mathcal{S}(v_i) = [l_i,u_i]$. \new{Without loss of generality, we can assume that $l_i>0$ for each $i$.} With each region $v_i$ we associate two points $p_i, q_i \in \mathbb{R}^2$:
\begin{align*}
p_i &= (l_i,l_i^2)\\
q_i &= (u_i,u_i^2)
\end{align*}
\new{Let $\theta>0$ be a constant. We can then define a convex solution $\mathcal{T}$ as follows:
\begin{align*}
\mathcal{T}(v_j) = \ch \big(S_{\theta}(p_j) \cup S_{\theta}(q_j)   &\cup \bigcup \{S_{\theta}(p_{l}) \cup S_{\theta}(q_{l}) \,|\, \Theta\models \tpp{v_l}{v_j}\}  \\
& \cup \bigcup \{S_{\Delta(v_j) \cdot \theta}(p_{l}) \cup S_{\Delta(v_j) \cdot \theta}(q_{l}) \,|\, \Theta\models \ntpp{v_l}{v_j}\}  \big)
\end{align*}
where for a point $p$, $S_{\theta}(p) = \{q \,|\, d(q,p)\leq \theta\}$ is a disc with radius $\theta>0$ centered around $p$. By choosing $\theta$ sufficiently small, we can ensure that $S_{m \cdot \theta}(p') \cap S_{m \cdot \theta}(q')=\emptyset$ if $p'\not=q'$, where $m$ denotes the number of regions in $\Theta$. Accordingly, we can assume that $\dc{\mathcal{T}(v_i)}{\mathcal{T}(v_j)}$ holds for any regions $v_i, v_j$ such that $\Theta \models \dc{v_i}{v_j}$. By construction, it is immediate that $\p{\mathcal{T}(v_i)}{\mathcal{T}(v_j)}$ if $\Theta \models \p{v_i}{v_j}$. If $\theta$ is sufficiently small, $S_{\theta}(p_j)$ and $S_{\theta}(q_j)$ will not be included in $\ch(\bigcup_{l\neq j} S_{m\cdot \theta}(p_l) \cup S_{m\cdot \theta}(q_l)$), for any $j$. Then we have that $\pp{\mathcal{T}(v_i)}{\mathcal{T}(v_j)}$ if $\Theta \models \pp{v_i}{v_j}$. If $\Theta \models \ntpp{v_i}{v_j}$ we have that $\Delta(v_j)>\Delta(v_i)$, from which it is easy to see that $\ntpp{\mathcal{T}(v_i)}{\mathcal{T}(v_j)}$ will hold. Finally, if $\Theta \models \tpp{v_i}{v_j}$, $\mathcal{T}(v_i)$ and $\mathcal{T}(v_j)$ will share a boundary point of $S_{\theta}(p_i)$ and $S_{\theta}(q_i)$, and thus we can assume that $\tpp{\mathcal{T}(v_i)}{\mathcal{T}(v_j)}$.}

\end{proof}

\new{The next fragment we consider is $\{\bpo,\btpp,\btppi,\bntpp,\bntppi\}$. Before presenting the proof, we first give an example.
\begin{example}\label{Prop10ex}
Consider the RCC8 network $\Theta$ defined by
\begin{align*}
\Theta &= \{\tpp{v_2}{v_1},\tpp{v_3}{v_2},\tpp{v_4}{v_2},\ntpp{v_3}{v_1},\ntpp{v_4}{v_1}, \po{v_3}{v_4}\}
\end{align*}
An example of a convex solution in $\mathbb{R}^2$ is shown in Figure \ref{figProp10ex}.
\end{example}}
\new{More generally, we fix two points $q_0$ and $q_1$ on the positive half of the two-dimensional moment curve, as well as one point $p_i$ for each variable $v_i$. As we will show in the proof of Proposition \ref{prop10}, we can always find a solution $\mathcal{S}$ of the following form:
\begin{align*}
\mathcal{S}(v_j) = \ch \big(\{p_j\} \cup S_{\theta'}(q_0) \cup S_{\theta''}(q_1)  \cup \bigcup\{S_{\theta_l}(p_{l}) \,|\, \Theta \models \pp{v_l}{v_j}\}  \big)
\end{align*}
The proof of Proposition \ref{prop10} will make clear how $\theta', \theta'', \theta_l $ can be chosen such that $\mathcal{S}$ is a solution of $\Theta$.}

\begin{figure}
\centering
\includegraphics[width=240pt]{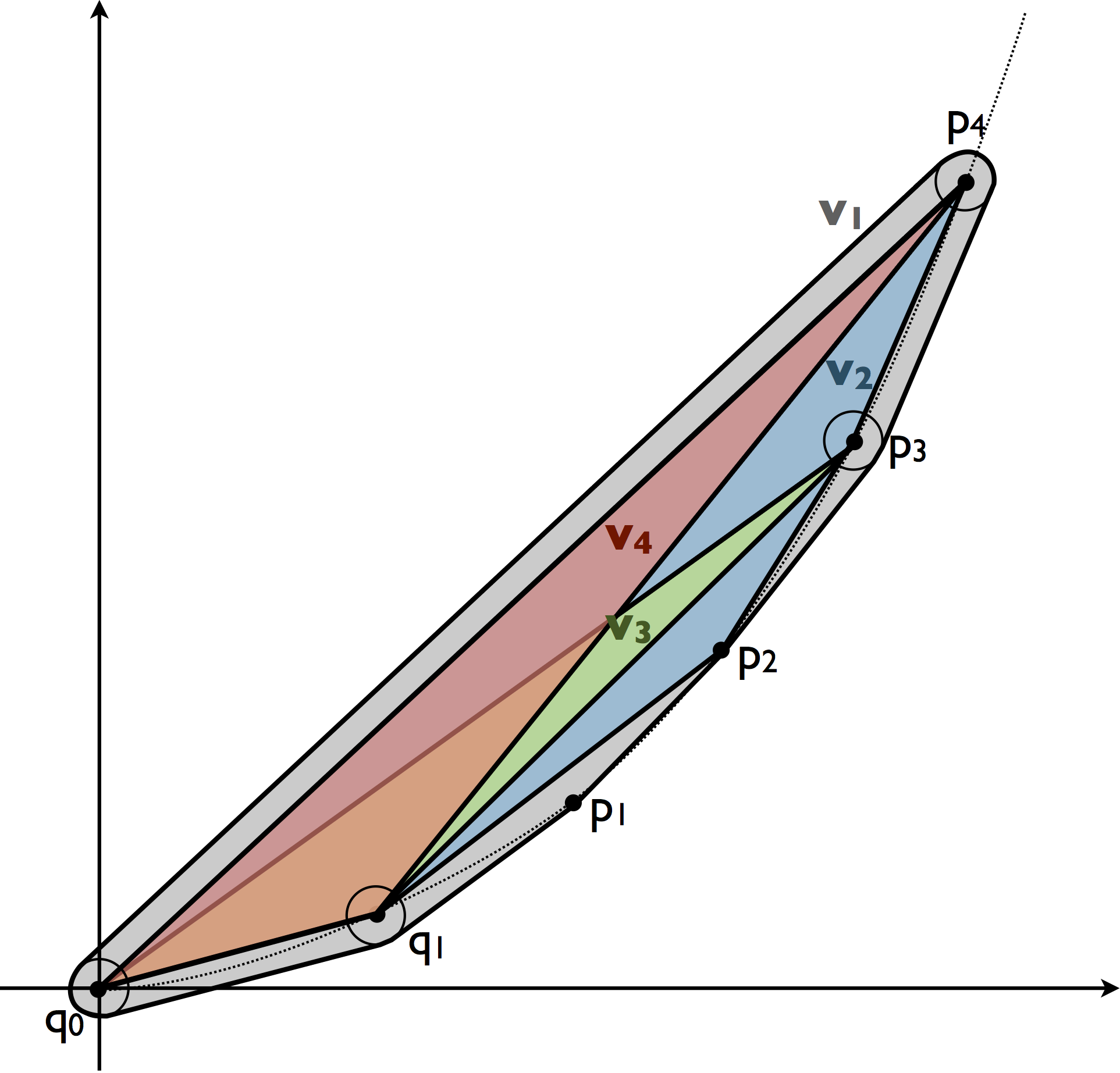}
\caption{\new{Convex solution of the network $\Theta$ from Example \ref{Prop10ex}.\label{figProp10ex}}}
\end{figure}

\begin{proposition}\label{prop10}
Every consistent atomic RCC8 network over $\{\bpo,\btpp,\btppi,\bntpp,\bntppi\}$ has a convex solution in $\mathbb{R}^2$.
\end{proposition}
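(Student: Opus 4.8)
The plan is to mirror the construction used in Proposition~\ref{propRealizationDCTPPNTPP2d}, but adapted to the presence of $\bpo$ instead of $\bdc$. Let $\Theta$ be a consistent atomic RCC8 network over $\{\bpo,\btpp,\btppi,\bntpp,\bntppi\}$ with variables $v_1,\dots,v_m$. First I would analyze the structure of such networks. Since $\bdc$ and $\bec$ are absent, and $\beq$ is assumed away, the only way two regions can fail to have a common part is ruled out: in fact from the composition table, every pair of variables is related by $\bpo$, $\btpp$, $\btppi$, $\bntpp$, or $\bntppi$, so $\Theta \models \ov{v_i}{v_j}$ for all $i\neq j$; moreover the ``proper part'' relation $\bpp$ (the union $\{\btpp,\bntpp\}$) induces a partial order on the variables, and two variables incomparable in this order must be related by $\bpo$. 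The key point is that, because there is no disjointness, the entire variable set forms one big \bo-clique, and I would want the final convex solution to have all regions sharing a common point.

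Next I would fix two points $q_0, q_1$ on the positive half of the two-dimensional moment curve and, for each variable $v_i$, one further point $p_i$ on that curve, all distinct, chosen so that the $p_i$ are ordered consistently with the $\bpp$-order (a region strictly contained in another gets an ``interior'' parameter), mimicking the interval ordering one would get from a 1D realization of the weakened network. Then, following the recipe sketched before the proposition, I would set
\begin{align*}
\mathcal{S}(v_j) = \ch \big(\{p_j\} \cup S_{\theta'}(q_0) \cup S_{\theta''}(q_1) \cup \bigcup\{S_{\theta_l}(p_{l}) \,|\, \Theta \models \pp{v_l}{v_j}\} \big),
\end{align*}
where the radii $\theta', \theta'', \theta_l$ are positive reals to be chosen. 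The two ``anchor discs'' around $q_0, q_1$ guarantee that every $\mathcal{S}(v_j)$ contains a fixed lens-shaped common region (so all $\ov{\cdot}{\cdot}$ constraints hold automatically and in particular no spurious $\bdc$/$\bec$ appears), while including $S_{\theta_l}(p_l)$ for every $v_l$ with $\pp{v_l}{v_j}$ forces $\p{\mathcal{S}(v_l)}{\mathcal{S}(v_j)}$. The moment-curve property that any line meets the parabola in at most two points, together with the fact that $p_j$ is a vertex of $\mathcal{S}(v_j)$ not covered by the other pieces when the radii are small, is what lets me certify that $\pp{v_l}{v_j}$ in $\Theta$ yields genuine proper part $\mathcal{S}(v_l) \subset \mathcal{S}(v_j)$ and, dually, that $\po{v_i}{v_j}$ in $\Theta$ yields genuine partial overlap — each of $\mathcal{S}(v_i), \mathcal{S}(v_j)$ sticks out past the other near $p_i$ resp.\ $p_j$. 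Distinguishing $\btpp$ from $\bntpp$ is handled, as in the previous proof, by scaling the radii by the level $\Delta(v_j)$: non-tangential containment gets extra slack so that $\mathcal{S}(v_l)$ lands in the interior of $\mathcal{S}(v_j)$, whereas for $\btpp$ the shared boundary point of $S_{\theta}(p_l)$ lies on the boundary of $\mathcal{S}(v_j)$.

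I expect the main obstacle to be the simultaneous, consistent choice of the radii $\theta', \theta'', \theta_l$ (and of the level-scaled versions) so that \emph{all} of the following hold at once: (i) each $S_{\theta_l}(p_l)$ is not swallowed by the convex hull of the other discs, so that $\bpo$-pairs genuinely overlap partially and $\bpp$-pairs are proper; (ii) the discs are small enough that a disc centered at $p_l$ never accidentally pokes outside $\mathcal{S}(v_j)$ when $\p{v_l}{v_j}$ is \emph{not} entailed; (iii) tangency vs.\ non-tangency comes out right under the $\Delta$-scaling; and (iv) the anchor discs around $q_0,q_1$ are small enough not to interfere with (i)--(iii) but present in every region. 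The clean way to organize this is to pick the radii in decreasing order of level (largest radii for the outermost regions, i.e.\ smallest $\Delta$), each new radius chosen small relative to all previously fixed data, and to invoke the ``at most two intersection points with the moment curve'' lemma (\cite{edelsbrunner1987algorithms}, Lemma~6.4, and \cite{matouvsek2003using}, Lemma~1.6.4) to separate the relevant vertices by hyperplanes. Once the radii are pinned down, verifying each of the five base relations is a routine case check entirely analogous to Proposition~\ref{propRealizationDCTPPNTPP2d}, so I would state those verifications briefly and concentrate the write-up on the radius-selection argument.
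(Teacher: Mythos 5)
Your proposal is correct and follows essentially the same route as the paper: the paper's proof fixes $q_0=(0,0)$, $q_1=(1,1)$ and $p_i=(i,i^2)$ on the moment curve and realizes $v_j$ as the convex hull of $\{p_j\}$, level-scaled anchor discs around $q_0,q_1$, the points $p_l$ for $\btpp$-predecessors and level-scaled discs $S_{\Delta(v_j)\cdot\theta}(p_l)$ for $\bntpp$-predecessors, then defers the verification to the argument of Proposition~\ref{propRealizationDCTPPNTPP2d} for $\theta$ sufficiently small. Your radius-selection discussion is in fact somewhat more explicit than the paper's one-line appeal to that earlier proof.
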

\begin{proof}
\new{Let $\Theta$ be a consistent atomic RCC8 network over $\{\bpo,\btpp,\btppi,\bntpp,\bntppi\}$.}
Let $q_0 = (0,0)$, $q_1 = (1,1)$ and $p_i = (i,i^2)$\new{, and let $\theta>0$ be a constant. We define $\mathcal{S}$ for $v_j\in \Theta$ as follows:
\begin{align*}
\mathcal{S}(v_j) = \ch \big(&\{p_j\} \cup S_{\Delta(v_j) \cdot \theta}(q_0) \cup S_{\Delta(v_j)\cdot \theta}(q_1)  \cup \bigcup\{p_{l} \,|\, \Theta \models \tpp{v_l}{v_j}\} \\
&\cup \bigcup\{S_{ \Delta(v_j)\cdot \theta}(p_{l}) \,|\, \Theta \models \ntpp{v_l}{v_j}\}  \big)
\end{align*}
Entirely analogously as in the proof of Proposition \ref{propRealizationDCTPPNTPP2d}, we can then show that $\mathcal{S}$ is a solution of $\Theta$, provided that $\theta$ is chosen sufficiently small.
}

\end{proof}

\new{The last fragment we consider in this section is $\{\bec,\btpp,\btppi,\bntpp,\bntppi\}$. Again we start with an example.
\begin{example}\label{Prop11ex}
Consider the RCC8 network $\Theta$ defined by
\begin{align*}
\Theta &= \{\tpp{v_2}{v_1},\tpp{v_3}{v_1}, \ec{v_2}{v_3},\tpp{v_1}{v_4},\ntpp{v_2}{v_4},\tpp{v_3}{v_4}\}
\end{align*}
An example of a convex solution in $\mathbb{R}^2$ is shown in Figure \ref{figProp11ex}.
\end{example}}
\new{More generally, note that we can always partition the set of variables $V$ in two sets $V'$ and $V\setminus V'$ such that $\Theta{\downarrow}V'$ is a network over $\{\bec,\btpp,\btppi\}$, $\Theta{\downarrow}(V\setminus V')$ is a network over $\{\btpp,\btppi,\bntpp,\bntppi\}$ and $\pp{v_i}{v_j}$ for any $v_i \in V'$ and $v_j \in V\setminus V'$ (see the following proof of Proposition \ref{propRealizability2D-EC-TPP-BNTPP}). It is then straightforward to find a convex solution of  $\Theta{\downarrow}V'$. As we show in the proof of Proposition \ref{propRealizability2D-EC-TPP-BNTPP}, this solution of $\Theta{\downarrow}V'$ can always be extended to a solution of $\Theta$.}

\begin{figure}
\centering
\includegraphics[width=240pt]{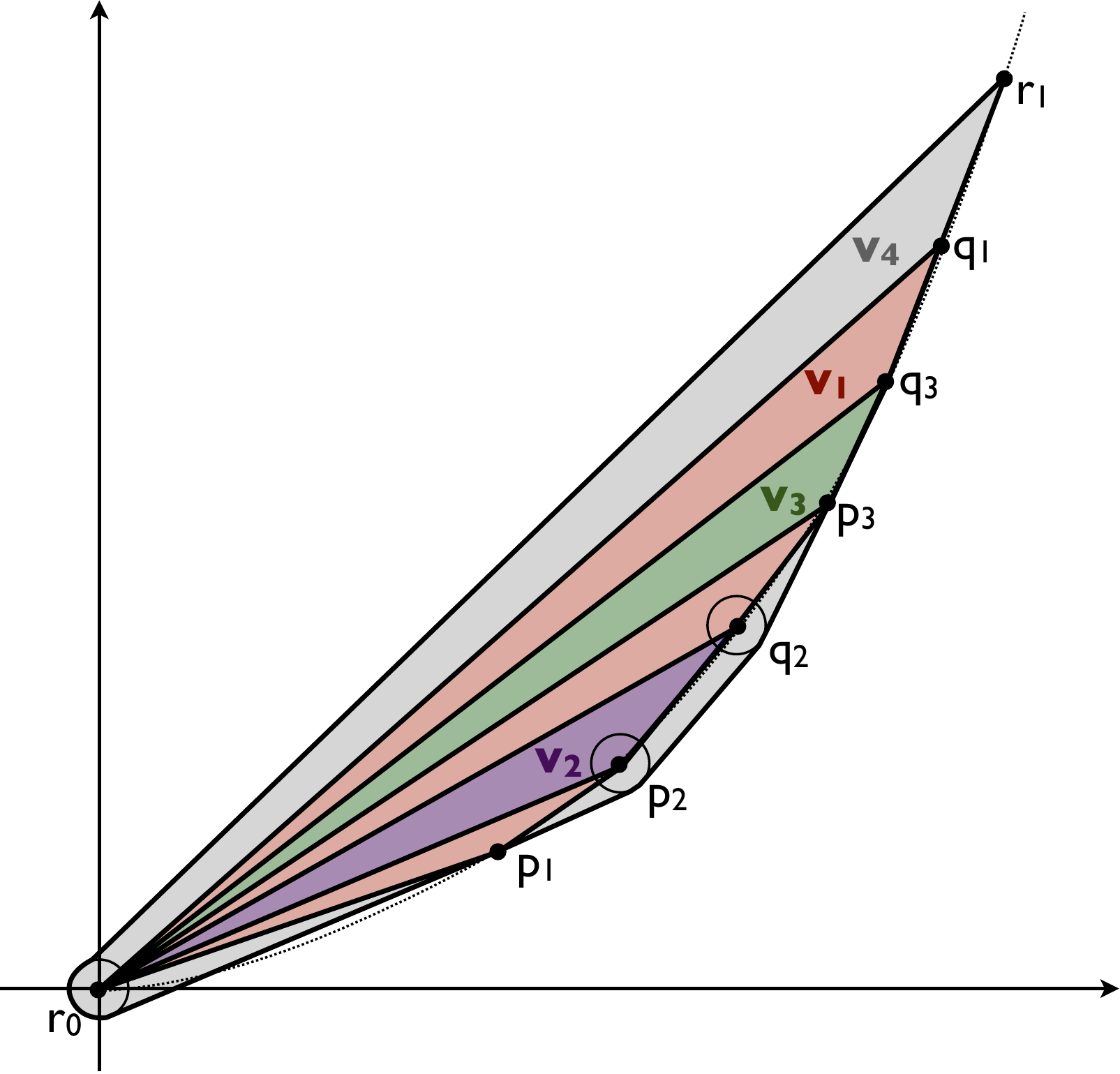}
\caption{\new{Convex solution of the network $\Theta$ from Example \ref{Prop11ex}.\label{figProp11ex}}}
\end{figure}

\begin{proposition}\label{propRealizability2D-EC-TPP-BNTPP}
Every consistent atomic RCC8 network over $\{\bec,\btpp,\btppi,\bntpp,\bntppi\}$ has a convex solution in $\mathbb{R}^2$.
\end{proposition}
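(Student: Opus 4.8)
The plan is to decompose $\Theta$ along the lines already announced before the statement, realize the two resulting pieces separately, and then stitch them together. First I would set $V'$ to be the set of variables occurring in some $\bec$-constraint of $\Theta$ and $W := V\setminus V'$. Inspecting the composition table — in particular $\bntpp \circ \bec = \{\bdc\}$ and $\btpp \circ \bec = \{\bdc,\bec\}$, together with the fact that $\bdc$ never occurs in $\Theta$ — gives: (i) no region is a non-tangential proper part of an $\bec$-region, and every proper part of an $\bec$-region is itself $\bec$-related to something, so $\Theta{\downarrow}V'$ uses only $\{\bec,\btpp,\btppi\}$; (ii) $\Theta{\downarrow}W$ uses only $\{\btpp,\btppi,\bntpp,\bntppi\}$; and (iii) for every $v\in V'$ and $w\in W$ we have $\Theta\models\pp{v}{w}$ (if $w$ were a proper part of an $\bec$-region it would itself be forced to be $\bec$-related, contradicting $w\notin V'$).

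Next I would analyse the two pieces. Since $\Theta{\downarrow}W$ uses only parthood relations and $\Theta$ is path-consistent, $W$ is totally ordered by proper parthood, say $w_1 \bpp w_2 \bpp \dots \bpp w_k$, each step tangential or non-tangential. For $\Theta{\downarrow}V'$, note that $\btpp\circ\btpp=\{\btpp,\bntpp\}$ while $\bntpp$ is excluded inside $V'$, so there is no $\btpp$-chain of length three; hence $V'$ partitions into \emph{tops} (regions containing a $\btpp$-part), \emph{bottoms} (each a $\btpp$-part of a \emph{unique} top, because the tops above a bottom form a totally ordered antichain), and \emph{isolated} regions, and every pair of $V'$-variables not joined by such a bottom--top edge is related by $\bec$. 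A convex solution of $\Theta{\downarrow}V'$ in $\mathbb{R}^2$ is then straightforward: realize the tops and isolated regions as thin triangular slices sharing a common apex $O$ and lying in pairwise disjoint angular sectors, and realize each bottom as a thinner sub-slice with the same apex, strictly inside (the sector of) its top; all $\bec$-pairs then meet precisely at $O$, and each bottom is a tangential proper part of its top.

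Finally I would extend this solution to all of $\Theta$ by wrapping the chain $W$ around the realization of $V'$. The crucial bookkeeping fact is a prefix property: from $\bntpp\circ\btpp=\bntpp\circ\bntpp=\{\bntpp\}$ one gets that $\{\,j : \Theta\models\tpp{v}{w_j}\,\}$ is an initial segment $\{1,\dots,m_v\}$ of $\{1,\dots,k\}$ for each $v\in V'$; moreover $\tpp{v}{w_j}$ forces $w_{j-1}\btpp w_j$ and forces the top of $v$ also to satisfy $\pp{\cdot}{w_j}$ tangentially. Guided by this, I would build nested convex regions $w_1\subset w_2\subset\dots\subset w_k$, each containing the realization of $V'$, internally tangent to one another exactly at the tangential steps of the chain, while letting each $v\in V'$ ``grow outward'' so that it touches $\partial w_j$ for $j\le m_v$ and lies in $i(w_j)$ for $j>m_v$ — concretely realizing $v$ as the convex hull of a nested family of small discs placed along the positive half of the two-dimensional moment curve, in the spirit of the constructions used for Propositions \ref{propRealizationDCTPPNTPP2d} and \ref{prop10}.

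The step I expect to be the main obstacle is exactly this last reconciliation. A $V'$-region that $\Theta$ requires to be $\bntpp w_1$ must sit in the interior of $w_1$, yet it also has to touch the $\bec$-apex $O$ shared with the rest of $V'$, so the naive ``all of $V'$ meets at one point'' picture must be refined (for instance by placing $O$ in the interior of $w_1$ and routing $\partial w_j$ only through the outer ends of the regions that are genuinely tangential to $w_j$). Verifying that the structural constraints established above — the prefix property, the absence of $\btpp$-chains of length three inside $V'$, the uniqueness of each bottom's top, and the compatibility of these with the tangential/non-tangential pattern of the chain — make all the required contact conditions simultaneously satisfiable is the delicate and most technical part of the argument.
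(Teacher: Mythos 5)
Your decomposition of $V$ into $V'$ (the regions occurring in some $\bec$-constraint) and $W=V\setminus V'$, together with facts (i)--(iii), is exactly the decomposition the paper uses, and those facts are correctly derived. However, there are two genuine problems. First, the structural claim about $V'$ is false: from $\btpp\circ\btpp=\{\btpp,\bntpp\}$ and the exclusion of $\bntpp$ inside $V'$ you may conclude only that the composition collapses to $\btpp$; this does not forbid a $\btpp$-chain of length three. For instance $\{\tpp{a}{b},\tpp{b}{c},\tpp{a}{c}\}$ together with a fourth region $d$ satisfying $\ec{d}{a}$, $\ec{d}{b}$, $\ec{d}{c}$ is a consistent atomic network over $\{\bec,\btpp\}$ (realize it as $[0,1]\subset[0,2]\subset[0,3]$ and $[-1,0]$); all four variables lie in $V'$, the region $b$ is simultaneously a ``top'' and a ``bottom'', and $a$ is a $\btpp$-part of two distinct tops. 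So the claimed two-level partition into tops, bottoms and isolated regions, and the uniqueness of each bottom's top, both fail, and the slice construction as described does not cover such networks (it is repairable by nesting sectors to arbitrary depth, which is essentially what the paper's construction---convex hulls of moment-curve points all sharing the apex $r_0=(0,0)$---achieves).

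Second, and more importantly, the extension to $W$ --- which is the real content of the proposition, since realizing $\Theta{\downarrow}V'$ alone is easy --- is left as an acknowledged ``delicate and most technical part''; you correctly isolate the prefix property and the need to place the apex in the interior of $w_1$, but you do not verify that all contact conditions are simultaneously satisfiable, so the argument is a plan rather than a proof. The paper closes exactly this gap with an explicit formula: each $a_j\in W$ is realized as the convex hull of a disc of radius $\Delta(a_j)\cdot\theta$ around $r_0$, the bare points $p_l,q_l$ of the $V'$-regions $v_l$ with $\Theta\models\tpp{v_l}{a_j}$ (forcing boundary contact), and discs of radius $\Delta(a_j)\cdot\theta$ around the points of the $v_l$ with $\Theta\models\ntpp{v_l}{a_j}$, the level function $\Delta$ guaranteeing strict nesting; for $\theta$ sufficiently small all required relations hold at once. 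Until you supply an analogous explicit verification, the proof is incomplete.
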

\begin{proof}
\new{Let $\Theta$ be a consistent atomic RCC8 network over $\{\bec,\btpp,\btppi,\bntpp,\bntppi\}$.}
First note that it is not possible to have  $\ec{a}{b}$ and $\ntpp{c}{a}$ for any $a,b,c \in V$ since this would entail $\dc{c}{b}$, i.e.\ $\Theta$ would not be an atomic network over $\{\bec,\btpp,\btppi,\bntpp,\bntppi\}$. Let $V' = \{v \,|\, v\in V, \exists u\in V\,.\, \Theta\models \ec{u}{v}\}$. Then $\Theta{\downarrow}V'$ is a network over $\{\bec,\btpp,\btppi\}$,  $\Theta{\downarrow}(V\setminus V')$ is a network over $\{\btpp,\btppi,\bntpp,\bntppi\}$ and for each $v_i \in V'$ and $v_j \in V\setminus V'$ it holds that $\Theta \models \pp{v_i}{v_j}$.

Let $\Psi$ be the network obtained from $\Theta{\downarrow}V'$ by replacing all occurrences of $\bec$ by $\bdc$, all occurrences of $\btpp$ by $\bntpp$ and all occurrences of $\btppi$ by $\bntppi$. From Proposition \ref{propRealizableDCNTPP1D} we know that $\Psi$ has a convex solution $\mathcal{S}$ in $\mathbb{R}$. Assume that $\mathcal{S}(v_i) = [l_i,u_i]$.  
\new{Without loss of generality, we can assume that $l_i>0$ for each $i$. With each region $v_i$ we associate two points $p_i, q_i \in \mathbb{R}^2$:
\begin{align*}
p_i &= (l_i,l_i^2)\\
q_i &= (u_i,u_i^2)
\end{align*}}
Now we define a solution $\mathcal{T}$ in $\mathbb{R}^2$.  For $v_j \in V'$, define:
$$
\mathcal{T}(v_j) = \ch\big(\{r_0,p_j,q_j\} \cup \{p_l \,|\, \Theta \models \tpp{v_l}{v_j}\} \cup \{q_l \,|\, \Theta \models \tpp{v_l}{v_j}\} \big)
$$
where $r_0 = (0,0)$. Clearly, $\mathcal{T}$ is a solution of $\Theta{\downarrow}V'$.  Now we extend $\mathcal{T}$ to a solution of $\Theta$.  
\new{Let $V\setminus V' = \{a_1,...,a_k\}$. With each $a_i$ we associate an arbitrary point $r_i = (w_i,w_i^2)$ on the positive half of the two-dimensional moment curve such that all the points $p_i$, $q_j$ and $r_l$ are distinct. Let $\theta>0$ be a constant. For $a_j \in V\setminus V'$, we define:
\begin{align*}
\mathcal{T}(a_j) = \ch\big(&\{r_j\}  \cup S_{\Delta(a_j)\cdot \theta}(r_0) \cup \bigcup \{p_{l}, q_{l} \,|\, \Theta\models \tpp{v_l}{a_j}\}  \\
& \cup \bigcup \{S_{\Delta(a_j) \cdot \theta}(p_{l}) \cup S_{\Delta(a_j) \cdot \theta}(q_{l}) \,|\, \Theta\models \ntpp{v_l}{a_j}\}  \big)
\end{align*}
}
%
%
As in the proof of Proposition \ref{propRealizationDCTPPNTPP2d} we can show that $\mathcal{T}$ is a solution of $\Theta$, \new{provided that $\theta$ is chosen sufficiently small}.
\end{proof}

\begin{corollary}
\new{Every consistent atomic RCC8 network over the following sets of base relations has a convex solution in $\mathbb{R}^2$}:
\begin{itemize}
\item $\{\bdc,\btpp,\btppi\}$
\item $\{\bpo,\bntpp,\bntppi\}$
\item $\{\btpp,\btppi,\bntpp,\bntppi\}$
\item $\{\bpo,\btpp,\btppi\}$
\item $\{\bec,\bntpp,\bntppi\}$
\item $\{\bec,\btpp,\btppi\}$
\item $\{\bec\}$
\end{itemize}
\end{corollary}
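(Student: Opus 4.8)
The plan is to observe that this corollary is an immediate consequence of the three upper-bound propositions just established, namely Propositions \ref{propRealizationDCTPPNTPP2d}, \ref{prop10}, and \ref{propRealizability2D-EC-TPP-BNTPP}, combined with the trivial monotonicity observation that a network over a set of base relations $\mathcal{R}'$ is in particular a network over any larger set $\mathcal{R}\supseteq\mathcal{R}'$. No new geometric construction is needed; everything reduces to checking set inclusions.

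First I would verify that each of the seven listed fragments is contained in one of the three maximal fragments for which an $\mathbb{R}^2$ realization has already been proven. Concretely: $\{\bdc,\btpp,\btppi\}$ and $\{\btpp,\btppi,\bntpp,\bntppi\}$ are subsets of $\{\bdc,\btpp,\btppi,\bntpp,\bntppi\}$, which is covered by Proposition \ref{propRealizationDCTPPNTPP2d}; $\{\bpo,\bntpp,\bntppi\}$ and $\{\bpo,\btpp,\btppi\}$ are subsets of $\{\bpo,\btpp,\btppi,\bntpp,\bntppi\}$, covered by Proposition \ref{prop10}; and $\{\bec,\bntpp,\bntppi\}$, $\{\bec,\btpp,\btppi\}$, and $\{\bec\}$ are subsets of $\{\bec,\btpp,\btppi,\bntpp,\bntppi\}$, covered by Proposition \ref{propRealizability2D-EC-TPP-BNTPP}.

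Given a consistent atomic RCC8 network $\Theta$ over one of the seven fragments, I would then simply regard $\Theta$ as a consistent atomic network over the corresponding superset and invoke the relevant proposition to obtain a convex solution in $\mathbb{R}^2$. There is essentially no obstacle here: the only point worth noting is that consistency and the property of being atomic are preserved under this reinterpretation, which is immediate since neither the solution concept nor the definition of an atomic network refers to which base relations are permitted. Hence the corollary follows directly.
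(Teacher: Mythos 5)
Your proposal is correct and matches the paper's (implicit) argument exactly: the corollary is stated without proof precisely because each of the seven fragments is a subset of one of the three fragments handled by Propositions \ref{propRealizationDCTPPNTPP2d}, \ref{prop10} and \ref{propRealizability2D-EC-TPP-BNTPP}, and your set inclusions are all verified correctly. Nothing further is needed.
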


\begin{corollary}
Every consistent atomic RCC5 \new{network} over $\{\bpo,\bpp,\bppi\}$ has a convex \new{solution} in $\mathbb{R}^2$.
\end{corollary}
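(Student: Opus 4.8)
The plan is to deduce this from Proposition~\ref{prop10} by passing from the RCC5 network to a suitable atomic RCC8 \emph{refinement} of it. Recall the standard correspondence between RCC5 and RCC8 base relations: the RCC5 relation $\bpo$ corresponds to the RCC8 base relation $\bpo$, the RCC5 relation $\bpp$ corresponds to $\{\btpp,\bntpp\}$, the RCC5 relation $\bppi$ corresponds to $\{\btppi,\bntppi\}$, while $\bdr$ would correspond to $\{\bdc,\bec\}$ and $\beq$ to $\beq$. In particular, refining an RCC5 base relation drawn from $\{\bpo,\bpp,\bppi\}$ to an RCC8 base relation never produces $\bdc$, $\bec$ or $\beq$.

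Let $\Theta$ be a consistent atomic RCC5 network over $\{\bpo,\bpp,\bppi\}$ on variables $V=\{v_1,\dots,v_m\}$. First I would use consistency to fix a solution $\mathcal{S}$ mapping each $v_i$ to a nonempty regular closed region (in some $\mathbb{R}^k$), and for each pair $i<j$ let $r_{ij}$ be the unique RCC8 base relation holding between $\mathcal{S}(v_i)$ and $\mathcal{S}(v_j)$. Since the constraint that $\Theta$ imposes on $\{v_i,v_j\}$ is a (possibly reversed) $\bpo$-, $\bpp$- or $\bppi$-constraint and $\mathcal{S}$ satisfies it, the correspondence above forces $r_{ij}\in\{\bpo,\btpp,\btppi,\bntpp,\bntppi\}$. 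Put $\Theta'=\{\,v_i\,r_{ij}\,v_j \mid 1\le i<j\le m\,\}$. Then $\Theta'$ is an atomic RCC8 network over $\{\bpo,\btpp,\btppi,\bntpp,\bntppi\}$; it is consistent because $\mathcal{S}$ is a solution of it; and $\Theta'\models\Theta$, since for each pair the base relation recorded in $\Theta'$ lies in the RCC5 relation asserted by $\Theta$ (here one uses the characterisation of entailment for atomic networks together with the RCC5/RCC8 correspondence). Hence every solution of $\Theta'$ is a solution of $\Theta$.

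It then only remains to apply Proposition~\ref{prop10} to $\Theta'$, obtaining a convex solution $\mathcal{T}$ of $\Theta'$ in $\mathbb{R}^2$; since $\Theta'\models\Theta$, this $\mathcal{T}$ is simultaneously a convex solution of $\Theta$ in $\mathbb{R}^2$, which is what we want. I do not expect a genuine obstacle here: the only step needing attention is checking that the refinement stays within the fragment of Proposition~\ref{prop10}, i.e.\ that none of the $r_{ij}$ equals $\bdc$, $\bec$ or $\beq$, and this is precisely where we use the hypothesis that $\Theta$ avoids $\bdr$ (and the standing convention that $\beq$ does not occur in any network considered in this paper). Everything else is the routine fact that a consistent atomic network can be refined to a consistent atomic network over base relations.
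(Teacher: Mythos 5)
Your proposal is correct and matches the paper's intended derivation: the corollary is stated without a separate proof precisely because it follows from Proposition~\ref{prop10} once one observes that a consistent atomic RCC5 network over $\{\bpo,\bpp,\bppi\}$ refines (via any witnessing solution) to a consistent atomic RCC8 network over $\{\bpo,\btpp,\btppi,\bntpp,\bntppi\}$ that entails it. Your refinement argument is exactly this standard step, so there is nothing to add.
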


\subsection{Fragments with convex solutions in $\mathbb{R}^3$ \label{secFragments3D}}

We now turn our attention to restrictions on the set of RCC8 or RCC5 base relations which guarantee that networks can be convexly realized in $\mathbb{R}^3$. Again, we first show in Section \ref{secLowerboundD3} that these results cannot be strengthened in general.

\begin{figure}
\centering
\includegraphics[width=150pt]{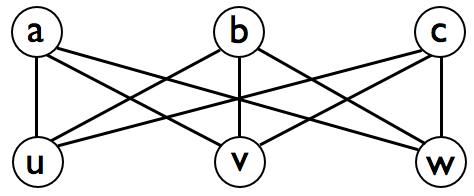}
\caption{The graph $K_{3,3}$ is not planar.\label{figK33}}
\end{figure}

\subsubsection{Lower bounds}\label{secLowerboundD3}

Recall that a graph $(N,V)$ defined by a set of nodes $N$ and a set of edges $V \subseteq N\times N$ is called \textit{planar} if it is possible to identify every node $n_i\in N$ with a point $p_i \in \mathbb{R}^2$ and every edge $(n_i,n_j)$ with a planar curve segment $A_{ij} \subseteq \mathbb{R}^2$ with endpoints $p_i$ and $p_j$, such that for every $A_{ij} \neq A_{rs}$ it holds that $A_{ij} \cap A_{rs} = \{p_i,p_j\} \cap \{p_r,p_s\}$. The graph $K_{3,3}$ depicted in Figure \ref{figK33} is a well-known example of a non-planar graph.

The proofs of Propositions \ref{propDRPOD2lower} and \ref{propECDCD2lower} below proceed by showing that if particular RCC8 networks were convexly realizable in $\mathbb{R}^2$, then $K_{3,3}$ would be planar, thus showing the propositions by contradiction.

\begin{proposition}\label{propDRPOD2lower}
There exists a consistent atomic RCC5 network over $\{\bdr,\bpo\}$ which has no convex solution in $\mathbb{R}^2$.
\end{proposition}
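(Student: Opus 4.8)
The plan is to build a specific consistent atomic network $\Theta$ over $\{\bdr,\bpo\}$ that encodes the graph $K_{3,3}$, and then to show that a convex solution of $\Theta$ in $\mathbb{R}^2$ would yield a planar embedding of $K_{3,3}$, contradicting Figure~\ref{figK33}. Concretely, $\Theta$ has six \emph{node} variables $U_1,U_2,U_3,W_1,W_2,W_3$ and nine \emph{edge} variables $E_{ij}$ ($1\leq i,j\leq 3$), with: $\dr{X}{Y}$ for every two distinct node variables $X,Y$ (all nodes pairwise disjoint); $\po{E_{ij}}{U_i}$ and $\po{E_{ij}}{W_j}$ for all $i,j$; $\dr{E_{ij}}{U_k}$ for $k\neq i$ and $\dr{E_{ij}}{W_k}$ for $k\neq j$; and $\dr{E_{ij}}{E_{kl}}$ whenever $(i,j)\neq(k,l)$. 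Every pair of variables is assigned a relation from $\{\bdr,\bpo\}$, so $\Theta$ is an atomic RCC5 network over $\{\bdr,\bpo\}$.

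For consistency I would note that any atomic network over $\{\bdr,\bpo\}$ is path consistent (each weak composition of two relations from $\{\bdr,\bpo\}$ contains both $\bdr$ and $\bpo$, as is easily witnessed by intervals), hence consistent; alternatively one exhibits a solution directly, realizing the node variables as pairwise disjoint balls in $\mathbb{R}^3$ and each $E_{ij}$ as a thin tube that pokes slightly into $U_i$ and into $W_j$, the tubes routed so as to be pairwise disjoint and to avoid every non-endpoint ball.

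Now suppose $\mathcal{S}$ is a convex solution of $\Theta$ in $\mathbb{R}^2$. Choose interior witness points $x_i\in i(\mathcal{S}(U_i))$ and $y_j\in i(\mathcal{S}(W_j))$, and, using $\po{E_{ij}}{U_i}$ and $\po{E_{ij}}{W_j}$, points $p_{ij}\in i(\mathcal{S}(E_{ij}))\cap i(\mathcal{S}(U_i))$ and $q_{ij}\in i(\mathcal{S}(E_{ij}))\cap i(\mathcal{S}(W_j))$. Draw $K_{3,3}$ (with parts $\{u_1,u_2,u_3\}$, $\{w_1,w_2,w_3\}$) by placing $u_i$ at $x_i$, $w_j$ at $y_j$, and drawing edge $(u_i,w_j)$ as the concatenation of the segments $[x_i,p_{ij}]$, $[p_{ij},q_{ij}]$, $[q_{ij},y_j]$; since the interior of a convex set is convex, these three segments lie in $i(\mathcal{S}(U_i))$, $i(\mathcal{S}(E_{ij}))$, $i(\mathcal{S}(W_j))$ respectively. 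If $(u_i,w_j)$ and $(u_k,w_l)$ share no node ($i\neq k$ and $j\neq l$), then all nine pairwise intersections of these ``layers'' are empty by the $\bdr$-constraints (in particular $E_{ij}$ is disjoint from $U_k$, $W_l$ and $E_{kl}$), so the two edges are disjoint. If instead they share a node, say $(u_i,w_j)$ and $(u_i,w_l)$ with $j\neq l$, the same bookkeeping shows the two edges can meet only inside $i(\mathcal{S}(U_i))$; likewise for a shared $w$-node. It remains to turn this drawing into a genuine embedding: inside each node region (an open convex set, hence homeomorphic to the plane) the edges through that node contribute three arc fragments running from the common centre to points of the boundary, and three such arcs can always be re-drawn so that they meet only at the centre (a claw embeds in a disc). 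Doing this in all six node regions produces a planar embedding of $K_{3,3}$, a contradiction; hence $\Theta$ has no convex solution in $\mathbb{R}^2$.

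The main obstacle is precisely this last step: the raw drawing extracted from $\mathcal{S}$ need not be an embedding, because adjacent edges can cross one another (and an edge can even cross itself) inside a shared node region, and one must argue — though only topologically, and only inside the plane-homeomorphic interiors of the node regions — that these local crossings can be removed simultaneously. All other potential crossings are excluded outright by the disjointness constraints, so confining the repair to the node regions is the crux of the argument. A secondary point to nail down is that $\Theta$ is genuinely atomic and consistent, i.e.\ that a base relation from $\{\bdr,\bpo\}$ has been fixed for every pair of variables and that the resulting network really is realizable.
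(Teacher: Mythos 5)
Your proposal is correct and follows essentially the same route as the paper's proof: encode $K_{3,3}$ by six pairwise-$\bdr$ node regions and nine connector regions, each of which partially overlaps exactly its two endpoint nodes and is $\bdr$ to everything else, then extract a planar drawing of $K_{3,3}$ from a hypothetical convex solution in the plane. The only difference is presentational: the paper routes each edge through a boundary point of the node region lying in the interior of the connector and simply asserts that the edges meet only at their endpoints, whereas you make explicit (and correctly justify) the remaining local step of removing adjacent-edge crossings inside each node's convex interior.
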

\begin{proof}
Consider the consistent atomic RCC5 network $\Theta$ containing the following constraints:
\begin{align*}
 \po{r_{au}}{a}&&  \po{r_{au}}{u}&& \po{r_{av}}{a}&&\po{r_{av}}{v}&&\po{r_{aw}}{a}&&\po{r_{aw}}{w}\\
 \po{r_{bu}}{b}&&\po{r_{bu}}{u}&& \po{r_{bv}}{b}&&\po{r_{bv}}{v}&&\po{r_{bw}}{b}&&\po{r_{bw}}{w},\\
 \po{r_{cu}}{c}&&\po{r_{cu}}{u}&& \po{r_{cv}}{c}&&\po{r_{cv}}{v}&&\po{r_{cw}}{c}&&\po{r_{cw}}{w} 
\end{align*}
and moreover the constraint $\dr{x}{y}$ for every pair of variables $(x,y)$ from $\{a,b,c,u,v,w,r_{au},r_{av},r_{aw},r_{bu},r_{bv},r_{bw},r_{cu},r_{cv},r_{cw}\}$ which does not appear in the list above.

Assume that a convex solution $\mathcal{S}$ of $\Theta$ in the plane exists.  We can then draw $K_{3,3}$ in the plane as follows. \new{For the} nodes of $K_{3,3}$, \new{we can choose arbitrary points $p_a, p_b, p_c, p_u, p_v, p_w$ in the interiors of respectively} $\mathcal{S}(a), \mathcal{S}(b), \mathcal{S}(c), \mathcal{S}(u), \mathcal{S}(v), \mathcal{S}(w)$.  To connect $p_a$ with $p_u$, first, we connect $p_a$ by a line segment to an arbitrary point of $\delta(\mathcal{S}(a)) \cap i(\mathcal{S}(r_{au}))$, where $\delta(X)$ denotes the boundary of $X$ and $i(X)$ is the interior of $X$. This point is connected by a line segment to an arbitrary point of $\delta(\mathcal{S}(u)) \cap i(\mathcal{S}(r_{au}))$. The latter point is finally connected with a line segment to $p_u$.  In a similar way we connect each of $p_a,p_b,p_c$ to each of $p_u,p_v,p_w$.  It is clear that the edges thus drawn can only intersect at their endpoints, i.e.\ we have shown that $K_{3,3}$ is planar, a contradiction.
\end{proof}
\new{Note that this proof moreover shows that there are RCC5 networks over $\{\bdr,\bpo\}$ which have no solution in $\mathbb{R}^2$ using \emph{internally connected} regions.}
The previous proof remains valid if we replace $\bdr$ by either $\bec$ or $\bdc$, hence we obtain the following corollary.
\begin{corollary}
There exist consistent atomic RCC8 networks over $\{\bdc,\bpo\}$ and over $\{\bec,\bpo\}$ which have no convex solution in $\mathbb{R}^2$.
\end{corollary}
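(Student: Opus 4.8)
The plan is to reuse the very network $\Theta$ and the drawing argument from the proof of Proposition~\ref{propDRPOD2lower}. Let $\Theta_{\bdc}$ (resp.\ $\Theta_{\bec}$) be the atomic RCC8 network obtained from that $\Theta$ by replacing every constraint $\dr{x}{y}$ by $\dc{x}{y}$ (resp.\ by $\ec{x}{y}$), keeping all the $\bpo$ constraints unchanged. By construction $\Theta_{\bdc}$ is a network over $\{\bdc,\bpo\}$, $\Theta_{\bec}$ is a network over $\{\bec,\bpo\}$, and both are atomic. It then remains only to check two things for each of them: consistency, and the absence of a convex solution in $\mathbb{R}^2$.

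The second of these is almost immediate. If two regions are related by $\bdc$ or by $\bec$ in the RCC8 sense, then in particular their interiors are disjoint, which is exactly the RCC5 relation $\bdr$; moreover the $\bpo$ constraints of $\Theta_{\bdc}$ and $\Theta_{\bec}$ coincide with those of $\Theta$. Hence any solution of $\Theta_{\bdc}$ or of $\Theta_{\bec}$, after reinterpreting each region in the RCC5 sense, is a solution of $\Theta$; in particular a convex solution in $\mathbb{R}^2$ would yield a convex solution of $\Theta$ in $\mathbb{R}^2$, contradicting Proposition~\ref{propDRPOD2lower}. Equivalently, the drawing of $K_{3,3}$ built in that proof uses only the $\bpo$ constraints together with the fact that all remaining pairs have disjoint interiors, so the same argument applies verbatim to $\Theta_{\bdc}$ and to $\Theta_{\bec}$.

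It therefore only remains to verify consistency, for which I would either invoke path consistency (which decides consistency for atomic RCC8 networks, cf.\ Table~\ref{tableRCC8composition}) or exhibit explicit realizations. For $\Theta_{\bdc}$ the latter is easy: realize $a,b,c,u,v,w$ as six pairwise disjoint closed discs placed far apart, and realize each bridge $r_{xy}$ as the union of two small disjoint closed discs, one centred on the boundary of $\mathcal{S}(x)$ and one on the boundary of $\mathcal{S}(y)$, with radii small enough that the three discs attached to any one of $a,\dots,w$ are pairwise disjoint and no two bridges meet; one checks directly that $\po{\mathcal{S}(r_{xy})}{\mathcal{S}(x)}$, $\po{\mathcal{S}(r_{xy})}{\mathcal{S}(y)}$ hold, that all other pairs are $\bdc$, and that every region is regular closed. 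For $\Theta_{\bec}$ this disc construction fails, since at most three planar regions can be pairwise externally connected; instead I would pass to $\mathbb{R}^3$, start from a neighborly family of $15$ convex polytopes (one per variable), which exists by the results recalled in Section~\ref{secGeometry}, so that every pair is realized by $\bec$, and then, for each pair $(r_{xy},x)$ and $(r_{xy},y)$ that must be $\bpo$, enlarge $r_{xy}$ by a small bump reaching through the common facet into the interior of the neighbouring polytope; if the bumps are chosen small enough they stay inside that polytope (whose interior, being a Voronoi cell interior, is disjoint from all other regions), so they create the required proper overlaps without perturbing any other relation. This is the one step that needs genuine care: external connectedness of all the ``far apart'' pairs is impossible in the plane, so one must use the extra room of $\mathbb{R}^3$ via neighborly families — but that is exactly the phenomenon already isolated in Section~\ref{secGeometry}, so no new idea is needed, and both $\Theta_{\bdc}$ and $\Theta_{\bec}$ are seen to be consistent.
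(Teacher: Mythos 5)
Your proposal is correct and takes essentially the same route as the paper: the paper's own justification for this corollary is precisely that the $K_{3,3}$ argument of Proposition~\ref{propDRPOD2lower} only uses the $\bpo$ constraints plus disjointness of interiors, so it survives replacing $\bdr$ by $\bdc$ or $\bec$. Your additional explicit consistency check (discs with small overlapping ``bridge'' pieces for the $\bdc$ variant, and a neighborly family in $\mathbb{R}^3$ with small bumps through shared facets for the $\bec$ variant, which mirrors the construction in Proposition~\ref{propECDCPO3d}) is sound modulo the usual ``$\varepsilon$ small enough'' bookkeeping, and goes slightly beyond what the paper bothers to write down.
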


\begin{proposition}\label{propECDCD2lower}
There exists a consistent atomic RCC8 network over $\{\bec,\bdc\}$ which has no convex solution in $\mathbb{R}^2$.
\end{proposition}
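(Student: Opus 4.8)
The plan is to mirror the proof of Proposition~\ref{propDRPOD2lower}, replacing the $\bpo$-based ``connector'' regions by $\bec$-based ones. Concretely, I would take as variables the six ``node'' variables $a,b,c,u,v,w$ together with nine ``edge'' variables $r_{xy}$, one for each pair $(x,y)$ with $x\in\{a,b,c\}$ and $y\in\{u,v,w\}$, and let $\Theta$ consist of $\ec{r_{xy}}{x}$ and $\ec{r_{xy}}{y}$ for each such pair, together with $\dc{z}{z'}$ for every other pair of variables $(z,z')$. Here $r_{xy}$ plays the role of (a thickening of) the $K_{3,3}$-edge joining $x$ to $y$: being externally connected to both $x$ and $y$ is exactly what lets us route a curve from the interior of $\mathcal{S}(x)$, through $\mathcal{S}(r_{xy})$, to the interior of $\mathcal{S}(y)$.

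The first step is to check that $\Theta$ is consistent. Since $\Theta$ is atomic, it suffices to verify path consistency against the composition table (Table~\ref{tableRCC8composition}). Up to symmetry the only triangles to examine are $(x,r_{xy},y)$ with entries $\bec,\bec,\bdc$; triangles with two $\bec$ edges sharing a node or sharing a connector, again with entries $\bec,\bec,\bdc$; triangles with one $\bec$ edge and two $\bdc$ edges, with entries $\bec,\bdc,\bdc$; and the all-$\bdc$ triangles. All of these are admissible because $\bdc\in\bec\circ\bec$, $\bdc\in\bec\circ\bdc$ and $\bdc\in\bdc\circ\bdc$, so $\Theta$ is path consistent and hence consistent.

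The second step is the geometric argument. Suppose for contradiction that $\mathcal{S}$ is a convex solution of $\Theta$ in $\mathbb{R}^2$, and use it to draw $K_{3,3}$ in the plane. For each node variable $z$ choose $p_z\in i(\mathcal{S}(z))$. For each edge $(x,y)$, the relation $\ec{\mathcal{S}(r_{xy})}{\mathcal{S}(x)}$ forces $\mathcal{S}(r_{xy})\cap\mathcal{S}(x)\neq\emptyset$; pick $s_{xy}$ in this intersection, and observe that, because the interiors are disjoint while both regions are regular closed, $s_{xy}$ must lie on the boundary of each. Similarly choose $t_{xy}\in\delta(\mathcal{S}(r_{xy}))\cap\delta(\mathcal{S}(y))$. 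Represent the edge $(x,y)$ by the concatenation of the segments $[p_x,s_{xy}]$, $[s_{xy},t_{xy}]$, $[t_{xy},p_y]$. Convexity gives $[p_x,s_{xy})\subseteq i(\mathcal{S}(x))$ and $(t_{xy},p_y]\subseteq i(\mathcal{S}(y))$, while $[s_{xy},t_{xy}]\subseteq\mathcal{S}(r_{xy})$ since $\mathcal{S}(r_{xy})$ is itself convex. Since every pair among the fifteen regions is $\bec$ or $\bdc$, their interiors are pairwise disjoint, and regular-closedness gives $i(\mathcal{S}(x))\cap\mathcal{S}(r_{xy})=\emptyset$ for $x$ incident to $r_{xy}$ and $\mathcal{S}(x)\cap\mathcal{S}(y)=\emptyset$ for the endpoints of an edge; moreover distinct connectors have disjoint closures, so all the points $s_{xy},t_{xy}$ are distinct from each other and from the $p_z$. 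From this, each edge-arc is a simple arc from $p_x$ to $p_y$, and (after the usual perturbation of the arcs near each node $p_z$, as all arcs incident to a node emanate into the open convex set $i(\mathcal{S}(z))$ toward distinct boundary points) arcs of distinct edges meet only at common node points $p_z$. Hence $K_{3,3}$ would be planar, contradicting Figure~\ref{figK33}.

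I expect the routine parts to be the path-consistency check and the standard fact that the segment from an interior point of a convex body to a point of its closure stays in the interior except possibly at its far endpoint. The step requiring genuine care is the topological bookkeeping: arguing that the three segments of a single edge-arc meet only at the consecutive junction points $s_{xy},t_{xy}$, that the arc avoids all node points except its own two endpoints, and that arcs of different edges are pairwise non-crossing. This is precisely where convexity is used --- it forces the interiors to be convex, hence connected and reachable by straight segments from the chosen interior points --- together with regular-closedness, which ensures that a boundary contact between two regions with disjoint interiors actually occurs only on the boundaries, so that each constituent segment can be taken inside a single region's interior away from its endpoints.
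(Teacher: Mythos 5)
Your proof is correct, and it rests on the same key idea as the paper's: a convex realization would yield a planar drawing of $K_{3,3}$. The witness network is genuinely different, though. You carry over the nine connector variables $r_{xy}$ from the proof of Proposition~\ref{propDRPOD2lower} and route each $K_{3,3}$-edge through three segments, while the paper observes that once all edges are $\bec$ the connectors are superfluous: it uses only the six node variables, with $\ec{x}{y}$ for $x\in\{a,b,c\}$, $y\in\{u,v,w\}$ and $\bdc$ for all other pairs, takes a point $q_{xy}\in\mathcal{S}(x)\cap\mathcal{S}(y)$ supplied directly by the $\bec$ constraint as the midpoint of a two-segment edge, and notes that the $\bdc$ constraints keep $q_{xy}$ out of every other region. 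Your version costs nine extra variables and somewhat more topological bookkeeping (boundary contacts at both ends of each connector, disjointness of connector closures, and checking that a segment of one arc cannot meet a segment of another), but none of these steps fails: the crucial fact you need, that $i(\mathcal{S}(x))\cap\mathcal{S}(r)=\emptyset$ whenever the two regions have disjoint interiors and are regular closed, holds and makes all the cross-intersections you worry about empty. The consistency check via path consistency of an atomic network over $\{\bec,\bdc\}$ is also fine. So the argument goes through; it is simply a less economical instance of the same reduction.
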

\begin{proof}
Consider the consistent atomic RCC8 network $\Theta$ containing the following constraints:
\begin{align*}
 \ec{a}{u} && \ec{a}{v} && \ec{a}{w} \\
 \ec{b}{u} && \ec{b}{v} && \ec{b}{w}\\
 \ec{c}{u} && \ec{c}{v} && \ec{c}{w}
\end{align*}
as well as the constraint $\dc{x}{y}$ for every pair of variables $(x,y)$ from $\{a,b,c,u,v,w\}$ which does not appear in the list above.

Assuming that a convex solution $\mathcal{S}$ of $\Theta$ in the plane exists, we can draw $K_{3,3}$ as follows.  \new{For the nodes of $K_{3,3}$, we choose arbitrary points  $p_a, p_b, p_c, p_u, p_v, p_w$ in the interiors of respectively} $\mathcal{S}(a), \mathcal{S}(b), \mathcal{S}(c), \mathcal{S}(u), \mathcal{S}(v), \mathcal{S}(w)$. For $x\in \{a,b,c\}$ and $y\in \{u,v,w\}$, let $q_{xy}$ be an arbitrary point from $\mathcal{S}(x) \cap \mathcal{S}(y)$ (which must exist because of the constraint $\ec{x}{y}$). Note that $q_{xy} \notin \mathcal{S}(x')$ and $q_{xy} \notin \mathcal{S}(y')$ for $x'\in \{a,b,c\} \setminus \{x\}$ and $y'\in \{u,v,w\} \setminus \{y\}$ because of the constraints $\dc{x}{x'}$ and $\dc{y}{y'}$.  Then we can connect $x$ and $y$ as follows:  draw a line segment from $p_x$ to $q_{x,y}$ and a second line segment from $q_{x,y}$ to $p_y$.  It is clear that the edges thus drawn can only intersect at their endpoints, which means that $K_{3,3}$ would be planar, a contradiction.
\end{proof}
\new{Again the proof shows a slightly stronger property than what we need, i.e.\ that there are consistent RCC8 networks over $\{\bec,\bdc\}$ which have no solution in $\mathbb{R}^2$ using internally connected regions.}
\subsubsection{Upper bounds}

\new{We now show that consistent atomic RCC8 networks over $\{\bec,\bdc, \bpo\}$ and  $\{\bec,\bdc, \bntpp, \bntppi\}$ can be realized in $\mathbb{R}^3$. Where the results in Section \ref{subsecUpper2D} rely on a construction based on the two-dimensional moment curve, here we will use a property of the three-dimensional moment curve.}

\begin{proposition}\label{propECDCPO3d}
Every consistent atomic RCC8 network over $\{\bec,\bdc, \bpo\}$ has a convex solution in $\mathbb{R}^3$.
\end{proposition}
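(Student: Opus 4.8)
The plan is to realize an atomic network $\Theta$ over $\{\bec,\bdc,\bpo\}$ in $\mathbb{R}^3$ by exploiting the fact that the Voronoi diagram of points on the positive half of the three-dimensional moment curve is a neighborly family: every two cells meet in a common facet ($\dim = 2$). Intuitively, the Voronoi cells give us a supply of convex polytopes that all share a $2$-dimensional boundary piece, which is exactly what is needed to realize $\bec$ (and, by moving into interiors, $\bpo$), while cells that are not adjacent in the way we want will be pulled apart to realize $\bdc$. Concretely, I would proceed as follows.

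First, analyze the structure of atomic networks over $\{\bec,\bdc,\bpo\}$. Since there is no parthood here, the relation $\bo$ coincides with $\bpo\cup\beq$ and $\bec$ is the ``touching but disjoint interiors'' relation. I would group the variables into their $\bo$-cliques (maximal sets pairwise related by $\bpo$); within such a clique all regions must have a common interior point (this is a small combinatorial fact about atomic $\bpo$-networks, analogous to Proposition~\ref{propRealizablePO1d}), and across cliques the relation is either $\bec$ or $\bdc$. So the ``skeleton'' of the network is a graph on the cliques whose edges are labelled $\bec$ or $\bdc$, together with the internal $\bpo$-structure.

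Second, do the geometric construction. Pick, for each variable $v_i$, a distinct point $t_i$ on the positive half of the moment curve, and let $V_i$ be the corresponding Voronoi cell; these form a neighborly family in $\mathbb{R}^3$, so $\dim(V_i\cap V_j)=2$ for all $i\ne j$. Now I would build $\mathcal{S}(v_i)$ as a convex polytope obtained by suitably shrinking/fattening pieces of $V_i$: to realize $\ec{v_i}{v_j}$ we keep a $2$-dimensional patch of the shared facet $V_i\cap V_j$ on the boundary of both regions while keeping interiors disjoint there; to realize $\dc{v_i}{v_j}$ we retract both regions away from their common facet so they become genuinely separated; and to realize $\po{v_i}{v_j}$ we let the two regions bulge slightly across their shared facet so their interiors overlap, while each keeps a private vertex/point outside the other. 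Because the facets $V_i\cap V_j$ for distinct pairs lie in distinct hyperplanes and the cells tile $\mathbb{R}^3$, these local modifications near different facets do not interfere, so all the constraints can be imposed simultaneously; a careful choice of a single smallness parameter $\theta$ (as in the proofs of Propositions~\ref{propRealizationDCTPPNTPP2d} and~\ref{propRealizability2D-EC-TPP-BNTPP}) makes all the bulges/retractions small enough to avoid unintended intersections. Each $\bo$-clique is handled by letting all its members share one common interior point near the ``center'' of the union of the relevant cells.

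Third, verify that no spurious relations are introduced: distinct regions are never equal (their private points witness this), no unwanted parthood appears (each region has a point outside every other region, by the neighborliness giving each cell a private region not shared with a given neighbour), and the $\bec$/$\bdc$/$\bpo$ conditions hold exactly where prescribed. The main obstacle I anticipate is the bookkeeping needed to make the local modifications near the various shared facets mutually compatible — in particular, ensuring that bulging two regions across one facet to create a $\bpo$ does not accidentally create overlap with a third region that should be $\bdc$ or $\bec$ from them, and that the regions remain convex after all these cuts and bulges. This is where choosing the perturbations to live in thin neighbourhoods of the respective facets, and choosing $\theta$ small relative to the (finitely many) facet separations, does the work; the neighborly-family property of the moment curve is the structural ingredient that guarantees enough ``room'' for all of this in $\mathbb{R}^3$ but, by the $K_{3,3}$ lower bounds of Section~\ref{secLowerboundD3}, not in $\mathbb{R}^2$.
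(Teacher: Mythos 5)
Your construction is essentially the paper's own proof: start from the neighborly family given by the Voronoi cells of points on the positive half of the three-dimensional moment curve (so that every pair is $\bec$ by default), then cut a cell back from a shared facet by a parallel hyperplane to obtain $\bdc$, and take the convex hull with a point just inside the neighbour to obtain $\bpo$, with a single sufficiently small parameter guaranteeing non-interference. Your preliminary step about $\bo$-cliques is unnecessary (and the claim that pairwise $\bpo$ forces a common interior point is not true in general, though nothing in the construction depends on it); otherwise the argument matches the paper's.
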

\begin{proof}
Let $\Theta$ be a consistent atomic RCC8 network over $\{\bec,\bdc, \bpo\}$ involving the variables in $V = \{v_1,...,v_m\}$.  Let $\{A_1,...,A_m\}$ be a neighborly family of convex polytopes in $\mathbb{R}^3$ such that each two polytopes $A_i, A_j$ share a face. \new{It is known that an arbitrarily large family of this kind is obtained by taking the cells of the Voronoi diagram induced by any set of points on the positive half of the three-dimensional moment curve (see Section \ref{secGeometry}).}

Note that $\ec{A_i}{A_j}$ holds for any $i\neq j$. We now modify these polytopes $A_1$,...,$A_n$ to obtain a solution of $\Theta$. Let $H_{ij}$ be the unique plane which contains $A_i\cap A_j$.  Let $H_{ij}^{\varepsilon}$ be the plane which is parallel to $H_{ij}$, at distance $\varepsilon$ from $H_{ij}$ and which is in the same half-space (induced by $H_{ij}$) as $A_i$. Let $H_{ij}^{\geq\varepsilon}$ be the half-space induced by $H_{ij}^{\varepsilon}$ which does not contain $H_{ij}$. Let $c_{ij}$ be \new{an arbitrary interior point of $A_i \cap A_j$}.  Finally, let $L_{ij}$ be the line through $c_{ij}$ which is orthogonal to $H_{ij}$ and let $c_{ij}^{\varepsilon} = L_{ij} \cap H_{ij}^{\varepsilon}$. In the following we assume that $\varepsilon>0$ is \new{chosen small enough such that} $c_{ij}^{\varepsilon} \in i(A_i)$.

If $\Theta\models \dc{a_i}{a_j}$, we replace $A_i$ by $A_i \cap H_{ij}^{\geq\varepsilon}$; the region $A_j$ can be similarly replaced by a smaller region, but it is sufficient that one of $A_i, A_j$ is modified. It is clear that then $\dc{A_i}{A_j}$ and that this operation does not affect the RCC8 relations that hold between the other pairs of regions (assuming $\varepsilon$ is sufficiently small). \new{Indeed, if $\ec{A_i}{A_z}$ then $A_i$ and $A_z$ are sharing a two-dimensional face ($z\neq i,j$). By replacing $A_i$ by $A_i \cap H_{ij}^{\geq\varepsilon}$, this face may be replaced by a smaller face, but provided that $\varepsilon$ is sufficiently small, $A_i\cap A_z$ will still be non-empty, and a two-dimensional shared face.}

\new{Once all $\bdc$ relations have been made to hold using this process, we turn to the relations of the form $\bpo$.}
If $\Theta\models \po{a_i}{a_j}$, we replace $A_j$ by $\cvx(A_j \cup \{c_{ij}^{\varepsilon}\})$; we may similarly replace $A_i$ by a larger region, but it is again sufficient that one of $A_i,A_j$ is modified. Then we have that $c_{ij}^{\varepsilon}$ is an interior point of $A_j$ and thus $\po{A_i}{A_j}$, and this operation does not affect the RCC8 relations that hold between the other pairs of regions \new{if we choose $\varepsilon$ to be sufficiently small. Indeed, for a sufficiently small $\varepsilon$ it holds that $\cvx(A_j \cup \{c_{ij}^{\varepsilon}\}) \subseteq A_j \cup i(A_i)$, from which it easily follows that $\ec{(A_j \cup \{c_{ij}^{\varepsilon}\})}{A_z}$ iff $\ec{A_j}{A_z}$ and $\dc{(A_j \cup \{c_{ij}^{\varepsilon}\})}{A_z}$ iff $\dc{A_j}{A_z}$, for any $A_z$ ($z\neq i,j$)}.
\end{proof}

\begin{corollary}
Every consistent atomic RCC5 network over $\{\bdr, \bpo\}$ has a convex solution in $\mathbb{R}^3$.
\end{corollary}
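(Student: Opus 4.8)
The plan is to derive this directly from Proposition \ref{propECDCPO3d} by refining the given RCC5 network to a consistent atomic RCC8 network over $\{\bec,\bdc,\bpo\}$. Recall that the RCC5 relation $\bdr$ corresponds exactly to the disjunction of the RCC8 base relations $\bdc$ and $\bec$, while $\bpo$ has the same meaning in RCC5 and RCC8. So, given a consistent atomic RCC5 network $\Theta$ over $\{\bdr,\bpo\}$, I would fix an arbitrary solution $\mathcal{S}_0$ of $\Theta$ (in some $\mathbb{R}^n$) and build an atomic RCC8 network $\Theta'$ on the same set of variables by keeping every constraint $\po{v_i}{v_j}$ unchanged and, for every constraint $\dr{v_i}{v_j}$ of $\Theta$, replacing it by $\dc{v_i}{v_j}$ if $\mathcal{S}_0(v_i)\cap \mathcal{S}_0(v_j)=\emptyset$ and by $\ec{v_i}{v_j}$ otherwise. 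By construction, $\mathcal{S}_0$ is a solution of $\Theta'$, so $\Theta'$ is a consistent atomic RCC8 network over $\{\bec,\bdc,\bpo\}$.

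Next I would apply Proposition \ref{propECDCPO3d} to $\Theta'$ to obtain a convex solution $\mathcal{S}$ of $\Theta'$ in $\mathbb{R}^3$, and then check that $\mathcal{S}$ is also a convex solution of $\Theta$. Indeed, every constraint $\po{v_i}{v_j}$ of $\Theta$ is literally a constraint of $\Theta'$, hence satisfied by $\mathcal{S}$; and for every constraint $\dr{v_i}{v_j}$ of $\Theta$, the corresponding constraint in $\Theta'$ is $\dc{v_i}{v_j}$ or $\ec{v_i}{v_j}$, both of which entail $i(\mathcal{S}(v_i))\cap i(\mathcal{S}(v_j))=\emptyset$, i.e.\ $\dr{\mathcal{S}(v_i)}{\mathcal{S}(v_j)}$. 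Since $\mathcal{S}$ maps every variable to a convex set, this establishes the corollary.

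There is essentially no obstacle here; the only point requiring any care is the refinement step, and in fact one can bypass the appeal to $\mathcal{S}_0$ altogether by simply replacing every $\bdr$ by $\bdc$ and observing, from the RCC8 composition table, that the resulting network remains path consistent — the relevant compositions $\bdc\circ\bpo$, $\bpo\circ\bdc$, $\bdc\circ\bdc$ and $\bpo\circ\bpo$ each contain both $\bdc$ and $\bpo$ — hence consistent, since the network is atomic. Either way, the corollary is an immediate consequence of Proposition \ref{propECDCPO3d}.
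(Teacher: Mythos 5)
Your proposal is correct and matches the paper's intent: the corollary is stated without proof precisely because it follows by refining each $\bdr$ constraint to $\bec$ or $\bdc$ (preserving consistency and atomicity) and then invoking Proposition \ref{propECDCPO3d}. Both your refinement via a witnessing solution and your alternative of uniformly replacing $\bdr$ by $\bdc$ and checking path consistency are valid ways to make this explicit.
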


\begin{proposition}\label{propRCDCNTPP3d}
Every consistent atomic RCC8 network over $\{\bec,\bdc, \bntpp, \bntppi\}$ has a convex solution in $\mathbb{R}^3$.
\end{proposition}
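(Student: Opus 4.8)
The plan is to combine the neighborly-family construction of Proposition~\ref{propECDCPO3d} with the interval-based handling of $\bntpp$-chains from Proposition~\ref{propRealizableDCNTPP1D}. Since $\Theta$ is atomic and consistent over $\{\bec,\bdc,\bntpp,\bntppi\}$, the relation $\bntpp$ induces a forest structure on $V$: writing $u \sqsubset v$ when $\Theta \models \ntpp{u}{v}$, each variable has a unique minimal proper superset, so we may group the variables into maximal $\bntpp$-trees $T_1,\dots,T_p$. If $r_k$ denotes the root of $T_k$, then for $i\neq j$ one checks from the composition table that $\Theta \models \dc{r_i}{r_j}$ or $\Theta \models \ec{r_i}{r_j}$ (an $\bntpp$-descendant cannot $\bec$ or $\bdc$ something that its ancestor properly overlaps interior-wise without creating a non-atomic situation, exactly as in the proof of Proposition~\ref{propRealizability2D-EC-TPP-BNTPP}). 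Moreover, whenever $u \in T_i$, $w\in T_j$ with $i\neq j$, the relation between $u$ and $w$ is inherited from the relation between the roots: $\Theta\models \dc{u}{w}$ if $\Theta\models\dc{r_i}{r_j}$, and $\Theta\models\bdc$ or $\bec$ (again determined) if $\Theta\models\ec{r_i}{r_j}$; the precise bookkeeping here is the first thing I would nail down.

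Next I would realize the ``quotient'' network on the roots. Apply Proposition~\ref{propECDCPO3d} (or rather its $\bpo$-free specialization, which only needs $\bec$ and $\bdc$) to obtain a neighborly family $\{A_1,\dots,A_p\}$ of convex polytopes in $\mathbb{R}^3$ realizing the $\{\bec,\bdc\}$-network on $\{r_1,\dots,r_p\}$; here $A_k$ plays the role of $\mathcal{S}(r_k)$. Then, inside each $A_k$, I want to nest the descendants of $r_k$ as a shrinking family of convex bodies, one strictly inside the interior of its parent, respecting the tree order of $T_k$ and keeping siblings $\bdc$ from each other. Concretely, for each node $v$ at depth $d$ in $T_k$ with parent $v'$, place a small ball $B_{v} = S_{\varepsilon_v}(c_v)$ with $c_v \in i(\mathcal{S}(v'))$ and radius $\varepsilon_v$ small enough that (i) $B_v \subset i(\mathcal{S}(v'))$, (ii) the balls chosen for distinct siblings are pairwise disjoint (possible since one can scatter the centers in the open set $i(\mathcal{S}(v'))$), and (iii) $B_v$ stays away from the boundary of $A_k$, so it cannot accidentally touch another $A_j$. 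Processing the trees level by level, exactly as in Proposition~\ref{propRealizableDCNTPP1D} but in three dimensions using balls instead of intervals, produces convex $\mathcal{S}(v)$ for every non-root $v$ with $\ntpp{\mathcal{S}(v)}{\mathcal{S}(v')}$, hence $\ntpp{\mathcal{S}(v)}{\mathcal{S}(w)}$ for all proper ancestors $w$, and $\dc{\mathcal{S}(v)}{\mathcal{S}(w)}$ for incomparable $v,w$ in the same tree.

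It remains to check cross-tree relations. For $u\in T_i$, $w\in T_j$ with $i\neq j$: if $\Theta\models\dc{r_i}{r_j}$ then $A_i\cap A_j=\emptyset$ already, and since all descendant bodies are contained in the interiors of $A_i$ and $A_j$ respectively, $\dc{\mathcal{S}(u)}{\mathcal{S}(w)}$ holds automatically. The genuinely delicate case is $\Theta\models\ec{r_i}{r_j}$: here $A_i$ and $A_j$ share a two-dimensional face $F_{ij}$, and I need the actually-required relation between $u$ and $w$ — which is $\bec$ exactly when the atomic network forces it and $\bdc$ otherwise — to be realized. Since I placed every descendant body strictly in the interior of its root polytope, away from $F_{ij}$, I automatically get $\dc{\mathcal{S}(u)}{\mathcal{S}(w)}$ whenever at least one of $u,w$ is a non-root; so I must argue that the composition table forbids $\ec{u}{w}$ unless $u=r_i$ and $w=r_j$. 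This is the step I expect to be the main obstacle: one has to verify that $\ec{u}{w}$ together with $\ntpp{u'}{u}$ (for a parent $u'$) is inconsistent in an atomic network — intuitively, if $u$ is properly inside $u'$ then $u'$'s relation to $w$ would be forced to $\bec$ or $\bdc$ in a way incompatible with $w$ also being a root or interior body, propagating up the trees until a contradiction with the realized root configuration. Once that table-checking lemma is in place, the construction above satisfies every constraint of $\Theta$, every region is a convex subset of $\mathbb{R}^3$ (polytopes or balls or their small perturbations), and the proof is complete.
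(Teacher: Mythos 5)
There is a genuine gap in your construction, and it is not the one you flagged. The step you worried about --- cross-tree $\bec$ --- is in fact harmless: the composition table gives $\bntpp \circ \bec = \bdc$, so a proper $\bntpp$-descendant of $r_i$ is forced to be $\bdc$ from $r_j$ and from everything in $T_j$, exactly as you hoped. The real problem is \emph{inside} a single tree: two regions $u,w$ with $\ntpp{u}{v}$ and $\ntpp{w}{v}$ may be required to satisfy $\ec{u}{w}$, since $\bntpp \circ \bntppi = \mathcal{R}_8$ contains $\bec$. Your step (ii) places "pairwise disjoint balls" for distinct siblings and you explicitly assert that incomparable nodes in the same tree come out $\bdc$; this realizes $\bdc$ where the atomic network may demand $\bec$, so the construction violates those constraints. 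The same issue recurs at every depth of nesting, so it cannot be patched by perturbing the balls slightly: you need an arbitrarily large family of convex bodies inside $i(\mathcal{S}(v))$ realizing an arbitrary consistent $\{\bec,\bdc\}$-pattern, which small round balls cannot do in $\mathbb{R}^3$ without the neighborly-family machinery.

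The paper's proof repairs exactly this by inducting on the number of $\bntpp$/$\bntppi$ occurrences rather than only applying the neighborly-family construction at the root level. For each maximal region $v$, the set $U_v$ of its $\bntpp$-descendants carries a $\{\bec,\bdc,\bntpp,\bntppi\}$-network with strictly fewer $\bntpp$ occurrences, which by the induction hypothesis has a convex solution in $\mathbb{R}^3$ (built, at its own top level, from a fresh neighborly family so that sibling $\bec$ constraints are met); this solution is then mapped into $i(\mathcal{S}(v))$ by a linear transformation, which preserves convexity and all RCC8 relations. If you replace your ball-nesting step with this recursive step, the rest of your argument (forest decomposition of the $\bntpp$ order, neighborly family for the roots, and the composition-table check that cross-tree and cross-parent pairs are forced to $\bdc$) goes through.
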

\begin{proof}
\new{Let $\Theta$ be a consistent atomic RCC8 network over $\{\bec,\bdc, \bntpp, \bntppi\}$.}
We show this proposition by induction on the number $k$ of $\bntpp$ or $\bntppi$ relations in $\Theta$. If $k=0$, then the result follows from Proposition \ref{propECDCPO3d}. Assume that $k>0$.  Let $V'$ be the set of regions which are not contained in any other region, i.e. $v\in V'$ iff there is no region $u\in V$ such that $\Theta \models \ntpp{v}{u}$. The network $\Psi = \Theta{\downarrow}V'$ satisfies the conditions of Proposition \ref{propECDCPO3d} and thus has a convex solution $\mathcal{S}$ in $\mathbb{R}^3$.  We extend $\mathcal{S}$ to a solution of $\Theta$ as follows. Let $v\in V'$ and let $U_v = \{u \in V \,|\, \Theta\models \ntpp{u}{v}\}$. Then $\Theta{\downarrow}U_v$ has strictly fewer occurrences of $\bntpp$ and $\bntppi$ and thus has a convex solution $\mathcal{T}_v$ by induction. We can modify this solution $\mathcal{T}_v$ by a linear transformation to a solution $\mathcal{S}_v$ such that for all $u\in U_v$ it holds that $\mathcal{S}_v(u) \subseteq i(\mathcal{S}(v))$. \new{Indeed, any linear transformation will preserve convexity and topological relations (including those in RCC8).} Noting that $\Theta\models \dc{u_1}{u_2}$  for $u_1 \in U_{v_1}$ and $u_2 \in U_{v_2}$, with $v_1 \neq v_2$, it follows easily that $\mathcal{S}$ and $\{\mathcal{S}_v \,|\, v \in V' \}$ together define a convex solution of $\Theta$.
\end{proof}

\subsection{Fragments with convex solutions in $\mathbb{R}^4$ \label{secFragments4D}}

Somewhat surprisingly, there are two fragments in Figure \ref{overviewRCC8} for which consistent atomic networks are only guaranteed to have a convex solution if the number of dimensions is at least four. The largest of these fragments contains all relations apart from $\bpo$, which is an important fragment from a practical point of view, since there are many applications in which the relation $\bpo$ is not used. 

\subsubsection{Lower bounds}

While it is relatively straightforward to find examples of consistent RCC8 networks which do not have a convex solution $\mathbb{R}^2$, it is much harder to find consistent RCC8 networks which do not have a convex solution in $\mathbb{R}^3$ (but which have a convex solution in $\mathbb{R}^4$). The main idea of the following proof is to start with a constraint $\ec{a}{b}$ and then essentially construct a 2D counterexample in $a\cap b$.

\begin{figure}
\centering
\includegraphics[width=200pt]{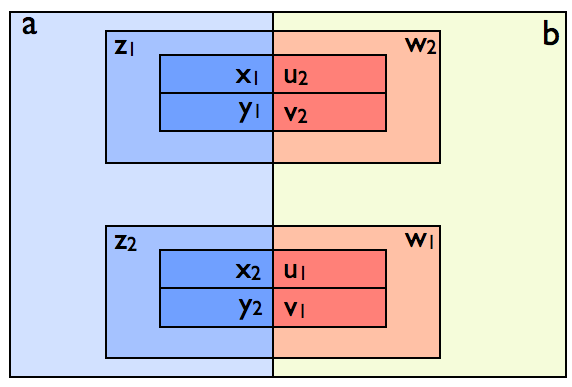}
\caption{Realization of $\Theta{\downarrow}\{a,b,x_1,x_2,y_1,y_2,z_1,z_2,u_1,u_2,v_1,v_2,w_1,w_2\}$ with $\Theta$ as in the proof of Proposition \ref{propECDCTPP3lower}. \label{figpropECDCTPP3lower}}
\end{figure}
\begin{proposition}\label{propECDCTPP3lower}
There exists a consistent atomic RCC8 network over $\{\bec,\bdc,\btpp,\btppi\}$ which has no convex solution in $\mathbb{R}^3$.
\end{proposition}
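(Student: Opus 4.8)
The plan is to force a two‑dimensional obstruction to live inside a single plane. The key geometric fact is that if $p,q$ are convex regions in $\mathbb{R}^3$ with $\ec{p}{q}$, then by the separating hyperplane theorem there is a plane $H$ with $p$ and $q$ in opposite closed half‑spaces, so $p\cap q\subseteq H$ and $i(p)\cap H=i(q)\cap H=\emptyset$. Consequently, whenever a region $c$ satisfies $\p{c}{a}$ and a region $d$ satisfies $\p{d}{b}$ (with $\ec{a}{b}$), the contact $c\cap d\subseteq a\cap b\subseteq H$ is trapped in one fixed plane, and the ``footprint'' $c\cap H$ of any convex $c\subseteq a$ is a convex subset of $H\cong\mathbb{R}^2$. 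So the idea is: take a network $\Theta_0$ over $\{\bec,\bdc\}$ that has \emph{no} convex solution in $\mathbb{R}^2$ (Proposition \ref{propECDCD2lower} provides one built from $K_{3,3}$; for the drawing argument below it is convenient to work with a variant that also carries explicit ``edge'' regions), note that its $\bec$‑graph is bipartite (it is, being an incidence graph of $K_{3,3}$), and ``compile'' it into $\mathbb{R}^3$ so that every $\bec$ of $\Theta_0$ becomes a contact between a part of $a$ and a part of $b$, hence is pushed into $H$.

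\textbf{The construction.} Let the two sides of the bipartition of the $\bec$‑graph of $\Theta_0$ be $P$ and $Q$. Form $\Theta$ by adding two fresh variables $a,b$, and the constraints: $\ec{a}{b}$; $\tpp{c}{a}$ for every $c\in P$ and $\tpp{d}{b}$ for every $d\in Q$; all constraints of $\Theta_0$ (each $\bec$‑pair is a $P$--$Q$ pair, each $\bdc$‑pair stays $\bdc$); and the forced base relations $\ec{c}{b}$ for $c\in P$ and $\ec{d}{a}$ for $d\in Q$ (such a $c$ meets some region contained in $b$ but has interior disjoint from $i(b)$, so $\bec$ is indeed the base relation, and this is the only pair type not yet specified). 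Thus $\Theta$ is an atomic network over $\{\bec,\bdc,\btpp,\btppi\}$.

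\textbf{Consistency and the lower bound.} Consistency is witnessed by a (necessarily non‑convex) realization: take any realization $\mathcal S_0$ of $\Theta_0$ by regular closed sets in $\mathbb{R}^2$, view $\mathbb{R}^2$ as $\{x_3=0\}\subseteq\mathbb{R}^3$, and set $\mathcal S(a)=\mathbb{R}^2\times[0,1]$, $\mathcal S(b)=\mathbb{R}^2\times[-1,0]$, $\mathcal S(c)=\mathcal S_0(c)\times[0,1]$ for $c\in P$, $\mathcal S(d)=\mathcal S_0(d)\times[-1,0]$ for $d\in Q$; a routine check confirms $\mathcal S$ satisfies $\Theta$. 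For the lower bound, suppose $\mathcal S$ is a convex solution of $\Theta$ in $\mathbb{R}^3$. One first argues $\mathcal S(a)\cap\mathcal S(b)$ is two‑dimensional: if it were a point or a segment, the mutually disjoint regions of $Q$ incident (in $\Theta_0$) to a fixed region of $P$ could not all meet it, contradicting the incidence pattern of $K_{3,3}$. So $\mathcal S(a)\cap\mathcal S(b)$ spans a unique plane $H$; the footprints $\mathcal S(c)\cap H$ ($c\in P\cup Q$) are convex subsets of $H$, with $\mathcal S(c)\cap H\cap\mathcal S(d)\cap H=\mathcal S(c)\cap\mathcal S(d)$ non‑empty iff the corresponding $\bec$ occurs in $\Theta_0$, and with footprints on the same side pairwise disjoint. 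Hence the footprints reproduce inside $\mathbb{R}^2$ exactly the incidence situation of $\Theta_0$, and copying the planar‑drawing argument used for $\Theta_0$ — choose a point in each footprint coming from $P$, route each $\bec$‑``edge'' through the footprints it meets, and use convexity of the footprints together with the separation argument to exclude crossings — produces a planar drawing of $K_{3,3}$, a contradiction.

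\textbf{Main obstacle.} Making that last step rigorous is precisely what makes this harder than the $\mathbb{R}^2$ lower bounds. First, a footprint $\mathcal S(c)\cap H$ is a face of $\mathcal S(c)$ and could a priori be lower‑dimensional, breaking the ``choose an interior point'' step; the fix is a general‑position argument — invoke \cite{Davis:1999} to assume the $\mathcal S(\cdot)$ are convex polytopes, then perturb each of them slightly so every footprint becomes two‑dimensional, noting this preserves all RCC8 relations (disjoint regions remain at positive distance, and a region contained in $a$ stays contained in $a$, so no $\bec$ can slip to $\bpo$). Second, and more delicate, a $\bec$ of $\Theta_0$ lifts into $\mathbb{R}^3$ only as ``the two footprints meet'', which among convex planar sets may be external contact \emph{or} genuine overlap; the starting network $\Theta_0$, and in particular its edge‑gadgets, must therefore be chosen so that the planar‑drawing (no‑crossing) argument goes through uniformly in both cases and so that no footprint of an incident edge cuts through a node footprint. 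Organising these two points — rather than the construction itself — is the real content of the proof.
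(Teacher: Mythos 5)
There is a genuine gap, and it is exactly at the step you flag as ``the real content of the proof'': the passage from the $K_{3,3}$ contact network to its footprints in the separating plane $H$ destroys the information that the planarity argument needs. The proof of Proposition \ref{propECDCD2lower} relies on the fact that externally connected regions have \emph{disjoint interiors}, so that an edge routed through $i(\mathcal{S}(x))$ and an edge routed through $i(\mathcal{S}(y'))$ cannot cross except at the excluded contact points. After your reduction, a constraint $\ec{c}{d}$ with $\mathcal{S}(c)$ above $H$ and $\mathcal{S}(d)$ below $H$ only yields $F_c\cap F_d\neq\emptyset$ for the footprints $F_c=\mathcal{S}(c)\cap H$, $F_d=\mathcal{S}(d)\cap H$; since these footprints lie in the boundaries of the respective solids, nothing prevents their \emph{relative} interiors in $H$ from overlapping arbitrarily, or even from coinciding. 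The planar configuration you actually derive --- two triples of convex sets, pairwise disjoint within each triple, with all nine cross-pairs meeting --- is perfectly realizable in the plane, e.g.\ by three horizontal and three vertical segments forming a grid. So no contradiction follows, and no choice of ``edge gadgets'' inside the fragment $\{\bec,\bdc,\btpp,\btppi\}$ rescues the argument as stated. Worse, your specific network $\Theta$ \emph{does} have a convex solution in $\mathbb{R}^3$: take $\mathcal{S}(a)$ and $\mathcal{S}(b)$ to be large boxes above and below the plane $z=0$, realize the three $P$-regions as thin boxes above $z=0$ whose bottom faces are three disjoint horizontal strips, and the three $Q$-regions as thin boxes below $z=0$ whose top faces are three disjoint vertical strips; every horizontal strip crosses every vertical strip in a small square at $z=0$, which gives all nine $\bec$ constraints, and all $\btpp$, $\bdc$ and remaining $\bec$ constraints are easily checked. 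Hence your candidate network cannot witness the proposition at all.

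The paper's proof starts from the same high-level idea --- force a two-dimensional obstruction into the plane $K$ separating $\mathcal{S}(a)$ from $\mathcal{S}(b)$ --- but it compensates for exactly this loss of interior-disjointness by adding structure: each ``node'' $z_i\subseteq a$ contains two disconnected tangential parts $x_i,y_i$ (and each $w_j\subseteq b$ contains $u_j,v_j$), so that the separating planes $H_i$ and $G_j$ of these pairs force the relevant contact points $p_{ij}$ onto one-dimensional segments $L_i\subseteq H_i\cap K$ and $M_j\subseteq G_j\cap K$. The contradiction is then not $K_{3,3}$-planarity but a combinatorial ordering argument on points of $5+5$ coplanar segments, using the constraints $\dc{z_i}{w_i}$ (a missing perfect matching in an otherwise complete bipartite contact pattern) to force two segments $L_l$ and $M_l$ to intersect when they must be disjoint. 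If you want to repair your approach, you would need an analogous mechanism that pins the footprints down to segments (or otherwise controls their overlaps) before any drawing argument can be run.
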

\begin{proof}
Let $\Theta$ contain the following constraints (for all $1\leq i \neq j \leq 5$):
\footnotesize{
\begin{align*}
&\ec{w_i}{a}&&
\ec{u_i}{a}&&
\ec{v_i}{a}&&
\tpp{z_i}{a}&&
\tpp{x_i}{a}&&
\tpp{y_i}{a}&&
\ec{a}{b}\\[0.5em]
&\tpp{w_i}{b}&&
\tpp{u_i}{b}&&
\tpp{v_i}{b}&&
\ec{z_i}{b}&&
\ec{x_i}{b}&&
\ec{y_i}{b}\\[0.5em]
&\dc{z_i}{z_j}&&
\dc{z_i}{x_j}&&
\dc{z_i}{y_j}&&
\dc{z_i}{w_i}&&
\dc{z_i}{u_i}&&
\dc{z_i}{v_i} &&
\ec{z_i}{w_j}\\
&\ec{z_i}{u_j}&&
\ec{z_i}{v_j} &&
\tpp{x_i}{z_i}&&
\tpp{y_i}{z_i}
\\[0.5em]
&\dc{w_i}{w_j}&&
\dc{w_i}{u_j}&&
\dc{w_i}{v_j}&&
\dc{w_i}{x_i}&&
\dc{w_i}{y_i}&&
\ec{w_i}{x_j}&&
\ec{w_i}{y_j}\\
& \tpp{u_i}{w_i}&&
 \tpp{v_i}{w_i}&&
\\[0.5em]
& \ec{x_i}{y_i} &&
\ec{x_i}{u_j} &&
\ec{x_i}{v_j} &&
\dc{x_i}{x_j} &&
\dc{x_i}{y_j} &&
\dc{x_i}{u_i} &&
\dc{x_i}{v_i} \\[0.5em]
&  \ec{y_i}{u_j} &&
\ec{y_i}{v_j} &&
\dc{y_i}{y_j} &&
\dc{y_i}{u_i} &&
\dc{y_i}{v_i} \\[0.5em]
& \ec{u_i}{v_i} &&
\dc{u_i}{u_j} &&
\dc{u_i}{v_j} &&
\dc{v_i}{v_j}
\end{align*}}
\normalsize
For illustration purposes, Figure \ref{figpropECDCTPP3lower} depicts a realization of a restriction of $\Theta$.
Assume that a convex solution $\mathcal{S}$ in $\mathbb{R}^3$ existed.
For each $i$, let $H_i$ be a plane separating $\mathcal{S}(x_i)$ from $\mathcal{S}(y_i)$ and let $G_i$ be a plane separating $\mathcal{S}(u_i)$ from $\mathcal{S}(v_i)$.  Let $K$ be a plane separating $\mathcal{S}(a)$ and $\mathcal{S}(b)$.

For $i\neq j$, let $p_{ij}$ be a point in $H_i \cap G_j \cap K \cap \mathcal{S}(z_i) \cap \mathcal{S}(w_j)$.  To see why such a point exists, note that because of the constraints $\ec{x_i}{u_j}$, $\ec{x_i}{v_j}$, $\ec{y_i}{u_j}$, $\ec{y_i}{v_j}$ there exist points $p_{x_iu_j} \in \mathcal{S}(x_i) \cap \mathcal{S}(u_j) \cap K$, $p_{x_iv_j} \in \mathcal{S}(x_i) \cap \mathcal{S}(v_j) \cap K$, $p_{y_iu_j} \in \mathcal{S}(y_i) \cap \mathcal{S}(u_j)\cap K$ and $p_{y_iv_j} \in \mathcal{S}(y_i) \cap \mathcal{S}(v_j)\cap K$.  Since $p_{x_iu_j}$ and $p_{x_iv_j}$ are at different sides of the hyperplane $G_j$ and both points are contained in the convex region $\mathcal{S}(x_i) \cap \mathcal{S}(w_j)$, there exists a point $q_1 \in \mathcal{S}(x_i) \cap G_j \cap K \cap \mathcal{S}(w_j)$ which is collinear with and between $p_{x_iu_j}$ and $p_{x_iv_j}$.  Similarly we find that there must exist a point  $\mathcal{S}(q_2) \in \mathcal{S}(y_i) \cap G_j \cap K \cap \mathcal{S}(w_j)$ which is between $p_{y_iu_j}$ and $p_{y_iv_j}$. Moreover, the points $q_1$ and $q_2$ are at opposite sides of $H_i$ and both points are contained in the convex set $z_i$. It follows that there must be a point $p_{ij} \in H_i \cap G_j \cap K \cap \mathcal{S}(z_i) \cap \mathcal{S}(w_j)$.

We define the line segments $L_i = \ch(\{p_{ij}\,|\, j \neq i\}) \subseteq H_i \cap  K \cap \mathcal{S}(z_i)$ and $M_j = \ch(\{p_{ij}\,|\, i\neq j\}) \subseteq G_j \cap K  \cap \mathcal{S}(w_j)$. It is clear that the line segments $L_1,...,L_5$ are all disjoint and that the line segments $M_1,...,M_5$ are all disjoint. Indeed, if $L_i \cap L_j$ for $i\neq j$ contained a point $q$, we would have $q \in \mathcal{S}(z_i) \cap \mathcal{S}(z_j)$, contradicting the assumption $\dc{\mathcal{S}(z_i)}{\mathcal{S}(z_i)}$ and similarly $M_i\cap M_j\neq \emptyset$ would contradict  $\dc{\mathcal{S}(z_i)}{\mathcal{S}(z_i)}$.   

The points $p_{i1},...,p_{i5}$ occur in the same order on each line segment $L_i$.  Indeed, assume that this were not the case, and e.g.\ on the line segment $L_4$ the points $p_{41}, p_{42},p_{43}$ occur in that order, while on $L_5$ the corresponding points occur in the order $p_{51}, p_{52},p_{53}$. Since the points $p_{41}, p_{42},p_{43},p_{51}, p_{52},p_{53}$ are coplanar, this means that at least two of the line segments $p_{41}p_{51}$, $p_{42}p_{52}$, $p_{43}p_{53}$ would have a non-empty intersection,  contradicting the fact that the line segments $M_1$, $M_2$ and $M_3$ are disjoint. Similarly, we find that the points $p_{1j},...,p_{5j}$ occur in the same order on each line segment $M_j$. 

In particular, there exists a permutation $\sigma_1,...,\sigma_5$ of $\{1,...,5\}$ such that the points $p_{i\sigma_1},...,p_{i\sigma_5}$ occur in that order on each line segment $L_i$ (where $p_{i\sigma_j}$ for $\sigma_j = i$ is excluded). Similarly, there exists a permutation $\tau_1,...,\tau_5$ such that the points $p_{\tau_1j},...,p_{\tau_5j}$ occur in that order on each line segment $M_j$ (excluding $p_{\tau_ij}$ for $\tau_i=j$).
First assume that $\{\sigma_1,\sigma_5\} \cap \{\new{\tau_1,\tau_5}\} = \emptyset$ and let $l \in \{1,...,5\} \setminus \{\sigma_1,\sigma_5,\tau_1,\tau_5\}$.  By the previous construction, the following betweenness relations hold:
\begin{itemize}
\item $p_{\sigma_1l}$ is between  $p_{\sigma_1 \tau_1}$ and $p_{\sigma_1 \tau_5}$. 
\item $p_{\sigma_5l}$ is between  $p_{\sigma_5 \tau_1}$ and $p_{\sigma_5 \tau_5}$. 
\item $p_{l \tau_1}$ is between  $p_{\sigma_1 \tau_1}$ and $p_{\sigma_5 \tau_1}$.
\item $p_{l \tau_5}$ is between  $p_{\sigma_1 \tau_5}$ and $p_{\sigma_5 \tau_5}$. 
\end{itemize}
It follows that $L_l$ (which contains $p_{l \tau_1}$ and $p_{l \tau_5}$) intersects with $M_l$ (which contains $p_{\sigma_1l}$ and $p_{\sigma_5l}$), and thus that $\ec{\mathcal{S}(z_l)}{\mathcal{S}(w_l)}$, a contradiction since $\Theta$ contains the constraint $\dc{z_l}{w_l}$.
Now suppose $\tau_1 \in \{\sigma_1,\sigma_2\}$ but $\{\tau_2,\tau_5\} \cap \{\sigma_1,\sigma_2\} = \emptyset$, then we can choose $l \in \{1,...,5\} \setminus \{\sigma_1,\sigma_5,\tau_2,\tau_5\}$ and derive that $\ec{\mathcal{S}(z_l)}{\mathcal{S}(w_l)}$ by replacing $\tau_1$ by $\tau_2$ in the preceding argument.  In general, we can always find $\tau,\tau' \in \{1,...,5\}$ such that $\{\sigma_1,\sigma_5\} \cap \{\tau,\tau'\}=\emptyset$ and such that $l$ is (strictly) between $\tau$ and $\tau'$.
\end{proof}

\subsubsection{Upper bounds}

\new{The main aim of this section is to prove the following result.}
\begin{proposition}\label{propECDCTPPd}
Every consistent atomic RCC8 network over $\{\bec,\bdc, \btpp, \btppi\}$ has a convex solution in $\mathbb{R}^4$.
\end{proposition}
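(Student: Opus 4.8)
The plan is to mimic, in four dimensions, the construction used in Proposition \ref{propECDCPO3d}, but now exploiting a \emph{neighborly family of convex polytopes in $\mathbb{R}^4$} together with the extra degrees of freedom that the fourth dimension provides to accommodate the refinements $\btpp$ and $\btppi$. Recall that the key difficulty treated in Proposition \ref{propECDCTPP3lower} is precisely that $\btpp$ constraints together with $\bec$ constraints overconstrain a three-dimensional configuration; the fourth dimension is what buys us the room to realize, inside a shared face $\mathcal{S}(a)\cap\mathcal{S}(b)$ (which will itself be three-dimensional), an arbitrarily complicated neighborly configuration of the ``inner'' regions. So the first step is to set up the skeleton: for the set of variables $V=\{v_1,\dots,v_m\}$, take a neighborly family $\{A_1,\dots,A_m\}$ of convex polytopes in $\mathbb{R}^4$ obtained as the Voronoi cells of points on the positive half of the four-dimensional moment curve (Section \ref{secGeometry}). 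As in Proposition \ref{propECDCPO3d}, each pair $A_i,A_j$ shares a three-dimensional face $F_{ij}=A_i\cap A_j$, so initially $\ec{A_i}{A_j}$ holds for all $i\neq j$.

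Next I would organize the required RCC8 relations into a suitable order and handle them one type at a time, each time perturbing the polytopes by a small amount $\varepsilon$ so as not to disturb the relations already installed. For the $\bdc$ constraints, I proceed exactly as in Proposition \ref{propECDCPO3d}: slide a bounding hyperplane of $A_i$ slightly inward, replacing $A_i$ by $A_i\cap H_{ij}^{\geq\varepsilon}$; since the shared faces with all other $A_z$ are three-dimensional, for small $\varepsilon$ they survive as three-dimensional faces. The genuinely new ingredient is the treatment of the $\btpp$ and $\btppi$ chains. I would proceed by induction on the number of $\btpp/\btppi$ constraints, in the spirit of Proposition \ref{propRCDCNTPP3d}: let $V'$ be the regions not properly contained in any other; $\Theta{\downarrow}V'$ is a network over $\{\bec,\bdc\}$ (after the tangential parts are stripped off the top), which can be realized by the polytope construction above. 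For a region $v\in V'$ and the set $U_v$ of regions $u$ with $\Theta\models\tpp{u}{v}$, the network $\Theta{\downarrow}U_v$ has strictly fewer tangential-part constraints and, by the induction hypothesis, has a convex solution in $\mathbb{R}^4$. The crucial point, and the reason four dimensions suffice, is that I must embed this solution inside $\mathcal{S}(v)$ so that (i) every $u\in U_v$ lies in $\mathcal{S}(v)$ but \emph{touches the boundary of $\mathcal{S}(v)$} (to realize $\btpp$ rather than $\bntpp$), and (ii) the $\bec$ relations between members of $U_v$ and $U_{v'}$ for distinct $v,v'$ (which force shared points on the common face of $\mathcal{S}(v)$ and $\mathcal{S}(v')$) are simultaneously realizable. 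This is where the fourth dimension is essential: the common face $F_{vv'}$ is three-dimensional, so there is a full $3$-space worth of room — enough, by Proposition \ref{propECDCPO3d} applied \emph{within that $3$-space}, to host the neighborly contact pattern among the inner regions.

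Concretely, for each region $v\in V'$ I would pick a bounding hyperplane $H$ of $\mathcal{S}(v)$ whose intersection with $\mathcal{S}(v)$ is a full three-dimensional face, place a scaled, affinely-transformed copy of the solution of $\Theta{\downarrow}U_v$ so that each $\mathcal{S}(u)$ ($u\in U_v$) touches $H$ along a two-dimensional set and is otherwise in the interior of $\mathcal{S}(v)$, thereby giving $\tpp{\mathcal{S}(u)}{\mathcal{S}(v)}$; and where $\Theta\models\ec{u}{u'}$ with $u\in U_v$, $u'\in U_{v'}$, I would arrange the transformations so that the boundary contact points land on the shared face $F_{vv'}$, using that $F_{vv'}$ is three-dimensional to realize the (planar-counterexample-style) incidence pattern by the $\mathbb{R}^3$ construction of Propositions \ref{propECDCPO3d}. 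Affine transformations preserve convexity and all RCC8 relations, so correctness of the inner pieces is inherited; the only checks are that distinct inner families stay disconnected ($\Theta\models\dc{u}{u'}$ whenever $u\in U_v$, $u'\in U_{v'}$ are not jointly required to be $\bec$, which follows from path-consistency), and that the $\varepsilon$-perturbations at each stage are small enough not to destroy earlier relations. The main obstacle I anticipate is precisely the bookkeeping in step (ii): ensuring that the $\bec$ constraints \emph{between} inner families of different top-level regions can be met \emph{simultaneously} with all the $\btpp$ boundary-contact requirements, i.e.\ that placing the contact points of $U_v\!\leftrightarrow\!U_{v'}$ on $F_{vv'}$ is compatible across all pairs at once; this will require carefully choosing the affine embeddings (and possibly iterating the neighborly-family construction one level down inside each three-dimensional face) so that all constraints are discharged without conflict.
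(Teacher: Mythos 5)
Your overall plan --- start from a neighborly family in $\mathbb{R}^4$, install the $\bdc$ relations by cutting, and then handle $\btpp$/$\btppi$ by recursing on containment depth and placing an affine copy of the inner solution inside each parent --- is not the paper's proof, and it has a genuine gap at exactly the point you yourself flag as ``bookkeeping''. The recursion-with-affine-embedding works for $\bntpp$ (Proposition \ref{propRCDCNTPP3d}) precisely because a non-tangential part imposes no boundary condition: any affine shrinking of an independently constructed solution of $\Theta{\downarrow}U_v$ into $i(\mathcal{S}(v))$ is acceptable, and inner families of distinct parents are automatically disconnected. For $\btpp$ the situation is qualitatively different: a region $u\in U_v$ that is externally connected to regions $u'\in U_{v'}$, $u''\in U_{v''}$, \dots\ must meet the boundary of $\mathcal{S}(v)$ at points lying on the specific facets $F_{vv'}$, $F_{vv''}$, \dots, and those points must \emph{coincide} with contact points chosen by the independently constructed solutions inside $v'$, $v''$, \dots. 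A single affine map applied to a precomputed solution of $\Theta{\downarrow}U_v$ cannot in general place all of these contacts on the prescribed facets at the prescribed locations simultaneously, and the problem compounds with nesting depth: a region at depth $d$ must realize each of its $\bec$ contacts at a point lying in the correct half-space of all $d$ of its ancestors at once. Nothing in your sketch shows that this global system of incidence constraints is solvable; observing that each shared face is three-dimensional and ``has enough room by Proposition \ref{propECDCPO3d}'' does not address the coordination \emph{across} different faces and different levels of the containment tree.

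The paper's actual proof abandons the perturbative/recursive approach entirely for this reason. It organizes $V$ into a binary containment tree, assigns to each variable a parameter $t_v$ on the four-dimensional moment curve via a carefully calibrated recursive procedure, and for each pair $a,b$ with $\Theta\models\ec{a}{b}$ constructs an explicit contact point $q_{ab}$ far out along the normal of a hyperplane through $M(t_x)$ and $M(t_y)$ (the sibling ancestors), chosen via Lemma \ref{lemmaDerivative2} so that the associated quartic has extrema exactly at $t_a$ and $t_b$. Lemmas \ref{lemmaProof4DDescendantQab}--\ref{lemmaQabInVoronoiY} then verify that $q_{ab}$ lies simultaneously in the correct Voronoi half-spaces of \emph{all} ancestors of both $a$ and $b$, and Lemma \ref{lemmaSeparateDCregions} rules out spurious contacts between the resulting convex hulls. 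This machinery is precisely the content your proposal is missing: to complete your argument you would need either to reproduce it or to supply a genuinely different proof that all cross-family $\bec$ contacts can be realized simultaneously with the tangential boundary requirements.
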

\new{A similar result for networks over $\{\bec,\bdc, \btpp, \btppi, \bntpp,\allowbreak \bntppi\}$ will then follow easily.}

\new{Before we present the details of the proof, we briefly sketch its intuition. We will assume that the variables in $V$ can be organized in a binary tree, such that the descendants of a region $x$ are those regions that are contained in $x$. It is always possible to ensure that such a binary tree exists by introducing additional regions in a way which does not affect the consistency of $\Theta$.}

\new{Then we will associate each region $x$ with a point $M(t_x)$ on the four-dimensional moment curve. Let $x$ and $y$ be siblings in the tree, and let $X_x$ and $X_y$ be the cells of the Voronoi diagram that is induced by $\{M(t_x),M(t_y)\}$. Then, as we will show in Lemma \ref{lemmaOrderingPoints}, we can choose the values $t_a$ such that $M(t_a) \in X_x$ for any descendant $a$ of $x$ (for any choice of $x$).}

\new{For each $x$, we can then consider the set $Y_x = X_x \cap \{X_u \,|\, \textit{$u$ is an ancestor of $x$}\}$. For each region $a$ and each region $b$ such that $a$ is not a descendant of $b$ and vice versa, it is clear that $\dr{Y_a}{Y_b}$. We will show in Lemmas \ref{lemmaProof4DDescendantQab} and \ref{lemmaProof4DAncestorQab} that for each such regions $a$ and $b$, we can find a point $q_{ab}$ which belongs to $Y_a\cap Y_b$, showing that $\ec{Y_a}{Y_b}$. The proof of these lemmas crucially relies on a property of the moment curve which we show in Lemma \ref{lemmaDerivative2}, being a variant of the well-known property that for any four points on the four-dimensional moment curve there exists a hyperplane that intersects the moment curve at exactly these points, and moreover crosses the moment curve at these points.}

\new{The points $q_{ab}$ will allow us to construct a convex solution of $\Theta$ by realizing each leaf node $a$ as the convex hull of a small sphere centered around $M(t_a)$ and the points $\{q_{ab} \,|\, \Theta \models \ec{a}{b}\}$. The realization of a non-leaf node is  defined as the convex hull of the union of the realizations of its children, which trivially guarantees that all required $\btpp$ relations are satisfied. A key result is shown in Lemma \ref{lemmaSeparateDCregions}, which guarantees that these convex hulls cannot introduce spurious $\bec$ relations. }

\new{The remainder of this section is organized in three parts. First we show two properties of the four-dimensional moment curve which we will rely on in the proof. Then we discuss how the points $M(t_a)$ and $q_{ab}$ are chosen, and we show that they have the required properties. Finally, we show how we can use these points to define a convex solution of $\Theta$.}


\paragraph{Preliminaries}

The proof of Proposition \ref{propECDCTPPNTPPd} below relies on a number of properties about the moment curve in $\mathbb{R}^4$ \new{(see Section \ref{secGeometry})}.  We will also use the following lemma.
\begin{lemma}\label{lemmaDerivative2}
Let $m_1 < t_1 < t_2 < m_2$ and consider the hyperplane $H = \{\new{\mathbf{x}} \,|\, \mathbf{h} \cdot \new{\mathbf{x}} = \theta\}$.  It is possible to choose $\mathbf{h}$ and $\theta$ such that $H$ intersects the four-dimensional moment curve at $M(t_1)$ and $M(t_2)$ and such that the function $f:\mathbb{R}\rightarrow \mathbb{R}$ defined by $f(t) = \mathbf{h} \cdot M(t) - \theta$ reaches an extremum at $m_1$ and $m_2$.  
\end{lemma}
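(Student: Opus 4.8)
## Proof Plan for Lemma \ref{lemmaDerivative2}

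The statement asks for a hyperplane $H = \{\mathbf{x} \mid \mathbf{h}\cdot\mathbf{x} = \theta\}$ in $\mathbb{R}^4$ that meets the moment curve $M(t) = (t,t^2,t^3,t^4)$ at $M(t_1)$ and $M(t_2)$, while the scalar function $f(t) = \mathbf{h}\cdot M(t) - \theta$ has critical points exactly at the two prescribed outer values $m_1$ and $m_2$. My plan is to work directly with $f$ rather than with $\mathbf{h}$: writing $\mathbf{h} = (h_1,h_2,h_3,h_4)$ and $\theta = -h_0$, the function is $f(t) = h_4 t^4 + h_3 t^3 + h_2 t^2 + h_1 t + h_0$, an arbitrary real polynomial of degree at most $4$. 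So the lemma is equivalent to the following purely elementary claim: given $m_1 < t_1 < t_2 < m_2$, there is a degree-$4$ polynomial $f$ (not identically zero, indeed with $h_4 \neq 0$ so that $H$ is a genuine hyperplane) such that $f(t_1) = f(t_2) = 0$ and $f'(m_1) = f'(m_2) = 0$.

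\textbf{Construction of $f$.} The cleanest route is to prescribe the derivative first. Since $f'$ is a cubic that must vanish at $m_1$ and $m_2$, write $f'(t) = c(t - m_1)(t - m_2)(t - \alpha)$ for a constant $c \neq 0$ and a free parameter $\alpha$; then $f(t) = \int_{t_1}^t f'(s)\,ds$ automatically satisfies $f(t_1) = 0$, and it remains only to choose $\alpha$ (and then $c$ is irrelevant up to scaling) so that $f(t_2) = 0$ as well, i.e.
\begin{equation*}
\int_{t_1}^{t_2} (s - m_1)(s - m_2)(s - \alpha)\,ds = 0.
\end{equation*}
Expanding in $\alpha$, the left-hand side is an affine (degree-one) function of $\alpha$, namely $A - \alpha B$ where $B = \int_{t_1}^{t_2}(s-m_1)(s-m_2)\,ds$ and $A = \int_{t_1}^{t_2} s(s-m_1)(s-m_2)\,ds$. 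The key point is that the coefficient $B$ is nonzero: on the interval $(t_1,t_2) \subseteq (m_1,m_2)$ the integrand $(s-m_1)(s-m_2)$ is strictly negative throughout, so $B < 0$. Hence $\alpha = A/B$ is well-defined, and with this choice $f(t_2) = 0$. Finally, since $\deg f' = 3$ exactly (leading coefficient $c \neq 0$), $f$ has degree exactly $4$, so $h_4 \neq 0$ and $H$ is a bona fide hyperplane; and $H$ meets the moment curve precisely at $M(t_1), M(t_2)$ among $t \in \{t_1,t_2\}$ as required (we only need containment of these two points, which we have arranged). Setting $\mathbf{h} = (h_1,h_2,h_3,h_4)$ and $\theta = -h_0$ from the coefficients of $f$ completes the construction.

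\textbf{Main obstacle and remarks.} There is essentially one thing that could go wrong, and the argument above is organized precisely to rule it out: the equation $f(t_2) = 0$ must be solvable for the remaining free parameter. By parametrizing the antiderivative through its third root $\alpha$, that equation becomes linear in $\alpha$ with nonvanishing coefficient, where nonvanishing is guaranteed by the interleaving hypothesis $m_1 < t_1 < t_2 < m_2$ (this is the only place the strict inequalities are used — they force $(s-m_1)(s-m_2) < 0$ on all of $(t_1,t_2)$). Note the lemma does not require that $f$ vanish \emph{only} at $t_1, t_2$ on the curve, nor that $m_1,m_2$ be the only critical points in a geometric sense beyond what we arranged ($f'$ is a cubic vanishing at $m_1, m_2$ and at $\alpha$), so no further case analysis on the location of $\alpha$ relative to the $t_i$ or $m_i$ is needed. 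The statement as used later (in Lemmas \ref{lemmaProof4DDescendantQab} and \ref{lemmaProof4DAncestorQab}) will exploit that $f$ changes sign at $t_1$ and $t_2$ while having the prescribed local extrema at $m_1, m_2$; both facts are immediate from the explicit form $f'(t) = c(t-m_1)(t-m_2)(t-\alpha)$.
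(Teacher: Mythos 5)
Your proof is correct, but it is organized quite differently from the paper's. The paper constructs the quartic directly as a product of linear factors, $g_{(\delta_1,\delta_2)}(t) = (t-(m_1-\delta_1))(t-t_1)(t-t_2)(t-(m_2+\delta_2))$, fixing the two roots at $t_1,t_2$ from the outset and then tuning the two outer roots so that the critical points of $g$ land at $m_1$ and $m_2$; the existence of suitable $\delta_1,\delta_2$ is argued via a two-parameter continuity/monotonicity argument that invokes the root dragging theorem and the polynomial root motion theorem. You instead prescribe the \emph{derivative}, $f'(t)=c(t-m_1)(t-m_2)(t-\alpha)$, integrate from $t_1$ so that $f(t_1)=0$ is automatic, and reduce the remaining condition $f(t_2)=0$ to a single linear equation in $\alpha$ whose coefficient $B=\int_{t_1}^{t_2}(s-m_1)(s-m_2)\,ds$ is nonzero precisely because of the interleaving $m_1<t_1<t_2<m_2$. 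This buys a more elementary and more self-contained argument (no external theorems, no simultaneous two-parameter tuning), at the cost of inverting the paper's viewpoint. One small point you should make explicit: the lemma asks for \emph{extrema} at $m_1$ and $m_2$, so you need $f'$ to change sign there, i.e.\ $\alpha\notin\{m_1,m_2\}$. This is immediate from your own formula, since
$$
\alpha=\frac{\int_{t_1}^{t_2} s\,(s-m_1)(s-m_2)\,ds}{\int_{t_1}^{t_2}(s-m_1)(s-m_2)\,ds}
$$
is a weighted mean of $s$ over $[t_1,t_2]$ with the positive weight $-(s-m_1)(s-m_2)$, hence $\alpha\in(t_1,t_2)$, which is disjoint from $\{m_1,m_2\}$; this also recovers, for free, the remark following the lemma in the paper that the third extremum of $f$ lies between $t_1$ and $t_2$.
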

\begin{proof}
The function $f$ takes the following form ($a,b,c,d \in \mathbb{R}$):
\begin{align*}
f(t) = a\cdot t + b\cdot t^2 + c\cdot t^3 +  d\cdot t^4 - \theta
\end{align*}
Note in particular that $f$ is a polynomial in $t$ of degree 4.  The required conditions are that $f(t_1)=f(t_2)=f'(m_1)=f'(m_2)=0$. To see why a suitable polynomial $f$ exists, consider the family $g_{(\delta_1,\delta_2)}$ of quartic polynomials defined as ($\delta_1,\delta_2\geq 0$):
$$
g_{(\delta_1,\delta_2)}(t) = (t- (m_1-\delta_1))\cdot(t-t_1)\cdot(t-t_2)\cdot (t-(m_2+\delta_2))
$$
Clearly $g'_{(\delta_1,\delta_2)}$ has exactly one root $c^-_{(\delta_1,\delta_2)}$ between $m_1-\delta_1$ and $t_1$ and exactly one root $c^+_{(\delta_1,\delta_2)}$ between $t_2$ and $m_2+\delta_2$. From the root dragging theorem \cite{anderson1993}, we know that increasing the value of $\delta_1$ will continuously decrease the values of $c^-_{(\delta_1,\delta_2)}$ and $c^+_{(\delta_1,\delta_2)}$.  From the polynomial root motion theorem \cite{Frayer:2010-08-01T00:00:00:0002-9890:641}, we moreover know that $c^-_{(\delta_1,\delta_2)}$ will decrease faster than $c^+_{(\delta_1,\delta_2)}$. Similarly, when increasing the value of $\delta_2$,  $c^+_{(\delta_1,\delta_2)}$ will increase faster than $c^-_{(\delta_1,\delta_2)}$. Since the values of the $c^-_{(\delta_1,\delta_2)}$ and $c^+_{(\delta_1,\delta_2)}$ depend continuously on $\theta_1$ and $\theta_2$, and $c^-_{0}\geq m_1$ and $c^+_{0}\leq m_2$, it follows that values of $\theta_1$ and $\theta_2$ must exist such that $c^-_{(\delta_1,\delta_2)}=m_1$ and $c^+_{(\delta_1,\delta_2)}=m_2$.

\end{proof}
\noindent Note that if $\|\mathbf{h}\|=1$, $f(t)$ is the signed distance between $M(t)$ and $H$, i.e.\ $| f(t) |= d(M(t),H)$. Also note that in the case of the previous lemma, $H$ will also intersect the moment curve at some points $M(t_0)$ and $M(t_3)$ such that $t_0 < m_1 < t_1 < t_2 < m_2 < t_3$ and $f$ will reach another extremum between $t_1$ and $t_2$.
We will also use the following lemma.
\begin{lemma}\label{lemmaHyperplane1}
Let $P$ and $Q$ be sets of points in $\mathbb{R}^n$ which are separated by the hyperplane $H$, with $Q\cap H = \emptyset$.  Let $r\in H$  and let $\mathbf{h} \neq \mathbf{0}$ be a vector which is orthogonal to $H$ such that $r+ \mathbf{h}$ is in the same half-space induced by $H$ as $P$. For sufficiently large $\lambda >0$ it holds that 
$$
\max_{p\in P}d(p, r+\lambda\cdot \mathbf{h}) < \min_{q\in Q}d(q, r+\lambda\cdot \mathbf{h})
$$
\end{lemma}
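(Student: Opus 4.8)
We must show: for $P$ and $Q$ separated by a hyperplane $H$ with $Q \cap H = \emptyset$, and $r \in H$, $\mathbf{h}$ orthogonal to $H$ pointing to the $P$-side, that for sufficiently large $\lambda > 0$ we have $\max_{p \in P} d(p, r + \lambda \mathbf{h}) < \min_{q \in Q} d(q, r + \lambda \mathbf{h})$.

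**Plan.** The geometric intuition is that as $\lambda \to \infty$, the point $r + \lambda\mathbf{h}$ races off to infinity on the $P$-side of $H$; since $Q$ is strictly on the other side (bounded away from $H$), $Q$ is "behind" this moving point while $P$ is "in front of" it, so every point of $Q$ is eventually farther than every point of $P$. The clean way to make this rigorous is to compute the squared distances and compare leading-order behaviour in $\lambda$.

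**Key steps.** First I would write $c = r + \lambda \mathbf{h}$ and, for an arbitrary point $x \in \mathbb{R}^n$, expand
\begin{align*}
d(x,c)^2 &= \|x - r - \lambda \mathbf{h}\|^2 = \|x - r\|^2 - 2\lambda\, \mathbf{h}\cdot(x - r) + \lambda^2 \|\mathbf{h}\|^2.
\end{align*}
Thus $d(x,c)^2$ is a quadratic in $\lambda$ whose $\lambda^2$-coefficient $\|\mathbf{h}\|^2$ is the same for all $x$; the difference $d(q,c)^2 - d(p,c)^2$ for $p \in P$, $q \in Q$ is therefore \emph{linear} in $\lambda$:
\begin{align*}
d(q,c)^2 - d(p,c)^2 = 2\lambda\, \mathbf{h}\cdot\big((p - r) - (q - r)\big) + \big(\|q-r\|^2 - \|p-r\|^2\big).
\end{align*}
The second step is to observe that the $\lambda$-coefficient, $2\,\mathbf{h}\cdot(p - q)$, is bounded below by a strictly positive constant over all $p \in P$, $q \in Q$. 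Indeed, since $r+\mathbf{h}$ lies on the $P$-side, $\mathbf{h}\cdot(x-r) > 0$ for every $x$ strictly on the $P$-side and $\mathbf{h}\cdot(x-r) \le 0$ on the $H$-or-$Q$-side; more precisely, writing the separating hyperplane as $\{x : \mathbf{h}\cdot(x-r) = 0\}$ (note $\mathbf{h}$ is, up to positive scaling, its normal), points of $P$ satisfy $\mathbf{h}\cdot(p-r) \ge 0$ while points of $Q$ satisfy $\mathbf{h}\cdot(q-r) \le -\delta$ for some $\delta > 0$, because $Q \cap H = \emptyset$ and $Q$ lies in the closed half-space not containing $r+\mathbf{h}$. (Here I am using that $P$ and $Q$ are finite sets of points, as in the application, so the strict separation $Q \cap H = \emptyset$ gives a uniform gap $\delta = \min_{q\in Q} |\mathbf{h}\cdot(q-r)| > 0$; if one wants the statement for arbitrary bounded $Q$, one additionally needs the constant term bounded, which also holds by boundedness.) Hence $\mathbf{h}\cdot(p-q) \ge \delta > 0$ uniformly. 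Finally, the constant term $\|q-r\|^2 - \|p-r\|^2$ ranges over a bounded set (again by finiteness/boundedness of $P$ and $Q$), so choosing $\lambda$ large enough that $2\lambda\delta$ exceeds $\max_{p,q}\big(\|p-r\|^2 - \|q-r\|^2\big)$ forces $d(q,c)^2 - d(p,c)^2 > 0$ for all $p \in P$, $q \in Q$ simultaneously; taking square roots gives the claim.

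**Main obstacle.** There is no real obstacle here; the only point requiring care is extracting the \emph{uniform} positive gap $\delta$ in the linear coefficient and the \emph{uniform} bound on the constant term. In the paper's setting $P$ and $Q$ are finite sets of points, so both follow immediately by taking minima/maxima over finitely many values; one should simply state this explicitly rather than appeal vaguely to "$P$ and $Q$ are separated." If instead one needed the result for compact $P, Q$, the same argument works verbatim using continuity of $x \mapsto \mathbf{h}\cdot(x-r)$ and $x \mapsto \|x-r\|^2$ on compact sets.
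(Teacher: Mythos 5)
Your proof is correct and follows essentially the same route as the paper's: expand the squared distances, observe that the $\lambda^2$ terms cancel so the difference $d(q,c)^2 - d(p,c)^2$ is linear in $\lambda$ with strictly positive leading coefficient $2\,\mathbf{h}\cdot(p-q)$ (the paper writes this as $2\lambda(\lambda_p+\lambda_q)$ via the orthogonal decomposition $p = p_0 + \lambda_p\mathbf{h}$, $q = q_0 - \lambda_q\mathbf{h}$), and take $\lambda$ large. Your explicit attention to the uniformity of the gap $\delta$ and of the constant term over all pairs $(p,q)$ is a point the paper glosses over, but it is a refinement of the same argument rather than a different one.
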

\begin{proof}
Without loss of generality, we can assume that $\|\mathbf{h}\| = 1$. Let $p\in P$ and $q\in Q$.  Let $p_0,q_0 \in H$, $\lambda_p \geq 0$ and $\lambda_q>0$ be such that $p = p_0 + \lambda_p \cdot \mathbf{h}$ and $q = q_0 - \lambda_q \cdot \mathbf{h}$. It holds that
\begin{align*} 
d(q, r+\lambda\cdot \mathbf{h})^2 - d(p, r+\lambda\cdot \mathbf{h})^2
 &= d(q_0,r)^2 + (\lambda +\lambda_q)^2 - d(p_0,r)^2 - (\lambda - \lambda_p)^2\\
  &= d(q_0,r)^2 +  \lambda_q^2 - d(p_0,r)^2 - \lambda_p^2 +  2\lambda(\lambda_p + \lambda_q)
\end{align*}
Since $\lambda_p + \lambda_q>0$ it follows that by taking $\lambda$ sufficiently large the latter expression can always be made positive, in which case we have:
$$
d(p, r+\lambda\cdot \mathbf{h})^2 < d(q, r+\lambda\cdot \mathbf{h})^2
$$
\end{proof}
\noindent Note that in the lemma above we do not require $P\cap H = \emptyset$.

\begin{corollary}\label{corHyperplane1}
Let $H$ be a hyperplane, $r\in H$, $\mathbf{h}\neq \mathbf{0}$ a vector which is orthogonal to $H$. Let $p$ and $q$ be two points which are both in the opposite half-space induced by $H$ as $r+\mathbf{h}$.  If $d(p,H) < d(q,H)$, it holds for sufficiently large $\lambda>0$ that $d(p,r+\lambda\cdot \mathbf{h}) < d(q,r+\lambda\cdot \mathbf{h})$. 
\end{corollary}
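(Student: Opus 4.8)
The plan is to deduce the statement from Lemma~\ref{lemmaHyperplane1} by sliding $H$ to a parallel hyperplane that strictly separates $p$ and $q$. Normalize so that $\|\mathbf{h}\|=1$; since $p$ and $q$ lie in the closed half-space opposite to $r+\mathbf{h}$, we may write $p = p_0 - \alpha\mathbf{h}$ and $q = q_0 - \beta\mathbf{h}$ with $p_0,q_0\in H$, $\alpha = d(p,H)$ and $\beta = d(q,H)$, where $0\le\alpha<\beta$ by hypothesis. Choose any $\gamma$ with $\alpha<\gamma<\beta$ and set $\tilde H = \{x - \gamma\mathbf{h} \,|\, x\in H\}$ and $\tilde r = r-\gamma\mathbf{h}\in\tilde H$.

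Now $\tilde H$ is parallel to $H$, so $\mathbf{h}$ is orthogonal to $\tilde H$; computing displacements along $\mathbf{h}$ shows that $p$ lies strictly on the same side of $\tilde H$ as $\tilde r+\mathbf{h}$ (displacement $\gamma-\alpha>0$), while $q$ lies strictly on the opposite side (displacement $-(\beta-\gamma)<0$), so in particular $q\notin\tilde H$. Hence Lemma~\ref{lemmaHyperplane1} applies with $P=\{p\}$, $Q=\{q\}$, hyperplane $\tilde H$, base point $\tilde r$ and vector $\mathbf{h}$, and yields that for all sufficiently large $\lambda>0$,
\[
d(p, \tilde r + \lambda\mathbf{h}) < d(q, \tilde r + \lambda\mathbf{h}).
\]
Since $\tilde r + \lambda\mathbf{h} = r + (\lambda-\gamma)\mathbf{h}$, and $\lambda-\gamma$ ranges over all sufficiently large positive reals as $\lambda$ does, this is exactly the assertion that $d(p, r+\lambda\mathbf{h}) < d(q, r+\lambda\mathbf{h})$ for all sufficiently large $\lambda>0$, as required.

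This argument has no real obstacle; the only things to check with care are that $d(p,H)<d(q,H)$ together with $p$ and $q$ lying on the same side of $H$ really does allow a parallel hyperplane to be wedged strictly between them, and that the shift of base point from $r$ to $\tilde r$ is tracked so that the final inequality is stated along the original ray $\{r+\lambda\mathbf{h}\}$. As an alternative, one can bypass Lemma~\ref{lemmaHyperplane1} entirely and simply redo its one-line computation: with $p = p_0-\alpha\mathbf{h}$ and $q=q_0-\beta\mathbf{h}$ as above, $d(q, r+\lambda\mathbf{h})^2 - d(p, r+\lambda\mathbf{h})^2 = d(q_0,r)^2 - d(p_0,r)^2 + (\beta-\alpha)(2\lambda+\alpha+\beta)$, which is positive once $\lambda$ is large enough because $\beta>\alpha$.
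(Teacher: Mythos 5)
Your proof is correct and follows essentially the same route as the paper's: wedge a parallel hyperplane strictly between $p$ and $q$ and invoke Lemma~\ref{lemmaHyperplane1} with $P=\{p\}$ and $Q=\{q\}$. You merely spell out the details the paper leaves implicit (the choice of $\gamma$ and the reparametrization of the ray after shifting the base point from $r$ to $\tilde r$), which is fine.
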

\begin{proof}
\new{Indeed, we can consider a hyperplane $H'$ which is parallel to $H$ and which separates $p$ and $q$. The result then easily follows from the previous lemma by taking $H:=H'$, $P=\{p\}$ and $Q=\{q\}$.}
\end{proof}

\begin{corollary}\label{corHyperplane2}
Let $H$ be a hyperplane, $r\in H$, $\mathbf{h}\neq \mathbf{0}$ a vector which is orthogonal to $H$. Let $p$ and $q$ be two points which are both in the same half-space induced by $H$ as $r+\mathbf{h}$. If $d(p,H) > d(q,H)$, it holds for sufficiently large $\lambda>0$ that $d(p,r+\lambda\cdot \mathbf{h}) < d(q,r+\lambda\cdot \mathbf{h})$.
\end{corollary}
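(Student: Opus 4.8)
The plan is to obtain this corollary in exactly the way Corollary~\ref{corHyperplane1} is obtained, namely by sliding a parallel copy of $H$ between $p$ and $q$ and then invoking Lemma~\ref{lemmaHyperplane1}. Since $p$ and $q$ lie on the same side of $H$ as $r+\mathbf{h}$ and $d(p,H)>d(q,H)$, I would pick a hyperplane $H'$ parallel to $H$ at perpendicular distance $\mu$ with $d(q,H)<\mu<d(p,H)$; then $H'$ strictly separates $p$ from $q$, with $q$ on the same side of $H'$ as $H$ and $p$ on the opposite side. Taking $r'=r+\mu\cdot\mathbf{h}\in H'$, the vector $\mathbf{h}$ is orthogonal to $H'$, the point $r'+\mathbf{h}$ lies on the same side of $H'$ as $p$, and $q\notin H'$. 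Applying Lemma~\ref{lemmaHyperplane1} with $P=\{p\}$ and $Q=\{q\}$ gives $d(p,r'+\lambda\cdot\mathbf{h})<d(q,r'+\lambda\cdot\mathbf{h})$ for all sufficiently large $\lambda>0$, and since $r'+\lambda\cdot\mathbf{h}=r+(\lambda+\mu)\cdot\mathbf{h}$ traverses the tail of the ray $\{r+\lambda\cdot\mathbf{h}\mid\lambda>0\}$, this is precisely the claimed inequality.

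Alternatively, the statement can be proved directly, by the same computation as in Lemma~\ref{lemmaHyperplane1}. Assuming $\|\mathbf{h}\|=1$ and writing $p=p_0+\lambda_p\cdot\mathbf{h}$, $q=q_0+\lambda_q\cdot\mathbf{h}$ with $p_0,q_0\in H$ and $\lambda_p=d(p,H)$, $\lambda_q=d(q,H)$ (both positive, and $\lambda_p>\lambda_q$ by hypothesis), orthogonality of $\mathbf{h}$ to $H$ and $r\in H$ give $d(p,r+\lambda\cdot\mathbf{h})^2=d(p_0,r)^2+(\lambda-\lambda_p)^2$ and likewise for $q$, so that
\begin{align*}
d(q,r+\lambda\cdot\mathbf{h})^2-d(p,r+\lambda\cdot\mathbf{h})^2 &= d(q_0,r)^2-d(p_0,r)^2+(\lambda-\lambda_q)^2-(\lambda-\lambda_p)^2\\
&= d(q_0,r)^2-d(p_0,r)^2+\lambda_q^2-\lambda_p^2+2\lambda(\lambda_p-\lambda_q).
\end{align*}
The $\lambda^2$ terms cancel, leaving an affine function of $\lambda$ with positive slope $2(\lambda_p-\lambda_q)$, which is therefore positive for all large $\lambda$; this yields $d(p,r+\lambda\cdot\mathbf{h})<d(q,r+\lambda\cdot\mathbf{h})$.

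I do not expect a genuine obstacle here: this is simply the ``same side'' companion of Corollary~\ref{corHyperplane1}. The only point that needs care is the bookkeeping of signs --- that the hypothesis ``$p,q$ on the $\mathbf{h}$-side of $H$'' forces both $\lambda_p,\lambda_q>0$, and that it is the reversed inequality $d(p,H)>d(q,H)$ (rather than $<$) which makes the leading coefficient $2(\lambda_p-\lambda_q)$ positive. Geometrically this is just the intuitive fact that, viewed from a point far out along $\mathbf{h}$, the point lying deeper inside the half-space is the nearer one.
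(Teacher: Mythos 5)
Your proposal is correct; the paper leaves Corollary \ref{corHyperplane2} without an explicit proof, and the intended argument is exactly the one it gives for the companion Corollary \ref{corHyperplane1}, namely the parallel-hyperplane reduction to Lemma \ref{lemmaHyperplane1} that you describe first. Your direct computation is also sound and matches the calculation inside Lemma \ref{lemmaHyperplane1}, with the sign bookkeeping handled correctly.
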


\paragraph{Associating variables with points on the moment curve}
 Let $\Theta$ be a network over $\{\bec,\bdc, \btpp, \btppi\}$.  The set of variables $V$ can then be organized in a set of \new{ordered} trees $\mathcal{T}_1,...,\mathcal{T}_k$, such that $a$ is a descendant of $b$ in a given tree iff $\Theta \models\{\tpp{a}{b}\}$. For the ease of presentation, we will consider a single \new{ordered} tree $\mathcal{T}$ (whose root does not correspond to any region) which has the trees $\mathcal{T}_1,...,\mathcal{T}_k$ as its subtrees directly below the root.  For example, Figure \ref{figTreeTraversal} shows the tree corresponding to a network $\Theta$ containing the following constraints:
\begin{align*}
&\tpp{v_1}{v_5}  && \tpp{v_3}{v_2} && \tpp{v_4}{v_2} && \tpp{v_2}{v_5} && \tpp{v_3}{v_5} && \tpp{v_4}{v_5}\\
&\tpp{v_7}{v_9} && \tpp{v_8}{v_9} && \tpp{v_9}{v_6} && \tpp{v_{10}}{v_6} && \tpp{v_7}{v_6} && \tpp{v_8}{v_6}
\end{align*}
as well as $\tpp{v_i}{v_{11}}$ for all $i\in\{1,...,10\}$ and either the constraint $\ec{v_i}{v_j}$ or $\dc{v_i}{v_j}$ for every pair of regions $v_i,v_j$ not mentioned.
Without loss of generality, we can make the following assumptions.
\begin{assumption}\label{assumptionTwoChildren}
Every non-leaf node in this tree has exactly two children. 
\end{assumption}
\begin{assumption}\label{assumptionWitnessedEC}
If $\Theta\models \ec{a}{b}$ and $a$ is not a leaf node, then there is a leaf node $c$ such that $\Theta\models \{\tpp{c}{a}, \ec{c}{b}\}$.
\end{assumption}
\begin{assumption}\label{assumptionLeafEC}
\new{If $\Theta\models \ec{a}{b}$ and $a$ is a leaf node, then there is a non-leaf node $c$ such that $\Theta\models \{\tpp{a}{c}, \ec{c}{b}\}$.}
\end{assumption}
\begin{assumption}\label{assumptionAllECregion}
\new{There are regions $z_1,z_2,z_3$ in $V$ such that $z_1$ and $z_2$ are leaf nodes, $\Theta\models \{\tpp{z_1}{z_3},\tpp{z_2}{z_3}\}$ and
for every other region $a$, it holds that $\Theta\models \{\ec{z_1}{a},\ec{z_2}{a},\ec{z_3}{a}\}$.}
\end{assumption}
\new{Indeed, we can always make Assumption \ref{assumptionTwoChildren} satisfied by adding extra regions (e.g.\ if $a$ has three children $x,y,z$, we could add a new region $u$ together with the constraints $\tpp{y}{u}$, $\tpp{z}{u}$ and $\tpp{u}{a}$).  Assumption \ref{assumptionWitnessedEC} can be made satisfied by introducing two new regions $c_1$ and $c_2$ (to ensure that Assumption \ref{assumptionTwoChildren} remains satisfied) and adding the constraints $\tpp{c_1}{a}$, $\tpp{c_2}{a}$, $\dc{c_1}{c_2}$ and $\ec{c_1}{b}$ to $\Theta$. Similarly, we can always satisfy Assumption \ref{assumptionLeafEC}, by introducing new regions $c$ and $a'$ and adding the constraints $\tpp{a}{c}$, $\tpp{a'}{c}$, $\dc{a}{a'$} and $\ec{c}{b}$ as well as $\tpp{c}{v}$ for every region $v\in V$ such that $\Theta\models \tpp{a}{v}$. Note that Assumption \ref{assumptionLeafEC} implies that when $a$ and $b$ are siblings and leaf nodes, it follows that $\Theta \models \dc{a}{b}$.
Finally, Assumption \ref{assumptionAllECregion} can be satisfied by introducing new regions $z_1,z_2,z_3$ and adding to $\Theta$ the constraints $\tpp{z_1}{z_3}$, $\tpp{z_2}{z_3}$, $\dc{z_1}{z_2}$, as well as $\ec{z_1}{a}$, $\ec{z_2}{a}$ and $\ec{z_3}{a}$ for every other region $a$ in $V$. It is clear that adding these latter relations cannot introduce any inconsistencies, as $\Theta$ does not contain any relations of the form $\bntpp$ or $\bntppi$ and the composition $R\circ \bec$ includes $\bec$ for any other RCC8 base relation $R$ (see Table \ref{tableRCC8composition}).}

\begin{figure}
\centering
\includegraphics[width=250pt]{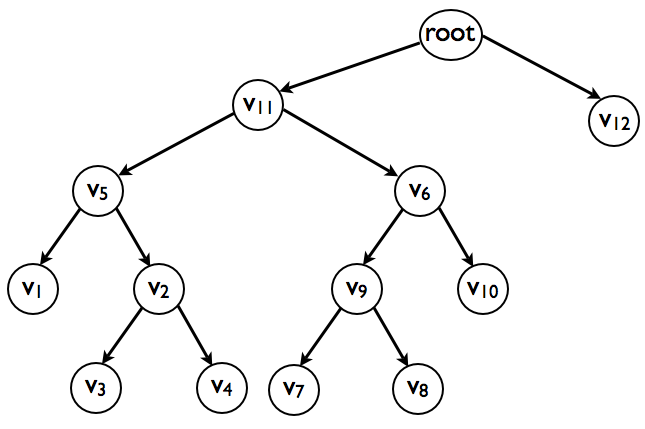}
\caption{Traversal of the tree $\mathcal{T}$.\label{figTreeTraversal}}
\end{figure}

Each variable $v$ in $V$ will be associated with a point $M(t_v)$ on the four-dimensional moment curve (with $t_v> 0$).  
\new{To choose the values $t_v$ we use the following recursive procedure \textbf{Assign} starting with $x=\textit{root}$ and arbitrary values for $\lambda_1$ and $\lambda_2$ satisfying $0 < \lambda_1 < \lambda_2$:}
\new{
\begin{quote}
\textbf{Assign}($x$,$\lambda_1$,$\lambda_2$)
\begin{itemize}
\item If $x$ is not a leaf node:
\begin{itemize}
\item Let $a$ and $b$ be the two children of $x$.
\item choose values $\delta_{x}$ and $\varepsilon_{x}$ such that $\lambda_1< \lambda_1 + \delta_{x} < \lambda_2 -  \varepsilon_{x} < \lambda_2$; as will be explained further, the values $\delta_{x}$ and $\varepsilon_{x}$ will need to be chosen sufficiently small in some sense.
\item choose values $t_{a}$ and $t_{b}$ such that $\lambda_1 < t_{a} < t_{b}<\lambda_1 + \delta_{x}$.
\item \textbf{Assign}($a$,$\lambda_1$,$t_{a}$)
\item \textbf{Assign}($b$,$v -\varepsilon_{x}$,$\lambda_2$)
\end{itemize}
\end{itemize}
\end{quote}}
\noindent \new{The variables in the tree in Figure \ref{figTreeTraversal} are labelled such that \new{they reflect the order in which the corresponding points occur on the moment curve, i.e.\ } the following condition is satisfied:}
$$
t_{v_1} < t_{v_2} < ... < t_{v_m}
$$
\new{Below, we will impose particular constraints on the values $\delta_x$ and $\varepsilon_x$. For now, the main observation is that}
 for every left child $x$ and its sibling $y$, it holds that the points corresponding to regions which are contained in $x$ occur before $M(t_x)$ on the moment curve and the points corresponding to regions which are contained in $y$ occur after $M(t_y)$ on the moment curve. In other words, we have the following lemma.

\begin{lemma}\label{lemmaOrderingPoints}
Let $x$ and $y$ be siblings in $\mathcal{T}$, with $x$ being left of $y$. Let $a$ be a descendant of $x$ and let $b$ be a descendant of $y$. It holds that $t_a < t_x < t_y < t_b$.
\end{lemma}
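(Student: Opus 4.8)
The plan is to prove Lemma~\ref{lemmaOrderingPoints} by a straightforward induction on the structure of the tree $\mathcal{T}$, tracking how the interval $[\lambda_1,\lambda_2]$ passed to \textbf{Assign} shrinks as we descend. First I would record the crucial invariant built into the recursive procedure: whenever \textbf{Assign}$(x,\lambda_1,\lambda_2)$ is called, every region in the subtree rooted at $x$ is assigned a parameter value lying strictly inside $(\lambda_1,\lambda_2)$. This follows by an easy induction, since in the recursive step the children $a$ and $b$ receive the sub-intervals $(\lambda_1,t_a)$ and $(t_b-\varepsilon_x,\lambda_2)$ (using the labelling so that $t_b - \varepsilon_x$ plays the role of ``$v-\varepsilon_x$'' in the procedure), both of which are contained in $(\lambda_1,\lambda_2)$, and $t_a,t_b$ themselves lie in $(\lambda_1,\lambda_1+\delta_x)\subseteq(\lambda_1,\lambda_2)$.

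Next I would use this invariant to prove the lemma. Let $x$ and $y$ be siblings with common parent $w$, $x$ to the left of $y$, so the recursive call at $w$ sets $t_x = t_a$, $t_y = t_b$ with $\lambda_1 < t_x < t_y < \lambda_1 + \delta_w < \lambda_2 - \varepsilon_w < \lambda_2$, where $(\lambda_1,\lambda_2)$ is the interval with which \textbf{Assign} was called on $w$. The call \textbf{Assign}$(x,\lambda_1,t_x)$ then places every descendant $a$ of $x$ at a value $t_a \in (\lambda_1,t_x)$, so $t_a < t_x$. The call \textbf{Assign}$(y,t_y-\varepsilon_w,\lambda_2)$ places every descendant $b$ of $y$ at a value $t_b \in (t_y - \varepsilon_w, \lambda_2)$; since $\varepsilon_w$ is chosen small enough that $t_y - \varepsilon_w > t_x$ (indeed the procedure requires $\lambda_1 + \delta_w < \lambda_2 - \varepsilon_w$ and $t_y < \lambda_1 + \delta_w$, so $t_y - \varepsilon_w$ is still comfortably above $t_y - (\lambda_2 - \lambda_1 - \delta_w)$; more simply one may just additionally stipulate $\varepsilon_w < t_y - t_x$), we get $t_b > t_y$. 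Combining, $t_a < t_x < t_y < t_b$, which is exactly the claim.

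I would also make explicit why the global ordering $t_{v_1} < \dots < t_{v_m}$ claimed in the text is consistent with this: the parameters produced by \textbf{Assign} are exactly those of an in-order (left-to-right) traversal of $\mathcal{T}$, and the labelling $v_1,\dots,v_m$ is chosen to follow that traversal; the lemma is then just the statement that in an in-order traversal a left subtree is visited before its root's right sibling, and the right subtree after it, with the root parameters $t_x,t_y$ themselves sandwiched. No deep geometry of the moment curve is needed here — this lemma is purely about the bookkeeping of the interval-splitting scheme.

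The only mild subtlety, and the thing I would be most careful about, is making sure the constraints imposed on $\delta_x$ and $\varepsilon_x$ (which the paper says must later be taken ``sufficiently small in some sense'' for other parts of the argument) are compatible with the strict inequalities $\lambda_1 < t_x < t_y < \lambda_1 + \delta_x < \lambda_2 - \varepsilon_x < \lambda_2$ and with $\varepsilon_x < t_y - t_x$; since all of these are open conditions that only require $\delta_x,\varepsilon_x$ to be small, there is always enough room, so there is no real obstacle. Thus the proof is essentially a clean structural induction, and I expect it to be short.
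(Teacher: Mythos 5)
Your overall strategy -- a structural induction that tracks how the interval passed to \textbf{Assign} shrinks as one descends the tree -- is the right one, and indeed the paper offers no separate proof of this lemma, treating it as immediate from the construction. However, your execution has a genuine gap in the right-sibling case, caused by how you resolved the undefined symbol ``$v$'' in the recursive call \textbf{Assign}$(b,v-\varepsilon_x,\lambda_2)$. You read it as $t_b$, so that the descendants of the right child $y$ of $w$ receive values in $(t_y-\varepsilon_w,\lambda_2)$. From this you conclude ``$t_b>t_y$'', but your stated premises ($t_b>t_y-\varepsilon_w$ together with $\varepsilon_w<t_y-t_x$) only yield $t_b>t_x$, not $t_b>t_y$: nothing in your version of the procedure prevents a descendant of $y$ from being assigned a value in $(t_y-\varepsilon_w,\,t_y)$, which would falsify the lemma. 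The inequality $t_y<t_b$ is precisely the content of the lemma for right siblings, so this is not a cosmetic slip.

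The intended reading is $v=\lambda_2$, i.e.\ the second recursive call is \textbf{Assign}$(b,\lambda_2-\varepsilon_x,\lambda_2)$. Under that reading your interval invariant gives that every proper descendant of the right child lies in $(\lambda_2-\varepsilon_w,\lambda_2)$, and since the procedure enforces $t_x<t_y<\lambda_1+\delta_w<\lambda_2-\varepsilon_w$, every such descendant automatically has parameter strictly greater than $t_y$ -- no auxiliary stipulation on $\varepsilon_w$ is needed. (This reading is also the only one consistent with the labelled example: in Figure \ref{figTreeTraversal} the descendants $v_3,v_4$ of the right child $v_2$ of $v_5$ satisfy $t_{v_2}<t_{v_3}<t_{v_4}<t_{v_5}$, which forces them into the top of the interval allotted to $v_5$, not into a neighbourhood of $t_{v_2}$.) A second, smaller inaccuracy: the resulting order is not an in-order traversal of $\mathcal{T}$ -- a node's proper descendants all precede it if it is a left child and all follow it if it is a right child -- but that remark is inessential to your argument. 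With the corrected recursive call, your induction goes through and the lemma follows as you intend.
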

\begin{corollary}\label{corOrderingPoints}
\new{Let $x$ and $y$ be siblings in $\mathcal{T}$, with $x$ being left of $y$. Let $a$ be a descendant of $x$ and let $b$ be a descendant of $y$.} It holds that \new{$d(M(t_a),M(t_x)) < d(M(t_a),M(t_y))$ and $d(M(t_b),M(t_y)) < d(M(t_b),M(t_x))$}.
\end{corollary}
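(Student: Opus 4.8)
The plan is to reduce the statement to an elementary monotonicity argument about the squared distances from a moving point $M(t)$ on the four-dimensional moment curve to the two fixed points $M(t_x)$ and $M(t_y)$. By Lemma~\ref{lemmaOrderingPoints} we already know that $t_a < t_x < t_y < t_b$, and by construction all of these parameter values are strictly positive (the variables are associated with points on the positive half of the moment curve). So it suffices to show that for $0 < t < t_x$ the point $M(t)$ is strictly closer to $M(t_x)$ than to $M(t_y)$, and for $t > t_y$ it is strictly closer to $M(t_y)$ than to $M(t_x)$; applying this to $t=t_a$ and $t=t_b$ gives both claimed inequalities.

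The first concrete step is to introduce the auxiliary function
\[
g(t) = d(M(t),M(t_x))^2 - d(M(t),M(t_y))^2 .
\]
Writing $M(t) = (t,t^2,t^3,t^4)$ and expanding, the purely quadratic terms $t^{2i}$ cancel and $g$ becomes a polynomial of the form $g(t) = 2\sum_{i=1}^{4}(t_y^i - t_x^i)\,t^i + C$, where $C$ depends only on $t_x$ and $t_y$. The crucial observation is then that $g'(t) = 2\sum_{i=1}^{4} i\,(t_y^i - t_x^i)\,t^{i-1}$, and since $t_y > t_x > 0$ every coefficient $t_y^i - t_x^i$ is strictly positive, so $g'(t) > 0$ for all $t > 0$. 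Hence $g$ is strictly increasing on $(0,\infty)$; this is exactly where the restriction to the positive half of the moment curve is used.

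Finally I would evaluate $g$ at the two anchor parameters, obtaining $g(t_x) = -\,d(M(t_x),M(t_y))^2 < 0$ and $g(t_y) = d(M(t_x),M(t_y))^2 > 0$. Combining these with strict monotonicity on $(0,\infty)$ and with $0 < t_a < t_x$ and $0 < t_y < t_b$ yields $g(t_a) < g(t_x) < 0$ and $g(t_b) > g(t_y) > 0$. Since the squaring map is monotone on the non-negative reals, $g(t_a) < 0$ gives $d(M(t_a),M(t_x)) < d(M(t_a),M(t_y))$ and $g(t_b) > 0$ gives $d(M(t_b),M(t_y)) < d(M(t_b),M(t_x))$, which is the corollary. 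There is no substantial obstacle here: the only point requiring care is the sign bookkeeping in $g'$, and the reason it works is precisely the positivity of all the $t_v$ — on the full moment curve $g'$ could change sign and the statement would fail. (One could alternatively argue via the perpendicular bisector hyperplane of $M(t_x)$ and $M(t_y)$ together with the fact that a hyperplane meets the moment curve in at most four points, but the direct derivative computation is shorter and self-contained.)
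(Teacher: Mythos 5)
Your proof is correct: the paper states this corollary without proof (treating it as immediate from Lemma~\ref{lemmaOrderingPoints} and the restriction to the positive half of the moment curve), and your monotonicity argument for $g(t)=d(M(t),M(t_x))^2-d(M(t),M(t_y))^2$ — cancellation of the $t^{2i}$ terms, positivity of $g'$ on $(0,\infty)$ because $t_y^i>t_x^i$, and evaluation at $t_x$ and $t_y$ — is exactly the justification the authors leave implicit. You also correctly identify where the hypothesis $t_v>0$ is used, since $g'$ can change sign for negative $t$.
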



Let $x$ be an ancestor of $a$ \new{($a\neq x$)} and $y$  an ancestor of $b$ \new{($b\neq y$)} such that $x$ and $y$ are siblings and $t_a < t_x < t_y < t_b$.  Let $H_{ab} = \{\new{\mathbf{w}} \,|\, \mathbf{h_{ab}}\cdot \new{\mathbf{w}} = \theta_{ab}, \new{\mathbf{w}\in \mathbb{R}^4}\}$, \new{for given $\mathbf{h_{ab}}\in \mathbb{R}^4$ and $\theta_{ab}\in \mathbb{R}$}, be a hyperplane intersecting the moment curve at $M(t_x)$ and $M(t_y)$ and such that the function $f_{ab}:\mathbb{R}\rightarrow \mathbb{R}$ defined by $f_{ab}(t) = \mathbf{h_{ab}} \cdot M(t) - \theta_{ab}$ reaches an extremum at $t_a$ and $t_b$. Note that the hyperplane $H_{ab}$ is guaranteed to exist by Lemma \ref{lemmaDerivative2}.  \new{Let $\lambda^*>0$}. We define:
\begin{align}
q_{ab} =  p_{xy} + \new{\lambda^*} \cdot \mathbf{h_{ab}}
\end{align}
where
\begin{align}\label{defPXY}
p_{xy} = \frac{M(t_x) + M(t_y)}{2}
\end{align}
Without loss of generality, we can assume that $\mathbf{h_{ab}}$ is such that $M(t_a)$ and $M(t_b)$ are at the same side of $H_{ab}$ as $q_{ab}$. 

\begin{lemma}\label{lemmaProof4DDescendantQab}
\new{Let $x$ and $y$ be siblings in $\mathcal{T}$, with $x$ being left of $y$. Let $a$ be a descendant of $x$ ($a \neq x$) and let $b$ be a descendant of $y$ ($b \neq y$).}
Let $u$ be a descendant of $x$ ($u\neq a$, $u\neq x$). \new{If $\lambda^*$ is sufficiently large}, it holds that
\begin{align}\label{lemmaProof4DDescendantQabA}
d(q_{ab},M(t_a)) < d(q_{ab},M(t_u))
\end{align}
Similarly, if $v$ is a descendant of $y$ ($v\neq b$, $v\neq y$), then for a sufficiently large \new{$\lambda^*$} we have
\begin{align}\label{lemmaProof4DDescendantQabB}
d(q_{ab},M(t_b)) < d(q_{ab},M(t_v))
\end{align}
\end{lemma}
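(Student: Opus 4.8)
The plan is to exploit the defining property of $q_{ab}$, namely that $q_{ab} = p_{xy} + \lambda^* \cdot \mathbf{h_{ab}}$, where $\mathbf{h_{ab}}$ is orthogonal to the hyperplane $H_{ab}$, and $H_{ab}$ is tangent to the moment curve at $M(t_a)$ and $M(t_b)$ (in the sense that $f_{ab}$ has an extremum there) while crossing it at $M(t_x)$ and $M(t_y)$. Fix the inequality \eqref{lemmaProof4DDescendantQabA}; the inequality \eqref{lemmaProof4DDescendantQabB} is entirely symmetric (swap the roles of $x$ and $y$). The idea is to show that $M(t_a)$ is strictly closer to $H_{ab}$ than $M(t_u)$ is, among all descendants $u\neq a$ of $x$, and then to invoke Corollary \ref{corHyperplane1} (or Corollary \ref{corHyperplane2}, depending on which side of $H_{ab}$ these points lie) with $r = p_{xy}$ and $\mathbf{h} = \mathbf{h_{ab}}$, to conclude that for $\lambda^*$ sufficiently large the ordering of distances to $q_{ab} = r + \lambda^* \cdot \mathbf{h}$ matches the ordering of distances to the hyperplane.

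The key step is therefore the claim: for every descendant $u$ of $x$ with $u \neq a$, $u \neq x$, we have $d(M(t_a), H_{ab}) < d(M(t_u), H_{ab})$, equivalently (up to the sign of $\mathbf{h_{ab}}$ and normalizing $\|\mathbf{h_{ab}}\|=1$) $|f_{ab}(t_a)| < |f_{ab}(t_u)|$. Recall $f_{ab}(t) = \mathbf{h_{ab}}\cdot M(t) - \theta_{ab}$ is a quartic polynomial in $t$ with roots at $t_x$ and $t_y$, and with $f_{ab}'(t_a) = f_{ab}'(t_b) = 0$. By the remark following Lemma \ref{lemmaDerivative2}, $f_{ab}$ has roots $t_0 < t_a < t_x < t_y < t_b < t_3$ (the two "outer" roots), it attains its extrema exactly at $t_a$, $t_b$, and one further extremum strictly between $t_x$ and $t_y$, and outside $[t_0,t_3]$ the polynomial blows up monotonically. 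Now $u$ is a descendant of $x$, so by Lemma \ref{lemmaOrderingPoints} $t_u < t_x$; and since $a$ is also a descendant of $x$, $a$ and $u$ either lie in a common subtree or are separated by some splitting node below $x$ — in any case $t_u$ and $t_a$ both lie in the interval $(t_0, t_x)$ or one of them could in principle lie below $t_0$. The heart of the argument is that $|f_{ab}|$ restricted to the interval $(-\infty, t_x)$ attains its \emph{maximum over that region at the extremum point $t_a$} only if $t_a$ is the unique critical point there — which it is, because on $(t_0, t_x)$ the only critical point of $f_{ab}$ is $t_a$, and $|f_{ab}|$ decreases as we move from $t_a$ toward $t_x$ (where it vanishes) and also, moving left from $t_a$, it decreases to $0$ at $t_0$ and then increases again for $t < t_0$. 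So I must rule out $t_u < t_0$, i.e.\ I need $t_u \in (t_0, t_x)$. This is where the freedom in choosing the parameters $\delta_x$, $\varepsilon_x$ in the \textbf{Assign} procedure comes in: since $M(t_x)$ and $M(t_y)$ are the \emph{midpoints}-splitting siblings and all descendants of $x$ are packed into a tiny interval just to the left of $t_x$, by taking $\delta_x$ small enough we force all the $t_u$ (descendants of $x$) to lie in the narrow band $(t_x - \delta_x, t_x)$, which is contained in $(t_0, t_x)$ once $\delta_x$ is small relative to the gap $t_x - t_0$. Actually, $t_0$ itself depends on $t_a$ and hence indirectly on these choices, so the cleanest route is: first observe $t_a$ is a critical point of $f_{ab}$ in $(-\infty, t_x)$ and is a local extremum of $|f_{ab}|$ there that dominates the values at all nearby points; then note that on the whole interval $[t_a, t_x]$ the function $|f_{ab}|$ is monotone decreasing, so $|f_{ab}(t_u)| \ge |f_{ab}(t_a)|$ whenever $t_u \le t_a$, with equality only if $t_u = t_a$ — but a strict inequality requires knowing $|f_{ab}|$ is strictly monotone, i.e.\ $t_a$ is a simple critical point, which follows because a quartic has at most $3$ critical points and we have already located $3$ distinct ones ($t_a$, a point in $(t_x,t_y)$, and $t_b$). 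Thus for $t_u < t_a$ we get strict inequality provided $t_u > t_0$; and for $t_a < t_u < t_x$ we likewise get $|f_{ab}(t_u)| < |f_{ab}(t_a)|$ by monotone decrease toward the root $t_x$.

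So the write-up proceeds as follows. First, record from Lemma \ref{lemmaDerivative2} and its following remark the shape of $f_{ab}$: a quartic with roots $t_0 < t_x < t_y < t_3$ and critical points $t_a \in (t_0, t_x)$, some $c \in (t_x, t_y)$, $t_b \in (t_y, t_3)$, and (after normalizing $\|\mathbf{h_{ab}}\| = 1$) $|f_{ab}(t)| = d(M(t), H_{ab})$. Second, using Lemma \ref{lemmaOrderingPoints}, place $t_u$ for every descendant $u \neq a$ of $x$; the parameter choices in \textbf{Assign} (choosing $\delta_x$ sufficiently small) ensure $t_0 < t_u < t_x$, so $t_u$ lies on a branch of $f_{ab}$ where the only critical point is $t_a$. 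Third, conclude $d(M(t_a), H_{ab}) < d(M(t_u), H_{ab})$ by the monotonicity of $|f_{ab}|$ on $(t_0, t_a]$ and on $[t_a, t_x)$, together with simplicity of the critical point $t_a$. Fourth, determine which side of $H_{ab}$ the points $M(t_a)$ and $M(t_u)$ lie on — they are on the same side, namely (by the sign convention chosen after the definition of $q_{ab}$) the same side as $q_{ab}$, i.e.\ the $r + \mathbf{h}$ side — and apply Corollary \ref{corHyperplane2} with $H := H_{ab}$, $r := p_{xy}$, $\mathbf{h} := \mathbf{h_{ab}}$, $p := M(t_a)$, $q := M(t_u)$, to obtain $d(q_{ab}, M(t_a)) < d(q_{ab}, M(t_u))$ for $\lambda^*$ large. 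The symmetric argument with $x$ and $y$ interchanged (so that now descendants $v$ of $y$ satisfy $t_v > t_y$, landing on the branch between $t_y$ and $t_3$ where $t_b$ is the unique critical point, and the relevant tool is again Corollary \ref{corHyperplane2} or \ref{corHyperplane1}) gives \eqref{lemmaProof4DDescendantQabB}. Since only finitely many $u$ and $v$ occur, a single $\lambda^*$ works for all of them simultaneously.

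The main obstacle I anticipate is the bookkeeping around the outer root $t_0$: I must be certain that no descendant $u$ of $x$ has $t_u \le t_0$, since for $t < t_0$ the polynomial $|f_{ab}|$ starts growing again and the comparison could fail. The resolution is that $t_0$ is bounded away from $t_x$ by a quantity controlled by $t_x - t_a$ and the geometry of the midpoint-splitting construction, whereas all descendants of $x$ are confined to an interval of length $\delta_x$ just below $t_x$; choosing $\delta_x$ small enough (as the \textbf{Assign} procedure explicitly permits, "chosen sufficiently small in some sense") guarantees $t_u > t_0$. A secondary, milder point is justifying that $M(t_a)$ and $M(t_u)$ are on the \emph{same} side of $H_{ab}$ as $q_{ab}$: this is immediate for $M(t_a)$ by the sign convention imposed on $\mathbf{h_{ab}}$, and for $M(t_u)$ it follows because $f_{ab}$ does not change sign on $(t_0, t_x) \ni t_a, t_u$, as $t_a$ is its only (interior) critical point on that interval and $f_{ab}$ vanishes only at the endpoints $t_0$ and $t_x$.
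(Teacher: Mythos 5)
Your overall strategy — compare distances to the hyperplane $H_{ab}$ using the fact that $t_a$ is the unique critical point of $f_{ab}$ left of $t_x$, then transfer to distances from $q_{ab}=p_{xy}+\lambda^*\mathbf{h_{ab}}$ via Lemma \ref{lemmaHyperplane1} and its corollaries — is the right one, and it is the paper's. But there is a genuine gap in how you treat the outer root $t_0$ of $f_{ab}$. You correctly identify that for $t_u<t_0$ the point $M(t_u)$ lies on the opposite side of $H_{ab}$ and the ``farther from $H_{ab}$'' comparison breaks down, and you propose to \emph{rule this case out} by shrinking $\delta_x$ so that all descendants of $x$ land in $(t_0,t_x)$. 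This cannot work: $t_0$ is determined by $t_a$, $t_x$, $t_y$, $t_3$ (one computes $t_a-t_0<t_x-t_a$ from $f_{ab}'(t_a)=0$), so whenever $a$ sits deep in the right subtree of $x$ (forcing $t_a$ very close to $t_x$), the root $t_0$ is squeezed into a tiny neighbourhood just below $t_a$, and every $u$ in the left subtree of $x$ satisfies $t_u<t_0$ no matter how the $\delta$'s and $\varepsilon$'s are chosen; shrinking the parameters rescales the whole cluster without changing these relative positions. The case $t_u<t_0$ is therefore unavoidable. The paper does not avoid it: it splits into two cases, and when $M(t_u)$ is on the opposite side of $H_{ab}$ from $M(t_a)$ and $q_{ab}$ it applies Lemma \ref{lemmaHyperplane1} directly (any point on the $q_{ab}$ side beats any point on the other side for large $\lambda^*$), which disposes of exactly the situation you were trying to exclude. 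Your proof needs this second case added; once it is, the exclusion argument and the constraints on $\delta_x$ become unnecessary.

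A secondary problem is that your ``key claim'' is stated with the inequality reversed: you write $d(M(t_a),H_{ab})<d(M(t_u),H_{ab})$, but since $t_a$ is the unique extremum of $f_{ab}$ on $(t_0,t_x)$, the point $M(t_a)$ is the \emph{farthest} from $H_{ab}$ among the $M(t)$ with $t\in(t_0,t_x)$, i.e.\ $d(M(t_a),H_{ab})>d(M(t_u),H_{ab})$; this is the hypothesis Corollary \ref{corHyperplane2} actually requires (both points on the same side as $q_{ab}$, with $p$ the farther one ending up closer to $q_{ab}$). The same sign confusion appears in the sentence claiming $|f_{ab}(t_u)|\geq|f_{ab}(t_a)|$ for $t_u\leq t_a$. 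These are fixable, but as written the chain of inequalities does not connect to the corollary you invoke.
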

\begin{proof}
Because of the way in which $H_{ab}$ was chosen, we have that $f_{ab}$ reaches its only extremum left of $t_x$ at $t_a$. Since $t_v < t_x$, one of the following cases must hold:
\begin{enumerate}
\item $t_v$ and $t_a$ are located at the same side of $H_{ab}$ (which is the side at which $q_{ab}$ is located), and $d(t_v,H_{ab}) < d(t_a,H_{ab})$; or
\item $t_v$ is located at the opposite side of $H$ as $t_a$ and $q_{ab}$.
\end{enumerate}
In the former case, \eqref{lemmaProof4DDescendantQabA} follows from Corollary \ref{corHyperplane2}, while in the latter case, \eqref{lemmaProof4DDescendantQabA} follows from Lemma \ref{lemmaHyperplane1}.
In entirely the same way, we can also show \eqref{lemmaProof4DDescendantQabB}.
\end{proof}

\begin{lemma}\label{lemmaProof4DAncestorQab}
\new{Let $x$ and $y$ be siblings in $\mathcal{T}$, with $x$ being left of $y$. Let $a$ be a descendant of $x$ ($a \neq x$) and let $b$ be a descendant of $y$ ($b \neq y$).}
Let $u\neq a$ be an ancestor of $a$ or $b$ and let $v$ be the sibling of $u$. If \new{$\lambda^*$} is chosen to be sufficiently large, it holds that
\begin{align}\label{lemmaProof4DAncestorQabA}
d(q_{ab},M(t_u)) \leq d(q_{ab},M(t_v))
\end{align}
\end{lemma}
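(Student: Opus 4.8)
The plan is to reduce the distance comparison \eqref{lemmaProof4DAncestorQabA} to a comparison of the signed distances of $M(t_u)$ and $M(t_v)$ to the hyperplane $H_{ab}$, mirroring the proof of Lemma \ref{lemmaProof4DDescendantQab}. Recall that $q_{ab} = p_{xy} + \lambda^* \cdot \mathbf{h_{ab}}$ with $p_{xy} \in H_{ab}$ and $\mathbf{h_{ab}}$ orthogonal to $H_{ab}$, and that $H_{ab}$ meets the four-dimensional moment curve at exactly $M(s_0), M(t_x), M(t_y), M(s_3)$ with $s_0 < t_a < t_x < t_y < t_b < s_3$, where $s_0$ and $s_3$ are the two remaining zeros of the quartic $f_{ab}$. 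Since $f_{ab}$ has degree $4$ and already has critical points at $t_a$ and $t_b$, and by Rolle's theorem a third critical point $c \in (t_x,t_y)$, these are all of its critical points; hence $f_{ab}$ is strictly monotone on each of $(-\infty,t_a)$, $(t_a,c)$, $(c,t_b)$, $(t_b,\infty)$ --- increasing on the first and third, decreasing on the second and fourth --- and $f_{ab}(t_a), f_{ab}(t_b) > 0$. Because $\mathbf{h_{ab}} \cdot q_{ab} - \theta_{ab} = \lambda^* \|\mathbf{h_{ab}}\|^2 > 0$, the point $M(t)$ lies on the same side of $H_{ab}$ as $q_{ab}$ precisely when $f_{ab}(t) > 0$, while $|f_{ab}(t)|/\|\mathbf{h_{ab}}\| = d(M(t),H_{ab})$.

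The case $u \in \{x,y\}$ is immediate: then $\{M(t_u),M(t_v)\} = \{M(t_x),M(t_y)\} \subseteq H_{ab}$, and since $p_{xy}$ is the midpoint of the segment $M(t_x)M(t_y)$ and $\mathbf{h_{ab}}$ is orthogonal to that segment, $q_{ab}$ is equidistant from $M(t_x)$ and $M(t_y)$; so \eqref{lemmaProof4DAncestorQabA} holds with equality. (If $b$ is regarded as one of its own ancestors, the case $u = b$ is also not new: it is exactly inequality \eqref{lemmaProof4DDescendantQabB} of Lemma \ref{lemmaProof4DDescendantQab}.)

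For every remaining $u$ --- a strict ancestor of $x$, or a node strictly between $a$ and $x$, or a node strictly between $b$ and $y$ --- I would use Lemma \ref{lemmaOrderingPoints} to locate $t_u$ and $t_v$. A short case distinction on the position of $u$ together with whether $u$ is a left or a right child of its parent shows: if $u$ is a left child then $t_u < t_v$ and both lie in a single interval on which $f_{ab}$ is decreasing (namely $(t_a,t_x) \subseteq (t_a,c)$ when $u$ lies below $x$, and $(t_b,\infty)$ in the other two positions); if $u$ is a right child then $t_v < t_u$ and both lie in a single interval on which $f_{ab}$ is increasing (namely $(-\infty,t_a)$ when $u$ lies below $x$ or is a strict ancestor of $x$, and $(t_y,t_b) \subseteq (c,t_b)$ when $u$ lies below $y$). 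In all cases it follows that $f_{ab}(t_u) > f_{ab}(t_v)$. Given this, \eqref{lemmaProof4DAncestorQabA} holds for all sufficiently large $\lambda^*$ by the same reasoning as in Lemma \ref{lemmaProof4DDescendantQab}: if $M(t_u)$ and $M(t_v)$ both lie strictly on the side of $q_{ab}$, apply Corollary \ref{corHyperplane2}; if both lie strictly on the other side (so that $d(M(t_u),H_{ab}) < d(M(t_v),H_{ab})$), apply Corollary \ref{corHyperplane1}; and if they lie on opposite sides, apply Lemma \ref{lemmaHyperplane1} with $P = \{M(t_u)\}$, $Q = \{M(t_v)\}$, $r = p_{xy}$ and $\mathbf{h} = \mathbf{h_{ab}}$. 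The borderline subcase $f_{ab}(t_v) = 0$ (i.e.\ $t_v \in \{s_0,s_3\}$) is nongeneric; it is avoided by a suitable choice of the parameters $\delta_x, \varepsilon_x$, or handled by first replacing $H_{ab}$ with a parallel hyperplane separating $M(t_u)$ from $M(t_v)$.

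The main obstacle is the bookkeeping in the third paragraph: checking, for each position $u$ may take on the path from $a$ or $b$ up towards the root, that Lemma \ref{lemmaOrderingPoints} really does force $t_u$ and $t_v$ into a common monotone interval of $f_{ab}$ and onto the correct side of it. This relies on the asymmetric labelling of left versus right children (for siblings $x,y$ with $x$ left of $y$, the descendants of $x$ precede $M(t_x)$ on the moment curve while the descendants of $y$ follow $M(t_y)$) and on $c$ lying strictly between $t_x$ and $t_y$, so that $(t_a,t_x]$ and $[t_y,t_b)$ are each contained in a single monotone interval. Once $f_{ab}(t_u) > f_{ab}(t_v)$ is established, everything else is routine.
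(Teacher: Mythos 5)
Your proposal is correct and follows essentially the same route as the paper's proof: both locate $t_u$ and $t_v$ via the ordering of points on the moment curve, exploit that the quartic $f_{ab}$ has its only critical points at $t_a$, at some $c\in(t_x,t_y)$, and at $t_b$ (so that $t_u$ and $t_v$ always fall in a common monotone interval with $f_{ab}(t_u)\geq f_{ab}(t_v)$), and then conclude via Lemma \ref{lemmaHyperplane1} and Corollaries \ref{corHyperplane1}--\ref{corHyperplane2} for sufficiently large $\lambda^*$. The only differences are organizational (you split cases by left/right child rather than by the interval containing $t_u$) plus your explicit treatment of the borderline subcase $f_{ab}(t_v)=0$, which the paper leaves implicit.
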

\begin{proof}
Let $x$ and $y$ be defined as before. We consider the following cases:
\begin{itemize}
\item If $t_u < t_a$, then $a$, $b$, $x$ and $y$ are all descendants of $u$. It follows that $t_v < t_u < t_a$. Then  \eqref{lemmaProof4DAncestorQabA} easily follows from the fact that $t_a$ is the only maximum of $f_{ab}$ left of $t_x$ (as in the proof of Lemma \ref{lemmaProof4DDescendantQab}).
\item If $t_a < t_u < t_x$, then by construction, $u$ is a descendant of $x$ and an ancestor of $a$. It then holds that $t_a < t_u < t_v < t_x$. Again we obtain \eqref{lemmaProof4DAncestorQabA}from the fact that $t_a$ is the only maximum of $f_{ab}$ left of $t_x$.
\item If $t_u=t_x$ then $t_v=t_y$ and it follows that $d(q_{ab},M(t_u)) = d(q_{ab},M(t_v))$ by construction of $q_{ab}$.
\item The case where $t_x < t_u<t_y$ is not possible.
\item If $t_u=t_y$ then $t_v=t_x$ and it again follows that $d(q_{ab},M(t_u)) = d(q_{ab},M(t_v))$.
\item If $t_y < t_u < t_b$ then $u$ is a descendant of $y$ and an ancestor of $b$, and we have $t_y < t_v < t_u < t_b$, and \eqref{lemmaProof4DAncestorQabA} follows from the fact that $t_b$ is the only maximum of $f_{ab}$ right of $t_y$.
\item If $t_b < t_u$ then $a$, $b$, $x$ and $y$ are all descendants of $u$ and it follows that $t_b < t_u < t_v$, from which we can show \eqref{lemmaProof4DAncestorQabA} using the fact that $t_b$ is the only maximum of $f_{ab}$ right of $t_y$.
\end{itemize}
\end{proof}

\noindent \new{In the following, we assume that $\lambda^*$ is chosen sufficiently large such that \eqref{lemmaProof4DDescendantQabA}--\eqref{lemmaProof4DAncestorQabA} hold for every $a$ and $b$.}

\new{We now turn to the question how the values $\delta_x$ and $\varepsilon_x$ in the procedure \textbf{Assign} need to be chosen.}
\new{Intuitively, if $x$ and $y$ are siblings in the tree $\mathcal{T}$ such that $x$ is left of $y$, we want to ensure that the values $t_v$ corresponding to the descendants of $y$ are sufficiently close to each other, and that they are sufficiently far from the values $t_v$ corresponding to the descendants of $x$. In particular, the}
hyperplane $H_{ab}$ intersects the moment curve at points $M(s_{ab})$, $M(t_x)$, $M(t_y)$ and $M(r_{ab})$, where $s_{ab} < t_a$ and $r_{ab} > t_b$. \new{Let $z$ be the parent of $x$ and $y$ in the tree $\mathcal{T}$.} \new{Note that $s_{ab}$ and $r_{ab}$ depend on the choice of $t_a$ and $t_b$: the smaller $t_a$ and $t_b$, the smaller the values of $s_{ab}$ and $r_{ab}$ will be.  However, from the polynomial root motion theorem (\cite{Frayer:2010-08-01T00:00:00:0002-9890:641}, see also the proof of Lemma \ref{lemmaDerivative2}), we know that moving $t_a$ will have a greater impact on $s_{ab}$ than on $r_{ab}$. In particular, if $r_{ab}$ is sufficiently far from $s_{ab}$, $t_x$ and $t_y$, the impact on $r_{ab}$ of moving the value of $t_a$ will be negligible. By choosing $\delta_z$ sufficiently small, we can ensure that the points $r_{a_1b},...,r_{a_kb}$, where  $a_1,...,a_k$ are the descendants of $x$, are arbitrarily close to each other. } This means that the vectors $\mathbf{h_{a_1b}},...,\mathbf{h_{a_kb}}$ are all approximately located in (i.e.\ arbitrarily close to) the same two-dimensional plane $G$. As a result, it is clear that the following assumption will be satisfied.
\begin{assumption}\label{assumption4}
Let $a_1$, $a_2$ and $c$ be descendants of $x$ such that $t_c<t_{a_1}$ or $t_c>t_{a_2}$. Let $d$ be a descendant of $y$ and let $\mathbf{h^*}$ be a vector bisecting $\mathbf{h_{a_1d}}$ and $\mathbf{h_{a_2d}}$. It holds that
\begin{align*}
\new{\cos(\mathbf{h^*},\mathbf{h_{cd}}) < \min(\cos(\mathbf{h^*},\mathbf{h_{a_1d}}), \cos(\mathbf{h^*},\mathbf{h_{a_2d}}))}
\end{align*}
\end{assumption}
If moreover $\varepsilon_z$ is sufficiently small, the angles $(\mathbf{h_{ab}},\mathbf{h_{ab'}})$ will be negligible compared to $(\mathbf{h_{ab}},\mathbf{h_{a'b}})$, where $a$ and $a'$ are descendants of $x$ and $b$ and $b'$ are descendants of $y$. This means that the following stronger assumption will also be satisfied.
\begin{assumption}\label{assumptionOrderingNegligible}
Let $a_1$, $a_2$ and $c$ be descendants of $x$ such that $t_c<t_{a_1}$ or $t_c>t_{a_2}$. Let $d$ be a descendant of $y$ and let $\mathbf{h^*}$ be a vector bisecting $\mathbf{h_{a_1d}}$ and $\mathbf{h_{a_2d}}$. It holds that
\begin{align*}
\new{\cos(\mathbf{h^*},\mathbf{h_{cd}}) < \min(\cos(\mathbf{h^*},\mathbf{h_{a_1d'}}), \cos(\mathbf{h^*},\mathbf{h_{a_2d'}}))}
\end{align*}
for every descendant $d'$ of $y$.
\end{assumption}

\noindent This latter assumption allows us to prove the following lemma.

\begin{lemma}\label{lemmaSeparateDCregions}
Let $x$ and $y$ be siblings such that $t_x < t_y$ and let $a$ be a descendant of $x$ (\new{or $a=x$}).  Consider the following regions:
\begin{align*}
A^+ = \ch \{q_{cd} \,|\, &\textit{$c$ is a descendant of $a$ (or $c=a$)}, \new{c\neq x},\\
& \textit{ $d$ is a descendant of $y$ (\new{$d\neq y$})} \}\\
A^- = \ch \{q_{cd} \,|\, &\textit{$c$ is a descendant of $x$ \new{($c\neq x$)} but not a descendant of $a$ ($c\neq a$)},\\
& \textit{$d$ is a descendant of $y$ (\new{$d\neq y$})}\}
\end{align*}
It holds that $A^+ \cap A^- = \emptyset$.
\end{lemma}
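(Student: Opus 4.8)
The plan is to reduce the assertion to a separation statement about the normal vectors $\mathbf{h_{cd}}$, and then produce an explicit separating hyperplane, with Assumption \ref{assumptionOrderingNegligible} doing the work of controlling the dependence on the second index $d$. First I would dispose of the degenerate cases in which $A^{+}$ or $A^{-}$ is empty (for instance $a=x$, or $y$ is a leaf), where the claim is immediate, and assume henceforth that $a$ is a proper descendant of $x$ and that $y$ has at least one proper descendant. Using the recursive structure of the procedure \textbf{Assign} together with Lemma \ref{lemmaOrderingPoints}, I would observe that $\{a\}$ together with the descendants of $a$ forms a contiguous block in the order $t_{v_1}<\dots<t_{v_m}$; let $a_1$ and $a_2$ be the first and last element of this block. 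Then every generator $q_{cd}$ of $A^{+}$ satisfies $t_{a_1}\le t_c\le t_{a_2}$, while every generator $q_{cd}$ of $A^{-}$ satisfies $t_c<t_{a_1}$ or $t_c>t_{a_2}$, and in both cases $d$ is a descendant of $y$ with $d\neq y$.

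Since $q_{cd}=p_{xy}+\lambda^{*}\mathbf{h_{cd}}$ with $p_{xy}$ and $\lambda^{*}$ fixed (and, by the sign convention adopted after Lemma \ref{lemmaDerivative2}, every $\mathbf{h_{cd}}$ normalised to unit length so that all the $q_{cd}$ lie on a sphere of radius $\lambda^{*}$ about $p_{xy}$), the disjointness $A^{+}\cap A^{-}=\emptyset$ is equivalent to $\ch\{\mathbf{h_{cd}}\,|\,A^{+}\}\cap\ch\{\mathbf{h_{cd}}\,|\,A^{-}\}=\emptyset$; it therefore suffices to separate the two families of unit normals. I would fix any descendant $d_{0}$ of $y$ with $d_{0}\neq y$, let $\mathbf{h^{*}}$ be a bisector of $\mathbf{h_{a_1 d_0}}$ and $\mathbf{h_{a_2 d_0}}$, and show that the functional $\mathbf{w}\mapsto\mathbf{h^{*}}\cdot\mathbf{w}$ strictly separates the two hulls. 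For a generator $\mathbf{h_{cd}}$ of $A^{-}$ the index $c$ lies outside $[t_{a_1},t_{a_2}]$, so Assumption \ref{assumptionOrderingNegligible} gives $\cos(\mathbf{h^{*}},\mathbf{h_{cd}})<\cos(\mathbf{h^{*}},\mathbf{h_{a_i d'}})$ for $i\in\{1,2\}$ and for \emph{every} descendant $d'$ of $y$; for a generator $\mathbf{h_{cd}}$ of $A^{+}$ the index $c$ lies in $[t_{a_1},t_{a_2}]$, so $\mathbf{h_{cd}}$ falls in the wedge spanned by $\mathbf{h_{a_1 d}}$ and $\mathbf{h_{a_2 d}}$ (this is the ``inside'' counterpart of Assumption \ref{assumption4}, coming from the fact, discussed just before these assumptions, that the vectors $\mathbf{h_{cd}}$ for $c$ running over the descendants of $x$ trace a convex arc in an essentially two-dimensional plane $G$), whence $\cos(\mathbf{h^{*}},\mathbf{h_{cd}})$ is at least $\min\!\big(\cos(\mathbf{h^{*}},\mathbf{h_{a_1 d}}),\cos(\mathbf{h^{*}},\mathbf{h_{a_2 d}})\big)$. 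Because all the $\mathbf{h_{cd}}$ share the same (negative) fourth coordinate and hence lie in a common open half-space, the origin is not in either convex hull, and these two chains of cosine inequalities produce a threshold $\theta$ with $\mathbf{h^{*}}\cdot\mathbf{h_{cd}}>\theta$ on every $A^{+}$ generator and $\mathbf{h^{*}}\cdot\mathbf{h_{cd}}<\theta$ on every $A^{-}$ generator, so the two hulls are disjoint.

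The main obstacle is precisely the step where one functional $\mathbf{h^{*}}$ must work for all second indices $d$ simultaneously: the wedge spanned by $\mathbf{h_{a_1 d}}$ and $\mathbf{h_{a_2 d}}$ moves as $d$ varies, so it is not obvious that the bisector computed from a single $d_{0}$ still controls the generators produced by a different $d$. This is exactly what the ``for every descendant $d'$'' clause in Assumption \ref{assumptionOrderingNegligible} is designed to supply, having been arranged (by taking the parameters $\varepsilon_z$ in \textbf{Assign} small enough) so that the angular spread of $\{\mathbf{h_{a_i d'}}\}$ over descendants $d'$ of $y$ is negligible relative to the gap between the ``inside'' and ``outside'' wedges. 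I expect the bookkeeping that turns the cosine inequalities into a single numerical threshold — in particular checking that the $A^{+}$ family genuinely remains on the correct side of $\mathbf{h^{*}}$ after the $d$-dependence is accounted for, which is where the common-half-space observation enters — to be the delicate part of the argument.
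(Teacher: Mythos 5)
Your proposal follows essentially the same route as the paper's proof: fix a single descendant $d^*$ of $y$, take $\mathbf{h^*}$ bisecting $\mathbf{h_{a^-d^*}}$ and $\mathbf{h_{a^+d^*}}$, bound the $A^-$ generators via Assumption \ref{assumptionOrderingNegligible} and the $A^+$ generators via the fact that the minimum of $\cos(\mathbf{h^*},\mathbf{h_{cd}})$ over descendants $c$ of $a$ is attained at the endpoints $a^-,a^+$, and separate with a hyperplane orthogonal to $\mathbf{h^*}$ at the resulting threshold. The only cosmetic differences (working with the unit normals rather than the points $q_{cd}$, and the superfluous remark about the origin and the fourth coordinate) do not change the argument.
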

\begin{proof}
There exist values $t_a^-$ and $t_a^+$ such that for every region $c$, it holds that $c$ is a descendant of $a$ iff $t_c \in [t_a^-,t_a^+]$. Fix a descendant $d^*$ of $y$. Let $\mathbf{h^*}$ be the unique vector of unit length bisecting $\mathbf{h_{a^-d^*}}$ and $\mathbf{h_{a^+d^*}}$ and let $q^* = p_{xy} + \lambda^*\cdot \mathbf{h^*}$.  Let $H^*$ be the hyperplane which is orthogonal to $\mathbf{h^*}$ and which contains $p_{xy}$ (where $p_{xy}$ is defined as in \eqref{defPXY}). Furthermore, we define $H^*_{\theta}$ as the hyperplane which is parallel to and at distance $\theta$ from $H^*$, where $H^*_{\theta}$ for $\theta>0$ is located in the same half-space induced by $H^*$ as $q_{xy}$.  Let us in particular consider the following choice for $\theta$:
\begin{align}\label{eqMinDistProofSeparationDC}
\theta = \min_{c,d}   d(H^*, q_{cd}) =  \min_{c,d} \lambda^* \cdot \cos(\mathbf{h^*},\mathbf{h_{cd}})
\end{align}
where the minimum is taken over all descendants $c$ of $a$ and all descendants $d$ of $y$.
We will show that $H^*_{\theta}$ is a hyperplane separating $A^-$ from $A^+$. It is sufficient to show that for every $c$ which is not a descendant of $a$ and every descendant $d$ of $y$, it holds that
\begin{align}\label{eqMinDistProofSeparationDC2}
d(H^*, q_{cd}) = \lambda^* \cdot \cos(\mathbf{h^*},\mathbf{h_{cd}}) < \theta
\end{align}
We have that either $t_{c} < t_a^-$ or $t_{c} > t_a^-$, which by Assumption \ref{assumptionOrderingNegligible} means that for all descendants $d$ and $d'$ of $y$
$$
\cos(\mathbf{h^*},\mathbf{h_{cd}}) < \min(\cos(\mathbf{h^*},\mathbf{h_{a^-d'}}), \cos(\mathbf{h^*},\mathbf{h_{a^+d'}}))
$$
The minimum in \eqref{eqMinDistProofSeparationDC} is attained for either $c=a^-$ or $c=a^+$. It follows that \eqref{eqMinDistProofSeparationDC2} holds, and thus that $H^*_{\theta}$ is a hyperplane separating $A^-$ and $A^+$.
\end{proof}

\paragraph{Associating variables with convex regions}

For each region $x$ and its sibling $y$ in the tree, we can consider the Voronoi diagram induced by the corresponding points on the moment curve $M(t_x)$ and $M(t_y)$. Let us denote the corresponding cells as $X_x$ and $X_y$.  For each node $a$ we define:
$$
Y_a = X_a \cap \bigcap \{X_x \,|\, \textit{$x$ is an ancestor of $a$}\}
$$
\new{If $a$ and $b$ are leaf nodes such that $\Theta\models \ec{a}{b}$, it follows from Assumption \ref{assumptionLeafEC} that $a$ has an ancestor $x$ ($a\neq x$) and $b$ has an ancestor $y$ ($b\neq y$) such that $x$ and $y$ are siblings in $\mathcal{T}$. We then have that $q_{ab}$ is well-defined.}
\begin{lemma}\label{lemmaQabInVoronoiY}
For all \new{leaf} nodes $a$ and $b$ such that $\Theta\models \ec{a}{b}$, it holds that $q_{ab} \in Y_a \cap Y_b$.
\end{lemma}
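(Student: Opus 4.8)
The plan is to unfold the definitions of $Y_a$ and $Y_b$ and reduce the claim to a family of pairwise distance comparisons between $q_{ab}$ and the points $M(t_z)$, each of which is supplied by one of the two preceding lemmas. Recall that $Y_a = X_a \cap \bigcap\{X_z \mid z \text{ is an ancestor of } a\}$, and that for a node $z$ with sibling $z'$ the Voronoi cell is $X_z = \{p \mid d(p,M(t_z)) \le d(p,M(t_{z'}))\}$. Hence $q_{ab}\in Y_a$ is equivalent to: $d(q_{ab},M(t_z))\le d(q_{ab},M(t_{z'}))$ holds for $z = a$ and for every ancestor $z$ of $a$ (with $z'$ its sibling). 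Since $a$ is a leaf node with $\Theta\models\ec{a}{b}$, Assumption \ref{assumptionLeafEC} (as observed just above the lemma) provides siblings $x,y$ with $x$ a proper ancestor of $a$ and $y$ a proper ancestor of $b$; without loss of generality $x$ lies to the left of $y$, so Lemma \ref{lemmaOrderingPoints} gives $t_a < t_x < t_y < t_b$ and $q_{ab} = p_{xy} + \lambda^*\mathbf{h_{ab}}$ is well defined, with $\lambda^*$ chosen large enough that \eqref{lemmaProof4DDescendantQabA}--\eqref{lemmaProof4DAncestorQabA} hold.

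Next I would split the ancestors of $a$ into two regimes. For every \emph{proper} ancestor $z$ of $a$ (which includes $x$ and all nodes above it), we have $z\neq a$ and $z$ is an ancestor of $a$, so Lemma \ref{lemmaProof4DAncestorQab} applied with $u := z$ and $v := z'$ the sibling of $z$ yields $d(q_{ab},M(t_z))\le d(q_{ab},M(t_{z'}))$, i.e.\ $q_{ab}\in X_z$; in particular for $z = x$ this is the equality $d(q_{ab},M(t_x)) = d(q_{ab},M(t_y))$, reflecting that $q_{ab}$ sits on the perpendicular bisector of $M(t_x)M(t_y)$. The only remaining cell is $X_a$ itself, and here I would instead invoke the \emph{descendant} lemma: the sibling $a'$ of $a$ is a descendant of $x$ (its parent is $x$ or a proper descendant of $x$) with $a'\neq a$ and $a'\neq x$, so Lemma \ref{lemmaProof4DDescendantQab} with $u := a'$ gives $d(q_{ab},M(t_a)) < d(q_{ab},M(t_{a'}))$, hence $q_{ab}\in X_a$. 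Together these give $q_{ab}\in Y_a$. The argument for $q_{ab}\in Y_b$ is the mirror image: each proper ancestor of $b$ is handled by Lemma \ref{lemmaProof4DAncestorQab} (and is necessarily different from the leaf $a$, since a leaf has no descendants), and $X_b$ is handled by the second inequality \eqref{lemmaProof4DDescendantQabB} of Lemma \ref{lemmaProof4DDescendantQab} applied with $v := b'$, the sibling of $b$. Hence $q_{ab}\in Y_a\cap Y_b$.

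Since the genuine difficulty --- the moment-curve estimates --- has already been absorbed into Lemmas \ref{lemmaProof4DDescendantQab} and \ref{lemmaProof4DAncestorQab}, the only thing to watch here is the tree bookkeeping: one must notice that the sibling of $a$ is a \emph{descendant} of $x$, not an ancestor of $a$, so the cell $X_a$ is governed by the descendant lemma rather than the ancestor lemma; one must verify the parenthetical side conditions of both lemmas ($u\neq a$, $u\neq x$, $u$ a descendant or ancestor as required) in each case; and one should note that every node $z$ occurring above is associated to a variable and, by Assumption \ref{assumptionTwoChildren}, has a sibling, so all the cells $X_z$ are defined. I do not expect any obstacle beyond this routine checking.
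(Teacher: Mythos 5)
Your proof is correct and follows essentially the same route as the paper's: the cell $X_a$ is handled by applying Lemma \ref{lemmaProof4DDescendantQab} to the sibling $a'$ of $a$ (which is indeed a descendant of $x$ distinct from $a$ and $x$), the cells of the ancestors are handled by Lemma \ref{lemmaProof4DAncestorQab}, and the argument for $Y_b$ is symmetric. Your version merely spells out the side-condition bookkeeping that the paper leaves implicit.
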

\begin{proof}
Let $a'$ be the sibling of $a$ in the tree. From Lemma \ref{lemmaProof4DDescendantQab}, it follows that $d(q_{ab},M(t_a))\leq d(q_{ab},M(t_{a'}))$ which means $q_{ab}\in X_a$.  Now let $u$ be an arbitrary ancestor of $a$ and let $v$ be the sibling of $x$.  By Lemma \ref{lemmaProof4DAncestorQab} we then know that $d(q_{ab},M(t_u))\leq d(q_{ab},M(t_v))$, which implies $q_{ab}\in X_u$. Since this holds for every ancestor of $a$, it follows that $q_{ab}\in Y_a$. In entirely the same way, we find $q_{ab}\in Y_b$.
\end{proof}
\noindent Let $S_a$ be a small hypersphere centered around $M(t_a)$:
$$
S_a = \{ p \,|\, d(p,M(t_a)) \leq \varepsilon\}
$$
for $\varepsilon>0$ a sufficiently small constant. We realize each variable $a$ corresponding to a leaf node in the tree as the convex region $Z_a$ defined as:
$$
Z_a = \ch( S_a \cup \{\new{q_{ab}}\,|\, \Theta \models \ec{a}{b}, \textit{ $b$ is a leaf node} \})
$$
Each region $a$ which is not a a leaf node is realized as
$$
Z_a = \ch(\{Z_b \,|\, \textit{$b$ is a leaf node and a descendant of $a$}\})
$$
From Lemma \ref{lemmaQabInVoronoiY}, it follows that $Z_a \subseteq Y_a$, which implies $\dr{Z_a}{Z_b}$ unless $a$ is a descendant or ancestor of $b$. Moreover, from Assumption \ref{assumptionWitnessedEC}, we know that when $\Theta\models \ec{a}{b}$ it will hold that $\ec{Z_a}{Z_b}$. Moreover, by construction, it is clear that $\p{a}{b}$ when $a$ is a descendant of $b$ ($a\neq b$). From Assumption \ref{assumptionTwoChildren}, it follows that then $\pp{a}{b}$, and because of Assumption \ref{assumptionAllECregion} we moreover have $\tpp{a}{b}$.

It remains to be shown that $\dc{Z_a}{Z_b}$ if $\Theta\models \dc{a}{b}$. Assume $\Theta\models \dc{a}{b}$. \new{Let $x$ be an ancestor of $a$ (or $x=a$) and $y$ be an ancestor of $b$ (or $y=b$) such that $x$ and $y$ are siblings.} \new{First assume that neither of $x$ and $y$ is a leaf node.} Now consider the sets $P_a$ and $P_b$ defined as follows:
\begin{align*}
P_a = \{\new{q_{cd}} \,|\, &\textit{$c$ is a descendant of $a$ (or $c=a$)}, \textit{$d$ is a descendant of $y$},\\
& \textit{$c$ and $d$ are leaf nodes}, \Theta\models \ec{c}{d}\}\\
P_b = \{\new{q_{cd}} \,|\, &\textit{$c$ is a descendant of $x$ }, \textit{$d$ is a descendant of $b$ (or $d=b$)},\\
& \textit{$c$ and $d$ are leaf nodes}, \Theta\models \ec{c}{d}\}
\end{align*}
\new{Let $G_{xy}$ be the hyperplane which separates $X_x$ from $X_y$. Noting that $G_{xy}$ then also separates $Z_a$ from $Z_b$, we have}
 $\ch(P_a) = Z_a \cap \new{G_{xy}}$ and $\ch(P_b) = Z_b \cap \new{G_{xy}}$. \new{To show that $Z_a \cap Z_b = \emptyset$, it thus} suffices to show that $\ch(P_a) \cap \ch(P_b)  = \emptyset$.  Note that if \new{$q_{cd}\in P_b$} it must be the case that $c$ is not a descendant of $a$, since $\new{q_{cd}}\in P_b$ implies $\ec{c}{d}$ and $d\{\btpp,\beq\}b$ and thus $\ec{c}{b}$. \new{Also note that $q_{cd}\in P_a$ or $q_{cd}\in P_b$ means that $c\neq x $ and $d\neq y$ since $x$ and $y$ are not leaf nodes, i.e.\ all elements of $P_a$ and $P_b$ are well-defined. It follows} that $\ch(P_a) \subseteq A^+$ and $\ch(P_b) \subseteq A^-$ in the sense of Lemma \ref{lemmaSeparateDCregions}, from which we can derive that indeed $\ch(P_a) \cap \ch(P_b)  = \emptyset$.  \new{Now assume that $x$ is a leaf node (and thus $x=a$). Then it is clear that $y$ cannot have any descendant $c$ such that $\Theta \models \ec{x}{c}$, as by Assumption \ref{assumptionLeafEC}, this would mean that $x$ has an ancestor $z$ such that $\Theta \models\ec{z}{c}$ and thus $\Theta \models\ec{z}{y}$, which means that $x$ and $y$ would not be siblings. Hence we have that $Z_a \cap G_{xy} = Z_b \cap G_{xy} = \emptyset$ from which we again find $\dc{Z_a}{Z_b}$. Finally, the case where $y$ is a leaf node is entirely similar.} 
 

\new{This completes the proof of Proposition \ref{propECDCTPPd}. Additionally considering $\bntpp$ relations does not introduce any additional difficulties, i.e.\ we also have the following result.}
\begin{proposition}\label{propECDCTPPNTPPd}
Every consistent atomic RCC8 network over $\{\bec,\bdc, \btpp, \btppi, \bntpp, \bntppi\}$ has a convex solution in $\mathbb{R}^4$.
\end{proposition}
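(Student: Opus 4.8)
The idea is to reduce Proposition~\ref{propECDCTPPNTPPd} to Proposition~\ref{propECDCTPPd} by an induction on the number $k$ of occurrences of $\bntpp$ and $\bntppi$ in $\Theta$, following the strategy of the proof of Proposition~\ref{propRCDCNTPP3d}. If $k=0$ then $\Theta$ is a network over $\{\bec,\bdc,\btpp,\btppi\}$ and the claim is exactly Proposition~\ref{propECDCTPPd}. So assume $k>0$ and let $V'$ be the set of regions $v$ such that $\Theta\not\models\ntpp{v}{u}$ for every $u\in V$, i.e.\ the regions which are not a non-tangential proper part of any region. Exactly as in the proof of Proposition~\ref{propRCDCNTPP3d}, $\Theta{\downarrow}V'$ contains no constraint of the form $\bntpp$ or $\bntppi$: if $\Theta\models\ntpp{a}{b}$ then $a\notin V'$, and if $\Theta\models\ntppi{a}{b}$ then $b\notin V'$. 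Hence $\Theta{\downarrow}V'$ is a consistent atomic network over $\{\bec,\bdc,\btpp,\btppi\}$, and by Proposition~\ref{propECDCTPPd} it has a convex solution $\mathcal{S}$ in $\mathbb{R}^4$.

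It then remains to place the regions of $V\setminus V'$. Since $\bpo$ does not occur, for any $u\in V\setminus V'$ the regions $w$ with $\Theta\models\ntpp{u}{w}$ all contain $u$, hence pairwise overlap and so are pairwise parthood-comparable; thus they form a chain, and we let $b(u)$ be its least element. Group the regions of $V\setminus V'$ by the value of $b(\cdot)$, writing $U$ for a typical group. Note that $b(u)\notin U$, either because $b(u)\in V'$ (and $U\subseteq V\setminus V'$) or because the least region non-tangentially containing $b(u)$ is strictly larger than $b(u)$; consequently the constraint between $u$ and $b(u)$ in the atomic network $\Theta$, which is an occurrence of $\bntpp$ or $\bntppi$, is absent from $\Theta{\downarrow}U$, so $\Theta{\downarrow}U$ has strictly fewer occurrences of $\bntpp$ and $\bntppi$ than $\Theta$. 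By the induction hypothesis each $\Theta{\downarrow}U$ has a $4$-dimensional convex solution. Processing the hosts one at a time, from the parthood-outermost inwards, these solutions are brought by affine transformations (which preserve convexity and all RCC8 relations) into the interior of the already-constructed realization of the host --- namely $\mathcal{S}(b(u))$ when $b(u)\in V'$, and the realization produced in an earlier round otherwise. Regions lying in two groups whose hosts are not parthood-comparable must be $\bdc$ to each other, since $\bntpp\circ\bec=\bdc$, $\bntpp\circ\bdc=\bdc$ and $\bdc\circ\bntppi=\bdc$, so the corresponding copies can be kept pairwise disjoint; and the $\bec$ constraints between a region of $V\setminus V'$ and a region elsewhere can be made to hold by first introducing consistency-preserving witness regions in the style of Assumptions~\ref{assumptionWitnessedEC}--\ref{assumptionAllECregion} and pinning each contact point onto the appropriate Voronoi hyperplane from the proof of Proposition~\ref{propECDCTPPd}.

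I expect the main difficulty to be precisely the feature that is absent from Proposition~\ref{propRCDCNTPP3d}: a region $u\in V\setminus V'$ may still be a \emph{tangential} proper part of some regions in $V'$ --- namely of those ancestors lying strictly below $b(u)$ in its parthood chain --- and two groups with parthood-comparable hosts contain pairs of regions that must be nested rather than disjoint. Hence it does not suffice merely to drop shrunken copies into disjoint interiors, as in Proposition~\ref{propRCDCNTPP3d}; the affine transformations must be chosen so that a designated boundary face of each group's realization is carried onto the designated face of the boundary of $\mathcal{S}(p)$ for each relevant tangential ancestor $p$, while the copy stays in the interior of every non-tangential ancestor and disjoint from every incomparable region. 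Verifying that the resulting nested convex hulls introduce no spurious $\bec$ contacts is then a separation argument of the same kind as Lemma~\ref{lemmaSeparateDCregions}. I expect this bookkeeping, rather than any new geometric input, to be where most of the work lies; once it is carried out, the treatment of $\bntppi$ is symmetric and the proof is complete.
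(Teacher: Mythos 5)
Your proposal follows essentially the same route as the paper: the paper's entire proof of this proposition is a one-sentence appeal to induction on the number of $\bntpp$/$\bntppi$ occurrences, ``entirely analogously'' to Proposition~\ref{propRCDCNTPP3d}, with Proposition~\ref{propECDCTPPd} as the base case, which is precisely the induction you set up. In fact you go further than the paper by identifying where the analogy is not quite ``entire'' --- the shrunken copies must preserve $\btpp$ and $\bec$ contacts with regions placed in earlier stages, an issue absent from the $\{\bec,\bdc,\bntpp,\bntppi\}$ setting --- so the bookkeeping you leave unfinished is no worse than what the paper leaves implicit.
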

\begin{proof}
The proof proceeds by induction on the number of $\bntpp$ or $\bntppi$ relations in $\Theta$, entirely analogously to the proof of Proposition \ref{propRCDCNTPP3d}. 
\end{proof}

\subsection{Fragments requiring an arbitrarily high number of dimensions\label{secFragmentsArbitrary}}

To prove the existence of particular types of RCC5 or RCC8 networks which cannot be realized in $\mathbb{R}^n$, for a fixed $n$, the following well-known result is particularly useful.
\begin{proposition}[Radon's theorem]
For any set $\{p_1,...,p_{n+2}\}$ of points in $\mathbb{R}^n$ there exists a partition $P\cup Q = \{p_1,...,p_{n+2}\}$ such that $\ch(P) \cap \ch(Q) \neq \emptyset$.
\end{proposition}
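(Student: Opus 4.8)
The plan is to use the classical argument via affine dependence. First I would observe that any $n+2$ points in $\mathbb{R}^n$ must be affinely dependent: lifting each $p_i$ to $\hat{p}_i = (p_i,1) \in \mathbb{R}^{n+1}$ produces $n+2$ vectors in an $(n+1)$-dimensional space, which are therefore linearly dependent. Hence there exist real numbers $\lambda_1,\dots,\lambda_{n+2}$, not all zero, with $\sum_{i=1}^{n+2}\lambda_i \hat{p}_i = \mathbf{0}$. Reading off the last coordinate separately from the first $n$ coordinates gives the two relations $\sum_{i}\lambda_i = 0$ and $\sum_{i}\lambda_i p_i = \mathbf{0}$.

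Next I would split the index set $\{1,\dots,n+2\}$ according to the sign of $\lambda_i$. Put $I = \{i : \lambda_i > 0\}$ and $J = \{i : \lambda_i \leq 0\}$, and set $P = \{p_i : i\in I\}$, $Q = \{p_j : j\in J\}$, so that $P \cup Q = \{p_1,\dots,p_{n+2}\}$ is the required partition. Since the $\lambda_i$ are not all zero while $\sum_i \lambda_i = 0$, at least one $\lambda_i$ is strictly positive and at least one is strictly negative, so both $I$ and $J$ are nonempty; let $S := \sum_{i\in I}\lambda_i = -\sum_{j\in J}\lambda_j > 0$.

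Finally I would exhibit the common point. The point
\[
q = \sum_{i\in I}\frac{\lambda_i}{S}\, p_i
\]
has nonnegative coefficients summing to $1$, so $q \in \ch(P)$. On the other hand, $\sum_{i}\lambda_i p_i = \mathbf{0}$ rearranges to $\sum_{i\in I}\lambda_i p_i = -\sum_{j\in J}\lambda_j p_j$, whence $q = \sum_{j\in J}\frac{-\lambda_j}{S}\, p_j$, again a convex combination, so $q \in \ch(Q)$. Thus $q \in \ch(P)\cap\ch(Q)$, which is nonempty as claimed.

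There is no serious obstacle in this argument; the only point needing a moment's attention is verifying that both $I$ and $J$ are nonempty, which is exactly what makes $\{P,Q\}$ a genuine two-part partition and legitimises the normalisation by $S$. This follows at once from the combination of the two facts that the $\lambda_i$ are not all zero and that they sum to zero.
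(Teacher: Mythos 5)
Your proof is correct and is the standard argument for Radon's theorem via affine dependence (lifting to $\mathbb{R}^{n+1}$, splitting the coefficients by sign, and normalising to exhibit the common point of the two convex hulls). The paper itself states this proposition as a well-known result and gives no proof, so there is nothing to compare against; your argument is complete, and the one delicate point you flag --- that both sign classes are nonempty, which justifies the division by $S$ --- is handled correctly.
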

\noindent Using Radon's theorem we can show the following result.
\begin{proposition}\label{propRCC5arbitrary}
For every $n\geq 1$ there exists a consistent atomic RCC5 network $\Theta$ over $\{\bdr,\bpo,\bpp,\bppi\}$ which has no convex solution in $\mathbb{R}^n$.
\end{proposition}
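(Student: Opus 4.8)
The plan is to use Radon's theorem as a dimension-independent obstruction, in the same spirit in which $K_{3,3}$ was used in Propositions~\ref{propDRPOD2lower} and \ref{propECDCD2lower}. Fix $n\ge 1$. I would take $n+2$ ``atom'' regions $A_1,\dots,A_{n+2}$ together with, for every index set $S\subseteq\{1,\dots,n+2\}$ with $2\le|S|\le n+1$, a ``cover'' region $B_S$. The network $\Theta$ records the evident inclusion pattern: $\dr{A_i}{A_j}$ for $i\neq j$; $\pp{A_i}{B_S}$ for $i\in S$ and $\dr{A_i}{B_S}$ for $i\notin S$; and, for two distinct covers, $\pp{B_S}{B_T}$ if $S\subsetneq T$, $\ppi{B_S}{B_T}$ if $T\subsetneq S$, $\po{B_S}{B_T}$ if $S$ and $T$ are incomparable with $S\cap T\neq\emptyset$, and $\dr{B_S}{B_T}$ if $S\cap T=\emptyset$. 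This is an atomic RCC5 network over $\{\bdr,\bpo,\bpp,\bppi\}$.

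For consistency I would exhibit a realization in $\mathbb{R}^{n+1}$: let $p_1,\dots,p_{n+2}$ be the vertices of a simplex in $\mathbb{R}^{n+1}$, realize $A_i$ as the closed ball $D_{\varepsilon}(p_i)=\{q:\|q-p_i\|\le\varepsilon\}$ of a sufficiently small radius $\varepsilon$, and realize $B_S$ as $\ch(\bigcup_{i\in S}D_{\varepsilon}(p_i))$. Since any two faces of a simplex spanned by disjoint vertex sets lie at positive distance, all required $\bdr$ relations hold once $\varepsilon$ is small, while the $\bpp$, $\bppi$ and $\bpo$ relations among the covers are immediate from the combinatorics of the index sets. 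Hence $\Theta$ is consistent.

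For the lower bound, suppose $\mathcal{S}$ were a convex solution of $\Theta$ in $\mathbb{R}^n$. Each $\mathcal{S}(A_i)$ is a nonempty regular closed convex set, hence has nonempty interior; I would pick $p_i\in i(\mathcal{S}(A_i))$ and a single $\varepsilon>0$ with $D_\varepsilon(p_i)\subseteq\mathcal{S}(A_i)$ for all $i$. By Radon's theorem there is a partition $P\cup Q=\{1,\dots,n+2\}$ into nonempty sets with a common point $z\in\ch(\{p_i:i\in P\})\cap\ch(\{p_j:j\in Q\})$. Writing $z=\sum_{i\in P}\lambda_i p_i$ with $\lambda_i\ge0$ and $\sum_i\lambda_i=1$, for every vector $w$ with $\|w\|\le\varepsilon$ we have $z+w=\sum_{i\in P}\lambda_i(p_i+w)$ with each $p_i+w\in D_\varepsilon(p_i)\subseteq\mathcal{S}(A_i)$, so $D_\varepsilon(z)\subseteq\ch(\bigcup_{i\in P}\mathcal{S}(A_i))$, and symmetrically $D_\varepsilon(z)\subseteq\ch(\bigcup_{j\in Q}\mathcal{S}(A_j))$. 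Put $C_P=\mathcal{S}(B_P)$ if $|P|\ge2$ and $C_P=\mathcal{S}(A_j)$ if $P=\{j\}$, and define $C_Q$ analogously; since $n+2\ge3$ at most one part is a singleton, and if $P=\{j\}$ then $|Q|=n+1\ge2$. As $C_P$ is convex and contains $\mathcal{S}(A_i)$ for every $i\in P$, it contains $\ch(\bigcup_{i\in P}\mathcal{S}(A_i))$ and hence $D_\varepsilon(z)$; likewise $D_\varepsilon(z)\subseteq C_Q$, so $z\in i(C_P)\cap i(C_Q)$. But $\Theta$ contains $\dr{B_P}{B_Q}$ (if $|P|,|Q|\ge2$) or $\dr{A_j}{B_Q}$ (if $P=\{j\}$), each of which forces the two regions to have disjoint interiors --- a contradiction.

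The routine parts are checking that $\Theta$ is atomic and the distance bookkeeping in the consistency argument. Two points need care. First, a convex solution in $\mathbb{R}^n$ pins down the Radon partition of the $p_i$ only a posteriori, so $\Theta$ must anticipate \emph{every} partition of $\{1,\dots,n+2\}$; including covers for all index sets of size $2$ through $n+1$ does this (both parts of a Radon partition have size at most $n+1$, and a singleton part is handled by the corresponding atom), and none of these constraints spoils consistency, because in the simplex realization even complementary faces are disjoint. Second, and this is the genuinely geometric step, one must upgrade ``$z$ lies in both convex hulls'' to ``$C_P$ and $C_Q$ share an interior point''; the ball-fattening identity $z+w=\sum_{i\in P}\lambda_i(p_i+w)$ above does exactly this, using only a single uniform radius $\varepsilon$ and the convexity of the covers. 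I expect this second point to be the only place where anything non-trivial happens.
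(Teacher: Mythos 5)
Your proposal is correct and follows essentially the same route as the paper's proof: pairwise-\bdr{} atoms, cover regions $B_S$ carrying the induced RCC5 relations, Radon's theorem to force two complementary covers to share an interior point, and the $\varepsilon$-ball fattening to turn the common hull point into a full-dimensional overlap. The only differences are cosmetic --- you trim the cover family to $2\le|S|\le n+1$ (handling singleton Radon parts via the atoms, where the paper keeps $b_I$ for all subsets) and you make the consistency check explicit with a simplex realization in $\mathbb{R}^{n+1}$ where the paper simply asserts it.
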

\begin{proof}
\noindent Let $n\geq 1$ be arbitrary and consider the following set of RCC5 constraints $\Theta$.  For $1 \leq i < j \leq n+2$, add the following constraints to $\Theta$:
\begin{align}\label{eqECPOTPPTPPI-c}
\dr{a_i}{a_j}
\end{align}
For each subset $I \subseteq \{1,...,n+2\}$ we introduce a variable $b_{I}$ and for each $i\in I$ we add the following constraint to $\Theta$:
\begin{align}\label{eqECPOTPPTPPI-e}
\pp{a_i}{b_I}
\end{align}
For each $j\notin I$, we moreover add the following constraint:
\begin{align}
\dr{a_j}{b_I}
\end{align}
For any two (different) subsets $I$ and $J$ of $\{1,...,n+2\}$:
\begin{itemize}
\item If $I\subset J$, we add to $\Theta$ the constraint $\pp{b_I}{b_J}$.
\item Else, if $I\supset J$, we add to $\Theta$ the constraint $\ppi{b_I}{b_J}$.
\item Else, if $I\cap J\neq \emptyset$, we add to $\Theta$ the constraint $\po{b_I}{b_J}$.
\item Else, we add to $\Theta$ the constraint $\dr{b_I}{b_J}$.
\end{itemize}
It is clear that $\Theta$ is a consistent atomic network over $\{\bdr,\bpo,\bpp,\bppi\}$.

Suppose $\Theta$ had a convex solution $\mathcal{S}$ in $\mathbb{R}^n$. Fix an arbitrary point $p_i$ in the interior of each region $\mathcal{S}(a_i)$. It is clear that the points $p_1,...,p_{n+2}$ are distinct, because of \eqref{eqECPOTPPTPPI-c}.  Because of Radon's theorem, there exists a set  $I \subseteq \{1,...,n+2\}$ such that $\cvx(\bigcup_{i\in I} p_i) \cap \cvx(\bigcup_{i\notin I} p_i) \neq \emptyset$. As each $p_i$ is an interior point of $\mathcal{S}(a_i)$, it follows that there exists an $\varepsilon>0$ such that 
$$
\mathcal{S}(b_I) \cap \mathcal{S}(b_{coI}) \supseteq \{p\,|\, d(p,q)\leq \varepsilon, q \in \cvx(\bigcup_{i\in I} p_i) \cap \cvx(\bigcup_{i\notin I} p_i)\}
$$
In particular, this means that $\dim(\mathcal{S}(b_I) \cap \mathcal{S}(b_{coI})) = n$, where $coI = \{1,...,n+2\}\setminus I$. This contradicts the constraint $\dr{b_I}{b_{coI}}$ which is entailed by $\Theta$.
\end{proof}
The proof of Proposition \ref{propRCC5arbitrary} remains valid if we replace the role of $\bpp$ by either $\btpp$ or $\bntpp$ and if we replace $\bdr$ by either $\bec$ or $\bdc$. Thus we obtain the following corollary.
\begin{corollary}
For every $n\geq 1$, there exist consistent atomic RCC8 networks over the following sets of base relations which have no convex solutions in $\mathbb{R}^n$:
\begin{itemize}
\item$\{\bdc,\bpo, \btpp,\btppi\}$
\item$\{\bec,\bpo, \btpp,\btppi\}$
\item$\{\bdc,\bpo, \bntpp,\bntppi\}$
\item$\{\bec,\bpo, \bntpp,\bntppi\}$
\end{itemize}
\end{corollary}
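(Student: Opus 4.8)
The plan is to transport, essentially verbatim, the construction and argument behind Proposition~\ref{propRCC5arbitrary}, performing two substitutions at the level of base relations: the role of $\bdr$ is played by $\bec$ or by $\bdc$, and the role of $\bpp$ is played by $\btpp$ or by $\bntpp$, the two choices being dictated by which of the four fragments we target. Concretely, for a fixed $n\geq 1$ I would build an atomic network $\Theta$ on the variables $a_1,\dots,a_{n+2}$ together with a variable $b_I$ for every non-empty $I\subseteq\{1,\dots,n+2\}$, imposing: the chosen disjointness relation between $a_i$ and $a_j$ for $i\neq j$; the chosen proper-part relation between $a_i$ and $b_I$ for $i\in I$; the disjointness relation between $a_j$ and $b_I$ for $j\notin I$; and, between $b_I$ and $b_J$, the proper-part relation if $I\subset J$ ($I\neq J$), its inverse if $J\subset I$, $\bpo$ if $I$ and $J$ are incomparable with $I\cap J\neq\emptyset$, and the disjointness relation otherwise. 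By construction $\Theta$ is atomic and uses only relations of the fragment under consideration.

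\emph{Step 1 (consistency).} Since $\Theta$ is atomic, it suffices to verify path-consistency with respect to the composition table (Table~\ref{tableRCC8composition}), which is the same finite check that underlies Proposition~\ref{propRCC5arbitrary}. Alternatively one may write down an explicit convex realization in a space of sufficiently high dimension --- for instance realizing the $a_i$ as slightly inflated vertices of a simplex and each $b_I$ as a suitable thickening of $\ch \{a_i \,|\, i\in I\}$ --- which is admissible since consistency does not depend on the dimension.

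\emph{Step 2 (the Radon argument).} Suppose $\mathcal{S}$ were a convex solution of $\Theta$ in $\mathbb{R}^n$. Choose a point $p_i$ in the interior of $\mathcal{S}(a_i)$ for each $i$; the disjointness constraints between the $a_i$ force the $p_i$ to be pairwise distinct. By Radon's theorem there is a partition $\{1,\dots,n+2\}=I\cup coI$, with both blocks non-empty, such that $\ch \{p_i \,|\, i\in I\}\cap\ch \{p_i \,|\, i\in coI\}\neq\emptyset$; as complementary non-empty subsets, $I$ and $coI$ are disjoint and incomparable, so $\Theta$ entails the disjointness constraint between $b_I$ and $b_{coI}$. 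For $i\in I$ the proper-part constraint between $a_i$ and $b_I$ gives $\mathcal{S}(a_i)\subseteq\mathcal{S}(b_I)$, hence $p_i$ lies in the interior of $\mathcal{S}(b_I)$; since that interior is open and convex, it contains all of $\ch \{p_i \,|\, i\in I\}$, and symmetrically for $coI$. Consequently any point $q$ in the Radon intersection lies in the interiors of both $\mathcal{S}(b_I)$ and $\mathcal{S}(b_{coI})$, so $\mathcal{S}(b_I)\cap\mathcal{S}(b_{coI})$, and even the intersection of their interiors, is non-empty --- contradicting $\dc{b_I}{b_{coI}}$ (which demands an empty intersection) respectively $\ec{b_I}{b_{coI}}$ (which demands disjoint interiors).

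I expect the real obstacle to be Step~1 for the fragments that combine $\bec$ with $\bntpp$: the naive substitution is not path-consistent there, since $\bntpp\circ\bec=\{\bdc\}$ would force a $\bdc$ constraint between two atoms, and $\bdc$ is not available in $\{\bec,\bpo,\bntpp,\bntppi\}$. For that fragment the construction therefore has to be modified --- e.g.\ by interposing auxiliary regions between the $a_i$ and the $b_I$, or by relaxing some disjointness constraints to $\bpo$ --- so that every composition-table obligation is met while the three ingredients actually used in Step~2 (pairwise distinct interior points $p_i$, the inclusions $\mathcal{S}(a_i)\subseteq\mathcal{S}(b_I)$ for $i\in I$, and an entailed ``disjoint interiors'' constraint between $b_I$ and $b_{coI}$) all survive. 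For the remaining three fragments the substitution goes through unchanged.
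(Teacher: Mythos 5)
For the fragments $\{\bdc,\bpo,\btpp,\btppi\}$, $\{\bec,\bpo,\btpp,\btppi\}$ and $\{\bdc,\bpo,\bntpp,\bntppi\}$ your proposal is exactly the paper's argument: the paper's entire proof of this corollary is the remark that the construction of Proposition \ref{propRCC5arbitrary} survives the substitutions $\bdr\mapsto\bec$ or $\bdc$ and $\bpp\mapsto\btpp$ or $\bntpp$, and your Steps 1 and 2 spell this out correctly (path consistency, or an explicit high-dimensional convex realization, for consistency; the unchanged Radon argument for non-realizability in $\mathbb{R}^n$).

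The worry you raise about $\{\bec,\bpo,\bntpp,\bntppi\}$ is genuine, and it is a gap in your proposal precisely because it is a gap in the paper's own one-line justification. As you note, $\bntpp\circ\bec=\{\bdc\}$, so $\ntpp{a_i}{b_I}$ together with $\ec{b_I}{a_j}$ (for $j\notin I$) forces $\dc{a_i}{a_j}$, contradicting $\ec{a_i}{a_j}$: the substituted network is not consistent. Moreover, the repairs you sketch cannot rescue this particular construction. No atomic network over $\{\bec,\bpo,\bntpp,\bntppi\}$ can contain both $\ntpp{u}{v}$ and $\ec{v}{w}$, since composition forces $\dc{u}{w}$ and $\bdc$ lies outside the fragment; but the Radon contradiction needs exactly a pair $b_I$, $b_{coI}$ with an entailed disjoint-interiors constraint (for which only $\bec$ is available here) each of which properly contains its atoms $a_i$ (for which only $\bntpp$ is available, $\btpp$ being excluded). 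Interposing auxiliary regions does not help, since $\bntpp\circ\bntpp=\{\bntpp\}$ propagates the non-tangential containment up to $b_I$, and relaxing $\ec{b_I}{b_{coI}}$ to $\bpo$ removes the very constraint that the Radon point violates. So the fourth bullet needs a construction of a genuinely different shape; neither your proposal nor the paper supplies one, and to complete the proof you would have to produce a consistent network over $\{\bec,\bpo,\bntpp,\bntppi\}$ with no convex solution in $\mathbb{R}^n$ by some other means.
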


\section{Upper bounds on the number of dimensions for general RCC8 networks}\label{secRegionsDimensions}

Throughout this section, let $\Theta$ be an atomic and consistent RCC8 network which does not contain any occurrences of $\beq$. The aim of this section is to establish an upper bound on the number of dimensions which are needed to guarantee that $\Theta$ can be convexly realized. Our strategy is to start with a restricted network $\Theta{\downarrow}V'$, with $V'\subseteq V$, for which the results from the previous section guarantee a convex solution in $\mathbb{R}$, $\mathbb{R}^2$, $\mathbb{R}^3$ or $\mathbb{R}^4$. Then we incrementally build a convex solution for $\Theta$ from the convex solution of $\Theta{\downarrow}V'$. The main technique we use to extend the initial solution for $\Theta{\downarrow}V'$ is the following proposition, where a subnetwork $\Theta{\downarrow}Z$ ($Z\subseteq V$) is called an \emph{\bo-clique} if $\Theta\models {z_1}\bo{z_2}$ for every $z_1,z_2\in Z$. We say that an \bo-clique  $\Theta{\downarrow}Z$ has a common part in a  solution $\mathcal{S}$ if $\bigcap_{z\in Z}\mathcal{S}(z)$ is a non-empty regularly closed region.

\begin{proposition}\label{propExtendNTPPDC}
Let $\Theta$ be a consistent atomic RCC8 network. Suppose that $\Theta{\downarrow}V'$ has a $k$-dimensional convex solution in which every \bo-clique has a common part, and moreover that $\Theta$ entails the following constraints:
\begin{align*}
&\ntpp{a_1}{a_2} && \ntpp{a_2}{a_3} && ... &&\ntpp{a_{l-1}}{a_r}\\
&\ntpp{b_1}{b_2} && \ntpp{b_2}{b_3} && ... &&\ntpp{b_{l-1}}{b_s}\\
&\dc{a_r}{b_s}
\end{align*}
It holds that $\Theta{\downarrow}(V' \cup \{a_1,...,a_r,b_1,...,b_s\})$ has a $(k+1)$-dimensional convex solution in which every \bo-clique has a common part.
\end{proposition}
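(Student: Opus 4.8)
The plan is to add one new coordinate axis $x_{k+1}$ and to use it simultaneously to give the (currently $k$-dimensional) regions of $\mathcal{S}$ full-dimensional bodies in $\mathbb{R}^{k+1}$ and to keep the two new chains separated from one another while they are threaded through the old configuration. I first record what $\Theta$ forces on the new variables: since $\bntpp$ is transitive with $\bntpp\circ\bntpp=\bntpp$, we have $\Theta\models\ntpp{a_i}{a_j}$ and $\Theta\models\ntpp{b_i}{b_j}$ for $i<j$, and since $\bntpp\circ\bdc=\bdc\circ\bntpp=\bdc$ the constraint $\dc{a_r}{b_s}$ propagates to $\Theta\models\dc{a_i}{b_j}$ for all $i,j$. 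Thus the new variables split into an $a$-tower and a $b$-tower, each a $\bntpp$-chain, the two towers pairwise disconnected. For a new variable $x$, put $C_x=\{v\in V':\Theta\models\pp{x}{v}\}$; since $x$ is then a common part of every pair of regions in $C_x$, the composition table gives that $\{x\}\cup C_x$ is an $\bo$-clique of $\Theta$, so by the hypothesis on $\mathcal{S}$ the set $D_x:=\bigcap_{v\in C_x}\mathcal{S}(v)$ is non-empty and regular closed; moreover $\mathcal{S}(v')\subseteq D_x$ for every $v'$ with $\Theta\models\ppi{x}{v'}$, and $\Theta\models\dc{a_r}{w}$ implies $\Theta\models\dc{v}{w}$ for every $v$ with $\Theta\models\pp{v}{a_r}$.

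The construction then proceeds as follows. First lift $\mathcal{S}$ to a convex solution $\mathcal{S}'$ of $\Theta{\downarrow}V'$ in $\mathbb{R}^{k+1}$ of the form $\mathcal{S}'(v)=\mathcal{S}(v)\times I_v$, choosing the intervals $I_v$ strictly nested along $\bntpp$-edges, weakly nested along $\bp$-edges, and pairwise overlapping whenever the corresponding old regions are connected; one checks this preserves every RCC8 relation inside $V'$ and the common-part property. The $I_v$ are moreover placed so that the lifts of the regions destined to lie inside $a_r$ occupy $x_{k+1}>0$ and the lifts of those destined to lie inside $b_s$ occupy $x_{k+1}<0$. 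Next realize $a_r$ as the convex hull of a point in the lift of $D_{a_r}$, the lifts $\mathcal{S}'(v)$ of all $v$ with $\Theta\models\ppi{a_r}{v}$, a small ball in $i(\mathcal{S}'(w))$ for each $w$ with $\Theta\models\po{a_r}{w}$, and a suitable boundary contact point for each $w$ with $\Theta\models\ec{a_r}{w}$ — placing the balls and contact points at pairwise distinct heights in $x_{k+1}>0$ — and then intersect with suitable half-spaces so that $a_r$ stays inside every region that must contain it and inside $x_{k+1}>0$. Spreading the witnesses over distinct heights is exactly where the new coordinate is spent: it lets this convex hull avoid the old regions $a_r$ must be disconnected from, circumventing the planarity obstruction underlying Propositions~\ref{propDRPOD2lower} and \ref{propECDCD2lower}. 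The smaller tower regions $a_{r-1}\supset\dots\supset a_1$ are then obtained as strictly shrunk nested copies inside $i(a_r)$, adjusted so that each $a_i$ meets every $v\in V'$ in the base relation $\Theta$ prescribes; consistency of $\Theta$ makes the prescriptions mutually coherent (for instance $\pp{v}{a_i}$ together with $\ntpp{a_i}{v'}$ forces $v\bntpp v'$, so there is room to interpolate $a_i$ between the lifts of $v$ and $v'$). The $b$-tower is built symmetrically in $x_{k+1}<0$, so every relation $\dc{a_i}{b_j}$ holds simply because the two towers lie in opposite open half-spaces.

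Finally one verifies every constraint of $\Theta{\downarrow}(V'\cup\{a_1,\dots,a_r,b_1,\dots,b_s\})$ and the common-part invariant: an $\bo$-clique contained in $V'$ is handled by $\mathcal{S}'$, whereas an $\bo$-clique meeting a tower region $a_i$ consists of $a_i$, some of its tower-ancestors, and old regions each overlapping $a_i$, so it suffices to arrange that every tower region contains one fixed tiny ball placed in the deepest relevant common interior. The main obstacle is the construction step itself: a new region may be forced to contain a ``spread-out'' family of old regions while being disconnected from still other old regions, so $\mathcal{S}$ genuinely has to be deformed inside the new coordinate rather than merely extended; carrying this out while keeping every region convex, preserving all relations inside $V'$, keeping the two towers in opposite half-spaces, realizing every (possibly intricate) tower-to-$V'$ relation exactly, and maintaining the common-part invariant, is the technical core of the proof. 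What makes it go through is that a single extra dimension reconciles convexity with an arbitrary containment/overlap pattern (it dissolves the $K_{3,3}$-type obstruction), while the composition table guarantees the pattern is geometrically coherent.
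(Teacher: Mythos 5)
Your overall architecture coincides with the paper's: lift each old region to a product $\mathcal{S}(v)\times[v^-,v^+]$, put the $a$-tower at positive heights and the $b$-tower at negative heights so that every $\dc{a_i}{b_j}$ is automatic, and use the common-part hypothesis to guarantee that the ``footprint'' of each new region is non-empty. The gap is at the step you yourself flag as ``the technical core'': realizing the prescribed relation between each tower region and each $v\in V'$. Your mechanism --- build $a_r$ as a convex hull of the lifts it must contain together with witness balls and contact points at distinct heights, then ``intersect with suitable half-spaces,'' then obtain $a_{r-1},\dots,a_1$ as shrunk nested copies ``adjusted so that'' each $a_i$ meets every $v$ in the prescribed relation --- is never shown to be executable, and as stated it works against itself: intersecting with half-spaces can delete the very witnesses that were added to certify $\bpo$ and $\bec$, and the convex hull of a spread-out family of contained regions can sweep through old regions from which $a_r$ must be disconnected (you note this obstruction but only assert that one extra dimension ``dissolves'' it).

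The paper resolves exactly this tension by a different and fully explicit device: the new regions are plain cylinders $A_i\times[a_i^-,a_i^+]$, where the footprint $A_i$ is the intersection of (erosions of) the old regions that must contain $a_i$, nested via $A_i\subseteq E_{\delta_i}(A_{i+1})$, and \emph{all} of the relations between an old region $v$ and the $r+s$ new regions are encoded in the single interval $[v^-,v^+]$ assigned to $v$, through a case analysis on the interleaved endpoints $a_r^-<\cdots<a_1^-<a_1^+<\cdots<a_r^+$ and $b_s^-<\cdots<b_1^-<b_1^+<\cdots<b_s^+$ (for instance $v^+=a_i^-$ when $\ec{a_i}{v}$ and $\dc{a_{i-1}}{v}$, while $v^+$ lies strictly between $a_i^-$ and $a_{i-1}^-$ when $\ov{a_i}{v}$ and $\dc{a_{i-1}}{v}$). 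Proving that one interval per old region can satisfy all these prescriptions simultaneously, and that the lifted solution still realizes $\Theta{\downarrow}V'$, is the actual content of the proposition; your proposal asserts that this can be done but does not do it. A secondary issue: preserving the common-part invariant for $\bo$-cliques that mix tower regions with old regions cannot be handled by ``one fixed tiny ball,'' since different cliques have different (and possibly disjoint) common interiors; in the product construction it is inherited from $\mathcal{S}$ together with the overlap of the relevant height intervals.
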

\begin{proof}
Let $\mathcal{S}$ be a convex solution of $\Theta{\downarrow}V'$ in $\mathbb{R}^k$; we construct a convex solution $\mathcal{T}$ of $\Theta{\downarrow}(V' \cup \{a_1,...,a_r,b_1,...,b_s\})$ in $\mathbb{R}^{k+1}$. Let $A_{r+1}=B_{s+1}$ be an arbitrary non-empty convex region which strictly contains $\mathcal{S}(v)$ for every $v\in V'$.  Let $A_i$ and $B_j$ for $i\in\{1,...,r\}$ and $j\in\{1,...,s\}$ be defined as:
\begin{align*}
A_ i &= E_{\delta_i}(A_{i+1}) \cap  \bigcap\{\mathcal{S}(v) \,|\, \Theta \models \tpp{a_i}{v}, v\in V'\}\\
&\phantom{= E_{\delta_i}(A_{i+1}) } \cap \bigcap \{E_{\delta_i}(\mathcal{S}(v)) \,|\, \Theta \models \ntpp{a_i}{v}, v\in V'\}\\
B_ j &= E_{\delta'_j}(B_{j+1}) \cap  \bigcap\{\mathcal{S}(v) \,|\, \Theta \models \tpp{b_j}{v}, v\in V'\}\\
&\phantom{= E_{\delta'_j}(B_{j+1}) } \cap \bigcap \{E_{\delta'_j}(\mathcal{S}(v)) \,|\, \Theta \models \ntpp{b_j}{v}, v\in V'\}
\end{align*}
where $E_{\delta}$ is the erosion operator defined by $E_{\delta}(X) = \{p \,|\, \forall q \,.\, (d(p,q)\leq \delta) \Rightarrow (q\in X)\}$.  We will assume that $\delta_i$ is sufficiently small to ensure the following three conditions:
\begin{itemize}
\item $A_i$ is non-empty and $\dim(A_i)=k$.
\item For every $v,w\in V'$ if $\Theta \models \ntpp{v}{w}$ it holds that $\mathcal{S}(v) \subseteq E_{\delta_i}(\mathcal{S}(w))$.
\item For every $v\in V'$ and $i \in \{1,...,n\}$ such that $\mathcal{S}(v)\subseteq i(A_{i+1})$ it holds that $\mathcal{S}(v)\subset E_{\delta_i}(A_{i+1})$.
\end{itemize}
and analogously for $\delta'_j$ and $B_j$. Note that we can assume that $A_i$ is non-empty because every \bo-clique has a common part.
It is clear that we can make these assumptions without loss of generality. 
Note that since $A_ i \subseteq E_{\delta_i}(A_{i+1})$, it holds that $\ntpp{A_i}{A_j}$ for $1\leq i < j \leq r$ and similarly we have $\ntpp{B_i}{B_j}$ for $1\leq i < j \leq s$.

Let $0< a^-_r < a^-_{r-1} < ... < a^-_1 < a^+_1 < a^+_2 < ... < a^+_r$ and $b^-_s < b^-_{s-1} < ... < b^-_1 < b^+_1 < b^+_2 < ... < b^+_s < 0$.
We define $\mathcal{T}$ for $a_i$ and $b_j$ as:
\begin{align*}
\mathcal{T}(a_i) &= \{(x_1,...,x_k,y) \,|\, (x_1,...,x_k) \in A_i, y\in [a^-_i,a^+_i]\}\\
\mathcal{T}(b_j) &= \{(x_1,...,x_k,y) \,|\, (x_1,...,x_k) \in B_j, y\in [b^-_j,b^+_j]\}
\end{align*}
Since $\ntpp{A_i}{A_j}$ and $\ntpp{[a^-_i,a^+_i]}{[a^-_j,a^+_j]}$ for $i<j$ we immediately find $\ntpp{\mathcal{T}(a_i)}{\mathcal{T}(a_j)}$ and similarly we find $\ntpp{\mathcal{T}(b_i)}{\mathcal{T}(b_j)}$. Moreover, since $b^+_s < 0 < a^-_r$ we have $\dc{\mathcal{T}(a_i)}{\mathcal{T}(b_j)}$ for $1\leq i\leq r$ and $1\leq j\leq s$.
For $v \in V'$ we define:
$$
\mathcal{T}(v) =  \{(x_1,...,x_k,y) \,|\, (x_1,...,x_k) \in \mathcal{S}(v), y\in [v^-,v^+]\}
$$
where $v^- < v^+$ are defined as follows.  
\new{Let the level $\Delta(v)$ of a region $v$ be defined as in Section \ref{subsecUpper2D}.}
We define:
\begin{align*}
v^+ &=
\begin{cases}
a^+_r + \Delta(v) \cdot \varepsilon & \text{if $\Theta\models \ntpp{a_r}{v}$}\\
a^+_{i} & \text{if $\Theta\models \tpp{a_i}{v}$}\\
a^+_{i}+ \Delta(v) \cdot \varepsilon & \text{if $\Theta\models \ntpp{a_i}{v}$, $i<r$ and $\Theta\not\models\p{a_{i+1}}{v}$}\\
\frac{a^-_{1}+a^+_{1}}{2} + \Delta(v) \cdot \varepsilon & \text{if $\Theta\models {a_1}\{\bpo,\btppi,\bntppi\}{v}$}\\
a^-_{1} & \text{if $\Theta\models \ec{a_1}{v}$}\\
\frac{a^-_{i}+a^-_{i-1}}{2} + \Delta(v)\cdot \varepsilon & \text{if $\Theta\models \dc{a_{i-1}}{v}$, $i>1$ and $\Theta\models\ov{a_{i}}{v}$}\\
a^-_{i} & \text{if $\Theta\models \dc{a_{i-1}}{v}$, $i>1$ and $\Theta\models\ec{a_{i}}{v}$}\\
\Delta(v)\cdot \varepsilon & \text{if $\Theta\models \dc{a_{r}}{v}$ and $\Theta\not\models\p{v}{b_{s}}$}\\
b^+_j & \text{if $\Theta\models \tpp{v}{b_j}$}\\
\frac{b^+_j + b^+_{j-1}}{2} + \Delta(v)\cdot \varepsilon & \text{if $\Theta\models \ntpp{v}{b_j}$, $j>1$, $\Theta \not\models \p{v}{b_{j-1}}$}\\
\frac{b^+_1 + b^-_1}{2} + \Delta(v)\cdot \varepsilon & \text{if $\Theta\models \ntpp{v}{b_1}$}\\
\end{cases}\\
v^- &=
\begin{cases}
b^-_s - \Delta(v)\cdot \varepsilon & \text{if $\Theta\models \ntpp{b_s}{v}$}\\
b^-_{j} & \text{if $\Theta\models \tpp{b_j}{v}$}\\
b^-_{j}-\Delta(v)\varepsilon & \text{if $\Theta\models \ntpp{b_j}{v}$, $j<s$ and $\Theta\not\models\p{b_{j+1}}{v}$}\\
\frac{b^-_{1} + b^+_{1}}{2}-\Delta(v)\cdot\varepsilon & \text{if $\Theta\models {b_1}\{\bpo,\btppi,\bntppi\}{v}$ }\\
b^+_{1} & \text{if $\Theta\models \ec{b_1}{v}$}\\
\frac{b^+_{j}+b^+_{j-1}}{2} - \Delta(v)\cdot \varepsilon & \text{if $\Theta\models \dc{b_{j-1}}{v}$, $j>1$ and $\Theta\models\ov{b_{j}}{v}$}\\
b^+_{j} & \text{if $\Theta\models \dc{b_{j-1}}{v}$, $j>1$ and $\Theta\models\ec{b_{j}}{v}$}\\
-\Delta(v)\cdot \varepsilon & \text{if $\Theta\models \dc{b_{s}}{v}$ and $\Theta\not\models\p{v}{a_{r}}$}\\
a^-_i & \text{if $\Theta\models \tpp{v}{a_i}$}\\
\frac{a^-_i + a^-_{i-1}}{2} - \Delta(v)\cdot \varepsilon & \text{if $\Theta\models \ntpp{v}{a_i}$, $i>1$, $\Theta \not\models \p{v}{a_{i-1}}$}\\
\frac{a^-_1 + a^+_1}{2} - \Delta(v)\cdot \varepsilon & \text{if $\Theta\models \ntpp{v}{a_1}$}\\
\end{cases}
\end{align*}
where $\varepsilon>0$ is sufficiently small.
First note that $v^- < v^+$. Indeed, if $\Theta\not\models\p{v}{a_r}$ and $\Theta\not\models\p{v}{b_s}$, it holds that $v^- < 0 < v^+$. Now consider $\Theta\models\p{v}{a_r}$ and let $i$ be the smallest $i$ for which $\Theta\models\p{v}{a_i}$, then we have $a_i^- \leq v^- < \frac{a_i^- + a_{i-1}^- }{2} < v^+ < a_{i-1}^-$ if $i>1$ and $a_1^- \leq v^- < \frac{a_1^- + a_1^+ }{2} < v^+ < a_{1}^+$ otherwise. The case where $\Theta\models\p{v}{b_s}$ is entirely similar.

It is straightforward to verify that the RCC8 relations which hold between $\mathcal{T}(v)$, on the one hand, and $\mathcal{T}(a_1),...,\mathcal{T}(a_r),\mathcal{T}(b_1),...,\mathcal{T}(b_s)$, on the other, are those which are imposed by $\Theta$.  Finally, we need to show that $\mathcal{T}$ is also a solution of $\Theta{\downarrow}V'$, given that $\mathcal{S}$ is a solution of $\Theta{\downarrow}V'$. 
Let $u_1,u_2\in V'$. 
\begin{itemize}
\item The cases where $\Theta\models \dc{u_1}{u_2}$ or $\Theta\models \tpp{u_1}{u_2}$ are trivial. 
\item Assume that $\Theta\models \ntpp{u_1}{u_2}$. By construction it is clear that $\pp{\mathcal{T}(u_1)}{\mathcal{T}(u_2)}$. The fact that $\ntpp{\mathcal{T}(u_1)}{\mathcal{T}(u_2)}$ follows easily from the observation that $\Delta(u_1) < \Delta(u_2)$.
\item Assume that $\Theta\models \ec{u_1}{u_2}$ and let $(x_1,...,x_k) \in \mathcal{S}(u_1)\cap \mathcal{S}(u_2)$. It is clear that $\dr{\mathcal{T}(u_1)}{\mathcal{T}(u_2)}$; we show that $\mathcal{T}(u_1) \cap \mathcal{T}(u_2) \neq \emptyset$:
\begin{itemize}
\item If $\Theta\not\models \p{u_1}{a_r}$, $\Theta\not\models \p{u_1}{b_s}$, $\Theta\not\models \p{u_2}{a_r}$ and $\Theta\not\models \p{u_2}{b_s}$, it holds that $(x_1,...,x_k,0)\in \mathcal{T}(u_1) \cap \mathcal{T}(u_2)$.
\item Suppose that $\Theta\models \tpp{u_1}{a_i}$ and $\Theta\not\models \p{u_1}{a_{i-1}}$ (or $i=1$).  Note that then $u_1^-=a_i^-$ and $u_1^+> \frac{a_i^- + a_{i-1}^-}{2}$ if $i>1$ and $u_1^+> \frac{a_1^- + a_{1}^+}{2}$ otherwise.  Since $\Theta\models \ec{u_1}{u_2}$ it must hold that $\Theta\not\models\dc{u_2}{a_i}$. If $\Theta\models \ec{u_2}{a_i}$, it holds that $u_2^+=a_i^-$ and thus $(x_1,...,x_k,a_i^-) \in \mathcal{T}(u_1) \cap \mathcal{T}(u_2)$. Similarly, if $\Theta\models \ov{u_2}{a_i}$, it holds that $(x_1,...,x_k,\frac{a_i^- +a_{i-1}^-}{2}) \in \mathcal{T}(u_1) \cap \mathcal{T}(u_2)$ if $i>1$ and  $(x_1,...,x_k,\frac{a_1^- +a_1^+}{2}) \in \mathcal{T}(u_1) \cap \mathcal{T}(u_2)$ otherwise.
\item Suppose that $\Theta\models \ntpp{u_1}{a_i}$ and $\Theta\not\models \p{u_1}{a_{i-1}}$ (or $i=1$). Then $\Theta\models \ec{u_1}{u_2}$ implies $\Theta\models \ov{u_2}{a_i}$ and we again find $(x_1,...,x_k,\frac{a_i^- +a_{i-1}^-}{2}) \in \mathcal{T}(u_1) \cap \mathcal{T}(u_2)$ if $i>1$ and  $(x_1,...,x_k,\frac{a_1^- +a_1^+}{2}) \in \mathcal{T}(u_1) \cap \mathcal{T}(u_2)$ otherwise.
\item The case where $\Theta\models \p{u_2}{b_s}$ is entirely analogous.
\end{itemize}
\item Assume that $\Theta\models \po{u_1}{u_2}$ and let $(x_1,...,x_k) \in i(\mathcal{S}(u_1))\cap i(\mathcal{S}(u_2))$. We show that  $i(\mathcal{T}(u_1))\cap i(\mathcal{T}(u_2)) \neq \emptyset$ from which it easily follows that $\po{\mathcal{T}(u_1)}{\mathcal{T}(u_2)}$.
\begin{itemize}
\item If $\Theta\not\models \p{u_1}{a_r}$, $\Theta\not\models \p{u_1}{b_s}$, $\Theta\not\models \p{u_2}{a_r}$ and $\Theta\not\models \p{u_2}{b_s}$, it holds that $(x_1,...,x_k,0)\in i(\mathcal{T}(u_1)) \cap i(\mathcal{T}(u_2))$.
\item Suppose that $\Theta\models \tpp{u_1}{a_i}$ and $\Theta\not\models \p{u_1}{a_{i-1}}$ (or $i=1$).  Since $\Theta\models \po{u_1}{u_2}$ it must hold that $\Theta\models\ov{u_2}{a_i}$. It follows that $(x_1,...,x_k,\frac{a_i^- +a_{i-1}^-}{2}) \in i(\mathcal{T}(u_1)) \cap i(\mathcal{T}(u_2))$ if $i>1$ and  $(x_1,...,x_k,\frac{a_1^- +a_1^+}{2}) \in i(\mathcal{T}(u_1)) \cap i(\mathcal{T}(u_2))$ otherwise.
\item The cases where $\Theta\models \ntpp{u_1}{a_i}$ or $\Theta\models \p{u_2}{b_s}$ are entirely analogous.
\end{itemize}
\end{itemize}

\end{proof}

\begin{example}\label{exExtendNTPPDC}
Consider the consistent atomic network $\Theta$ which contains the following constraints
\begin{align*}
&\ntpp{x}{y} && \ntpp{u}{y} && \ntpp{y}{z} && \ec{x}{u}\\
&\ntpp{a_1}{a_2} && \ntpp{b_1}{b_2} && \dc{a_2}{b_2}\\
&\ec{u}{a_1} && \po{u}{a_2} && \po{u}{b_1}\\
&\ntpp{a_1}{y} && \po{y}{a_2} && \po{y}{b_1}\\
&\ntpp{a_2}{z} && \po{z}{b_1} && \tpp{x}{a_1}
\end{align*}
as well as all constraints which are implied by the above constraints. Figure \ref{figExtendNTPPDC1} shows a possible convex realization in $\mathbb{R}$ of the restricted network $\Theta{\downarrow}\{x,y,z,u\}$. The remaining regions, $a_1,a_2,b_1,b_2$, satisfy the condition of Proposition \ref{propExtendNTPPDC}. By applying the construction from the proof of Proposition \ref{propExtendNTPPDC}, a two-dimensional convex solution of $\Theta$ is obtained, which is shown in Figure \ref{figExtendNTPPDC2}. 
\end{example}
\begin{figure}
\centering
\includegraphics[width=300pt]{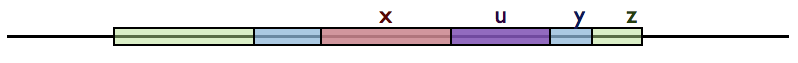}
\caption{Convex realization in $\mathbb{R}$ of the network $\Theta{\downarrow}\{x,y,z,u\}$ from Example \ref{exExtendNTPPDC}. \label{figExtendNTPPDC1}}
\end{figure}
\begin{figure}
\centering
\includegraphics[width=300pt]{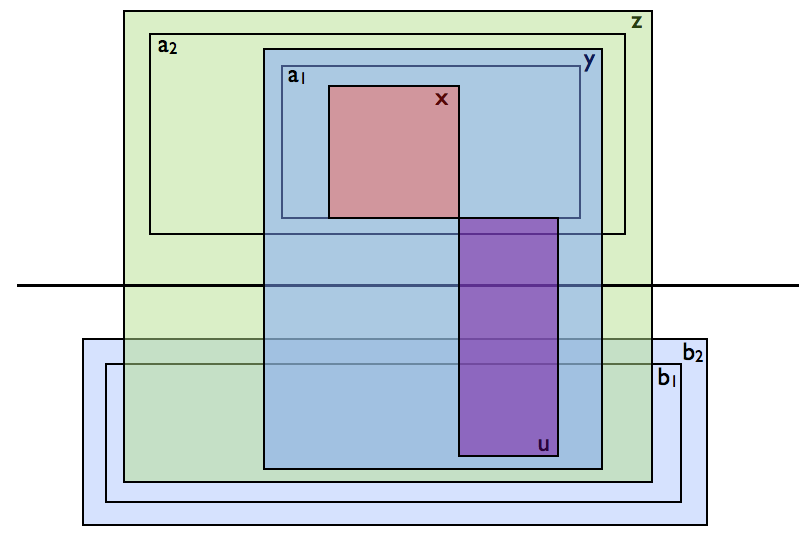}
\caption{Convex realization in $\mathbb{R}^2$ of the network $\Theta$ from Example \ref{exExtendNTPPDC}.\label{figExtendNTPPDC2}}
\end{figure}

Note that in the above proposition, it is possible that $r=0$, or $s=0$, or $r=s=1$. We have the following corollary.
\begin{corollary}\label{corollaryExtendWith2Regions}
Let $\Theta$ be a consistent atomic RCC8 network. If $\Theta{\downarrow}V'$ has a convex solution in $\mathbb{R}^k$ in which every \bo-clique has a common part and $\Theta\models a\{\bdc,\bntpp\} b$, then $\Theta{\downarrow}(V'\cup\{a,b\})$ has a convex solution $\mathbb{R}^{k+1}$ in which every \bo-clique has a common part.
\end{corollary}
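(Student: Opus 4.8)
The plan is to obtain this corollary directly from Proposition~\ref{propExtendNTPPDC}, instantiated with the degenerate parameter choices flagged in the remark immediately preceding it. Since $\Theta$ is atomic and $\Theta\models a\{\bdc,\bntpp\}b$, one of the following two cases holds: either $\Theta\models\dc{a}{b}$, or $\Theta\models\ntpp{a}{b}$. (The possibility that $\Theta$ records the relevant constraint in the $b$-to-$a$ direction reduces to one of these two, since $\bdc^{-1}=\bdc$ and $\ntppi{b}{a}$ is the same statement as $\ntpp{a}{b}$, so no relabelling of $a$ and $b$ is required.)

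First I would treat the case $\Theta\models\dc{a}{b}$ by applying Proposition~\ref{propExtendNTPPDC} with $r=s=1$ and $a_1:=a$, $b_1:=b$. Then both $\bntpp$-chains required in the hypothesis of the proposition are empty, and the only remaining assumption, $\dc{a_r}{b_s}$, is precisely $\dc{a}{b}$, which holds. The proposition therefore furnishes a $(k+1)$-dimensional convex solution of $\Theta{\downarrow}(V'\cup\{a_1,b_1\})=\Theta{\downarrow}(V'\cup\{a,b\})$ in which every $\bo$-clique has a common part, as desired. For the case $\Theta\models\ntpp{a}{b}$ I would instead invoke Proposition~\ref{propExtendNTPPDC} with $r=2$ and $s=0$ and $a_1:=a$, $a_2:=b$: the $a$-side chain collapses to the single constraint $\ntpp{a_1}{a_2}=\ntpp{a}{b}$, the $b$-side is absent, and there is no $\bdc$ requirement, so the proposition again yields a $(k+1)$-dimensional convex solution of $\Theta{\downarrow}(V'\cup\{a,b\})$ with the required $\bo$-clique property. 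Combining the two cases proves the corollary.

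The only delicate point, and hence the main thing to verify, is that the construction inside the proof of Proposition~\ref{propExtendNTPPDC} remains valid at these boundary values. This is routine: when $s=0$ the negative portion of the newly added coordinate (the range occupied by the regions $\mathcal{T}(b_j)$) is simply unused, and the case analysis defining $v^-$ and $v^+$ for $v\in V'$ reduces to its ``$\p{v}{a_r}$ versus otherwise'' branches; when $r=s=1$ the internal relations $\ntpp{A_i}{A_j}$ and $\ntpp{B_i}{B_j}$ are vacuous, while the nonemptiness and full-dimensionality of $A_1$ and $B_1$ still follow from the hypothesis that every $\bo$-clique of $\Theta{\downarrow}V'$ has a common part, exactly as in the general argument. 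I do not anticipate any genuine obstacle here; the entire substance of the result is already contained in Proposition~\ref{propExtendNTPPDC}.
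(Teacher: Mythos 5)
Your proposal is correct and matches the paper's own treatment: the paper derives this corollary directly from Proposition~\ref{propExtendNTPPDC} by remarking that the degenerate parameter choices ($r=0$, $s=0$, or $r=s=1$) are permitted, which is exactly the instantiation you carry out (with $r=s=1$ for the $\bdc$ case and an empty $b$-chain for the $\bntpp$ case). Your additional check that the construction in the proof of the proposition survives these boundary values is a sensible verification of the same point the paper leaves implicit.
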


Moreover,  we can generate a convex solution for any network, using the following corollary.
\begin{corollary}\label{corollaryExtendWith1Region}
Let $\Theta$ be a consistent atomic RCC8 network. If $\Theta{\downarrow}V'$ has a convex solution in $\mathbb{R}^k$ in which every \bo-clique has a common part, then $\Theta{\downarrow}(V'\cup\{a\})$ has a convex solution $\mathbb{R}^{k+1}$ in which every \bo-clique has a common part.
\end{corollary}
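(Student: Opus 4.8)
The plan is to derive this corollary immediately from Proposition~\ref{propExtendNTPPDC}, by specializing that proposition to the degenerate case $r = 1$, $s = 0$. Concretely, I would set $a_1 := a$; then the chain $\ntpp{a_1}{a_2},\ldots,\ntpp{a_{r-1}}{a_r}$ collapses to the empty list, the chain on the $b_j$ is empty, and the constraint $\dc{a_r}{b_s}$ disappears because there is no region $b_s$. Hence every hypothesis of Proposition~\ref{propExtendNTPPDC} about entailed constraints holds trivially, and the sole remaining hypothesis is precisely the one assumed in the corollary: that $\Theta{\downarrow}V'$ has a $k$-dimensional convex solution in which every \bo-clique has a common part. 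The conclusion of Proposition~\ref{propExtendNTPPDC} then gives exactly that $\Theta{\downarrow}(V' \cup \{a\})$ has a $(k+1)$-dimensional convex solution in which every \bo-clique has a common part, which is what we want.

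The one thing I would still need to check is that the construction in the proof of Proposition~\ref{propExtendNTPPDC} behaves sensibly in this degenerate case. With $s = 0$ there are simply no $b$-regions, so $A_{r+1} = A_2$ is still chosen as a convex set strictly containing every $\mathcal{S}(v)$ with $v \in V'$; the region $A_1$ is defined by the same intersection formula, now ranging only over the constraints $\tpp{a}{v}$ and $\ntpp{a}{v}$ for $v \in V'$; and $\mathcal{T}(a)$ is realized as $A_1 \times [a_1^-,a_1^+]$ for a short interval $[a_1^-,a_1^+]$ straddling $0$. Non-emptiness and full dimensionality of $A_1$ follow exactly as in the original proof, since $\{v \in V' : \Theta \models \p{a}{v}\}$ is an \bo-clique in $\Theta{\downarrow}V'$ and therefore has a common part in $\mathcal{S}$. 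The case analysis defining $v^-$ and $v^+$ for $v \in V'$ carries over without change; the clauses mentioning $b_1,\ldots,b_s$ are never triggered, and since $\Theta$ is atomic each $v \in V'$ stands in exactly one RCC8 relation to $a$, each of the possibilities $\bdc$, $\bec$, $\bpo$, $\btpp$, $\btppi$, $\bntpp$, $\bntppi$ being covered by one of the surviving clauses.

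I do not anticipate a genuine obstacle: the argument reduces to confirming that all the boundary cases of the general construction degenerate correctly, which is the same kind of verification already needed for Corollary~\ref{corollaryExtendWith2Regions} (the instance $r = s = 1$ for $\bdc$, and $r = 2$, $s = 0$ for $\bntpp$). The only mild care point is the bookkeeping around $A_1$ when $a$ has no ancestors in $V'$ at all, but even then $A_1 = E_{\delta_1}(A_2)$ is non-empty of dimension $k$ for $\delta_1$ small enough, so nothing breaks.
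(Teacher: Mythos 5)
Your proposal is correct and takes exactly the paper's route: the paper derives this corollary from Proposition~\ref{propExtendNTPPDC} by explicitly noting that the degenerate cases $r=0$ or $s=0$ are permitted, and your instantiation $r=1$, $s=0$ with $a_1 := a$ (empty chains, vacuous $\bdc$ constraint) is precisely that. One tiny inaccuracy in your verification of the degenerate construction: the interval $[a_1^-,a_1^+]$ is chosen strictly positive ($0 < a_1^- < a_1^+$), not straddling $0$ --- it is the intervals assigned to regions $v\in V'$ with $\Theta\not\models\p{v}{a_r}$ and $\Theta\not\models\p{v}{b_s}$ that straddle $0$ --- but this does not affect the validity of the argument.
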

Together with Proposition \ref{propECDCTPPNTPPd}, we obtain the following upper bound.
\begin{corollary}
Let $\Theta$ be a consistent atomic RCC8 network and let $V'\subseteq V$ be such that $\Theta{\downarrow}V'$ does not contain any occurrences of $\bpo$.  It holds that $\Theta$ has a convex solution in $\mathbb{R}^{|V\setminus V'|+4}$.
\end{corollary}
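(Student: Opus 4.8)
The plan is to combine Proposition~\ref{propECDCTPPNTPPd} with the extension machinery of Corollary~\ref{corollaryExtendWith1Region}. First I would note that, by the standing assumptions of this section, $\Theta$ is atomic, consistent and contains no occurrence of $\beq$. Since $\Theta{\downarrow}V'$ is by hypothesis free of $\bpo$ as well, it is a consistent atomic network over $\{\bec,\bdc,\btpp,\btppi,\bntpp,\bntppi\}$ (consistency and atomicity of the restriction are immediate: restricting a solution of $\Theta$ yields a solution of $\Theta{\downarrow}V'$, and every pair of variables in $V'$ still carries a base-relation constraint). Proposition~\ref{propECDCTPPNTPPd} then supplies a convex solution $\mathcal{S}$ of $\Theta{\downarrow}V'$ in $\mathbb{R}^4$.

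The key preparatory step — and essentially the only point requiring argument — is to check that this $4$-dimensional solution already satisfies the hypothesis of Corollary~\ref{corollaryExtendWith1Region}: every $\bo$-clique of $\Theta{\downarrow}V'$ must have a common part in $\mathcal{S}$. Here I would use that $\bo = \{\btpp,\bntpp,\btppi,\bntppi,\bpo,\beq\}$ together with the fact that $\Theta{\downarrow}V'$ is atomic and free of both $\bpo$ and $\beq$: for any two distinct regions $z_1,z_2$ in an $\bo$-clique $Z$, the base relation between them must be one of $\btpp,\bntpp,\btppi,\bntppi$, so one of $\mathcal{S}(z_1),\mathcal{S}(z_2)$ is a proper part of the other. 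Hence $\{\mathcal{S}(z)\mid z\in Z\}$ is finite and linearly ordered by inclusion, its $\subseteq$-least element $\mathcal{S}(z_{\min})$ satisfies $\bigcap_{z\in Z}\mathcal{S}(z)=\mathcal{S}(z_{\min})$, and this region is nonempty and regular closed because $\mathcal{S}$ is a genuine RCC8 solution. Thus the common-part property comes for free in this fragment.

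It then remains to iterate the extension. Writing $V\setminus V'=\{w_1,\dots,w_p\}$ with $p=|V\setminus V'|$ and $V_i=V'\cup\{w_1,\dots,w_i\}$ (so $V_0=V'$ and $V_p=V$), I would apply Corollary~\ref{corollaryExtendWith1Region} once for each $w_i$: starting from the $4$-dimensional solution of $\Theta{\downarrow}V_0$ in which every $\bo$-clique has a common part, the $i$-th application produces a $(4+i)$-dimensional convex solution of $\Theta{\downarrow}V_i$ which again has a common part for every $\bo$-clique. After $p$ steps this is a convex solution of $\Theta{\downarrow}V=\Theta$ in $\mathbb{R}^{4+p}=\mathbb{R}^{|V\setminus V'|+4}$, as claimed.

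I do not anticipate a serious obstacle: Proposition~\ref{propECDCTPPNTPPd} and Corollary~\ref{corollaryExtendWith1Region} carry all of the geometric content, and the only hand-verification needed — the $\bo$-clique common-part property of the base solution — is forced by the absence of $\bpo$ and $\beq$ in $\Theta{\downarrow}V'$, as sketched above. The one thing to be mildly careful about is that each successive invocation of the corollary is legitimate, which it is since $\Theta$ itself is fixed, atomic and consistent throughout, and only the distinguished subset of variables grows by one element at a time.
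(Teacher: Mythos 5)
Your proposal is correct and follows essentially the same route as the paper, which obtains this corollary directly by combining Proposition~\ref{propECDCTPPNTPPd} with repeated application of Corollary~\ref{corollaryExtendWith1Region}. Your explicit verification that every $\bo$-clique of $\Theta{\downarrow}V'$ has a common part (because the absence of $\bpo$ and $\beq$ forces such a clique to be linearly ordered by proper inclusion, so the intersection is its minimal element) is left implicit in the paper but is exactly the right check to make.
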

Similar results can be obtained based on the fragments of RCC8, identified in Figure \ref{overviewRCC8}, which guarantee convex solutions in $\mathbb{R}$, $\mathbb{R}^2$ and $\mathbb{R}^3$.

\subsection{Upper bounds}
We now turn our attention to the problem of deriving an upper bound on the number of required dimensions which only depends on the number of variables. In particular, we will show that a consistent network with at most $2n+1$ regions can be convexly realized in $\mathbb{R}^n$, provided that $n\geq 2$. This result does not hold for $n=1$: although most consistent networks of three variables are convexly realizable on the real line, there are two exceptions, as made explicit in the following lemma.

\begin{lemma}\label{lemma1D3regions}
Let $\Theta$ be a consistent atomic RCC8 network over the set of variables $V$.  If $|V| = 3$ then $\Theta$ has a convex solution in $\mathbb{R}$ unless $\Theta$ is isomorphic to one of the following networks:
\begin{align*}
N_3^1 &= \{\ec{a}{b},\ec{b}{c},\ec{a}{c}\}\\
N_3^2 &= \{\po{a}{b},\ec{b}{c},\ec{a}{c}\}
\end{align*}
\end{lemma}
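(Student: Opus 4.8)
The plan is to reduce the statement to a finite verification, organised by the number $k$ of edges of $\Theta$ carrying the relation $\bec$.

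\emph{Preliminary reductions.} If $\Theta$ contains a constraint of the form $\eq{x}{y}$, then identifying $x$ with $y$ turns $\Theta$ into a two-variable atomic network, and every such network is realisable by two non-degenerate intervals (one checks this directly for each base relation); hence we may assume $\beq$ does not occur, so that every edge carries a relation from $\{\bdc,\bec,\bpo,\btpp,\btppi,\bntpp,\bntppi\}$. I would also record at the outset that $N_3^1$ and $N_3^2$ are consistent: both are path-consistent (for instance $\bec\in\bec\circ\bec$, $\bpo\in\bec\circ\bec$ and $\bec\in\bpo\circ\bec$), and for atomic networks path consistency implies consistency. That neither has a convex solution in $\mathbb{R}$ follows from the interval arithmetic already used in Example \ref{exNot2Dconvex}: if $\mathcal{S}(a),\mathcal{S}(b),\mathcal{S}(c)$ were non-degenerate intervals with (in the case of $N_3^1$) every pair externally connected, then $i(\mathcal{S}(c))$ would be disjoint from $i(\mathcal{S}(a))\cup i(\mathcal{S}(b))$, forcing $\mathcal{S}(c)$ to lie in one of the two unbounded components of $\mathbb{R}\setminus(\mathcal{S}(a)\cup\mathcal{S}(b))$ and hence to be disconnected from one of $\mathcal{S}(a),\mathcal{S}(b)$, contradicting $\bec$; the argument for $N_3^2$ is identical, using that $\mathcal{S}(a)\cup\mathcal{S}(b)$ is again an interval when $\po{\mathcal{S}(a)}{\mathcal{S}(b)}$.

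\emph{The case analysis.} For the converse I would distinguish the four values of $k$. For $k=3$ the only network up to isomorphism is $N_3^1$, already dealt with. For $k=2$, by relabelling we may take the $\bec$-edges to be $ac$ and $bc$; path consistency forces the third label to lie in $\bec\circ\bec=\{\bdc,\bec,\bpo,\btpp,\btppi,\beq\}$, and after discarding $\beq$ and $\bec$ (the latter giving $N_3^1$) only $\bdc,\bpo,\btpp,\btppi$ remain. The choice $\bpo$ gives $N_3^2$; for each of $\bdc,\btpp,\btppi$ I would exhibit intervals explicitly (e.g.\ $\mathcal{S}(c)=[-1,1]$, $\mathcal{S}(a)=[-2,-1]$, $\mathcal{S}(b)=[1,2]$ realises $\{\ec{a}{c},\ec{b}{c},\dc{a}{b}\}$, and similar small configurations handle $\btpp$ and $\btppi$). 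Finally, for $k\le 1$ I claim every consistent network is realisable on the line. Here all edges, except possibly one $\bec$-edge, say $\ec{a}{b}$, carry relations from $\{\bdc,\bpo,\btpp,\btppi,\bntpp,\bntppi\}$; the composition table rules out all but a handful of combinations, and for each surviving one a left-to-right placement works: realise the largest region (with respect to containment) as a wide interval, nest or separate the others inside it, and when the $\bec$-edge is present put $\mathcal{S}(a)$ and $\mathcal{S}(b)$ on the two sides of a common endpoint and use the slack inside or outside $\mathcal{S}(a)\cup\mathcal{S}(b)$ to place $\mathcal{S}(c)$. A convenient simplification is that, away from the $\bec$-edge, a $\btpp$-constraint can be weakened to $\bntpp$ without destroying consistency and re-tightened afterwards by sliding a single endpoint, which reduces most subcases to the fragments $\{\bpo\}$ and $\{\bdc,\bntpp,\bntppi\}$ covered by Propositions \ref{propRealizablePO1d} and \ref{propRealizableDCNTPP1D}.

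\emph{Main obstacle.} The substance of the proof is the bookkeeping in the $k\le 1$ case: one must be certain that none of the surviving label-combinations conceals a further non-realisable network. I expect this to be routine but somewhat laborious; alternatively, it can be discharged by an exhaustive machine check over the finitely many isomorphism classes of consistent three-variable atomic networks.
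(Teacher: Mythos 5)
Your overall strategy is sound and ends up in essentially the same place as the paper: a finite case analysis over three-variable atomic networks, with explicit interval realizations for every consistent class other than $N_3^1$ and $N_3^2$. The organization differs, though. You stratify by the number $k$ of $\bec$-edges, dispatch $k=3$ and $k=2$ by composition with $\bec\circ\bec$, and then claim that every consistent network with $k\leq 1$ is realizable on the line. The paper instead splits on which relations occur: if $\bdc$ or $\bntpp$ is present it invokes Proposition \ref{propExtendNTPPDC} to lift a zero-dimensional realization of the third region to $\mathbb{R}$, then handles the $\btpp$-containing networks and finally the $\{\bec,\bpo\}$-only networks by listing intervals. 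Your $k=2$ analysis is a nice economical way to isolate the two exceptional networks, and your non-realizability arguments for $N_3^1$ and $N_3^2$ are correct.

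Two caveats. First, the ``convenient simplification'' is false as stated: weakening a single $\btpp$-constraint to $\bntpp$ can destroy consistency even with no $\bec$-edge present. For example, $\{\tpp{a}{b},\tpp{b}{c},\tpp{a}{c}\}$ is consistent, but replacing only $\tpp{a}{b}$ by $\ntpp{a}{b}$ forces $\rho_{ac}\in\bntpp\circ\btpp=\{\bntpp\}$, contradicting $\tpp{a}{c}$. What is true (and is what the paper uses in Proposition \ref{propRealizationDCTPPNTPP2d}) is that weakening \emph{all} $\btpp$-constraints simultaneously preserves consistency; and re-tightening afterwards requires choosing which endpoint to slide so as not to disturb an adjacent $\bdc$ or $\bec$ constraint. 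Second, the $k\leq 1$ case is where all the content of the lemma lives, and you defer it to ``routine bookkeeping'' or a machine check; the paper actually carries out this enumeration (about a dozen explicit interval assignments). Neither point is fatal -- the fallback of exhaustive verification does close the argument -- but as written the proof is a correct plan rather than a complete proof, and the one shortcut you propose to avoid the enumeration needs to be repaired before it can be used.
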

\begin{proof}
Let us write $V=\{a,b,c\}$. 
If $\Theta$ contains an instance of the relation $\bdc$ or $\bntpp$ then the result follows immediately from Proposition \ref{propExtendNTPPDC} (e.g.\ if $\Theta\models \dc{a}{b}$, then we can extend a 0-dimensional realization of $c$ to a 1-dimensional convex realization of $\Theta$). Assume that $\Theta$ contains an instance of $\btpp$, e.g.\ $\Theta \models \tpp{a}{b}$. We define the solution of $\Theta$ as follows, only considering the cases where $\Theta$ does not contain any instances of $\bdc$ or $\bntpp$:
\begin{itemize}
\item If $\tpp{c}{a}$ and $\tpp{c}{b}$: $\mathcal{S}(a)=[0,2]$, $\mathcal{S}(b)=[0,3]$ and $\mathcal{S}(c)=[0,1]$.
\item If $\tpp{a}{c}$ and $\tpp{b}{c}$: $\mathcal{S}(a)=[0,1]$, $\mathcal{S}(b)=[0,2]$ and $\mathcal{S}(c)=[0,3]$.
\item If $\tpp{a}{c}$ and $\po{c}{b}$: $\mathcal{S}(a)=[0,1]$, $\mathcal{S}(b)=[-1,1]$ and $\mathcal{S}(c)=[0,2]$.
\item If $\po{c}{a}$ and $\tpp{c}{b}$: $\mathcal{S}(a)=[0,2]$, $\mathcal{S}(b)=[-1,2]$ and $\mathcal{S}(c)=[-1,1]$.
\item If $\po{c}{a}$ and $\po{c}{b}$: $\mathcal{S}(a)=[0,2]$, $\mathcal{S}(b)=[0,3]$ and $\mathcal{S}(c)=[-1,1]$.
\item If $\ec{c}{a}$ and $\tpp{c}{b}$: $\mathcal{S}(a)=[0,1]$, $\mathcal{S}(b)=[0,2]$ and $\mathcal{S}(c)=[1,2]$.
\item If $\ec{c}{a}$ and $\ec{c}{b}$: $\mathcal{S}(a)=[0,1]$, $\mathcal{S}(b)=[0,2]$ and $\mathcal{S}(c)=[-1,0]$.
\item If $\ec{c}{a}$ and $\po{c}{b}$: $\mathcal{S}(a)=[0,1]$, $\mathcal{S}(b)=[0,2]$ and $\mathcal{S}(c)=[1,3]$.
\end{itemize}
If $\Theta$ does not contain any instances of $\bdc$, $\bntpp$ or $\btpp$, then either it is isomorphic to $N_3^1$ or it contains an instance of $\bpo$.  In the latter case, assuming that $\Theta \models \po{a}{b}$, we define the solution of $\Theta$ as follows:
\begin{itemize}
\item If $\po{c}{a}$ and $\po{c}{b}$: $\mathcal{S}(a)=[0,3]$, $\mathcal{S}(b)=[1,4]$ and $\mathcal{S}(c)=[2,5]$.
\item If $\po{c}{a}$ and $\ec{c}{b}$: $\mathcal{S}(a)=[1,3]$, $\mathcal{S}(b)=[2,4]$ and $\mathcal{S}(c)=[0,2]$.
\item If $\ec{c}{a}$ and $\po{c}{b}$: $\mathcal{S}(a)=[2,4]$, $\mathcal{S}(b)=[1,3]$ and $\mathcal{S}(c)=[0,2]$.
\item If $\ec{c}{a}$ and $\ec{c}{b}$ then $\Theta$ is isomorphic to $N_3^2$.
\end{itemize}
\end{proof}

To show that consistent networks with $2n+1$ variables have a convex solution in $\mathbb{R}^n$ we first need a number of technical lemmas. Let $\Theta'$ be the RCC8 network which is obtained from $\Theta$ by replacing every constraint of the form $\tpp{a}{b}$ by $\pp{a}{b}$ and every constraint of the form $\ec{a}{b}$ by $\dr{a}{b}$. We say that $\mathcal{S}$ is a weak solution of $\Theta$ iff $\mathcal{S}$ is a solution of $\Theta'$.

\begin{lemma}\label{lemmaExtendWeak}
Let $\Theta$ be a consistent atomic RCC8 network and assume that $\Theta{\downarrow}V'$ has a convex weak solution in $\mathbb{R}^k$ in which every \bo-clique has a common part. Let $u\in V\setminus V'$ be such that $\Theta{\downarrow}(V' \cup \{u\})$ is a network over $\{\bec,\bpo,\btpp,\btppi\}$ and assume that one of the following conditions is satisfied:
\begin{align}
&\text{for every $v\in V'$ it holds that $\Theta\not\models \tpp{v}{u}$ and $\Theta\not\models \tpp{u}{v}$} \label{eqLemma9A}\\
&\text{for every $v\in V'$ it holds that $\Theta\not\models \ec{v}{u}$ and $\Theta\not\models \tpp{u}{v}$}\label{eqLemma9B}
\end{align}
It holds that $\Theta{\downarrow}(V'\cup \{u\})$ has a convex solution in $\mathbb{R}^{k+1}$ in which every \bo-clique has a common part.
\end{lemma}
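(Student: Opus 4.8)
The plan is to produce the required solution in $\mathbb{R}^{k+1}$ by lifting the given weak solution $\mathcal{S}$ of $\Theta{\downarrow}V'$ while at the same time making room for $u$, treating the two hypotheses (\ref{eqLemma9A}) and (\ref{eqLemma9B}) separately. These two hypotheses correspond to the two qualitatively different roles $u$ can play: under (\ref{eqLemma9A}), $u$ is related to every $v\in V'$ by $\bec$ or $\bpo$, so $u$ is ``external''; under (\ref{eqLemma9B}), $u$ is related to every $v\in V'$ by $\btppi$ or $\bpo$, so $u$ is ``large'', tangentially containing some regions and overlapping the rest.

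The first step is to repair the defects of the weak solution. Since $\Theta{\downarrow}(V'\cup\{u\})$ is a network over $\{\bec,\bpo,\btpp,\btppi\}$, it contains no occurrence of $\bdc$, so the only ways in which $\mathcal{S}$ can fail to be a genuine solution of $\Theta{\downarrow}V'$ are (i) a constraint $\tpp{v}{w}$ realized as $\bntpp$ rather than $\btpp$, and (ii) a constraint $\ec{v}{w}$ realized as $\bdc$ rather than $\bec$. Both are repaired using the extra coordinate: embed $\mathbb{R}^k$ as the hyperplane $x_{k+1}=0$, pick a point $q$ off this hyperplane, and replace each $\mathcal{S}(v)$ by the cone $\ch(\mathcal{S}(v)\cup\{q\})$ (identifying $\mathcal{S}(v)$ with $\mathcal{S}(v)\times\{0\}$). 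Because a homothety centred at the projection of $q$ carries the horizontal cross-sections of these cones onto one another, $\ch(A\cup\{q\})\cap\ch(B\cup\{q\})=\ch((A\cap B)\cup\{q\})$ whenever $A,B\subseteq\mathbb{R}^k$; hence this operation keeps $\bpo$ as $\bpo$, turns every $\bpp$ into $\btpp$ (the apex $q$ is a vertex of the larger cone, so the smaller one is not contained in its interior), and turns every pair disjoint in $\mathcal{S}$ into $\bec$ (the two cones then meet only at $q$), while pairs already touching in $\mathcal{S}$ stay $\bec$. One also checks the \bo-clique property survives, since $\bigcap_{z\in Z}\mathcal{T}(z)=\ch(\bigcap_{z\in Z}\mathcal{S}(z)\cup\{q\})$ is a full-dimensional, hence nonempty regular closed, subset of $\mathbb{R}^{k+1}$.

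It then remains to place $\mathcal{T}(u)$, and I expect this to be the main obstacle. A naive extension of $\mathcal{S}$ by one region inside $\mathbb{R}^k$ is in general impossible: there need not be room for a convex region that reaches the interiors of exactly the $\bpo$-neighbours of $u$ while avoiding the others, so the extra dimension must be used essentially in the placement of $u$, not only for the repair above. This is exactly where hypotheses (\ref{eqLemma9A}) and (\ref{eqLemma9B}) are needed. Under (\ref{eqLemma9A}) I would realize $u$ so that it meets each $\bec$-neighbour only along a $k$-dimensional face (keeping their interiors disjoint) while poking into each $\bpo$-neighbour, using the freedom in the position of $q$ and the shape of $\mathcal{T}(u)$ near $q$. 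Under (\ref{eqLemma9B}) I would realize $u$ as a single large cone that tangentially contains the family of regions $v$ with $\tpp{v}{u}$ (this family is closed downward under $\btpp$ within $V'$, since the fragment excludes $\bntpp$, so its union lifts to one cone and the tangency is again routed through the boundary) and that pokes into the interiors of the $\bpo$-neighbours without containing any of them. In both cases the hypothesis that every \bo-clique of $\Theta{\downarrow}V'$ has a common part is what guarantees that the regions $u$ must simultaneously overlap have a common meeting region, so a single convex $\mathcal{T}(u)$ can realize all the required $\bpo$'s at once; and one finishes by checking that the common-part property still holds for the \bo-cliques of $\Theta{\downarrow}(V'\cup\{u\})$, using that, by construction, $u$ has a common part with each of its $\bo$-neighbours.
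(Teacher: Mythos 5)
Your cone construction for repairing the weak solution is exactly the paper's mechanism (the paper uses $\mathcal{T}(v)=\{(\lambda x_1,\dots,\lambda x_k,\lambda)\mid x\in\mathcal{S}(v),\ \lambda\in[0,\lambda_v]\}$, i.e.\ a cone with apex at the origin, which is your $\ch(\mathcal{S}(v)\cup\{q\})$ up to an affine map), and your identity $\ch(A\cup\{q\})\cap\ch(B\cup\{q\})=\ch((A\cap B)\cup\{q\})$ correctly justifies that $\bdr$ becomes $\bec$, $\bpp$ becomes $\btpp$, $\bpo$ is preserved, and \bo-cliques keep a common part. Your treatment of condition \eqref{eqLemma9B} is also essentially the paper's: $u$ becomes a large cone with the same apex, containing the truncated cones of its $\btppi$-successors tangentially (at the apex) and cut off below the tops of the cones of its $\bpo$-neighbours.

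The gap is in condition \eqref{eqLemma9A}. There you only restate the goal --- ``meet each $\bec$-neighbour along a $k$-dimensional face while poking into each $\bpo$-neighbour'' --- and the one concrete hint you give, namely to exploit ``the freedom in the position of $q$ and the shape of $\mathcal{T}(u)$ near $q$,'' points in the wrong direction. After the repair, \emph{every} region's interior accumulates at the apex $q$, so a convex $u$ placed near $q$ cannot in general keep its interior disjoint from all $\bec$-neighbours while reaching the interiors of all $\bpo$-neighbours and still touching every $\bec$-neighbour. The missing idea is to move $u$ \emph{away} from the apex and to encode the $\bec$/$\bpo$ distinction in the truncation heights of the $v$-cones: take $\lambda_v=1$ when $\Theta\models\ec{u}{v}$ and $\lambda_v=1.5$ when $\Theta\models\po{u}{v}$, and realize $u$ as the frustum $\{(\lambda x_1,\dots,\lambda x_k,\lambda)\mid x\in A,\ \lambda\in[1,2]\}$ over a convex $A$ strictly containing all the $\mathcal{S}(v)$. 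Then the height-$1$ cones touch the bottom face of $u$ without meeting its interior, the height-$1.5$ cones penetrate it (and neither contains the other, since they reach down to the apex while $u$ does not, and $A$ strictly contains each $\mathcal{S}(v)$). Without this (or an equivalent) explicit placement, the case \eqref{eqLemma9A} half of the lemma is not proved.
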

\begin{proof}
Let $\mathcal{S}$ be a $k$-dimensional convex weak solution of $\Theta{\downarrow}V'$. We now define a $(k+1)$ dimensional solution $\mathcal{T}$ of $\Theta{\downarrow}(V' \cup \{u\})$ as follows.

Let us first consider the case where $\Theta\not\models \tpp{v}{u}$ and $\Theta\not\models \tpp{u}{v}$ for all $v\in V'$.
Let $A$ be a non-empty  $k$-dimensional convex region which strictly contains $\mathcal{S}(v)$ for every $v\in V'$. We define:
$$
\mathcal{T}(u) = \{ (\lambda \cdot x_1,...,\lambda \cdot x_{k}, \lambda \,|\, (x_1,...,x_{k}) \in A, \lambda \in [1,2]  \}
$$
and for $v\in V'$:
$$
\mathcal{T}(v) = \{ (\lambda \cdot x_1,...,\lambda \cdot x_{k}, \lambda \,|\, (x_1,...,x_{k}) \in \mathcal{S}(v), \lambda \in [0,\lambda_v]  \}
$$
where $\lambda_v>0$ depends on the relation between $v$ and $u$:
\begin{align*}
\lambda_v = 
\begin{cases}
1 & \text{if $\Theta\models \ec{u}{v}$}\\
1.5 & \text{if $\Theta\models \po{u}{v}$}
\end{cases}
\end{align*}
Given that $(0,...,0)\in \bigcap_{v\in V'} \mathcal{T}(v)$, we then indeed have that $\mathcal{T}$ is a solution of $\Theta{\downarrow}V'$.
It is moreover clear that for each $v\in V'$, the RCC8 relation between $\mathcal{T}(u)$ and $\mathcal{T}(v)$ is the one required by $\Theta$. 

Now consider the case where $\Theta\not\models \ec{v}{u}$ and $\Theta\not\models \tpp{u}{v}$ for any $v$ in $V'$. Let $A$ be defined as before.  We define for all $v$ in $V'$:
\begin{align*}
\mathcal{T}(u) &= \{ (\lambda \cdot x_1,...,\lambda \cdot x_{k}, \lambda \,|\, (x_1,...,x_{k}) \in A, \lambda \in [0,2]  \}\\
\mathcal{T}(v) &= \{ (\lambda \cdot x_1,...,\lambda \cdot x_{k}, \lambda \,|\, (x_1,...,x_{k}) \in \mathcal{S}(v), \lambda \in [0,\lambda_v]  \}\\
\lambda_v &= 
\begin{cases}
1 & \text{if $\Theta\models \tpp{v}{u}$}\\
3 & \text{if $\Theta\models \po{v}{u}$}
\end{cases}
\end{align*}


\end{proof}

\begin{example}\label{exExtendTPPEC}
Consider the consistent atomic network $\Theta$ which contains the following constraints
\begin{align*}
\tpp{x}{y} && \ec{x}{z} && \po{y}{z} && \ec{x}{u} && \po{y}{u} && \po{z}{u} 
\end{align*}
Figure \ref{figExtendTPPEC1} shows a possible convex weak realization in $\mathbb{R}$ of the restricted network $\Theta{\downarrow}\{x,y,z\}$. By applying the construction from the proof of Lemma \ref{lemmaExtendWeak}, a two-dimensional convex solution of $\Theta$ is obtained, which is shown in Figure \ref{figExtendTPPEC2}. 
\end{example}

\begin{figure}[t]
\centering
\includegraphics[width=200pt]{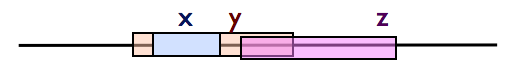}
\caption{Convex weak realization in $\mathbb{R}$ of the network $\Theta{\downarrow}\{x,y,z\}$ from Example \ref{exExtendTPPEC}. \label{figExtendTPPEC1}}
\end{figure}
\begin{figure}[t]
\centering
\includegraphics[width=250pt]{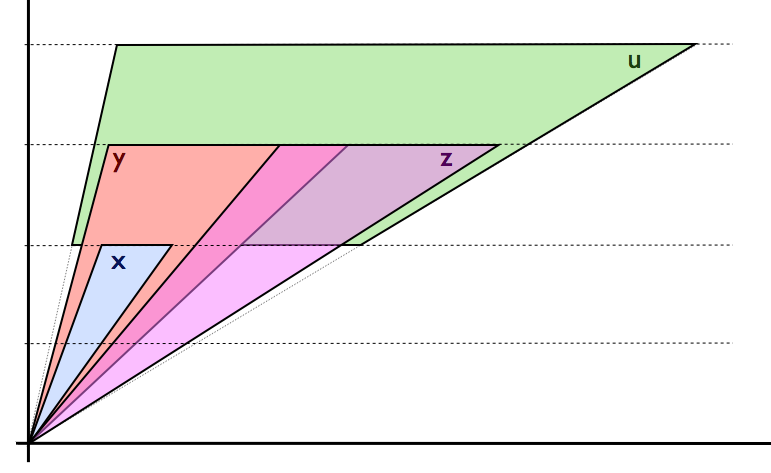}
\caption{Convex realization in $\mathbb{R}^2$ of the network $\Theta$ from Example \ref{exExtendTPPEC}.\label{figExtendTPPEC2}}
\end{figure}

\begin{remark}\label{lemmaExtendWeak-anti}
Let $\Theta$ be a consistent atomic RCC8 network and assume that $\Theta{\downarrow}V'$ is a network over $\{\bec,\bpo,\btpp,\btppi\}$. Then region $u\in V'$ does not satisfy \new{conditions \eqref{eqLemma9A} and \eqref{eqLemma9B}} if and only if one of the following conditions is satisfied:
\begin{itemize}
\item there exists  $v\in V'$ such that $\Theta\models \tpp{u}{v}$;
\item there exist $v_1,v_2\in V'$ such that $\Theta \models \tpp{v_1}{u}$ and $\Theta\models \ec{v_2}{u}$.
\end{itemize}
\end{remark}

\begin{lemma}
\label{lem:3-non}
Let $\Theta$ be an RCC8 constraint network over $\{\bpo,\btpp,\btppi,\bec\}$ that involves four variables. Suppose $\Theta$ is consistent but has no convex weak solution in $\mathbb{R}$. Then $\Theta$ is  isomorphic to one of the following three networks, where an arrow, a solid line, and a dotted line represent, respectively, a \btpp, \bpo, and \bec\ relation.
\interdisplaylinepenalty=10000
\begin{eqnarray*}
\label{eq:3-non-real-networks}
 \xymatrix{
    a \ar@{.}[r] \ar@{->}[d] \ar@{-}[dr]    & b \ar@{-}[d]  \ar@{->}[dl]   \\
    c\ar@{-}[r]  & d }
    &
    \xymatrix{
    a \ar@{-}[d] \ar@{.}[r] \ar@{-}[dr]    & b \ar@{-}[d]  \ar@{-}[dl]   \\
    c\ar@{.}[r]  & d  }
    &
    \xymatrix{
    a \ar@{.}[d] \ar@{.}[r] \ar@{-}[dr]    & b \ar@{-}[d]  \ar@{.}[dl]   \\
    c\ar@{-}[r]  & d }
    \\
    (N_4^1) \hspace*{7mm} & (N_4^2) & \hspace*{7mm} (N_4^3)
\end{eqnarray*}
\interdisplaylinepenalty=2500
\end{lemma}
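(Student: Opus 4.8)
The plan is to treat the statement as what it is --- a claim about the finitely many consistent atomic networks over $\{\bpo,\btpp,\btppi,\bec\}$ on four variables --- and to make the finite case analysis manageable by organising it according to the \emph{external-connection graph} $G$ of $\Theta$, i.e.\ the graph on $\{a,b,c,d\}$ with an edge for each pair constrained by $\bec$. I would show that for all configurations of $G$ except three, $\Theta$ admits an interval realization of its weakening (where $\bec$ becomes ``disjoint interiors'' and $\btpp$ becomes ``proper subinterval''), and that the three exceptional configurations are exactly $N_4^1$, $N_4^2$, $N_4^3$.

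First I would check that $N_4^1,N_4^2,N_4^3$ genuinely have no convex weak solution in $\mathbb{R}$; this confirms the statement is not vacuous and displays the obstruction. For $N_4^3$, three pairwise interior-disjoint intervals on $\mathbb{R}$ necessarily lie in a line with one of them, say $b$, between the other two, and then any interval $d$ whose interior meets both $i(a)$ and $i(c)$ contains $b$, contradicting $\po{b}{d}$. For $N_4^2$, putting $a$ to the left of $b$, any interval meeting $i(a)$ and $i(b)$ has interior containing a point just below $r_a$ and one just above $l_b$; since $c$ and $d$ both do, $i(c)\cap i(d)\neq\emptyset$, contradicting the weakening of $\ec{c}{d}$. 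For $N_4^1$, with $a\subset c$, $b\subset c$ and $a$ left of $b$, the constraints $\po{a}{d}$ and $\po{b}{d}$ force $l_a<l_d$ and $r_d<r_b$ (any other crossing makes $i(d)$ miss $i(a)$ or $i(b)$), hence $l_c\le l_a<l_d$ and $r_d<r_b\le r_c$, i.e.\ $d\subset c$, contradicting $\po{c}{d}$.

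For the converse --- every consistent $\Theta$ not isomorphic to one of the three has a weak interval realization --- I would proceed by the number $k$ of $\bec$-constraints, using as a preliminary fact that every consistent atomic network over $\{\bpo,\btpp,\btppi,\bec\}$ on \emph{three} variables has a weak interval realization (after weakening it is a network over $\{\bpo,\bpp,\bppi,\bdr\}$, for which the elementary constructions in the style of Section \ref{secFragments1D} always succeed). For $k=0$ one picks a $\subseteq$-maximal variable, realizes the other three by this fact, and enlarges the maximal interval so that it properly contains its parts and crosses the rest as required, which is unobstructed when no $\bec$ is present. For $k\in\{4,5,6\}$, $G$ leaves at most two pairs not forced disjoint, and a uniform construction puts those one or two overlapping while pushing the remaining intervals apart (for a disjoint triangle with a pendant one must place the pendant's neighbour at an \emph{end} of the triangle's line rather than its middle). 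The delicate range is $k\in\{1,2,3\}$; here consistency read off the composition table already forces much structure: if a variable is $\btpp$-inside an $\bec$-incident variable it becomes $\bec$-incident to that variable's $\bec$-neighbour, raising $k$. This prunes enough configurations that for $k=2$ with $G$ a perfect matching all four cross pairs must be $\bpo$, giving exactly $N_4^2$; for $k=3$ with $G$ a triangle the fourth variable, being $\btppi$-over or $\bpo$ with each triangle vertex, is realizable unless it is $\bpo$ with all three, which is exactly $N_4^3$ (in the realizable sub-cases one chooses a suitable linear order for the triangle's vertices); and for $k=1$ a short enumeration of how the two non-$\bec$ variables relate to the $\bec$-pair --- each being a proper superinterval of, or $\bpo$ with, each of the two, with one of $\bpo,\btpp,\btppi$ between themselves --- leaves $N_4^1$ as the only pattern in which the ``swallowing'' argument of the previous paragraph cannot be dodged, every other pattern being realized explicitly. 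The remaining graph shapes for $k=2,3$ (a path, a star) are handled by separating the ``free'' variables off to one side.

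The main obstacle is simply the completeness and correctness of this casework: one must use the composition-table consequences accurately to discard impossible configurations, and for every surviving configuration other than the three exceptions actually exhibit an interval realization (it is easy to miss one), while making sure the non-realizability arguments for $N_4^1,N_4^2,N_4^3$ are robust against \emph{every} choice of which variable to delete first and which three-variable solution to start from. A secondary nuisance is the bookkeeping with isomorphism --- comparing configurations only up to the relabellings allowed by the symmetry of $\bpo$ and $\bec$ and the inversion swapping $\btpp$ with $\btppi$ --- so that the three exceptions are pinned down correctly and not double-counted.
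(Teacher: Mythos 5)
Your route is genuinely different from the paper's: the paper proves this lemma by exhaustive machine verification (``we verified this using a computer program''), offering no human-readable argument at all, whereas you give a structured hand analysis organised by the external-connection graph. Your three obstruction arguments are correct and are a real addition to what the paper records: for $N_4^3$ the middle one of the three pairwise interior-disjoint intervals is necessarily swallowed by $d$, killing $\po{b}{d}$; for $N_4^2$ any two intervals whose interiors meet both $i(a)$ and $i(b)$ must both cover the gap between $a$ and $b$, so their interiors meet, killing the weakened $\ec{c}{d}$; for $N_4^1$ the constraints $\po{a}{d}$ and $\po{b}{d}$ force $l_a<l_d$ and $r_d<r_b$ and hence $d\subset c$, killing $\po{c}{d}$. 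These are exactly the certificates of non-realizability the paper never writes down. The trade-off sits in the converse half: your plan (stratify by the number of $\bec$ edges, prune with the composition table, exhibit interval realizations for the survivors) is sound, and the preliminary fact you lean on checks out --- every consistent three-variable network over these relations does have a weak interval realization, since the two exceptional networks $N_3^1,N_3^2$ of Lemma~\ref{lemma1D3regions} become realizable once $\bec$ is weakened to $\bdr$ --- but, as you yourself flag, it remains a plan for a sizeable and error-prone enumeration rather than a completed one, and that enumeration is precisely what the paper's program performs. In short, your approach buys insight and an independently checkable proof of the negative direction at the cost of having to finish (and trust) the manual casework; the paper's approach buys guaranteed completeness at the cost of total opacity.
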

\begin{proof}
This can be checked by case-by-case analysis. We verified this using a computer program. The code of the program is available from the authors.
\end{proof}
We note that each $N_4^i$ $(i=1,2,3)$ contains an \bec\ relation. The following result is clear.
\begin{corollary}
\label{coro:X1=4}
Let $\Theta$ be an RCC8 constraint network over $\{\bpo,\btpp,\btppi\}$ that involves four variables. Suppose $\Theta$ is consistent. Then $\Theta$ has a convex weak solution in $\mathbb{R}$.
\end{corollary}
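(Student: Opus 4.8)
The plan is to read off Corollary~\ref{coro:X1=4} directly from Lemma~\ref{lem:3-non} together with the observation recorded just before it, namely that each of $N_4^1$, $N_4^2$, $N_4^3$ contains an $\bec$-constraint; no further construction is needed. Concretely, I would argue by contradiction. Let $\Theta$ be a consistent RCC8 network over $\{\bpo,\btpp,\btppi\}$ involving four variables, and suppose towards a contradiction that $\Theta$ has no convex weak solution in $\mathbb{R}$. Since $\{\bpo,\btpp,\btppi\}\subseteq\{\bpo,\btpp,\btppi,\bec\}$, the network $\Theta$ is in particular an RCC8 network over $\{\bpo,\btpp,\btppi,\bec\}$; it still involves four variables, is consistent, and has no convex weak solution in $\mathbb{R}$. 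Hence Lemma~\ref{lem:3-non} applies and forces $\Theta$ to be isomorphic to one of $N_4^1$, $N_4^2$, $N_4^3$.

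The second step is to exploit that a network isomorphism is merely a renaming of the variables and leaves the relation attached to each constraint unchanged. Therefore, if $\Theta$ were isomorphic to some $N_4^i$, then $\Theta$ would contain an $\bec$-constraint, because each $N_4^i$ does. This contradicts the hypothesis that $\Theta$ is a network over $\{\bpo,\btpp,\btppi\}$, for which, by definition, every constraint uses only relations from $\{\bpo,\btpp,\btppi\}$. Consequently the assumption is untenable, so $\Theta$ admits a convex weak solution in $\mathbb{R}$, which is exactly the assertion.

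I do not expect any genuine obstacle: the whole combinatorial content is already packed into the finite case analysis of Lemma~\ref{lem:3-non}, and the present corollary only records that deleting $\bec$ from the admissible base relations removes the sole four-variable obstructions to weak interval realizability. The one point I would double-check is the intended scope of ``network over $\mathcal{R}$'': Lemma~\ref{lem:3-non} phrases its classification up to isomorphism with the three fully specified atomic networks $N_4^1$--$N_4^3$, so the statement is most transparent for atomic $\Theta$, and this is precisely the regime in which the corollary is later invoked, since it is applied to a restriction $\Theta{\downarrow}V'$ of an atomic network. Hence the contradiction argument above is all that is required.
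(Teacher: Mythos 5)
Your proposal is correct and matches the paper's own argument exactly: the paper notes that each of $N_4^1$, $N_4^2$, $N_4^3$ contains an $\bec$ relation and then declares the corollary ``clear,'' which is precisely your contradiction via Lemma~\ref{lem:3-non}. Your closing remark about the atomic scope of the lemma is a reasonable reading of the paper's (slightly informal) phrasing and does not change the argument.
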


We need a number of additional lemmas before we can present the main result.

\begin{lemma}\label{lemma5RealizableECPOTPP}
Let $\Theta$ be a consistent atomic RCC8 network over $\{\bec,\bpo,\btpp,\btppi\}$.  If $|V| = 5$ then $\Theta$ has a convex solution in $\mathbb{R}^2$ in which every \bo-clique has a common part.
\end{lemma}
\begin{proof}
The proof is provided in Appendix B.
\end{proof}

\begin{lemma}\label{lemma5RealizableFull}
Let $\Theta$ be a consistent atomic RCC8 network.  If $|V| = 5$ then $\Theta$ has a convex solution in $\mathbb{R}^2$ in which every \bo-clique has a common part.
\end{lemma}
\begin{proof}
The proof is provided in Appendix B.
\end{proof}

\begin{lemma}\label{lem:o+2ec-in2d}
Let $\Theta$ be a consistent atomic RCC8 network over $\{\bec,\bpo,\btpp,\btppi\}$ and $V=\{x,y,c_1,c_2,d_1,d_2\}$. If $\Theta\models \{\ec{c_1}{d_1},\ec{c_2}{d_2}\}$ then $\Theta$ has a convex weak solution in $\mathbb{R}^2$ in which every \bo-clique has a common part.
\end{lemma}
\begin{proof}
The proof is provided in Appendix B.
\end{proof}

\begin{lemma}\label{lem:7regions}
Suppose $\Theta$ is a basic RCC8 network over $\{\bec,\bpo,\btpp,\btppi\}$ and $V=\{x,y,z,a,b,c,d\}$. In addition, assume that $\Theta$ satisfies 
\begin{itemize}
\item $\Theta{\downarrow}\{a,b,c,d\}$ is $N_4^1,N_4^2$ or $N_4^3$;
\item there is no $u$ in $V$ which satisfies the conditions of Lemma \ref{lemmaExtendWeak};
\item $\Theta\models \{\tpp{y}{x},\ec{z}{x},\ec{z}{y}\}$ and  $\Theta\not\models \tpp{x}{v}$ for $v\in\{a,b,c,d\}$.
\end{itemize}
Then $\Theta$ has a convex weak solution in $\mathbb{R}^2$ in which every \bo-clique has a common part.
\end{lemma}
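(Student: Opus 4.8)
The plan is to peel off the variable $x$, realize the remaining six variables with an earlier lemma, and then put $x$ back without leaving the plane.

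First I would note that each of $N_4^1$, $N_4^2$, $N_4^3$ contains the constraint $\ec{a}{b}$. Since in addition $\Theta\models\ec{z}{y}$ and the pairs $\{z,y\}$ and $\{a,b\}$ are disjoint, the restriction $\Theta{\downarrow}\{y,z,a,b,c,d\}$ satisfies the hypotheses of Lemma \ref{lem:o+2ec-in2d} (with $c_1=z$, $d_1=y$, $c_2=a$, $d_2=b$, and $c,d$ in the roles of the two unconstrained variables). Hence $\Theta{\downarrow}\{y,z,a,b,c,d\}$ has a convex weak solution $\mathcal{S}$ in $\mathbb{R}^2$ in which every $\bo$-clique has a common part.

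Now I would re-introduce $x$. Since $\Theta$ is atomic, $\Theta\models\tpp{y}{x}$, and $\Theta\not\models\tpp{x}{v}$ for every $v\in\{a,b,c,d\}$ (and trivially $\Theta\not\models\tpp{x}{y}$ and $\Theta\not\models\tpp{x}{z}$), the base relation between $x$ and any other variable is never of the form ``$x\btpp w$''. Partition $V\setminus\{x\}$ into $P=\{v:\Theta\models\tpp{v}{x}\}$, $O=\{v:\Theta\models\po{x}{v}\}$ and $D=\{v:\Theta\models\ec{x}{v}\}$, so that $y\in P$ and $z\in D$. I would keep $\mathcal{S}$ on the six old variables and set $\mathcal{T}(x)$ to be the convex hull of $\mathcal{S}(y)$, of the regions $\mathcal{S}(v)$ for $v\in P$, of a small ball around a point of $\bigcap_{v\in C}i(\mathcal{S}(v))$ for each maximal $\bo$-clique $C$ of $\Theta{\downarrow}O$, and of one further small ``bump'' pointing away from $\mathcal{S}(z)$ and from the regions in $D$. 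Using the composition table (for instance $\btpp\circ\bec=\{\bdc,\bec\}$ gives $i(\mathcal{S}(y))\cap i(\mathcal{S}(v))=\emptyset$ for $v\in D$, and the entry for $\btpp\circ\bpo$ controls how $\mathcal{S}(y)$ meets each $v\in O$), monotonicity of the interior operator ($\mathcal{S}(v)\subseteq\mathcal{T}(x)$ forces $i(\mathcal{S}(v))\subseteq i(\mathcal{T}(x))$), and the common-part property of $\mathcal{S}$, one verifies that the RCC8 relation between $\mathcal{T}(x)$ and each old region comes out as required and that every $\bo$-clique through $x$ receives a common part (a clique meeting $P\cup\{y\}$ reduces to one of $\mathcal{S}$ by interior monotonicity, while a clique inside $\{x\}\cup O$ is taken care of by the balls).

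The hard part is showing that the convex hull defining $\mathcal{T}(x)$ can be chosen so as to avoid the interiors of $\mathcal{S}(z)$ and of the regions of $D$, while still covering $\mathcal{S}(y)$ and the regions of $P$ and partially overlapping the regions of $O$ — i.e.\ that the bump and the balls for the $\bo$-cliques of $O$ can be inserted without the hull sweeping across a region from which $x$ must stay disjoint, even when $P$ contains two regions that are externally connected to each other. This is where the remaining hypotheses enter: $\Theta{\downarrow}\{a,b,c,d\}$ being $N_4^1$, $N_4^2$ or $N_4^3$ pins down the $\bo$-clique structure on $\{a,b,c,d\}$ (e.g.\ $K_4$ minus an edge for $N_4^1$); the absence of a variable meeting the conditions of Lemma \ref{lemmaExtendWeak} forces, through Remark \ref{lemmaExtendWeak-anti}, each of $a,b,c,d$ to be tangentially contained in some region or to have both a tangential part and an externally connected region; and $\tpp{y}{x}$, $\ec{z}{x}$, $\ec{z}{y}$ constrain (again via composition) how $y$ relates to $a,b,c,d$. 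I expect the argument to be completed by a short case analysis on the relation of $x$ to $a,b,c,d$, choosing in each case a particular weak solution from Lemma \ref{lem:o+2ec-in2d} — notably one in which $\mathcal{S}(y)$ is a small region hugging the boundaries of the other five regions — so that the enlargement of $\mathcal{S}(y)$ into $\mathcal{T}(x)$ is unobstructed.
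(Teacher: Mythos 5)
Your first step is fine: up to relabelling, each of $N_4^1,N_4^2,N_4^3$ contains $\ec{a}{b}$, so together with $\ec{z}{y}$ the restriction $\Theta{\downarrow}\{y,z,a,b,c,d\}$ does fall under Lemma \ref{lem:o+2ec-in2d} and has a convex weak solution in $\mathbb{R}^2$. The gap is the second step: you re-insert $x$ into that \emph{fixed} planar configuration by taking a convex hull of $\mathcal{S}(y)$, the regions contained in $x$, balls in common parts of the $\bo$-cliques of $O$, and a ``bump'', and you yourself flag that the real difficulty is showing this hull can avoid the interiors of $\mathcal{S}(z)$ and of the other regions that $x$ must not overlap. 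That difficulty is the entire content of the lemma, and nothing in your argument addresses it. Lemma \ref{lem:o+2ec-in2d} only asserts the existence of \emph{some} weak solution (one of the fixed pictures $N_6^1$--$N_6^{12}$); it gives you no control over whether, say, $\mathcal{S}(z)$ or a region of $D$ separates $\mathcal{S}(y)$ from a region of $O$ in the plane, in which case any convex set containing $\mathcal{S}(y)$ and meeting the interior of that region of $O$ must sweep across a forbidden interior. Appealing to ``a particular weak solution in which $\mathcal{S}(y)$ hugs the boundaries'' would require re-proving Lemma \ref{lem:o+2ec-in2d} with extra geometric guarantees, which you have not done. Note also that $x$ is precisely a region with both a tangential proper part ($y$) and an externally connected neighbour ($z$), i.e.\ exactly the configuration that Remark \ref{lemmaExtendWeak-anti} identifies as the obstruction to the insertion machinery of Lemma \ref{lemmaExtendWeak} \emph{even when an extra dimension is available}; inserting such a region with \emph{no} extra dimension is not something that can be waved through.

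The paper's proof goes the other way around: it never adds a seventh region to a fixed two-dimensional solution. Instead it keeps $x$ (and $y$, $z$) in a small core, shows that unless one of the chain/pair patterns \eqref{eq:tpp-chain}--\eqref{eq:tpp-pair} occurs one can always locate a $4$- or $5$-element subset containing $x$ and $y$ (such as $\{x,y,z,a,b\}$ or $\{x,y,z,c,d\}$) that admits a convex weak solution in $\mathbb{R}$, and then lifts to $\mathbb{R}^2$ by peeling off the remaining pair with the dimension-raising construction of Proposition \ref{propExtendNTPPDC}. The case analysis over $N_4^1,N_4^2,N_4^3$ and over which of $a,b,c,d$ are maximal is what certifies that such a one-dimensional core always exists. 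To salvage your approach you would have to either carry out a full case analysis verifying planar insertability of $x$ for each of the explicit solutions $N_6^i$, or switch to the paper's decomposition.
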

\begin{proof}
The proof is provided in Appendix B.
\end{proof}

\begin{proposition}\label{propRealizable2nPlus1}
Let $\Theta$ be a consistent atomic RCC8 network over the set of variables $V$.  If $|V| \leq 2n+1$ with $n\geq 2$ then $\Theta$ has a convex solution in $\mathbb{R}^n$.
\end{proposition}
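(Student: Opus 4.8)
The plan is to prove a slightly stronger statement by induction on $n$: every consistent atomic RCC8 network on at most $2n+1$ variables has a convex solution in $\mathbb{R}^n$ in which every $\bo$-clique has a common part. The strengthening is forced on us, since the two extension results we will use to climb up in dimension, Corollary~\ref{corollaryExtendWith2Regions} and Lemma~\ref{lemmaExtendWeak}, both require the solution they are fed to have this property and both reproduce it, and the base-case lemmas already supply it.

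For the base case $n=2$ we have $|V|\le 5$. I would first pad $\Theta$ with extra variables standing in relation $\bdc$ to every other region; since $\bdc$ composed with any base relation yields a relation containing $\bdc$, path consistency is preserved, so the padded network is still a consistent atomic network, now on exactly $5$ variables. Lemma~\ref{lemma5RealizableFull} gives it a convex solution in $\mathbb{R}^2$ with the $\bo$-clique property, and restricting to the original variables keeps both convexity and that property.

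For the inductive step let $n\ge 3$ and $|V|\le 2n+1$. If $|V|\le 2n-1$ the induction hypothesis for $n-1$ already yields a convex solution in $\mathbb{R}^{n-1}\subseteq\mathbb{R}^n$, so assume $|V|\in\{2n,2n+1\}$. The easy subcase is when $\Theta$ uses some relation from $\{\bdc,\bntpp,\bntppi\}$: choose $a,b$ with $\Theta\models a\,R\,b$ for $R\in\{\bdc,\bntpp,\bntppi\}$ (swapping $a,b$ if $R=\bntppi$), apply the induction hypothesis to $\Theta{\downarrow}(V\setminus\{a,b\})$ — which has at most $2n-1=2(n-1)+1$ variables — to obtain a convex solution in $\mathbb{R}^{n-1}$ with the $\bo$-clique property, and invoke Corollary~\ref{corollaryExtendWith2Regions} to lift it to $\mathbb{R}^n$. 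Here two variables are peeled off for the price of one dimension, which is precisely the exchange rate the bound $2n+1$ against $n$ demands.

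The remaining case — $\Theta$ a network over $\{\bec,\bpo,\btpp,\btppi\}$, since $\beq$ is never used — is the hard one. Now the only available tool is Lemma~\ref{lemmaExtendWeak}, which removes a \emph{single} variable $u$ satisfying \eqref{eqLemma9A} or \eqref{eqLemma9B}, turning a weak solution of the remainder into a genuine one at the cost of one dimension; by Remark~\ref{lemmaExtendWeak-anti}, no variable can be removed this way exactly when every region is either a $\btpp$-proper part of some region, or simultaneously contains a $\btpp$-part and is externally connected to some region. When $|V|=2n$ and a removable $u$ exists, the induction hypothesis handles $\Theta{\downarrow}(V\setminus\{u\})$ and Lemma~\ref{lemmaExtendWeak} finishes. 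The genuine obstacles are (i) $|V|=2n+1$, where two variables must be shed at the cost of one dimension and no single lemma does this directly, and (ii) a globally rigid network in the sense above. I would attack both through a structural analysis of the $\btpp$-forest of $\Theta$: rigidity forces the external-connection constraints to sit near the leaves, so that after removing the "$\btpp$-internal" regions one is left with a bounded kernel of at most five, six, or seven variables, realized directly by Lemma~\ref{lemma5RealizableECPOTPP}, Lemma~\ref{lem:o+2ec-in2d}, or Lemma~\ref{lem:7regions} (the obstructing four-variable sub-networks being exactly $N_4^1,N_4^2,N_4^3$ by Lemma~\ref{lem:3-non}, the $\bec$-free pieces by Corollary~\ref{coro:X1=4}), after which the removed regions are reattached one by one by Lemma~\ref{lemmaExtendWeak}. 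The main difficulty, and where essentially all the case work lives, is making the count of variables against dimensions come out to exactly $2n+1$ versus $n$ in every one of these kernels while checking that the $\bo$-clique common-part property survives each extension step.
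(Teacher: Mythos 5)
Your overall architecture matches the paper's up to a point: peel off pairs related by $\bdc$ or $\bntpp$ at a cost of one dimension per pair (the paper collects all such pairs into a set $W$ at once rather than inducting on $n$, but that is cosmetic), reduce to a kernel over $\{\bec,\bpo,\btpp,\btppi\}$, realize a small kernel in $\mathbb{R}$ or $\mathbb{R}^2$ via Lemmas \ref{lemma5RealizableECPOTPP}--\ref{lem:7regions}, and extend. You also correctly identify where the difficulty lies. But the proposal does not actually close the hard case, and the mechanism you sketch for it cannot work: reattaching the regions outside the kernel ``one by one by Lemma \ref{lemmaExtendWeak}'' costs one dimension per region, so starting from a kernel of bounded size in $\mathbb{R}^2$ you end up in roughly $\mathbb{R}^{2n-c}$, not $\mathbb{R}^n$. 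There is also no justification for the claim that rigidity confines the $\bec$ constraints to the leaves of the $\btpp$-forest, nor for the claim that the non-kernel part has bounded size.

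The idea you are missing is the paper's use of \emph{weak} solutions as an intermediate currency. In the $\{\bec,\bpo,\btpp,\btppi\}$ kernel one selects disjoint pairs $(c_i,d_i)$ with $\Theta\models\ec{c_i}{d_i}$; since a weak solution only has to realize $\bec$ as $\bdr$ and $\btpp$ as $\bpp$, these pairs can be treated exactly like $\bdc$ pairs and absorbed two-per-dimension by the construction of Proposition \ref{propExtendNTPPDC}, yielding a weak convex solution of everything except one distinguished region $z$ (or the triple $x,y,z$ in the rigid case). Then Lemma \ref{lemmaExtendWeak} is applied \emph{once}, at the very end of the $\{\bec,\bpo,\btpp,\btppi\}$ stage: its cone construction, in which all regions meet at the apex, simultaneously adds $z$ and upgrades every weakened $\bdr$ back to $\bec$ and every weakened $\bpp$ back to $\btpp$, at the cost of a single dimension. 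This is what restores the two-variables-per-dimension exchange rate, and the residual cases of sizes $0,2,4,\geq 6$ (or $4,\geq 6$ in the rigid case) are exactly what Corollary \ref{coro:X1=4} and Lemmas \ref{lem:o+2ec-in2d} and \ref{lem:7regions} are for. Without this step your count does not reach $2n+1$ versus $n$.
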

\begin{proof}
Suppose $|V|=2n+1$ (the case where $|V|<2n+1$ follows trivially) and $n\geq 2$. 
Let $W=\{a_1,...,a_l,b_1,...,b_l\}$ be such that:
\begin{itemize}
\item for each $i\in \{1,...,l\}$ it holds that $\Theta \models a_i\{\bdc,\bntpp\}b_i$;
\item $\Theta{\downarrow}(V\setminus W)$ is a network over $\{\bec,\bpo,\btpp,\btppi\}$.
\end{itemize}

Note that the number of elements in $V\setminus W$ is odd. If $|V\setminus W|=1$, then $l=n$, and we can start with a zero-dimensional solution of the unique element in $V\setminus W$ and extend it to an $n$-dimensional convex solution of $\Theta$ by repeatedly applying Proposition \ref{propExtendNTPPDC}. If $|V\setminus W| = 3$, we know from Lemma \ref{lemma5RealizableFull} that a 2-dimensional convex solution of $\Theta{\downarrow}((V\setminus W) \cup \{a_1,b_1\})$ exists. This solution can then be extended to a $2+(l-1)$ dimensional convex solution of $\Theta$ by repeatedly applying Proposition \ref{propExtendNTPPDC}, where $2l = 2n+1-3$ and thus $2 + (l-1) = n$. Similarly, if $|V\setminus W| = 5$ it follows from Lemma \ref{lemma5RealizableECPOTPP} that a 2-dimensional convex solution of $\Theta{\downarrow}(V\setminus W)$ exists, which can be extended to an $n$-dimensional convex solution of $\Theta$ by repeated application of Proposition \ref{propExtendNTPPDC}.

Now assume that $|V\setminus W| \geq 7$. First assume that there is a region $z \in V\setminus W$ which satisfies the conditions of Lemma \ref{lemmaExtendWeak}. Then there exists a $U = \{c_1,...,c_k,d_1,...,d_k\} \subseteq V\setminus (W \cup \{z\})$ such that 
\begin{itemize}
\item for each $i\in \{1,...,k\}$ it holds that $\Theta \models \ec{c_i}{d_i}$;
\item $\Theta{\downarrow}(V\setminus (W\cup U \cup\{z\}))$ is a network over $\{\bpo,\btpp,\btppi\}$.
\end{itemize}
Note that $V\setminus(W\cup U \cup \{z\})$ contains an even number of elements. If $|V\setminus(W\cup U \cup \{z\})| \geq 6$ we start with a 2-dimensional convex solution of $\Theta{\downarrow}(V\setminus(W\cup U \cup \{z\}))$, whose existence is guaranteed by \new{Corollary} \ref{propRealizationPOTPP}. This solution is then extended to a $k+2$ dimensional weak convex solution of $\Theta{\downarrow}(V\setminus(W \cup \{z\}))$, by repeatedly applying the method from Proposition \ref{propExtendNTPPDC}, and then extended to a $k+3$ dimensional convex solution of $\Theta{\downarrow}(V\setminus W)$ by applying the method from Lemma \ref{lemmaExtendWeak}. Finally, by repeatedly applying the method from Proposition \ref{propExtendNTPPDC}, we obtain a $k+l+3$ dimensional convex solution of $\Theta$. Note that $2n+1 = |W\cup U \cup \{z\}| + |V\setminus(W\cup U \cup \{z\})| = 2k+2l+1 + |V\setminus(W\cup U \cup \{z\})| \geq 2k+2l+7$, from which we obtain $k+l+3 \leq n$.

If $|V\setminus(W\cup U \cup \{z\})| = 4$, we know by Corollary \ref{coro:X1=4} that $\Theta{\downarrow}(V\setminus(W\cup U \cup \{z\}))$ has a convex weak solution in $\mathbb{R}$. By applying Proposition \ref{propExtendNTPPDC} and Lemma \ref{lemmaExtendWeak}, as before, we can extend this weak solution to a convex solution of $\Theta$ in $\mathbf{R}^{k+l+2}$, with $k+l+2 \leq n$.

Suppose $|V\setminus(W\cup U \cup \{z\})| = 2$, and write $V\setminus(W\cup U \cup \{z\}) = \{x,y\}$. 
Note that $U=\{c_i,d_i: 1\leq i\leq k\}$ for some $k\geq 2$ and we have $\Theta\models\ec{c_i}{d_i}$ for $1\leq i\leq k$. By Lemma~\ref{lem:o+2ec-in2d}, we know $\Theta{\downarrow}\{x,y,c_1,d_1,c_2,d_2\}$ has a convex weak solution in $\mathbb{R}^2$, which can be extended to a convex solution in $\mathbb{R}^{k+l+1}$, using Proposition \ref{propExtendNTPPDC} and Lemma \ref{lemmaExtendWeak} as before.

Suppose $|V\setminus(W\cup U \cup \{z\})| = 0$. 
Then $U=\{c_i,d_i: 1\leq i\leq k\}$ for some $k\geq 3$. By Lemma~\ref{lem:o+2ec-in2d}, we know $\Theta{\downarrow}\{c_1,c_2,c_3,d_1,d_2,d_3\}$ has a convex weak solution in $\mathbb{R}^2$, which can be extended to a convex solution in $\mathbb{R}^{k+l+1}$, using Proposition \ref{propExtendNTPPDC} and Lemma \ref{lemmaExtendWeak} as before.

\vskip 2mm

Now assume that there is no region $z \in V\setminus W$ which satisfies the conditions of Lemma \ref{lemmaExtendWeak}. By \new{Remark}~\ref{lemmaExtendWeak-anti}, this implies that for each $u\in V\setminus W$, we have either $\Theta\models \tpp{u}{v}$ for some $v\in V\setminus W$ or there exist $v_1,v_2$ such that $\Theta\models\{\tpp{v_1}{u},\ec{v_2}{u}\}$. In particular,  there are $x,y,z \in V\setminus W$ such that $\Theta \models \{\tpp{y}{x}, \ec{x}{z}, \ec{y}{z}\}$ 
and $\Theta\not\models \tpp{x}{v}$ for any $v\in V\setminus W$.

If $|V \setminus (W \cup \{x,y,z\})| \geq 6$, we can proceed similarly as before. In particular, we construct a convex weak solution of $V \setminus (W \cup \{x,y,z\})$ in $\mathbb{R}^l$, with $|V \setminus (W \cup \{x,y,z\})| = 2l+2$. Then we extend it to a convex weak solution of $V \setminus W$ in $\mathbb{R}^{l+1}$ using the method from Proposition \ref{propExtendNTPPDC}, and we modify it to a convex solution in $\mathbb{R}^{l+2}$ of $V \setminus W$. Finally, we extend it to a convex solution in $\mathbb{R}^n$ of $\Theta$.

The only remaining case is where $|V \setminus (W \cup \{x,y,z\})| = 4$.  If $V \setminus (W \cup \{x,y,z\})$ is not isomorphic to one of the networks $N_4^1,  N_4^2, N_4^3$, we can construct a convex weak solution in $\mathbb{R}$ and extend it to a convex solution in $\mathbb{R}^n$ of $\Theta$ as before. Let us write $V \setminus (W \cup \{x,y,z\}) = \{a,b,c,d\}$ and assume that $\Theta{\downarrow}\{a,b,c,d\}$ is isomorphic to one of $N_4^1,  N_4^2, N_4^3$. By Lemma~\ref{lem:7regions} we know that $\Theta{\downarrow}(V \setminus W)$ has a convex weak solution in $\mathbb{R}^2$, from which it follows that $\Theta{\downarrow}(V \setminus W)$ has a convex solution in $\mathbb{R}^3$.
\end{proof}

\subsection{Lower bounds}\label{secLowerboundNumberRegions}
\subsubsection*{2 dimensions}
Consider the RCC8 network $\Theta_{2D}$ from Example \ref{exNot2Dconvex}.
Clearly $\Theta_{2D}$ is consistent. However, it does not have any convex realizations in two dimensions. Indeed, if $\mathcal{S}$ were a two-dimensional convex solution, we clearly would have $\dim(\mathcal{S}(a) \cap \mathcal{S}(b)) \leq 1$. If $\dim(\mathcal{S}(a)\cap \mathcal{S}(b))=0$ then $\mathcal{S}(x)$, $\mathcal{S}(y)$, $\mathcal{S}(u)$ and $\mathcal{S}(v)$ could only meet in one point, which means that $\ec{x}{u}$,$\ec{u}{y}$  and $\dc{x}{y}$ could not be jointly satisfied. If $\dim(\mathcal{S}(a)\cap \mathcal{S}(b))=1$ then $\mathcal{S}(x)\cap \mathcal{S}(u)$, $\mathcal{S}(x)\cap \mathcal{S}(v)$, $\mathcal{S}(y)\cap \mathcal{S}(u)$ and $\mathcal{S}(y) \cap \mathcal{S}(v)$ are nonempty and pairwise disjoint. Take four points $P_i$ $(i=1,2,3,4)$ from these sets and suppose $P_{j_1}<P_{j_2}<P_{j_3}<P_{j_4}$.  We note that $P_1$ and $P_2$ are both in $\mathcal{S}(x)$, and $P_3$ and $P_4$ are both in $\mathcal{S}(y)$. Because $x$ and $y$ are disjoint, we know $\{P_1,P_2\}=\{P_{j_1},P_{j_2}\}$ or $\{P_1,P_2\}=\{P_{j_3},P_{j_4}\}$. Similarly, note that $P_1$ and $P_3$ are both in $\mathcal{S}(u)$, and $P_2$ and $P_4$ are both in $\mathcal{S}(v)$. Because $u$ and $v$ are disjoint, we know $\{P_1,P_3\}=\{P_{j_1},P_{j_2}\}$ or $\{P_2,P_4\}=\{P_{j_3},P_{j_4}\}$. This is a contradiction.

\subsection*{3 dimensions}
Consider the network $\Theta_{3D}$ obtained by adding the following constraints to $\Theta_{2D}$:
\begin{align*}
\ec{c}{d} &&
\tpp{x}{c}&&
\tpp{y}{c}&&
\tpp{u}{d}&&
\tpp{v}{d}\\
\po{a}{c}&&
\po{a}{d}&&
\po{b}{c}&&
\po{b}{d}
\end{align*}
To see that $\Theta_{3D}$ is indeed not realizable in three dimensions, we show that $\dim(\mathcal{S}(a) \cap \mathcal{S}(b) \cap \mathcal{S}(c)\cap \mathcal{S}(d)) \leq 1$ for any three-dimensional convex solution $\mathcal{S}$, which leads to a contradiction as in the two-dimensional case.

Let $H_1$ be a hyperplane that separates $\mathcal{S}(a)$ and $\mathcal{S}(b)$, and let $H_2$ be a hyperplane that separates $\mathcal{S}(c)$ and $\mathcal{S}(d)$. This implies that $\mathcal{S}(a)\cap \mathcal{S}(b)\subseteq H_1$ and $\mathcal{S}(c)\cap \mathcal{S}(d)\subseteq H_2$.  All we need to show is that $H_1 \neq H_2$. This is clear, however, because if $H_1=H_2$ then $H_1$ would separate $\mathcal{S}(a)$ from $\mathcal{S}(c)$ or $\mathcal{S}(d)$, which means that $\mathcal{S}(a)$ could not partially overlap with both $\mathcal{S}(c)$ and $\mathcal{S}(d)$.  Thus $\dim(H_1\cap H_2) \leq 1$ which also means $\dim(\mathcal{S}(a) \cap \mathcal{S}(b) \cap \mathcal{S}(c)\cap \mathcal{S}(d)) \leq 1$.

\subsection*{$n$ dimensions}
For any $n\geq 4$ we consider the network $\Theta_{nD}$ obtained by adding the following constraints to $\Theta_{2D}$ for $i \in \{0,...,n-3\}$
\begin{align*}
\tpp{e_i}{a}&&
\ntpp{e_i}{f_i}&&
\tpp{a}{f_i}&&
\ec{g_i}{f_i}\\
\ec{g_i}{a}&&
\tpp{u}{g_i}&&
\tpp{v}{g_i}&&
\ec{e_i}{b}\\
\tpp{g_i}{b}&&
\end{align*}
and the following constraints for $i \in \{1,...,n-3\}$
\begin{align*}
\ec{e_i}{g_{i-1}}&&
\tpp{g_i}{g_{i-1}}
\end{align*}

We show that $\dim(\mathcal{S}(a) \cap \mathcal{S}(g_{n-3})) \leq 1$ for any $n$-dimensional convex solution $\mathcal{S}$, which again leads to a contradiction, and thus that $\Theta_{nD}$ is not realizable by $n$-dimensional convex regions.

Let $G_{i}$ be a hyperplane separating $\mathcal{S}(g_i)$ and $\mathcal{S}(f_i)$ for $i\in \{0,...,n-3\}$, and let $H_1$ be a hyperplane separating $\mathcal{S}(a)$ and $\mathcal{S}(b)$ as before.  We show by induction that $\dim(H_1 \cap G_0 \cap .... \cap G_k) \leq n-k-2$ for every $k\in \{0,...,n-3\}$, from which the stated immediately follows. 


First assume that $k=0$.  It suffices to show that $H_1 \neq G_0$ to show that $\dim(H_1\cap G_0) \leq n-2$.  If $H_1 = G_0$, we would have that $\mathcal{S}(a)\cap \mathcal{S}(b) \subseteq G_0$, and in particular that $G_0$ contains a boundary point of $\mathcal{S}(e_0)$; call this point $P$. However, since $\ntpp{\mathcal{S}(e_0)}{\mathcal{S}(f_0)}$, $P$ would also belong to $\mathcal{S}(f_0)$, and since $G_0$ only contains boundary points of $\mathcal{S}(f_0)$, we would have that $P$ is a boundary point of $\mathcal{S}(f_0)$ as well. This is a contradiction, since $\ntpp{e_0}{f_0}$ means that $\mathcal{S}(e_0)$ can only contain internal points of $\mathcal{S}(f_0)$.

For $k>0$, we show that $H_1 \cap G_0 \cap ... \cap G_{k-1} \not \subseteq G_{k}$ in a similar way.  Suppose $H_1 \cap G_0 \cap ... \cap G_{k-1} \subseteq G_{k}$ did hold.  We have that $H_1,G_0,...,G_{k-1}$ all separate $\mathcal{S}(a)$ from $\mathcal{S}(g_{k-1})$, hence $\mathcal{S}(a) \cap \mathcal{S}(g_{k-1}) \subseteq H_1 \cap G_0 \cap ... \cap G_{k-1} \subseteq G_{k}$.  Since $\mathcal{S}(e_k)\subseteq \mathcal{S}(a)$ and $\mathcal{S}(e_k) \cap \mathcal{S}(g_{k-1}) \neq \emptyset$, there must exist a point $P$ in $\mathcal{S}(e_k) \cap \mathcal{S}(g_{k-1})$ which is thus also in $G_{k}$. Clearly the point $P$ is a boundary point of $\mathcal{S}(e_k)$, and since $\ntpp{e_k}{f_k}$, we have that $P$ is an internal point of $\mathcal{S}(f_k)$. This is a contraction, since $G_{k}$ was assumed to be a hyperplane separating $\mathcal{S}(f_k)$ from $\mathcal{S}(g_k)$.

For any number of dimensions $n$ we can thus find an RCC network which is consistent but cannot be realized by convex $n$-dimensional regions. The counterexamples we have provided for two and three dimensions are optimal, in the sense that they involve $2n+2$ regions, i.e.\ any convex network with fewer regions would necessarily be realizable by convex regions in $n$ dimensions. The counterexample for $n\geq 4$ dimensions, on the other hand, uses $3n$ regions, and the question whether counterexamples with fewer regions exist remains open.

\section{Conclusions }\label{secConcluding}
We have studied how the consistency problem for RCC8 networks is affected by the requirement that all regions need to be convex. Previous results about convexity in RCC8 have been largely negative: for any fixed number of dimensions $k$, deciding whether a consistent RCC8 network $\Theta$ can be realized using convex regions in $\mathbb{R}^k$ is computationally hard \cite{Davis:1999}. In contrast, we have identified a number of important sufficient conditions under which consistent RCC8 networks have convex solutions. First, we have identified all restrictions on the set of base relations that guarantee that consistent atomic networks can be convexly realized in $\mathbb{R}$, $\mathbb{R}^2$, $\mathbb{R}^3$ and $\mathbb{R}^4$. Second, we have shown an upper bound which only depends on the number of regions, i.e.\ every consistent RCC8 network with at most $2n+1$ regions can be convexly realized in $\mathbb{R}^n$ ($n\geq 2$).

Our main motivation in this paper was to justify the use of standard RCC8 reasoners to reason about convex regions in high-dimensional spaces, e.g.\ to support qualitative reasoning about conceptual spaces \cite{gardenfors2001reasoning,dlrcc8}. \new{While existing work on conceptual spaces is mostly based on geometric representations (e.g.\ corresponding to the Voronoi diagram induced by a set of prototypes \cite{gardenfors2001reasoning}), in practice it can be difficult to obtain accurate region-based representations from data, especially for concepts which are relatively rare. In contrast, existing lexical resources such as WordNet\footnote{\url{http://wordnet.princeton.edu}} and ConcepNet\footnote{\url{http://conceptnet5.media.mit.edu}} encode several relations that can be interpreted as qualitative spatial relations between conceptual space representations. For example, the hypernym/hyponym relations encoded in WordNet can be seen as $\bpp$ and from ConceptNet it is sometimes possible to derive $\bec$ relations. For example, ConceptNet encodes\footnote{\url{http://conceptnet5.media.mit.edu/web/c/en/jog}} that ``jogging is a kind of exercise similar to running'' and ``jogging is slower than running'', from which we may derive $\ec{\textit{jogging}}{\textit{running}}$. Similarly, natural language processing methods, e.g.\ based on Hearst patterns \cite{Hearst:1992:AAH:992133.992154}, can be used to learn qualitative semantic relations from natural language sentences. To use such knowledge about the meaning of concepts in applications, we therefore need methods for qualitative reasoning about meaning. The results we have presented in this paper provide the first step towards such applications.}

Future work will build on these results by investigating generalizations of RCC5 and RCC8 which are tailored towards reasoning about concepts. For example, \cite{schockaert2013combining} has focused on adding a betweenness relation to RCC5, which is important for formalizing particular forms of commonsense reasoning \cite{AIJSchockaertPrade2013}. In \cite{Sheremet01062007}, a logic for concepts is introduced in which comparative distance can be expressed (i.e.\ ``$a$ is closer to $b$ than to $c$''), which is important for formalizing categorization, but without guarantees that concepts can be realized as convex regions. An interesting question would then be whether the results from this paper, viz.\ the observation that requiring convexity does not affect consistency if the number of dimensions is sufficiently large, carries over to such more expressive settings.

\section*{Acknowledgments}
We thank the anonymous reviewers for their invaluable comments and detailed suggestions. The work of Sanjiang Li was partially supported by ARC (DP120104159) and NSFC (61228305). The work of Steven Schockaert was partially supported by EPSRC (EP/K021788/1).

\appendix
\section{\new{Complexity of deciding convex realizability in $\mathbb{R}^k$}}

\new{In \cite{Davis:1999} it was shown that deciding whether a set of RCC8 constraints has a convex solution in $\mathbb{R}^2$ is as hard as deciding the consistency of a set of algebraic equations (of similar size) over the real numbers. This latter problem belongs to the complexity class $\exists{R}$, which has been introduced in \cite{Schaefer:2010aa}. The class $\exists{R}$ is known to be in PSPACE and to contain NP, but its exact relationship with the polynomial hierarchy is yet to be determined. Here we show that deciding whether a set of RCC8 constraints has a convex solution in $\mathbb{R}^k$ is $\exists{R}$-complete for any fixed $k\geq 2$. The membership proof follows easily from the membership proof for the 2-dimensional case (cf.\ the remark after Corollary 2 in \cite{Davis:1999}).} 

\new{We show by induction that deciding whether a (not necessarily atomic) consistent RCC8 network over $\{\bpp,\bec\}$ has a convex solution in $\mathbb{R}^{k}$ is $\exists{R}$-hard. The case $k=2$ has been shown in \cite{Davis:1999}. Now suppose that we have already shown this result for $k=n-1$. Let $\Theta$ be any consistent RCC8 network over $\{\bpp,\bec\}$ and let $V$ be the set of variables occurring in $\Theta$. We construct the RCC8 network $\Psi$ by adding to $\Theta$ the following constraints:
\begin{itemize}
\item  $\ec{a}{b}$ (for $a$ and $b$ fresh variables);
\item  $\pp{v_1}{v}$, $\pp{v_2}{v}$, $\pp{v_1}{a}$ and $\pp{v_2}{b}$ for every $v\in V$ (where $v_1$ and $v_2$ are fresh variables);
\item  $\ec{u_1}{v_1}$ and $\ec{u_2}{v_2}$ for every constraint of the form $\ec{u}{v}$ in $\Theta$.
\end{itemize}
We show that $\Psi$ has a convex solution in $\mathbb{R}^n$ iff $\Theta$ has a convex solution in $\mathbb{R}^{n-1}$. First suppose that $\Theta$ has a convex solution $\mathcal{S}$ in $\mathbb{R}^{n-1}$ and let $X = \ch\{\mathcal{S}(v) \,|\, v\in V\}$. Then we can define a convex solution $\mathcal{T}$ of $\Psi$ in $\mathbb{R}^n$ as follows:
\begin{align*}
\mathcal{T}(a) &= \{(x_1,...,x_{n-1},y) \,|\, (x_1,...,x_{n-1}) \in X, y\in [-1,0] \}\\
\mathcal{T}(b) &= \{(x_1,...,x_{n-1},y) \,|\, (x_1,...,x_{n-1}) \in X, y\in [0,1] \}
\end{align*}
and for each $u\in V$
\begin{align*}
\mathcal{T}(u) &= \{(x_1,...,x_{n-1},y) \,|\, (x_1,...,x_{n-1}) \in \mathcal{S}(u), y\in [-1,1] \}\\
\mathcal{T}(u_1) &= \{(x_1,...,x_{n-1},y) \,|\, (x_1,...,x_{n-1}) \in \mathcal{S}(u), y\in [-1,0] \}\\
\mathcal{T}(u_2) &= \{(x_1,...,x_{n-1},y) \,|\, (x_1,...,x_{n-1}) \in \mathcal{S}(u), y\in [0,1] \}
\end{align*}
It is straightforward to verify that $\mathcal{T}$ is indeed a convex solution of $\Psi$.}

\new{Conversely, assume that $\Psi$ has a convex solution $\mathcal{T}$ in $\mathbb{R}^n$. From $\ec{a}{b}$ it follows that there is an $n-1$ dimensional hyperplane $H$ separating $\mathcal{T}(a)$ and $\mathcal{T}(b)$. For $v\in V$, we define:
$$
\mathcal{S}(v) = H\cap \mathcal{T}(v)
$$
Because $\mathcal{T}(v_1)$ and $\mathcal{T}(v_2)$ are at opposite sides of $H$, we find that $H\cap \mathcal{T}(v)$ corresponds to a regular closed region in $\mathbb{R}^{n-1}$. It is also clear from the convexity of $\mathcal{T}(v)$ that $\mathcal{S}(v)$ is convex. Finally, we need to show that $\mathcal{S}$ is a solution of $\Theta$.} 

\new{For a constraint of the form $\ec{u}{v} \in \Theta$, we know that $\Psi$ contains the constraints $\ec{u_1}{v_1}$ and $\ec{u_2}{v_2}$, and we can thus choose points $p\in \mathcal{T}(u_1)\cap \mathcal{T}(v_1)$ and $q\in \mathcal{T}(u_2)\cap \mathcal{T}(v_2)$. The line segment between $p$ and $q$ intersects the hyperplane $H$ at a point $r$. Since $p \in \mathcal{T}(u)\cap \mathcal{T}(v)$ and $q \in \mathcal{T}(u)\cap \mathcal{T}(v)$, because of the convexity of $\mathcal{T}(u)$ and $\mathcal{T}(v)$, we find $r \in \mathcal{T}(u)\cap \mathcal{T}(v)$, and thus $\mathcal{S}(u) \cap \mathcal{S}(v) \neq \emptyset$. Furthermore it is clear that $\mathcal{S}(u)$ and $\mathcal{S}(v)$ cannot overlap, hence we find that $\mathcal{S}$ satisfies the constraint $\ec{u}{v}$.
}

\new{For a constraint of the form $\pp{u}{v} \in \Theta$, it is clear from $\mathcal{T}(u)\subset \mathcal{T}(v)$ that $\mathcal{S}(u)\subseteq \mathcal{S}(v)$. Moreover, we can assume without loss of generality that whenever $\pp{u}{v} \in \Theta$ there is a $z\in V$ such that $\ec{u}{z}$ and $\tpp{z}{v}$ (i.e.\ we can always introduce such variables $z$ without affecting the consistency of $\Theta$). It then easily follows from $\mathcal{S}(u)\subseteq \mathcal{S}(v)$ that $\mathcal{S}(u)\subset \mathcal{S}(v)$.}

\section{Proofs of the lemmas from Section 5.1}

\setcounter{lemma}{10}
\begin{lemma}\label{lemma5RealizableECPOTPP}
Let $\Theta$ be a consistent atomic RCC8 network over $\{\bec,\bpo,\btpp,\btppi\}$.  If $|V| = 5$ then $\Theta$ has a convex solution in $\mathbb{R}^2$ in which every \bo-clique has a common part.
\end{lemma}
\begin{proof}
Let us write $V=\{a,b,c,d,e\}$.
\begin{itemize}
\item First assume that there is no element $u$ among $\{a,b,c,d,e\}$ satisfying the conditions from Lemma 9. By \new{Remark} 2, this implies that
$\Theta$ contains a subnetwork which is isomorphic to one of the following two networks:
\begin{align*}
&
\xymatrix{
    a \ar@{.}[r]  \ar@{.}[dr]    & b \ar@{.}[dl]  \\
    c\ar@{->}[u] \ar@{.}[r]     & d\ar@{->}[u]  }
    &&
 \xymatrix{
    a \ar@{-}[r]  \ar@{.}[dr]    & b \ar@{.}[dl]  \\
    c\ar@{->}[u] \ar@{.}[r]     & d\ar@{->}[u]  }\\
    &\hspace*{5mm} (N_4^4) && \hspace*{5mm} (N_4^5)
\end{align*}
 The network $N_4^4$ can be convexly realized in $\mathbb{R}$, e.g.\ we can define a convex solution $\mathcal{S}$ as:
\begin{align*}
\mathcal{S}(a) &= [0,2] &
\mathcal{S}(b) &= [2,4] &
\mathcal{S}(c) &= [1,2] &
\mathcal{S}(d) &= [2,3]
\end{align*}
If $\Theta$ contains a subnetwork which is isomorphic to $N_4^4$, it  follows from Corollary 15 that $\Theta$ has a convex solution in $\mathbb{R}^2$. Now suppose that $\Theta$ has a subnetwork which is isomorphic to $N_4^5$ but not a subnetwork which is isomorphic to $N_4^4$, and moreover that $e$ is externally connected to or contained in at least one of $a,b,c,d$, and $e$ either contains or is contained in one of $a,b,c,d$ (otherwise $e$ satisfies the conditions of Lemma 9 which we assumed not to be the case). 
We then have that $\Theta$ is isomorphic to one of the following networks:
\begin{align*}
&
\xymatrix@C=1em{
  & e &  \\
 a\ar@{.}[ur]\ar@{-}[rr]\ar@{.}[drr] &  & b \ar@{-}[ul]\ar@{.}[dll]\\
 c\ar@{->}[u] \ar@{.}@/^3pc/[uur] \ar@{.}[rr] & & d\ar@{->}[u] \ar@{->}@/_3pc/[uul]}
 &&
\xymatrix@C=1em{
  & e &  \\
 a\ar@{.}[ur]\ar@{-}[rr]\ar@{.}[drr] &  & b \ar@{<-}[ul]\ar@{.}[dll]\\
 c\ar@{->}[u] \ar@{.}@/^3pc/[uur] \ar@{.}[rr] & & d\ar@{->}[u] \ar@{.}@/_3pc/[uul]}
&&
\xymatrix@C=1em{
  & e &  \\
 a\ar@{.}[ur]\ar@{-}[rr]\ar@{.}[drr] &  & b \ar@{<-}[ul]\ar@{.}[dll]\\
 c\ar@{->}[u] \ar@{.}@/^3pc/[uur] \ar@{.}[rr] & & d\ar@{->}[u] \ar@{-}@/_3pc/[uul]}\\
    &\hspace*{5mm} (N_5^1) && \hspace*{5mm} (N_5^2) && \hspace*{5mm} (N_5^3)\\
&
\xymatrix@C=1em{
  & e &  \\
 a\ar@{-}[ur]\ar@{-}[rr]\ar@{.}[drr] &  & b \ar@{-}[ul]\ar@{.}[dll]\\
 c\ar@{->}[u] \ar@{.}@/^3pc/[uur] \ar@{.}[rr] & & d\ar@{->}[u] \ar@{->}@/_3pc/[uul]}
 &&
\xymatrix@C=1em{
  & e &  \\
 a\ar@{-}[ur]\ar@{-}[rr]\ar@{.}[drr] &  & b \ar@{->}[ul]\ar@{.}[dll]\\
 c\ar@{->}[u] \ar@{.}@/^3pc/[uur] \ar@{.}[rr] & & d\ar@{->}[u] \ar@{->}@/_3pc/[uul]}
 &&
 \xymatrix@C=1em{
  & e &  \\
 a\ar@{-}[ur]\ar@{-}[rr]\ar@{.}[drr] &  & b \ar@{<-}[ul]\ar@{.}[dll]\\
 c\ar@{->}[u] \ar@{.}@/^3pc/[uur] \ar@{.}[rr] & & d\ar@{->}[u] \ar@{->}@/_3pc/[uul]}\\
    &\hspace*{5mm} (N_5^4) && \hspace*{5mm} (N_5^5) && \hspace*{5mm} (N_5^6)\\
 &
 \xymatrix@C=1em{
  & e &  \\
 a\ar@{-}[ur]\ar@{-}[rr]\ar@{.}[drr] &  & b \ar@{<-}[ul]\ar@{.}[dll]\\
 c\ar@{->}[u] \ar@{.}@/^3pc/[uur] \ar@{.}[rr] & & d\ar@{->}[u] \ar@{-}@/_3pc/[uul]}
 &&
 \xymatrix@C=1em{
  & e &  \\
 a\ar@{-}[ur]\ar@{-}[rr]\ar@{.}[drr] &  & b \ar@{<-}[ul]\ar@{.}[dll]\\
 c\ar@{->}[u] \ar@{.}@/^3pc/[uur] \ar@{.}[rr] & & d\ar@{->}[u] \ar@{.}@/_3pc/[uul]}
 &&
 \xymatrix@C=1em{
  & e &  \\
 a\ar@{<-}[ur]\ar@{-}[rr]\ar@{.}[drr] &  & b \ar@{<-}[ul]\ar@{.}[dll]\\
 c\ar@{->}[u] \ar@{.}@/^3pc/[uur] \ar@{.}[rr] & & d\ar@{->}[u] \ar@{.}@/_3pc/[uul]}\\
    &\hspace*{5mm} (N_5^7) && \hspace*{5mm} (N_5^8) && \hspace*{5mm} (N_5^9)
\end{align*}
Each of these networks has a convex solution in $\mathbb{R}^2$, as shown in Figure \ref{figlemma5RealizableECPOTPPA}.
\item Next assume that $V'=\{a,b,c,d\}$ and $u=e$ satisfy one of the conditions from Lemma 9. 
\begin{itemize}
\item If $\Theta{\downarrow}\{a,b,c,d\}$ is not isomorphic to the the networks $N_4^1$, $N_4^2$ and $N_4^3$ from Lemma 10, it follows from Lemma 10 that $\Theta{\downarrow}\{a,b,c,d\}$ has a convex weak solution in $\mathbb{R}$. Using Lemma 9, we then obtain that $\Theta$ has a convex solution in $\mathbb{R}^2$.
\item Suppose $\Theta{\downarrow}\{a,b,c,d\}$ is isomorphic to $N_4^1$. Note that since $e$ satisfies one of the conditions from Lemma 9 that $\Theta\not\models \tpp{e}{c}$. If $\Theta\models \{\tpp{c}{e},\po{d}{e}\}$ then the conditions of Lemma 9 are satisfied for $u=d$ and since $\Theta{\downarrow}\{a,b,c,e\}$ is not isomorphic to any of $N_4^1,N_4^2,N_4^3$ it follows that $\Theta$ has a convex solution.  If $\Theta\models \{\tpp{c}{e},\tpp{d}{e}\}$ then $\Theta$ is isomorphic to the following network:
\begin{align*}
 &\xymatrix@C=1em{
  & e &  \\
 a\ar@{->}[ur]\ar@{.}[rr]\ar@{-}[drr] &  & b \ar@{->}[ul]\ar@{->}[dll]\\
 c\ar@{<-}[u] \ar@{->}@/^3pc/[uur] \ar@{-}[rr] & & d\ar@{-}[u] \ar@{->}@/_3pc/[uul]}\\
 &\hspace*{5mm} (N_5^{10})
\end{align*}
where an arrow, a solid line, and a dotted line again represent a \btpp, \bpo, and \bec\ relation as before.
A convex solution of this network is shown in Figure \ref{figlemma5RealizableECPOTPPA}.

If $\Theta\models \ec{c}{e}$ then $\Theta$ is isomorphic to one of the following networks (considering that $\Theta \not\models \tpp{d}{e}$ because $e$ satisfies the conditions of Lemma 9):
\begin{align*}
&
\xymatrix@C=1em{
  & e &  \\
 a\ar@{.}[ur]\ar@{.}[rr]\ar@{-}[drr] &  & b \ar@{.}[ul]\ar@{->}[dll]\\
 c\ar@{<-}[u] \ar@{.}@/^3pc/[uur] \ar@{-}[rr] & & d\ar@{-}[u] \ar@{-}@/_3pc/[uul]}
 &&
\xymatrix@C=1em{
  & e &  \\
 a\ar@{.}[ur]\ar@{.}[rr]\ar@{-}[drr] &  & b \ar@{.}[ul]\ar@{->}[dll]\\
 c\ar@{<-}[u] \ar@{.}@/^3pc/[uur] \ar@{-}[rr] & & d\ar@{-}[u] \ar@{.}@/_3pc/[uul]}
 \end{align*}
In both cases $\Theta{\downarrow}\{a,b,c,e\}$ is not isomorphic to any of $N_4^1,N_4^2,N_4^3$ from which we find that $\Theta$ has a convex solution in $\mathbb{R}^2$.
Finally, if $\Theta\models \po{c}{e}$, then the conditions of Lemma 9 are also satisfied for $V'=\{a,b,d,e\}$ and $u=c$. 
Moreover, if in addition $\Theta\models \tpp{d}{e}$, then $\Theta{\downarrow}\{a,b,c,e\}$ is not isomorphic to one of the networks $N_4^1, N_4^2, N_4^3$ and $\Theta$ has a convex solution in $\mathbb{R}^2$; otherwise, if $\Theta\models d\{\bpo,\bec\} e$, then the conditions of Lemma~9 are also satisfied for $V'=\{a,b,c,e\}$ and $u=d$. In this case,  if $\Theta{\downarrow}\{a,b,d,e\}$ or $\Theta{\downarrow}\{a,b,c,e\}$ is not isomorphic to one of the networks $N_4^1, N_4^2, N_4^3$ we again find that $\Theta$ has a convex solution in $\mathbb{R}^2$ from Lemmas 10 and 9. Otherwise, it follows that $\Theta$ is isomorphic to the following network:
\begin{align*}
&
\xymatrix@C=1em{
  & e &  \\
 a\ar@{-}[ur]\ar@{.}[rr]\ar@{-}[drr] &  & b \ar@{-}[ul]\ar@{->}[dll]\\
 c\ar@{<-}[u] \ar@{-}@/^3pc/[uur] \ar@{-}[rr] & & d\ar@{-}[u] \ar@{.}@/_3pc/[uul]}\\
  &\hspace*{5mm} (N_5^{11})
 \end{align*}
A convex solution of this network is shown in Figure \ref{figlemma5RealizableECPOTPPA}.


\item Suppose $\Theta{\downarrow}\{a,b,c,d\}$ is isomorphic to $N_4^2$.  First suppose $e$ contains all other regions, i.e.\ $\Theta$ is isomorphic to the following network:
\begin{align*}
&
\xymatrix@C=1em{
  & e &  \\
 a\ar@{->}[ur]\ar@{.}[rr]\ar@{-}[drr] &  & b \ar@{->}[ul]\ar@{-}[dll]\\
 c\ar@{-}[u] \ar@{->}@/^3pc/[uur] \ar@{.}[rr] & & d\ar@{-}[u] \ar@{->}@/_3pc/[uul]}\\
   &\hspace*{5mm} (N_5^{12})
\end{align*}
A convex solution of this network is shown in Figure \ref{figlemma5RealizableECPOTPPA}.

Now suppose that $e$ contains some but not all of the regions $a,b,c,d$. Note that in this case all of $a,b,c,d$ are in relation $\btpp$ or $\bpo$ with $e$.  Assume e.g.\ that $\Theta\models \tpp{a}{e}$. If $\Theta\models \po{b}{e}$ then the conditions of Lemma 9 are satisfied for $u=b$ and $\Theta{\downarrow}\{a,b,e,c\}$ is not isomorphic to any of $N_4^1,N_4^2,N_4^3$ from which we obtain that $\Theta$ has a convex solution in $\mathbb{R}^2$. Therefore assume that $\Theta\models \tpp{b}{e}$. If $\Theta\models \{\tpp{c}{e},\po{d}{e}\}$  then the conditions of Lemma 9 are satisfied for respectively $u=d$ and $\Theta{\downarrow}\{a,b,c,e\}$ is not isomorphic to any of $N_4^1,N_4^2,N_4^3$ which means that $\Theta$ has a convex solution in $\mathbb{R}^2$. The case where $\Theta\models \{\tpp{d}{e},\po{c}{e}\}$ is isomorphic. The only remaining possibility is that $\Theta$ is isomorphic to the following network:
\begin{align*}
&
\xymatrix@C=1em{
  & e &  \\
 a\ar@{->}[ur]\ar@{.}[rr]\ar@{-}[drr] &  & b \ar@{->}[ul]\ar@{-}[dll]\\
 c\ar@{-}[u] \ar@{-}@/^3pc/[uur] \ar@{.}[rr] & & d\ar@{-}[u] \ar@{-}@/_3pc/[uul]}\\
    &\hspace*{5mm} (N_5^{13})
\end{align*}
A convex solution of this network is shown in Figure \ref{figlemma5RealizableECPOTPPA}.

Finally assume that $\Theta$ does not contain any occurrence of $\btpp$. Then any subset of four regions satisfies the conditions for $V'$ from Lemma 9. If at least any sub-network of size four is not isomorphic to the networks $N_4^1$, $N_4^2$ and $N_4^3$ from Lemma 10, then it is clear that $\Theta$ has a convex solution in $\mathbb{R}^2$. If all of these sub-networks are isomorphic to $N_4^1$, $N_4^2$ or $N_4^3$, it follows that $\Theta$ is isomorphic to the following network:
\begin{align*}
&
\xymatrix@C=1em{
  & e &  \\
 a\ar@{.}[ur]\ar@{.}[rr]\ar@{-}[drr] &  & b \ar@{.}[ul]\ar@{-}[dll]\\
 c\ar@{-}[u] \ar@{-}@/^3pc/[uur] \ar@{.}[rr] & & d\ar@{-}[u] \ar@{-}@/_3pc/[uul]}\\
    &\hspace*{5mm} (N_5^{14})
\end{align*}
A convex solution of this network is shown in Figure \ref{figlemma5RealizableECPOTPPA}.

\item Suppose $\Theta{\downarrow}\{a,b,c,d\}$ is isomorphic to $N_4^3$. If $e$ contains all other regions, $\Theta$ is isomorphic to:
\interdisplaylinepenalty=10000
 \begin{align*}
 &
\xymatrix@C=1em{
  & e &  \\
 a\ar@{->}[ur]\ar@{.}[rr]\ar@{-}[drr] &  & b \ar@{->}[ul]\ar@{.}[dll]\\
 c\ar@{.}[u] \ar@{->}@/^3pc/[uur] \ar@{-}[rr] & & d\ar@{-}[u] \ar@{->}@/_3pc/[uul]}\\
    &\hspace*{5mm} (N_5^{15})
\end{align*}
A convex solution of this network is shown in Figure \ref{figlemma5RealizableECPOTPPA}.

Next consider the case where $e$ contains some but not all of the regions $a,b,c,d$. If $\Theta\models \po{d}{e}$ then the conditions of Lemma 9 are satisfied for $u=d$ and since $\Theta{\downarrow}\{a,b,c,e\}$ is not isomorphic to any of $N_4^1,N_4^2,N_4^3$ it follows that $\Theta$ has a convex solution in $\mathbb{R}^2$. If $\Theta\models \tpp{d}{e}$ and e.g.\ $\Theta\models \po{a}{e}$,
the conditions of Lemma 9 are satisfied for $u=a$ and  $\Theta{\downarrow}\{b,c,d,e\}$ is not isomorphic to any of $N_4^1,N_4^2,N_4^3$.

If $\Theta$ does not contain any occurrences of $\btpp$ and each sub-network of size four is isomorphic to the networks $N_4^1$, $N_4^2$ and $N_4^3$ from Lemma~10, then $\Theta$ is isomorphic to the network $N_5^{14}$ which we already considered.
\end{itemize}

\end{itemize}
\end{proof}

\begin{figure}
\subfigure[$N_5^1$]{\includegraphics[width=120pt]{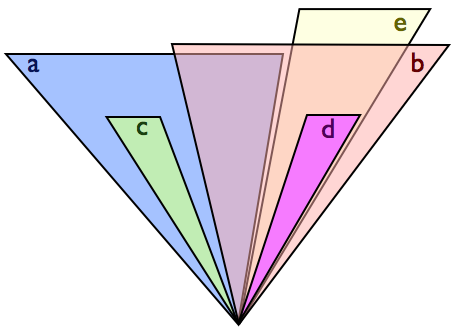}} \hfill
\subfigure[$N_5^2$]{\includegraphics[width=120pt]{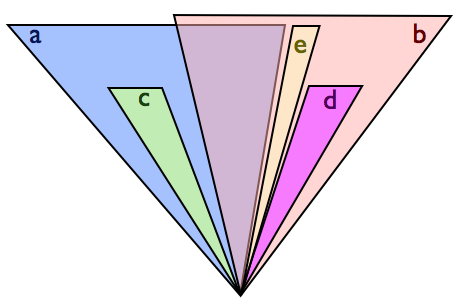}} \hfill
\subfigure[$N_5^3$]{\includegraphics[width=120pt]{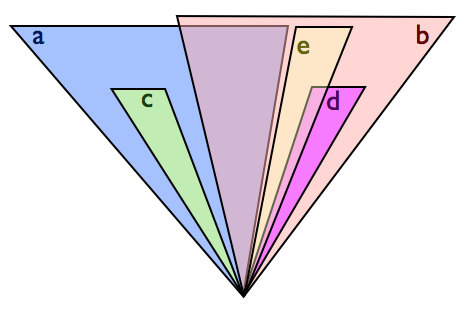}} \\
\subfigure[$N_5^4$]{\includegraphics[width=120pt]{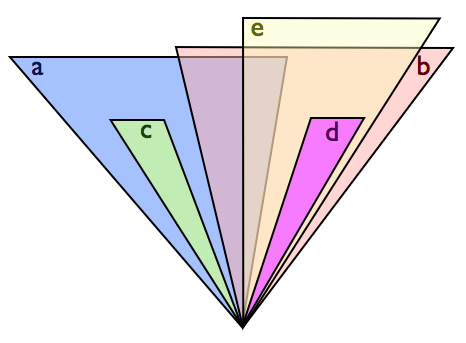}} \hfill
\subfigure[$N_5^5$]{\includegraphics[width=120pt]{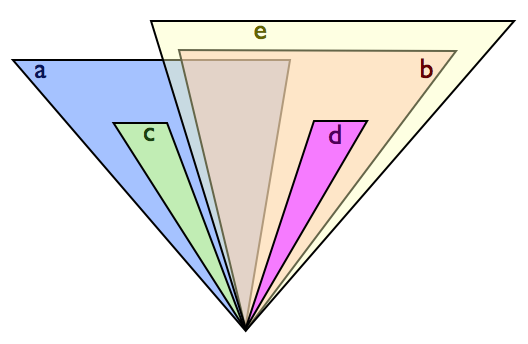}} \hfill
\subfigure[$N_5^6$]{\includegraphics[width=120pt]{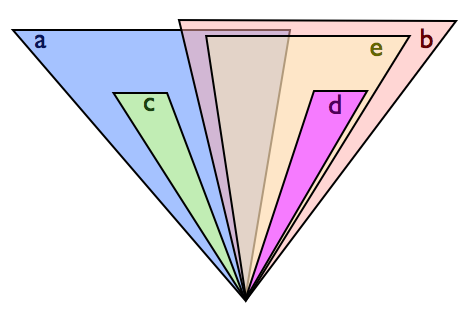}} \\
\subfigure[$N_5^7$]{\includegraphics[width=120pt]{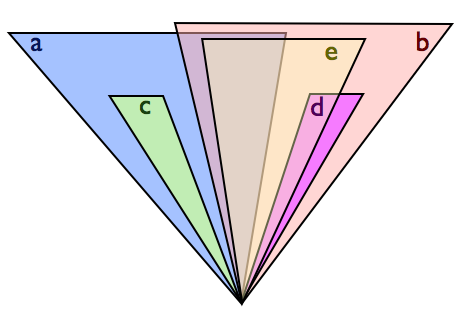}} \hfill
\subfigure[$N_5^8$]{\includegraphics[width=120pt]{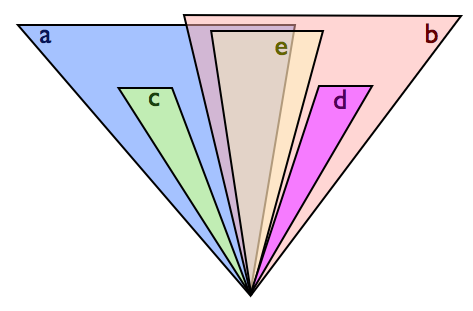}} \hfill
\subfigure[$N_5^9$]{\includegraphics[width=120pt]{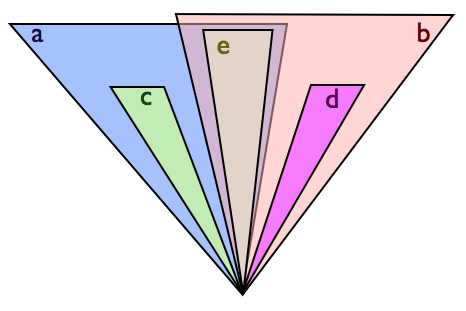}} \\
\subfigure[$N_5^{10}$]{\includegraphics[width=100pt]{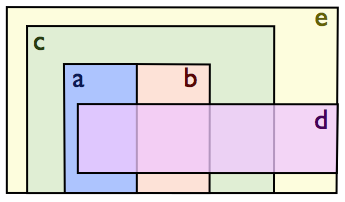}} \hfill
\subfigure[$N_5^{11}$]{\includegraphics[width=100pt]{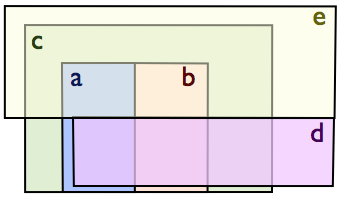}} \hfill
\subfigure[$N_5^{12}$]{\includegraphics[width=100pt]{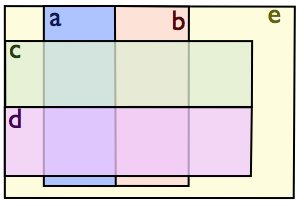}} \\
\subfigure[$N_5^{13}$]{\includegraphics[width=100pt]{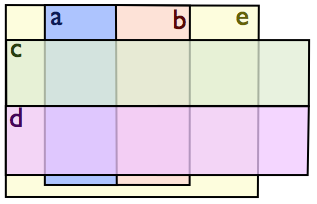}} \hfill
\subfigure[$N_5^{14}$]{\includegraphics[width=100pt]{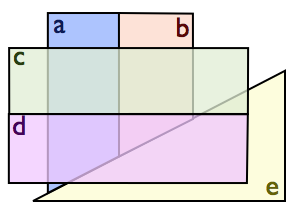}} \hfill
\subfigure[$N_5^{15}$]{\includegraphics[width=100pt]{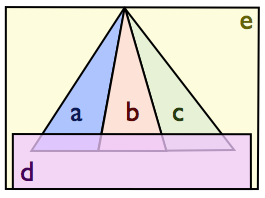}} 
\caption{Convex solutions of the networks $N_5^i$ from the proof of Lemma \ref{lemma5RealizableECPOTPP}. \label{figlemma5RealizableECPOTPPA}}
\end{figure}

\begin{lemma}\label{lemma5RealizableFull}
Let $\Theta$ be a consistent atomic RCC8 network.  If $|V| = 5$ then $\Theta$ has a convex solution in $\mathbb{R}^2$ in which every \bo-clique has a common part.
\end{lemma}
\begin{proof}
Let $V=\{a,b,c,d,e\}$.
If $\Theta$ does not contain any occurrences of $\bntpp$ or $\bdc$, the result follows directly from Lemma \ref{lemma5RealizableECPOTPP}.  Therefore assume that $\Theta \models d\{\bdc,\bntpp\}e$.  
\begin{itemize}
\item If $\Theta{\downarrow}\{a,b,c\}$ is convexly realizable in $\mathbb{R}$, it follows from Proposition 20 that $\Theta$ has a convex solution in $\mathbb{R}^2$.
\item Suppose $\Theta{\downarrow}\{a,b,c\}$ is isomorphic to the network $N_3^1$ from Lemma 8. 
\begin{itemize}
\item First assume that $\Theta\not\models \pp{x}{y}$ for any $x \in \{d,e\}$ and $y\in \{a,b,c\}$.
Let $\Theta'$ be obtained from $\Theta$ by weakening the constraints $\ec{a}{b}$, $\ec{b}{c}$ and $\ec{a}{c}$ to $\dr{a}{b}$, $\dr{b}{c}$ and $\dr{a}{c}$.  Let $\mathcal{S}'$ be an arbitrary convex solution in $\mathbb{R}$ of $\Theta'{\downarrow}\{a,b,c\}$ and let $\mathcal{T}'$ be the resulting convex solution in $\mathbb{R}^2$ of $\Theta'$ which is obtained by applying Proposition 20. Let $a^-,b^-,c^-,d^-,e^-,a^+,b^+,c^+,d^+,e^+$ be defined as in the proof of Proposition 20, then we have e.g.\ $\mathcal{T}'(a) = \{(x,y) \,|\, x\in \mathcal{S}'(a), y \in [a^-,a^+]\}$ and similar for $b$, $c$, $d$ and $e$. A property of the construction from the proof of Proposition 20 is that the values of $a^-,b^-,c^-,a^+,b^+,c^+$ do not depend on $\mathcal{S}'$; they only depend on the choice of $d^-,d^+,e^-,e^+$.  Assume  $a^- \leq b^- \leq c^-$ (the other cases are isomorphic).  We now define a convex solution $\mathcal{T}$ of $\Theta$ as follows: 
\begin{align*}
\mathcal{T}(a) &= \{(x,y) \,|\, x\in [-1,0], y\in [a^-,a^+] \}\\
\mathcal{T}(b) &= \{(x \cdot (\lambda-b^-),\lambda) \,|\, \lambda \in [b^-,b^+], x\in [1,2] \}\\
\mathcal{T}(c) &= \{(x \cdot (\lambda-b^-),\lambda) \,|\, \lambda \in [c^-,c^+], x\in [0,1] \}\\
\mathcal{T}(d) &= \{(x,y) \,|\, x\in [-2, \max(2(b^+-b^-),c^+-b^-) + 1], y\in [d^-,d^+] \}\\
\mathcal{T}(e) &= \{(x,y) \,|\, x\in [-3, \max(2(b^+-b^-),c^+-b^-) + 2], y\in [e^-,e^+] \}
\end{align*}
It is easy to verify that $\mathcal{T}$ is indeed a solution of $\Theta$. As an illustration of this procedure, consider the following network:
 \begin{align*}
 &
\xymatrix@C=1em{
  & a &  \\
 b\ar@{.}[ur]\ar@{.}[rr]\ar@{-}[drr] &  & c \ar@{.}[ul]\ar@{-}[dll]\\
 d\ar@{.}[u]  & & e\ar@{.}[u] \ar@{<-}@/_3pc/[uul]}\\
    &\hspace*{5mm} (N_5^{16})
\end{align*}
where the absence of an arrow denotes $\bdc$ and a single arrow, solid line and dotted line denote $\btpp$, $\bpo$ and $\bec$ as before. The convex solution in $\mathbb{R}^2$ which is obtained by applying the above procedure is shown in Figure \ref{figlemma5RealizableFullA}.

\item Now assume that $\Theta \models \pp{e}{a}$.  The main idea is to use a procedure similar to the method from Proposition 20 to extend a convex solution in $\mathbb{R}$ of $\Theta{\downarrow}\{b,c\}$ to a convex solution in $\mathbb{R}^2$ of $\Theta$. In fact, if $\ntpp{d}{a}$ and $\ntpp{a}{e}$ both hold, or $\dc{d}{e}$ and $\ntpp{a}{d}$, we can simply apply the method from Proposition 20. However, we may also have $\tpp{d}{a}$, $\tpp{e}{a}$, $\tpp{a}{e}$ or $\po{a}{e}$, but these cases can be handled in a similar way. As an example, in Figure \ref{figlemma5RealizableFullA} a convex solution is shown of the following network:
 \begin{align*}
 &
\xymatrix@C=1em{
  & a &  \\
 b\ar@{.}[ur]\ar@{.}[rr]\ar@{-}[drr] &  & c \ar@{.}[ul]\\
 d\ar@{.}[u] \ar@{->}@/^3pc/[uur] \ar@{=>}[rr] & & e\ar@{.}[u] \ar@{<-}@/_3pc/[uul]}\\
    &\hspace*{5mm} (N_5^{17})
\end{align*}
where a double arrow denotes $\bntpp$.
\end{itemize}
\item Suppose $\Theta{\downarrow}\{a,b,c\}$ is isomorphic to the network $N_3^2$ from Lemma 8. 
\begin{itemize}
\item First assume that $\Theta\not\models \pp{x}{y}$ for any $x \in \{d,e\}$ and $y\in \{a,b,c\}$. Let $\Theta'$ be defined as before and let again $a^-,b^-,c^-,d^-,e^-,a^+,b^+,c^+,d^+,e^+$ be the bounds obtained from the solution $\mathcal{T}'$ of $\Theta'$.  A property of the bounds $a^-,a^+,c^-,c^+$ is that $[a^-,a^+]\cap [c^-,c^+]\neq \emptyset$. Let $l \in [a^-,a^+]\cap [c^-,c^+]$. We define a convex solution $\mathcal{T}'$ in $\mathbb{R}^2$ as follows:
\begin{align*}
\mathcal{T}(a) &= \ch( \{(x,y) \,|\, x\in [0,2], y\in [a^-,a^+] \} \cup \{(3,l)\}) \\
\mathcal{T}(b) &= \{(x,y) \,|\, x\in [1,3], y\in [b^-,b^+] \}\\
\mathcal{T}(c) &= \{(x,y) \,|\, x\in [3,4], y\in [c^-,c^+] \}\\
\mathcal{T}(d) &= \{(x,y) \,|\, x\in [-1,5], y\in [d^-,d^+] \}\\
\mathcal{T}(e) &= \{(x,y) \,|\, x\in [-2,6], y\in [e^-,e^+] \}
\end{align*}
\item The cases where $\Theta\models\pp{d}{c}$, and the cases where $\Theta\models\pp{d}{a}$, $\Theta\not\models\pp{d}{b}$ and $\Theta\not\models\pp{e}{b}$, are similar to the network $N_5^{17}$ which we already considered. We consider the following networks:

\interdisplaylinepenalty=10000
\begin{align*}
 &
\xymatrix@C=1em{
  & a &  \\
 b\ar@{-}[ur]\ar@{.}[rr]\ar@{<-}[drr] &  & c \ar@{.}[ul]\\
 d\ar@{=>}[u] \ar@{=>}@/^3pc/[uur] \ar@{=>}[rr] & & e\ar@{.}[u] \ar@{->}@/_3pc/[uul]}
 &&
\xymatrix@C=1em{
  & a &  \\
 b\ar@{-}[ur]\ar@{.}[rr]\ar@{->}[drr] &  & c \ar@{.}[ul]\\
 d\ar@{=>}[u] \ar@{->}@/^3pc/[uur] \ar@{=>}[rr] & & e\ar@{-}[u] \ar@{-}@/_3pc/[uul]} 
 &&
\xymatrix@C=1em{
  & a &  \\
 b\ar@{-}[ur]\ar@{.}[rr]\ar@{<-}[drr] &  & c \ar@{.}[ul]\\
 d\ar@{=>}[u] \ar@{->}@/^3pc/[uur]  & & e \ar@{=>}@/_3pc/[uul]} 
 \\
    &\hspace*{5mm} (N_5^{18})
    &&\hspace*{5mm} (N_5^{19})
        &&\hspace*{5mm} (N_5^{20})
\end{align*}
\interdisplaylinepenalty=2500

A convex solution of these networks is shown in Figure \ref{figlemma5RealizableFullA}. The remaining cases are analogous.
\end{itemize}
\end{itemize}
\end{proof}

\begin{figure}
\subfigure[$N_5^{16}$]{\includegraphics[width=100pt]{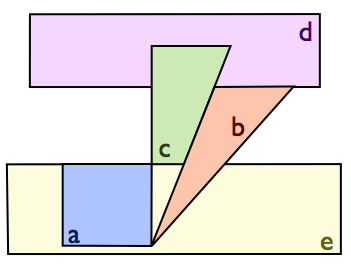}} \hfill
\subfigure[$N_5^{17}$]{\includegraphics[width=100pt]{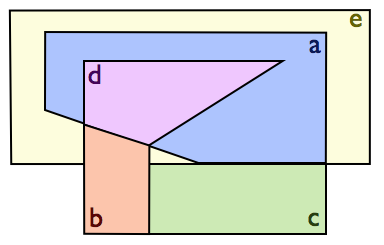}} \hfill
\subfigure[$N_5^{18}$]{\includegraphics[width=100pt]{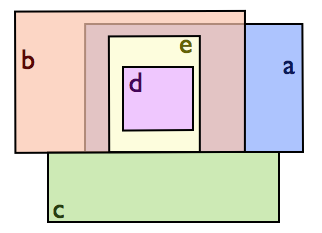}} \\
\subfigure[$N_5^{19}$]{\includegraphics[width=100pt]{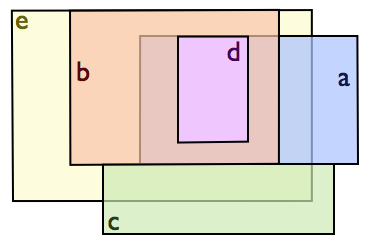}} \hfill
\subfigure[$N_5^{20}$]{\includegraphics[width=100pt]{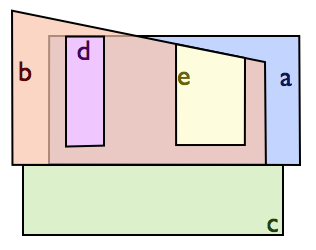}} \hfill
\makebox[100pt]{}
\caption{Convex solutions of the networks $N_5^i$ from the proof of Lemma \ref{lemma5RealizableFull}. \label{figlemma5RealizableFullA}}
\end{figure}

\begin{lemma}\label{lem:o+2ec-in2d}
Let $\Theta$ be a consistent atomic RCC8 network over $\{\bec,\bpo,\btpp,\btppi\}$ and $V=\{x,y,c_1,c_2,d_1,d_2\}$. If $\Theta\models \{\ec{c_1}{d_1},\ec{c_2}{d_2}\}$ then $\Theta$ has a convex weak solution in $\mathbb{R}^2$ in which every \bo-clique has a common part.
\end{lemma}
\begin{proof}
Suppose $\Theta\not\models \ec{x}{y}$. 
If there is an $i$ such that $\Theta{\downarrow}\{x,y,c_i,d_i\}$ is not isomorphic to the networks $N_4^1$ and $N_4^3$ from Lemma 10 (noting that $\Theta{\downarrow}\{x,y,c_i,d_i\}$ cannot be isomorphic to $N_4^2$), it holds that $\Theta{\downarrow}\{x,y,c_i,d_i\}$ has a convex weak solution in $\mathbb{R}$, which can be extended to a convex weak solution in $\mathbb{R}^2$ using Proposition 20. Therefore we assume that for every choice of $i$, it holds that $\Theta{\downarrow}\{x,y,c_i,d_i\}$ is isomorphic to $N_4^1$ or $N_4^3$. This implies in particular that $\Theta\models \po{x}{y}$, as otherwise $\Theta{\downarrow}\{x,y,c_i,d_i\}$ will be not isomorphic to $N_4^1,N_4^2$ and $N_4^3$. Furthermore, if $\Theta \models \ec{c_i}{d_j}$ we can assume that $\Theta{\downarrow}(\{x,y,c_1,c_2,d_1,d_2\} \setminus \{c_i,d_j\})$ is not isomorphic to $N_4^1,N_4^2$ and $N_4^3$ because otherwise we could again obtain a convex weak solution in $\mathbb{R}^2$ by using Proposition 20.
Under these assumptions, we can show that  $\Theta$ is isomorphic to one of the following networks:
\interdisplaylinepenalty=10000
\begin{align*}
&
 \xymatrix@C=1.5em{
  & c_1 \ar@{.}[r]\ar@{-}[dd]\ar@{-}[ddr]  & d_1\ar@{-}[dd]\ar@{-}[ddl] &\\
   x\ar@{-}[rrr] \ar@{<-}[ur] \ar@{<-}[dr] \ar@{<-}[urr]\ar@{<-}[drr]  & &		& y\ar@{-}[ull]\ar@{-}[ul]\ar@{-}[dll]\ar@{-}[dl] \\
  & c_2\ar@{.}[r]   & d_2 &}
  &&
 \xymatrix@C=1.5em{
  & c_1 \ar@{.}[r]\ar@{.}[dd]\ar@{-}[ddr]  & d_1\ar@{.}[dd]\ar@{-}[ddl] &\\
   x\ar@{-}[rrr] \ar@{<-}[ur] \ar@{<-}[dr] \ar@{<-}[urr]\ar@{<-}[drr]  & &		& y\ar@{-}[ull]\ar@{-}[ul]\ar@{-}[dll]\ar@{-}[dl] \\
  & c_2\ar@{.}[r]   & d_2 &} 
  &&
 \xymatrix@C=1.5em{
  & c_1 \ar@{.}[r]\ar@{.}[dd]\ar@{.}[ddr]  & d_1\ar@{.}[dd]\ar@{.}[ddl] &\\
   x\ar@{-}[rrr] \ar@{<-}[ur] \ar@{<-}[dr] \ar@{<-}[urr]\ar@{<-}[drr]  & &		& y\ar@{-}[ull]\ar@{-}[ul]\ar@{-}[dll]\ar@{-}[dl] \\
  & c_2\ar@{.}[r]   & d_2 &}\\
 &\hspace*{15mm} (N_6^1) && \hspace*{15mm} (N_6^2) && \hspace*{15mm} (N_6^3) \allowdisplaybreaks\\ 
  &
 \xymatrix@C=1.5em{
  & c_1 \ar@{.}[r]\ar@{-}[dd]\ar@{-}[ddr]  & d_1\ar@{-}[dd]\ar@{-}[ddl] &\\
   x\ar@{-}[rrr] \ar@{<-}[ur] \ar@{-}[dr] \ar@{<-}[urr]\ar@{-}[drr]  & &		& y\ar@{-}[ull]\ar@{-}[ul]\ar@{<-}[dll]\ar@{<-}[dl] \\
  & c_2\ar@{.}[r]   & d_2 &}
  %
  %
  %
  %
  &&
 \xymatrix@C=1.5em{
  & c_1 \ar@{.}[r]\ar@{-}[dd]\ar@{-}[ddr]  & d_1\ar@{-}[dd]\ar@{-}[ddl] &\\
   x\ar@{-}[rrr] \ar@{<-}[ur] \ar@{-}[dr] \ar@{<-}[urr]\ar@{-}[drr]  & &		& y\ar@{-}[ull]\ar@{-}[ul]\ar@{.}[dll]\ar@{.}[dl] \\
  & c_2\ar@{.}[r]   & d_2 &} 
  &&
  \xymatrix@C=1.5em{
  & c_1 \ar@{.}[r]\ar@{-}[dd]\ar@{-}[ddr]  & d_1\ar@{-}[dd]\ar@{-}[ddl] &\\
   x\ar@{-}[rrr] \ar@{.}[ur] \ar@{.}[dr] \ar@{.}[urr]\ar@{.}[drr]  & &		& y\ar@{-}[ull]\ar@{-}[ul]\ar@{-}[dll]\ar@{-}[dl] \\
  & c_2\ar@{.}[r]   & d_2 &} \\
 &\hspace*{15mm} (N_6^4) && \hspace*{15mm} (N_6^5) && \hspace*{15mm} (N_6^6)\allowdisplaybreaks\\ 
 &
  \xymatrix@C=1.5em{
  & c_1 \ar@{.}[r]\ar@{.}[dd]\ar@{-}[ddr]  & d_1\ar@{.}[dd]\ar@{-}[ddl] &\\
   x\ar@{-}[rrr] \ar@{.}[ur] \ar@{.}[dr] \ar@{.}[urr]\ar@{.}[drr]  & &		& y\ar@{-}[ull]\ar@{-}[ul]\ar@{-}[dll]\ar@{-}[dl] \\
  & c_2\ar@{.}[r]   & d_2 &} 
  &&
    \xymatrix@C=1.5em{
  & c_1 \ar@{.}[r]\ar@{.}[dd]\ar@{.}[ddr]  & d_1\ar@{.}[dd]\ar@{.}[ddl] &\\
   x\ar@{-}[rrr] \ar@{.}[ur] \ar@{.}[dr] \ar@{.}[urr]\ar@{.}[drr]  & &		& y\ar@{-}[ull]\ar@{-}[ul]\ar@{-}[dll]\ar@{-}[dl] \\
  & c_2\ar@{.}[r]   & d_2 &} 
  &&
  \xymatrix@C=1.5em{
  & c_1 \ar@{.}[r]\ar@{-}[dd]\ar@{-}[ddr]  & d_1\ar@{-}[dd]\ar@{-}[ddl] &\\
   x\ar@{-}[rrr] \ar@{.}[ur] \ar@{-}[dr] \ar@{.}[urr]\ar@{-}[drr]  & &		& y\ar@{-}[ull]\ar@{-}[ul]\ar@{.}[dll]\ar@{.}[dl] \\
  & c_2\ar@{.}[r]   & d_2 &}   %
\\
 &\hspace*{15mm} (N_6^7) && \hspace*{15mm} (N_6^8) && \hspace*{15mm} (N_6^9)\\ \allowdisplaybreaks
  &
  \xymatrix@C=1.5em{
  & c_1 \ar@{.}[r]\ar@{.}[dd]\ar@{-}[ddr]  & d_1\ar@{-}[dd]\ar@{-}[ddl] &\\
   x\ar@{-}[rrr] \ar@{.}[ur] \ar@{-}[dr] \ar@{.}[urr]\ar@{-}[drr]  & &		& y\ar@{-}[ull]\ar@{-}[ul]\ar@{.}[dll]\ar@{.}[dl] \\
  & c_2\ar@{.}[r]   & d_2 &}   
   &&
  \xymatrix@C=1.5em{
  & c_1 \ar@{.}[r]\ar@{.}[dd]\ar@{.}[ddr]  & d_1\ar@{-}[dd]\ar@{-}[ddl] &\\
   x\ar@{-}[rrr] \ar@{.}[ur] \ar@{-}[dr] \ar@{.}[urr]\ar@{-}[drr]  & &		& y\ar@{-}[ull]\ar@{-}[ul]\ar@{.}[dll]\ar@{.}[dl] \\
  & c_2\ar@{.}[r]   & d_2 &}  
  && 
\\
 &\hspace*{15mm} (N_6^{10}) && \hspace*{15mm} (N_6^{11}) \allowdisplaybreaks
\end{align*}
These networks have a convex solution in $\mathbb{R}^2$, as shown in Figure \ref{figRealizable2nPlus1}.

Suppose $\Theta\models \ec{x}{y}$. For convenience, we write $c_3=x$ and $d_3=y$. If $\Theta{\downarrow}\{c_i,c_j,d_i,d_j\}$ $(i\not=j$) is not isomorphic to $N_4^2$ (note that it cannot be isomorphic to $N_4^1$ and $N_4^3$), then $\Theta$ has a convex weak solution in $\mathbb{R}^2$ as before. Suppose on the other hand that for each $i$ and $j$, $\Theta{\downarrow}\{c_i,c_j,d_i,d_j\}$ is isomorphic to $N_4^2$. Then $\Theta$ is isomorphic to the following network:
\begin{align*}
&  \xymatrix@C=1.5em{
  & c_1 \ar@{.}[r]\ar@{-}[dd]\ar@{-}[ddr]  & d_1\ar@{-}[dd]\ar@{-}[ddl] &\\
   c_2\ar@{.}[rrr] \ar@{-}[ur] \ar@{-}[dr] \ar@{-}[urr]\ar@{-}[drr]  & &		& d_2\ar@{-}[ull]\ar@{-}[ul]\ar@{-}[dll]\ar@{-}[dl] \\
  & c_3\ar@{.}[r]   & d_3 &} \\
 &\hspace*{15mm} (N_6^{12})
\end{align*}
A convex solution of $N_6^8$ is shown in Figure \ref{figRealizable2nPlus1}.
\end{proof}

\begin{figure}
\subfigure[$N_6^{1}$]{\includegraphics[width=100pt]{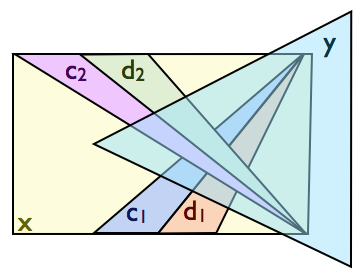}} \hfill
\subfigure[$N_6^{2}$]{\includegraphics[width=100pt]{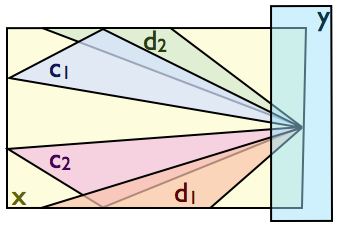}} \hfill
\subfigure[$N_6^{3}$]{\includegraphics[width=100pt]{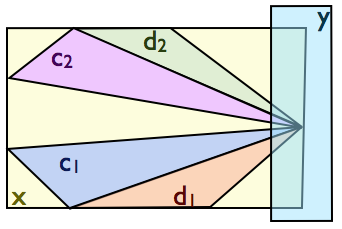}} \\
\subfigure[$N_6^{4}$]{\includegraphics[width=100pt]{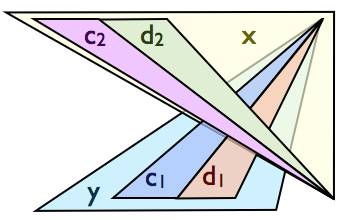}} \hfill
\subfigure[$N_6^{5}$]{\includegraphics[width=100pt]{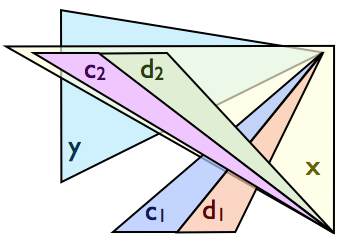}} \hfill
\subfigure[$N_6^{6}$]{\includegraphics[width=100pt]{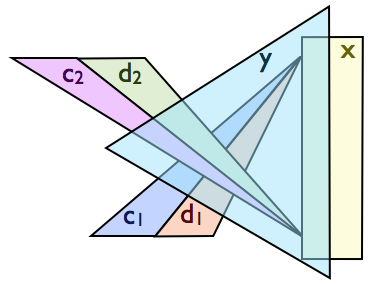}} \\
\subfigure[$N_6^{7}$]{\includegraphics[width=100pt]{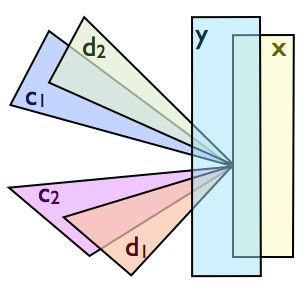}} \hfill
\subfigure[$N_6^{8}$]{\includegraphics[width=100pt]{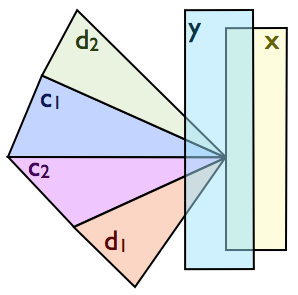}} \hfill
\subfigure[$N_6^{9}$]{\includegraphics[width=100pt]{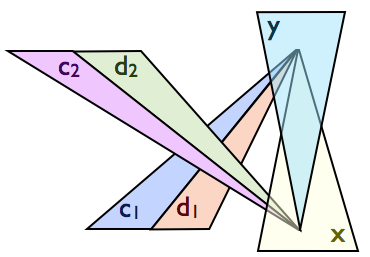}} \\
\subfigure[$N_6^{10}$]{\includegraphics[width=100pt]{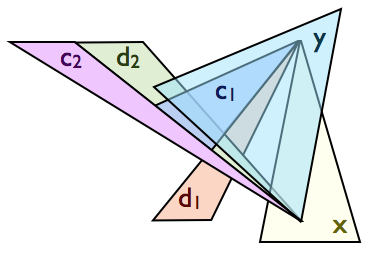}} \hfill
\subfigure[$N_6^{11}$]{\includegraphics[width=100pt]{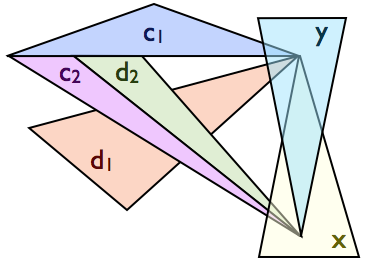}} \hfill
\subfigure[$N_6^{12}$]{\includegraphics[width=100pt]{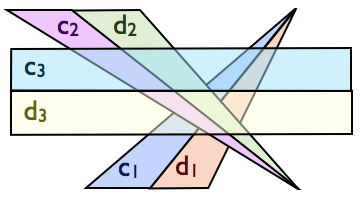}} 
\caption{Convex solutions of the networks $N_6^i$ from Lemma \ref{lem:o+2ec-in2d}. \label{figRealizable2nPlus1}}
\end{figure}

\begin{lemma}\label{lem:7regions}
Suppose $\Theta$ is a basic RCC8 network over $\{\bec,\bpo,\btpp,\btppi\}$ and $V=\{x,y,z,a,b,c,d\}$. In addition, assume that $\Theta$ satisfies 
\begin{itemize}
\item $\Theta{\downarrow}\{a,b,c,d\}$ is $N_4^1,N_4^2$ or $N_4^3$;
\item there is no $u$ in $V$ which satisfies the conditions of Lemma 9;
\item $\Theta\models \{\tpp{y}{x},\ec{z}{x},\ec{z}{y}\}$ and  $\Theta\not\models \tpp{x}{v}$ for $v\in\{a,b,c,d\}$.
\end{itemize}
Then $\Theta$ has a convex weak solution in $\mathbb{R}^2$ in which every \bo-clique has a common part.
\end{lemma}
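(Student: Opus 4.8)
The plan is to reduce to configurations already handled by earlier results and, for the residual cases, to exhibit explicit planar realizations as labelled figures, in the same spirit as the proofs of Lemmas \ref{lemma5RealizableECPOTPP} and \ref{lem:o+2ec-in2d}. First I would extract structural consequences of the hypotheses. Since $\Theta$ uses only $\{\bec,\bpo,\btpp,\btppi\}$ and $\Theta\not\models\tpp{x}{v}$ for $v\in\{a,b,c,d\}$, while $\tpp{y}{x}$ and $\ec{z}{x}$ rule out $\tpp{x}{y}$ and $\tpp{x}{z}$, the region $x$ is a $\btpp$ of no region at all; hence $x$ is related to each of $a,b,c,d$ by $\bec$, $\bpo$ or $\btppi$, and to $z$ by $\bec$. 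Because no region of $V$ satisfies the conditions of Lemma \ref{lemmaExtendWeak}, Remark \ref{lemmaExtendWeak-anti} applies to each of $a,b,c,d$ and to $z$: every such region either is a $\btpp$ of some region, or has both an incoming $\btpp$ and an incoming $\bec$. Together with the composition table (Table \ref{tableRCC8composition}) — for instance $\tpp{u}{y}$ and $\tpp{y}{x}$ force $\tpp{u}{x}$, since $\bntpp\notin\Theta$ — and the fact that $\Theta{\downarrow}\{a,b,c,d\}$ is one of $N_4^1,N_4^2,N_4^3$ (each of which already contains an $\bec$), these restrictions cut the number of admissible networks $\Theta$ down to a manageable list, which I would enumerate up to isomorphism.

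Second, for each admissible $\Theta$ I would look for a subnetwork that escapes the three obstructive four-variable networks. The subnetwork $\Theta{\downarrow}\{x,a,b,c,d\}$ has five variables over $\{\bec,\bpo,\btpp,\btppi\}$, hence by Lemma \ref{lemma5RealizableECPOTPP} it has a convex solution in $\mathbb{R}^2$ in which every $\bo$-clique has a common part; the same holds for every other five-variable subnetwork. Whenever some four-variable subnetwork of $\Theta$ is not isomorphic to $N_4^1,N_4^2,N_4^3$ it has a convex weak solution in $\mathbb{R}$ by Lemma \ref{lem:3-non} and Corollary \ref{coro:X1=4}, and whenever some region satisfies the conditions of Lemma \ref{lemmaExtendWeak} relative to a suitable $V'$ we may add it while raising the dimension by one; by combining such steps with Proposition \ref{propExtendNTPPDC} I would try to build a convex weak solution of all of $\Theta$ in $\mathbb{R}^2$. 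The geometric picture guiding these extensions is that $x$ is a large region: one realizes $\{a,b,c,d\}$ together with the required portion of $x$ in the plane, then draws $y$ as a thin tangential sliver inside $x$ meeting $a,b,c,d$ as prescribed, and finally draws $z$ touching $x$ and $y$ along a common boundary point while meeting $a,b,c,d$ as required.

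Finally, for the handful of networks that do not yield to this reduction — necessarily those in which every four-variable subnetwork is some $N_4^j$ and no region is amenable to Lemma \ref{lemmaExtendWeak} relative to the needed $V'$ — I would give explicit convex realizations in $\mathbb{R}^2$ by hand, presented as labelled figures just like the networks $N_5^i$ and $N_6^i$, checking in each picture that every $\bo$-clique has a nonempty common part. I expect the main obstacle to be precisely the bookkeeping of this case analysis: correctly enumerating the admissible $\Theta$, and for each one either identifying a subnetwork that breaks out of $\{N_4^1,N_4^2,N_4^3\}$ so the extension machinery applies, or producing a correct planar picture. The delicate geometric point in the hand-made constructions is that $z$ must be externally connected to both $x$ and its proper part $y$; placing $z$ so that it does not accidentally overlap or become disconnected from one of $a,b,c,d$ is exactly where the structural restrictions from the first step — which force $x$, $y$, $z$ to interact with $\{a,b,c,d\}$ only in a few tightly constrained ways — are used.
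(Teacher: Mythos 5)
Your overall strategy---derive structural restrictions from Remark \ref{lemmaExtendWeak-anti}, reduce to realizable subnetworks, extend via Proposition \ref{propExtendNTPPDC} and Lemma \ref{lemmaExtendWeak}, and draw explicit pictures for whatever remains---is in the same spirit as the paper's proof, but two of your concrete reduction steps do not close, and the missing ingredients are precisely the ones that make the paper's case analysis finite. First, the dimension budget: you start (at best) from a convex weak solution of a four- or five-variable subnetwork in $\mathbb{R}$, and each application of Lemma \ref{lemmaExtendWeak} or Proposition \ref{propExtendNTPPDC} costs one dimension, so to land in $\mathbb{R}^2$ you are allowed exactly \emph{one} such step. ``Combining such steps'' to absorb the three remaining variables $y,z$ and one of $\{a,b,c,d\}$ therefore overshoots to $\mathbb{R}^3$. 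Relatedly, your appeal to Lemma \ref{lemma5RealizableECPOTPP} for $\Theta{\downarrow}\{x,a,b,c,d\}$ is a dead end: that gives a solution already in $\mathbb{R}^2$, and there is no tool in the paper for adding $y$ and $z$ to a two-dimensional solution without raising the dimension.

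What the paper does instead is arrange that a single application of Proposition \ref{propExtendNTPPDC} always suffices. It isolates two ``escape patterns'' (a chain $\tpp{u_1}{u_2}$, $\tpp{u_2}{u_3}$, $\ec{u_3}{u_4}$, or a pair $\tpp{u_1}{u_2}$, $\tpp{u_3}{u_4}$, $\ec{u_2}{u_4}$) whose presence lets one realize the remaining three variables weakly in $\mathbb{R}$ and add all four at once; when neither pattern occurs, it proves that two $\bec$-related regions of $\{a,b,c,d\}$ cannot both be maximal, and uses this to exhibit, in every case, a five-variable subnetwork containing $x,y,z$ with a convex weak solution in $\mathbb{R}$ whose complement is an $\bec$-pair fitting Proposition \ref{propExtendNTPPDC}. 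As a result no two-dimensional figures are needed at all. Your fallback of enumerating the residual seven-variable networks and drawing each one is not wrong in principle, but without these structural lemmas you have no argument that the residue is ``a handful,'' and the enumeration of seven-variable atomic networks satisfying the hypotheses is far larger than the $N_5^i$ and $N_6^i$ lists you are modelling it on. To repair the proposal you would need to supply the budget-respecting reduction (one extension step only) and the combinatorial facts that guarantee a suitable five-variable subnetwork always exists.
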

\begin{proof}
Since there is no region $u \in V$ which satisfies the conditions of Lemma 9, by \new{Remark} 2, there exist $v_1,v_2$ such that $\Theta\models\{\tpp{v_1}{u},\ec{v_2}{u}\}$ for every maximal region $u$ in $\Theta$. In particular, we know $x$ is such a maximal region in $\Theta$.

We note that if one of the following two conditions is satisfied, then using the method from Proposition 20 we can get a convex weak solution of $\Theta$ in $\mathbb{R}^2$   by extending a convex weak solution of $\Theta{\downarrow}(V \setminus \{u_1,u_2,u_3,u_4\})$ in $\mathbb{R}$:
\begin{align} \label{eq:tpp-chain}
(\exists u_1,u_2,u_3,u_4 \in  V) &\quad \Theta\models \{\tpp{u_1}{u_2},\tpp{u_2}{u_3},\ec{u_3}{u_4}\}\\
\label{eq:tpp-pair}
(\exists u_1,u_2,u_3,u_4 \in  V) &\quad \Theta\models \{\tpp{u_1}{u_2},\tpp{u_3}{u_4},\ec{u_2}{u_4}\}.
\end{align}
In the following, we assume $\Theta$ satisfies neither \eqref{eq:tpp-chain} nor \eqref{eq:tpp-pair}. As a consequence, we know that 
\begin{align*}
\Theta \not\models \tpp{v}{y},\  \Theta \not\models \tpp{v}{z},\ \mbox{and}\ \Theta \not\models \{\tpp{y}{v}, \tpp{v}{x}\} \ \mbox{for $v\in\{a,b,c,d\}$.}
\end{align*}

Suppose $v,w$ are two regions in $\{a,b,c,d\}$ and $\Theta\models \ec{v}{w}$. We assert that $v,w$ cannot be both maximal in $\Theta$. We prove this by contradiction. Since $v,w$ are maximal in $\Theta$, we know in particular $\Theta\not\models \tpp{v}{x}$, $\Theta\not\models \tpp{w}{x}$. Moreover, by \new{Remark}~2, we have $\Theta\models \tpp{y}{v}$ or $\Theta\models \tpp{z}{v}$, and $\Theta\models \tpp{y}{w}$ or $\Theta\models\tpp{z}{w}$. Note that if $\Theta\models \{\tpp{z}{v},\tpp{y}{w}\}$ or $\Theta \models \{\tpp{z}{w},\tpp{y}{v}\}$, then \eqref{eq:tpp-pair} is satisfied and this contradicts our assumption. Also note that neither $\Theta\models \{\tpp{z}{v},\tpp{z}{w}\}$ nor $\Theta\models \{\tpp{y}{v},\tpp{y}{w}\}$ is possible as $\Theta\models \ec{v}{w}$. Therefore, $v,w$ cannot be both maximal.

Suppose $\Theta{\downarrow}\{a,b,c,d\}$ is $N_4^2$. Note that $\Theta\models\{\ec{a}{b},\ec{c}{d}\}$. If neither $a$ nor $b$ is maximal, 
then we have $\Theta\models \{ \tpp{a}{x}, \tpp{b}{x}\}$.
Furthermore, by $\Theta\models\ec{z}{x}$, we know $\Theta\models \{\ec{z}{a},\ec{z}{b},\ec{z}{x},\ec{z}{y}\}$. Because $\Theta{\downarrow}\{x,y,a,b\}$ is isomorphic to none of $N_4^1,N_4^2$ and $N_4^3$, it has a convex weak solution in $\mathbb{R}$. Since $z$ is \bec\ to all these four regions, we can extend it to a convex weak solution for  $\Theta{\downarrow}\{x,y,z,a,b\}$ in $\mathbb{R}$. Now, using Proposition 20, we can further extend this to a convex weak solution of $\Theta$ in $\mathbb{R}^2$.

We next consider the subcase when exact one of $a,b$ is maximal. Without loss of generality, we assume that $a$ is maximal and $b$ is not.  It is clear that $\Theta{\downarrow}\{x,y,a,b\}$ is not  isomorphic to any of $N_4^1,N_4^2$ and $N_4^3$ and, hence, has a convex weak solution $\mathcal{S}$ in $\mathbb{R}$. Write $\mathcal{S}(a)=[a^-,a^+]$ and similarly for $b,x,y$. We have either $a^-<x^-<a^+<x^+$ or $x^-<a^-<x^+<a^+$. It is easy to see that there is an interval which is contained in (or partially overlaps, or is disjoint from) $[a^-,a^+]$, which is \bec\ to the other three intervals. Therefore, we can construct a convex weak solution in $\mathbb{R}$ for $\Theta{\downarrow}\{x,y,z,a,b\}$, which can be extended to a convex weak solution in $\mathbb{R}^2$ for $\Theta$ by using Proposition 20.

Suppose $\Theta{\downarrow}\{a,b,c,d\}$ is $N_4^3$ and $\Theta\models \{\ec{a}{b},\ec{a}{c},\ec{b}{c}\}$.  At most one of $a,b,c$ can be maximal in $\Theta$. Without loss of generality, we assume $c$ is not maximal. There are two subcases. Suppose $d$ is not maximal either. Then we have $\Theta\models \{\tpp{c}{x},\tpp{d}{x}\}$ and hence $\Theta\models \{\ec{z}{c},\ec{z}{d},\ec{z}{x},\ec{z}{y}\}$. Just as before, we can construct a convex weak solution in $\mathbb{R}$ for $\Theta{\downarrow}\{x,y,z,c,d\}$ and extend it to a convex weak solution of $\Theta$ in $\mathbb{R}^2$. 

We next consider the subcase when $c$ is not maximal and $d$ is maximal. Because $c$ is not maximal, we have $\Theta\models \tpp{c}{x}$. Furthermore, because $d$ is maximal, we have either $\Theta\models \tpp{z}{d},\ec{y}{d}\}$ or $\Theta\models \{\tpp{y}{d},\ec{z}{d}\}$. In both cases, we can construct a convex weak solution in $\mathbb{R}$ for $\Theta{\downarrow}\{x,y,c,d\}$ and then extend it to a convex weak solution of $\Theta{\downarrow}\{x,y,z,c,d\}$ in $\mathbb{R}$ and to a convex weak solution of $\Theta$ in $\mathbb{R}^2$ as before.

Suppose $\Theta{\downarrow}\{a,b,c,d\}$ is $N_4^1$ and $\Theta\models\{\ec{a}{b},\tpp{a}{c},\tpp{b}{c}\}$. Because $\Theta$ does not satisfy \eqref{eq:tpp-chain}, we know $\Theta\not\models \tpp{c}{x}$ and hence $c$ is maximal in $\Theta$. We have $\Theta\models \po{c}{x}$ and either 
$\Theta\models \ec{y}{c}$ or $\Theta\models \ec{z}{c}$. Since $z$ does not satisfy the conditions of Lemma 9 and $\Theta \not\models \{\tpp{z}{a},\tpp{z}{b}\}$, we know $\Theta\models \tpp{z}{c}$ or $\Theta\models\tpp{z}{d}$.
If $d$ is not maximal, then $\Theta\models \tpp{d}{x}$, and hence 
\begin{align*}
\Theta\models \{ \tpp{d}{x},\ec{z}{d}, \tpp{z}{c}, \ec{y}{c}, y\{\bec,\bpo\} d \}.
\end{align*}
If $d$ is maximal, then we have either  $\Theta\models \{\tpp{z}{d},\ec{y}{d}\}$ or $\Theta\models \{\tpp{y}{d},\ec{z}{d}\}$. In the first case, we have either 
\begin{align*}
\Theta\models\{\tpp{z}{d},\ec{y}{d},\po{x}{d}, \ec{y}{c}, z\{\bec,\bpo,\btpp\} c\}
\end{align*}
 or 
\begin{align*}
\Theta\models\{\tpp{z}{d},\ec{y}{d},\po{x}{d}, \ec{z}{c}, y\{\bec,\bpo,\btpp\} c\}.
\end{align*}
In the second case, we have 
\begin{align*}
\Theta\models\{\tpp{y}{d},\ec{z}{d},\po{x}{d}, \tpp{z}{c},\ec{y}{c}\}. 
\end{align*}
In all these cases, we can construct a convex weak solution in $\mathbb{R}$ for $\Theta{\downarrow}\{x,y,c,d\}$ and then extend it to $\Theta{\downarrow}\{x,y,z,c,d\}$ in $\mathbb{R}$ and to $\Theta$ in $\mathbb{R}^2$ as before.
\end{proof}

\bibliography{qsr,commonsense,momentcurve}

\begin{thebibliography}{10}
\expandafter\ifx\csname url\endcsname\relax
  \def\url#1{\texttt{#1}}\fi
\expandafter\ifx\csname urlprefix\endcsname\relax\def\urlprefix{URL }\fi
\expandafter\ifx\csname href\endcsname\relax
  \def\href#1#2{#2} \def\path#1{#1}\fi

\bibitem{Randell:1992}
D.~Randell, Z.~Cui, A.~Cohn, A spatial logic based on regions and connection,
  in: Proceedings of the 3rd International Conference on Knowledge
  Representation and Reasoning, 1992, pp. 165--176.

\bibitem{sridhar2011video}
M.~Sridhar, A.~G. Cohn, D.~C. Hogg, From video to {RCC8}: exploiting a distance
  based semantics to stabilise the interpretation of mereotopological
  relations, in: Conference on Spatial Information Theory, Springer, 2011, pp.
  110--125.

\bibitem{grutter2008improving}
R.~Gr{\"u}tter, T.~Scharrenbach, B.~Bauer-Messmer, Improving an {RCC}-derived
  geospatial approximation by {OWL} axioms, in: Proceedings of the
  International Semantic Web Conference, 2008, pp. 293--306.

\bibitem{conf/semweb/StockerS08}
M.~Stocker, E.~Sirin, Pelletspatial: A hybrid {RCC-8} and {RDF/OWL} reasoning
  and query engine., in: R.~Hoekstra, P.~F. Patel-Schneider (Eds.), OWLED, Vol.
  529 of CEUR Workshop Proceedings, 2008.

\bibitem{battle2012enabling}
R.~Battle, D.~Kolas, Enabling the geospatial semantic web with {Parliament} and
  {GeoSPARQL}, Semantic Web 3~(4) (2012) 355--370.

\bibitem{koubarakis2011challenges}
M.~Koubarakis, K.~Kyzirakos, M.~Karpathiotakis, C.~Nikolaou, M.~Sioutis,
  S.~Vassos, D.~Michail, T.~Herekakis, C.~Kontoes, I.~Papoutsis, Challenges for
  qualitative spatial reasoning in linked geospatial data, Worskshop on
  Benchmark and Applications of Spatial Reasoning (2011) 33--38.

\bibitem{smart2007framework}
P.~D. Smart, A.~I. Abdelmoty, B.~A. El-Geresy, C.~B. Jones, A framework for
  combining rules and geo-ontologies, in: Web reasoning and rule systems,
  Springer, 2007, pp. 133--147.

\bibitem{Renz:2002}
J.~Renz, A canonical model of the {Region Connection Calculus}, Journal of
  Applied Non-Classical Logics 12~(3--4) (2002) 469--494.

\bibitem{gardenfors2001reasoning}
P.~{G\"ardenfors}, M.~Williams, Reasoning about categories in conceptual
  spaces, in: Proceedings of the International Joint Conference on Artificial
  Intelligence, 2001, pp. 385--392.

\bibitem{rosch1973natural}
E.~H. Rosch, Natural categories, Cognitive Psychology 4~(3) (1973) 328--350.

\bibitem{nalbantov2006nearest}
G.~I. Nalbantov, P.~J. Groenen, J.~C. Bioch, Nearest convex hull
  classification, Tech. rep., Erasmus School of Economics (ESE) (2006).

\bibitem{derrac2014enriching}
J.~Derrac, S.~Schockaert, Enriching taxonomies of place types using flickr, in:
  Proceedings of the 8th International Symposium on Foundations of Information
  and Knowledge Systems, 2014, pp. 174--192.

\bibitem{Erk:2009:RWR:1596374.1596387}
K.~Erk, Representing words as regions in vector space, in: Proceedings of the
  Thirteenth Conference on Computational Natural Language Learning, 2009, pp.
  57--65.

\bibitem{Freksa:1991}
C.~Freksa, Conceptual neighborhood and its role in temporal and spatial
  reasoning, in: M.~Singh, L.~Trav\'e-Massuy\`es (Eds.), Decision Support
  Systems and Qualitative Reasoning, North-Holland, Amsterdam, 1991, pp.
  181--187.

\bibitem{Schockaert20111815}
S.~Schockaert, H.~Prade, Solving conflicts in information merging by a flexible
  interpretation of atomic propositions, Artificial Intelligence 175~(11)
  (2011) 1815 -- 1855.

\bibitem{Cohn:1996}
A.~Cohn, N.~Gotts, The `egg-yolk' representation of regions with indeterminate
  boundaries, in: Geographic Objects with Indeterminate Boundaries
  (P.A.~Burrough and A.U.~Frank, eds.), Taylor and Francis Ltd., 1996, pp.
  171--187.

\bibitem{Schockaert2009258}
S.~Schockaert, M.~D. Cock, E.~E. Kerre, Spatial reasoning in a fuzzy region
  connection calculus, Artificial Intelligence 173~(2) (2009) 258 -- 298.

\bibitem{schockaert2013combining}
S.~Schockaert, S.~Li, Combining {RCC5} relations with betweenness information,
  in: Proceedings of the Twenty-Third International Joint Conference on
  Artificial Intelligence, 2013, pp. 1083--1089.

\bibitem{elith2009species}
J.~Elith, J.~R. Leathwick, Species distribution models: ecological explanation
  and prediction across space and time, Annual Review of Ecology, Evolution,
  and Systematics 40 (2009) 677--697.

\bibitem{walley1991statistical}
P.~Walley, Statistical reasoning with imprecise probabilities, Chapman and Hall
  London, 1991.

\bibitem{Allen83}
J.~Allen, Maintaining knowledge about temporal intervals, Communications of the
  ACM 26~(11) (1983) 832--843.

\bibitem{Vilain:1989:CPA:93913.93996}
M.~Vilain, H.~Kautz, P.~van Beek, Constraint propagation algorithms for
  temporal reasoning: A revised report, in: D.~S. Weld, J.~d. Kleer (Eds.),
  Readings in Qualitative Reasoning About Physical Systems, Morgan Kaufmann
  Publishers Inc., 1990, pp. 373--381.

\bibitem{Davis:1999}
E.~Davis, N.~Gotts, A.~Cohn, Constraint networks of topological relations and
  convexity, Constraints 4 (1999) 241--280.

\bibitem{Basu:1996:CAC:235809.235813}
S.~Basu, R.~Pollack, M.-F. Roy, On the combinatorial and algebraic complexity
  of quantifier elimination, Journal of the ACM 43~(6) (1996) 1002--1045.

\bibitem{turney2010frequency}
P.~D. Turney, P.~Pantel, et~al., From frequency to meaning: Vector space models
  of semantics, Journal of artificial intelligence research 37~(1) (2010)
  141--188.

\bibitem{westphal2009qualitative}
M.~Westphal, S.~W{\"o}lfl, Qualitative {CSP}, finite {CSP}, and {SAT}:
  comparing methods for qualitative constraint-based reasoning, in: Proceedings
  of the 21st international jont conference on Artifical intelligence, 2009,
  pp. 628--633.

\bibitem{ecai2012paper}
S.~Schockaert, S.~Li, Convex solutions of {RCC8} networks, in: Proceedings of
  the 20th European Conference on Artificial Intelligence, 2012, pp. 726--731.

\bibitem{Cui:1993}
Z.~Cui, A.~Cohn, D.~Randell, Qualitative and topological relationships in
  spatial databases, in: Advances in Spatial Databases, Vol. 692 of Lecture
  Notes in Computer Science, 1993, pp. 296--315.

\bibitem{Stell2000111}
J.~Stell, Boolean connection algebras: A new approach to the {Region-Connection
  Calculus}, Artificial Intelligence 122~(1--2) (2000) 111 -- 136.

\bibitem{Li:2006}
S.~Li, On topological consistency and realization, Constraints 11~(1) (2006)
  31--51.

\bibitem{Renz:1999a}
J.~Renz, B.~Nebel, On the complexity of qualitative spatial reasoning: A
  maximal tractable fragment of the {Region Connection Calculus}, Artificial
  Intelligence 108~(1--2) (1999) 69--123.

\bibitem{dewdney1977convex}
A.~Dewdney, J.~Vranch, A convex partition of {$R^3$} with applications to
  {Crum}'s problem and {Knuth}'s post-office problem, Utilitas Math 12 (1977)
  193--199.

\bibitem{s-eubnf-91}
R.~Seidel, Exact upper bounds for the number of faces in $d$-dimensional
  {V}oronoi diagrams, in: P.~Gritzman, B.~Sturmfels (Eds.), Applied Geometry
  and Discrete Mathematics: The Victor Klee Festschrift, Vol.~4 of DIMACS
  Series in Discrete Mathematics and Theoretical Computer Science, AMS Press,
  1991, pp. 517--530.

\bibitem{erickson2003arbitrarily}
J.~Ericksonff, S.~Kim, Arbitrarily large neighborly families of congruent
  symmetric convex 3-polytopes, Discrete Geometry: In Honor of W. Kuperberg's
  60th Birthday (2003) 267--278.

\bibitem{edelsbrunner1987algorithms}
H.~Edelsbrunner, Algorithms in combinatorial geometry, Vol.~10, Springer, 1987.

\bibitem{matouvsek2003using}
J.~Matou{\v{s}}ek, Using the Borsuk-Ulam Theorem: Lectures on Topological
  Methods in Combinatorics and Geometry, Springer, 2003.

\bibitem{anderson1993}
B.~Anderson, Polynomial root dragging, The American Mathematical Monthly 100
  (1993) 864--866.

\bibitem{Frayer:2010-08-01T00:00:00:0002-9890:641}
C.~Frayer, J.~A. Swenson, Polynomial root motion, American Mathematical Monthly
  117~(7) (2010) 641--646.

\bibitem{dlrcc8}
O.~L. Oz{\c{c}}ep, R.~M{\"o}ller, Spatial semantics for concepts, in:
  Proceedings of the 26th International Workshop on Description Logics, 2013,
  pp. 816--828.

\bibitem{Hearst:1992:AAH:992133.992154}
M.~A. Hearst, Automatic acquisition of hyponyms from large text corpora, in:
  Proceedings of the 14th Conference on Computational Linguistics, 1992, pp.
  539--545.

\bibitem{AIJSchockaertPrade2013}
S.~Schockaert, H.~Prade, Interpolative and extrapolative reasoning in
  propositional theories using qualitative knowledge about conceptual spaces,
  Artificial Intelligence 202 (2013) 86 -- 131.

\bibitem{Sheremet01062007}
M.~Sheremet, D.~Tishkovsky, F.~Wolter, M.~Zakharyaschev, A logic for concepts
  and similarity 17~(3) (2007) 415--452.

\bibitem{Schaefer:2010aa}
M.~Schaefer, Complexity of some geometric and topological problems, Vol. 5849
  of Lecture Notes in Computer Science, 2010, pp. 334--344.

\end{thebibliography}
\bibliographystyle{elsarticle-num}
\end{document}